\let\oldsquare\square 
\renewcommand{\square}{\oldsquare}
\newcommand{\la}{\lambda}
\newcommand{\NN}{\mathbb{N}}
\newcommand{\rred}{\color{red}}
\numberwithin{equation}{section}
\newtheorem{theorem}{Theorem}[section]
\newtheorem{corollary}[theorem]{Corollary}
\newtheorem{proposition}[theorem]{Proposition}
\newtheorem{lemma}[theorem]{Lemma}
\theoremstyle{definition}
\newtheorem{definition}[theorem]{Definition}
\newtheorem{remark}[theorem]{Remark}
\newtheorem{assumption}{Assumption}
\let\originalleft\left
\let\originalright\right
\renewcommand{\left}{\mathopen{}\mathclose\bgroup\originalleft}
\renewcommand{\right}{\aftergroup\egroup\originalright}
\newcommand{\vertiii}{\vert\kern-0.3ex\vert\kern-0.25ex\vert}
\newcommand*{\N}{\ensuremath{\mathbb{N}}}
\newcommand*{\Z}{\ensuremath{\mathbb{Z}}}
\newcommand*{\R}{\ensuremath{\mathbb{R}}}
\newcommand*{\Zd}{\ensuremath{\mathbb{Z}^d}}
\newcommand*{\Rd}{\ensuremath{\mathbb{R}^d}}
\newcommand{\eps}{\varepsilon}
\renewcommand*{\tilde}{\widetilde}
\renewcommand*{\hat}{\widehat}
\DeclareSymbolFont{boldoperators}{OT1}{cmr}{bx}{n}
\newcommand{\cu}{\square}
\newcommand{\F}{\mathcal{F}}
\renewcommand{\P}{\mathbb{P}}
\newcommand{\E}{\mathbb{E}}
\newcommand{\X}{\mathcal{X}}
\renewcommand{\O}{\mathcal{O}}
\newcommand{\indc}{1}
\DeclareMathOperator{\var}{var}
\newcommand{\RR}{\mathbb{R}}
\newcommand{\e}{\varepsilon}
\newcommand{\avsum}{\mathop{\mathpalette\avsuminner\relax}\displaylimits}
\newcommand\avsuminner[2]{%
	{\sbox0{$\m@th#1\sum$}%
		\vphantom{\usebox0}%
		\ooalign{%
			\hidewidth
			\smash{\,\rule[.23em]{8.8pt}{1.1pt} \relax}%
			\hidewidth\cr
			$\m@th#1\sum$\cr
		}%
	}%
}
\newcommand\avsuminnerr[2]{%
	{\sbox0{$\m@th#1\sum$}%
		\vphantom{\usebox0}%
		\ooalign{%
			\hidewidth
			\smash{\,\rule[.23em]{6pt}{0.7pt} \relax}%
			\hidewidth\cr
			$\m@th#1\sum$\cr
		}%
	}%
}
\def\XXint#1#2#3{{\setbox0=\hbox{$#1{#2#3}{\int}$}
		\vcenter{\hbox{$#2#3$}}\kern-.5\wd0}}
\newcommand{\negphantom}{\v@true\h@true\negph@nt} 
\newcommand{\neghphantom}{\v@false\h@true\negph@nt} 
\newcommand{\negph@nt}{\ifmmode\expandafter\mathpalette 
	\expandafter\mathnegph@nt\else\expandafter\makenegph@nt\fi} 
\newcommand{\makenegph@nt}[1]{%
	\setbox\z@\hbox{\color@begingroup#1\color@endgroup}\finnegph@nt} 
\newcommand{\finnegph@nt}{%
	\setbox\tw@\null 
	\ifv@ \ht\tw@\ht\z@\dp\tw@\dp\z@\fi \ifh@\wd\tw@-\wd\z@\fi\box\tw@} 
\newcommand{\mathnegph@nt}[2]{%
	\setbox\z@\hbox{$\m@th #1{#2}$}\finnegph@nt} 
\newcommand\reallywidehat[1]{%
	\savestack{\tmpbox}{\stretchto{%
			\scaleto{%
				\scalerel*[\widthof{\ensuremath{#1}}]{\kern.1pt\mathchar"0362\kern.1pt}%
				{\rule{0ex}{\textheight}}
			}{\textheight}%
		}{2.4ex}}%
	\stackon[-8.5pt]{#1}{\tmpbox}%
}
\newcommand{\addperiod}[1]{#1.}
\titleformat*{\subsection}{\normalfont\large}
\titleformat{\subsubsection}[runin]
{\bfseries}
{\thesubsubsection.}
{0.5em}
{\addperiod}
\titleformat*{\subsubsection}{\bfseries}
\titleformat*{\paragraph}{\bfseries}
\titleformat*{\subparagraph}{\large\bfseries}
\newcommand{\chenghui}[1]{\textcolor{cyan}{Chenghui: #1}}
\newcommand{\KL}{\mathrm{KL}}
\definecolor{Blue}{rgb}{0,0,1}
\definecolor{Red}{rgb}{1,0,0}
\newcommand{\veps}{\e_n}
\newcommand{\dx}{\, \dd \vx}
\newcommand{\dt}{\, \dd t}
\newcommand{\vol}{\mathrm{vol}}
\newcommand{\divergence}{\mathrm{div}}
\newcommand{\vc}{\mathbf{c}}
\newcommand{\vx}{x}
\newcommand{\0}{\mathbf{0}}
\newcommand{\T}{T}
\newcommand{\Information}{\mathrm{I}}
\newcommand{\Cut}{\mathrm{Cut}}
\newcommand{\x}{x}
\DeclareMathOperator{\Prob}{\mathbb{P}}
\newcommand{\M}{\mathcal{M}}
\newcommand{\MM}{\mathbf{M}}
\definecolor{mygreen}{rgb}{0.1,0.75,0.2}
\newcommand{\nc}{\normalcolor}
\renewcommand{\div}{\mathrm{div}}
\renewcommand{\P}{\mathbb{P}}
\newcommand{\defeq}{:=}
\newcommand{\aban@defeq}{%
	\vbox{\offinterlineskip\check@mathfonts
		\ialign{\hfil##\hfil\cr
			\fontsize{\ssf@size}{\z@}\normalfont def\cr
			\noalign{\kern1\p@}
			$\m@th=$\cr
			\noalign{\kern-.5\fontdimen22\textfont2}
		}%
	}%
}
\newcommand{\dd}{\mathrm{d}}
\def\La{\mathcal{L}}
\definecolor{darkred}{rgb}{0.6,0.1,0.1}
\definecolor{darkgreen}{rgb}{0.1,0.6,0.1}
\definecolor{darkblue}{rgb}{0.1,0.1,0.6}
\title{\bf \Large Minimax Rates for the Estimation of Eigenpairs of Weighted Laplace-Beltrami Operators on Manifolds}
\author{Nicol\'as Garc\'ia Trillos
	\thanks{Department of Statistics, University of Wisconsin-Madison.
		{\footnotesize {garciatrillo@wisc.edu}.}
	}
	\and 
	Chenghui Li
	\thanks{Department of Statistics, University of Wisconsin-Madison.
		{\footnotesize {cli539@wisc.edu}.}
	}
	\and 
	Raghavendra Venkatraman
	\thanks{Department of Mathematics, The University of Utah.
		{\footnotesize {raghav@math.utah.edu}.}
	}
}
\date{\today}
\begin{document}
	
	\maketitle
	
	\begin{abstract}
		We study the problem of estimating eigenpairs of elliptic differential operators from samples of a distribution $\rho$ supported on a manifold $\M$. The operators discussed in the paper are relevant in unsupervised learning and in particular are obtained by taking suitable scaling limits of widely used graph Laplacians over data clouds. We study the minimax risk for this eigenpair estimation problem and explore the rates of approximation that can be achieved by commonly used graph Laplacians built from random data. More concretely, assuming that $\rho$ belongs to a certain family of distributions with controlled second derivatives, and assuming that the $d$-dimensional manifold $\M$ where $\rho$ is supported has bounded geometry, we prove that the statistical minimax rate for approximating eigenvalues and eigenvectors in the $H^1(\M)$-sense is $n^{-2/(d+4)}$, a rate that matches the minimax rate for a closely related density estimation problem. To the best of our knowledge, our results are the first statistical lower bounds for this type of eigenpair estimation. We then revisit the literature studying Laplacians over proximity graphs in the large data limit and prove that, under slightly stronger regularity assumptions on the data generating model, eigenpairs of these graph-based operators can induce manifold agnostic estimators with an error of approximation that, up to logarithmic corrections, matches our lower bounds, providing in this way a concrete statistical basis for the claim that graph Laplacian based estimators are, essentially, optimal for this estimation problem. In addition, our analysis allows us to expand the existing literature on graph-based learning in at least two significant ways: 1) we consider stronger norms to measure the error of approximation than the ones that had been analyzed in the past; 2) our rates of convergence are uniform over a family of smooth distributions and do not just apply to densities with special symmetries, and, as a consequence of our lower bounds, are essentially sharp when the connectivity of the graph is sufficiently high.
		
		Our work showcases novel connections between PDE theory and statistics and draws inspiration from recent advances in quantitative homogenization of partial differential equations on random media, here in the setting of random geometric graphs. It also sets the stage for the analysis of other important estimation problems that we believe are of relevance to the modern literatures of operator learning and inverse problems.

		\nc
	\end{abstract}
	
	\setcounter{tocdepth}{2}  
	\tableofcontents

	\section{Introduction}


	Let $x_1, \dots, x_n$ be data points randomly sampled from a distribution $\rho$ supported on a $d$-dimensional manifold $\M$ embedded in $\R^D$. One of the fundamental constructions in manifold learning and graph based learning is the graph Laplacian operator
	\begin{equation}
	\Delta_n u(x_i) \defeq  \frac{1}{n\veps^{d+2}}\sum_{j=1}^n \eta\left(\frac{|x_i- x_j|}{\veps} \right)( u(x_i) - u(x_j)), \quad i=1, \dots, n,
	\end{equation} 
	where $\veps$ is a suitably chosen connectivity parameter, $\eta$ is a non-negative and decreasing function, and $\lvert \cdot \rvert$ denotes the Euclidean norm in the ambient space $\R^D$. The spectra of these operators, or of normalized versions thereof, are prominently used in machine learning, specifically in unsupervised learning settings, where they are at the core of very important and widely used data analysis methodologies such as spectral clustering and diffusion maps; see \cite{von2007tutorial,li2019metrics,koelle2024consistency,coifman2006diffusion}.
	
	At least since as far back as \cite{belkin2005towards}, multiple works in the literature have explored the connection between the graph Laplacian $\Delta_n$, its spectrum, and  differential operators of the type
	\[  \Delta_\rho f \defeq  - \frac{1}{\rho} \divergence(\rho^2 \nabla f), \quad f \in C^2(\M),\]
	and their spectra; some examples of these works include \cite{Shi2015,trillos2019error,calder2019improved} but a more detailed list will be presented in our literature review in Section \ref{sec:literature}. Under different norms, and under different assumptions on the manifold $\M$ and the density $\rho$ used to generate the data, all these works provide high probability \textit{upper bounds} on the error of estimating eigenpairs of $\Delta_\rho$ with eigenpairs of graph Laplacians constructed from samples drawn from $\rho$. While these works have developed a remarkable set of tools for the analysis of such estimation problems, including tools borrowed from fields such as the calculus of variations, analysis of PDEs, optimal transport, and probability theory, their estimates are not sharp and there is still room for improvement. More importantly, to the best of our knowledge, no work in the literature has explored the following fundamental statistical question:
	\begin{center}
		\textit{What is the best estimator, built from finitely many observations, for the eigenpairs of the operator $\Delta_\rho$ when the distribution $\rho$ used to sample the data is unknown?}
	\end{center}
	It is reasonable to ask this question since there may exist multiple ways, perhaps some yet unexplored, for estimating eigenpairs of the operator $\Delta_\rho$ that do not rely on computing the eigenvalues and eigenvectors of a standard graph Laplacian built from data. For practitioners, it is important to know whether there exist other, more statistically efficient methods to tackle this estimation problem. 
	
	In this work, we follow the framework of minimax theory in statistical analysis and make precise modeling assumptions on the family of data generating distributions under which we:
	\begin{enumerate}
		\item Provide a sharp minimax rate for the estimation of eigenpairs of $\Delta_\rho$. In particular, from our analysis we can conclude that \textit{there is no estimator} built from $n$ data points that can approximate eigenpairs of $\Delta_\rho$ at a rate in $n$ that is faster than our minimax rate. Our rates are sharp in the sense that, when the manifold $\M$ is known and only the distribution $\rho$ is unknown, we can exhibit a concrete and mathematically well-defined estimator, albeit impractical in applications, whose error of approximation matches our lower bound. 
		\item Present new theoretical developments in the study of estimators based on the extraction of eigenpairs of graph Laplacians by developing several new ideas motivated by the theory of homogenization of elliptic PDEs. Our analysis allows us to prove that graph Laplacian based estimators are, up to logarithmic factors and assuming slightly more regularity on the data generating model, optimal for the eigenpair estimation problem, with the advantage that they are \textit{manifold agnostic}. As discussed in Sections \ref{ss.ideas} and \ref{sec:literature}, our work substantially improves the analysis of existing papers in the literature on manifold learning. In addition, thanks to our lower bounds, our analysis of graph-Laplacian based estimators can be seen to be essentially sharp when the graph has sufficiently high connectivity.
		
		%
		
	\end{enumerate}

	In order to prove our results, we take advantage of a novel combination of PDE tools and statistical analysis that we believe is of interest to the broader statistics, numerical analysis, and machine learning communities. Indeed, we believe that the questions explored in this paper, as well as the techniques used to address them, are of importance beyond the specific estimation problem described here and in particular are of relevance to the modern literatures of operator learning and inverse problems. 
	%
	We proceed now to present the precise setting for our estimation problem.

	\subsection{Setup}
	
	As discussed previously, our goal is to use random samples from a distribution $\rho$ supported on a low dimensional manifold $\M$ to estimate eigenvalues and eigenfunctions of the weighted Laplace-Beltrami operator $\Delta_\rho$, where, to avoid introducing cumbersome notation, we interpret $\divergence$ and $\nabla$ as the divergence and gradient operators on the manifold $\M$ where $\rho$ is supported (to see how the operator $\Delta_\rho$ looks like in suitable coordinates, see Appendix \ref{App:GeoBack}, specifically \eqref{eqn:LaplacianCoordinates}.) In this section, we make our estimation problem mathematically precise.

	Throughout the paper, we assume that the manifold $\M$ belongs to the following family of manifolds with bounded geometry.
	\begin{definition}[Manifold class $\MM$]
		Given $K, R, i_0, L_I, L_E$ positive constants, and $\alpha \in [0,1]$, we denote by $\MM$ the family of 
		$d$-dimensional manifolds $\M$ embedded in $\R^D$ that are smooth, compact, orientable, connected, and have no boundary, and in addition satisfy: the total volume of $\M$ is 1 (according to $\M's$ volume form), the Riemannian sectional curvature is in absolute value bounded by a constant $K$, $\M$ has reach bounded by the constant $R$, $\M$'s injectivity radius is lower bounded by $i_0$, the rate of change of sectional curvature is $\alpha$-H\"older continuous {(see~\eqref{eqn:RegularGeometry1} for a precise definition)}, and the rate of change of second fundamental form of $\M$ is bounded by $L_E$ {(see~\eqref{e.geodesic third derivative expansion} for a precise definition)}.
		\label{def:ManifoldClass}
	\end{definition} 
	In Appendix \ref{App:GeoBack}, we provide a brief review of some basic notions in Riemannian geometry that we use through the paper and in particular discuss some of the geometric quantities appearing in the previous definition.
	
	\medskip
	
	Given a manifold $\M$ in the class $\MM$, we assume $x_1, \dots, x_n$ to be samples drawn from a distribution supported on $\M$ with probability density function $\rho: \M \rightarrow \R$ with respect to $\M$'s volume form. We henceforth use $\dx$ to denote integration with respect to the Riemannian volume form associated with $\M$ whenever no confusion arises from doing so. The density $\rho$ will be assumed to belong to the class of regular densities $\mathcal{P}_\M$ defined next.
	
	\begin{definition}[Density class $\mathcal{P}_\M$] 
		\label{def:DensityClass}
		For fixed positive constants $\rho_{\min}, \rho_{\max}, c_1, c_2$, and for any given manifold $\M$ in the class $\MM$, we denote by $\mathcal{P}_\M$ the class of probability density functions $\rho: \M \rightarrow \R$ that are $C^2(\M)$ and satisfy: 
		\begin{equation}\label{eq:rho bound}
		\rho_{\min} \leq \rho(x) \leq \rho_{\max},
		\end{equation}
		and $\rho$'s first and second derivatives are upper bounded by the constants $c_1, c_2$ as follows:
		\begin{align}
		\label{eq:BoundDerivatives}
		\|\nabla\rho\|_{L^\infty(\M)} \le c_1\, , \quad \|\nabla^2\rho\|_{L^\infty(\M)} \le c_2.
		\end{align}
		\label{def:DensityClass1}
	\end{definition}

	The family of densities $\mathcal{P}_\M$ introduced above appears as a natural model class in well-studied non-parametric estimation problems such as density estimation \cite{kerkyacharian1992density,birge1995estimation}. It has also been used as a concrete setting in the theoretical analysis of graph Laplacians in the large data limit; see, e.g., \cite{trillos2019error,calder2019improved,trillos2023large} and references therein. In order to formulate our estimation problem in a simpler, yet interesting enough setting, we will restrict the family of models slightly and incorporate an eigengap assumption on the operators $\Delta_\rho$. To describe this additional constraint, it will be helpful to first recall some notions associated to these operators and discuss some basic properties.

	Standard results in the theory of elliptic operators (see, e.g., \cite{evans}) imply that, under the assumption that $\rho \in \mathcal{P}_\M$, the operator $\Delta_\rho$ is self-adjoint with respect to the inner product
	\[ \langle  f , g \rangle_{L^2(\M, \rho)}\defeq \int_{\M} f(\vx) g(\vx) \rho \dx,  \quad f, g \in L^2(\M, \rho),  \] 
	and there exists a complete \textit{orthonormal} family $\{ f_l \}_{l \in \N}$ (w.r.t. $L^2(\M, \rho)$ inner product) of eigenfunctions of $\Delta_\rho$ with corresponding eigenvalues 
	\[ 0 = \lambda_1 < \lambda_2 \leq \dots\leq \lambda_l \leq \dots  \rightarrow \infty \]
	that may be repeated according to multiplicity; in the sequel, we often use the notation  ${L}^2(\rho)$ to denote the space $L^2(\M,\rho)$ whenever the manifold $\M$ is clear from context. Explicitly, the eigenpair $(\lambda_l, f_l)$ satisfies the equation:
	\begin{equation}
	\label{eq:eigenform}
	\Delta_{\rho} f_l  =\lambda_l f_l.
	\end{equation}
	From basic regularity theory of elliptic partial differential equations (see, e.g., \cite{evans}) it follows that equation \eqref{eq:eigenform} holds at every point $x \in \M$.
	
	A quadratic form, often referred to as \textit{Dirichlet form} or energy, can be associated to the operator $\Delta_\rho$. Precisely, 
	\begin{equation}\label{Equ: dirichlet energy}
	D_\rho(f)\defeq \begin{cases} \int_{\M} |\nabla f(\vx)|^2 \rho^2(\vx) \dx, & \text{ if } f \in H^1(\M), \\  +\infty, & \text{ if } f \in L^2(\M) \setminus H^1(\M), \end{cases}
	\end{equation}
	where here and in the remainder we use $H^1(\M)$ to denote the space of (equivalence classes of) square integrable (w.r.t. $\dx$) real-valued functions on $\M$ with distributional derivatives that are also square integrable. For an element $f \in H^1(\M)$, its $H^1(\M)$ norm is defined according to
	\[ \lVert f \rVert_{H^1(\M)} \defeq  \left(\int_{\M} |f(x)|^2 \dx\right)^{1/2} + \left(\int_{\M} |\nabla f(x)|^2 \dx\right)^{1/2};   \]
	the second term on the right hand side of the above expression will be referred to as the $H^1(\M)$ \textit{semi}-norm of $f$. In terms of the Dirichlet energy $D_\rho$, the eigenvalues of $\Delta_\rho$ admit the following variational formulation 
	\begin{equation}
	\label{eq:minimax principle for laplacian}
	\lambda_{l}=\min _{S \in \mathfrak{S}_{l}} \max _{f \in S^\perp \backslash\{0\}} \frac{D_\rho(f)}{\|f\|_{L^{2}(\M, \rho)}^{2}}\,,
	\end{equation}
	where $\mathfrak{S}_{l}$ denotes the set of all linear subspaces of $L^2(\rho)$ of dimension $l \in \N$. 
	
	
	

	\medskip
	
	For fixed $l$, we define the \textit{spectral gap} $\gamma_l$ associated to $\Delta_\rho$ by
	\begin{equation}\label{eq-def:gamma_l}
	\gamma_l  \defeq  \min_{k\in \mathbb{N}}\{|\lambda_{l} - \lambda_{k}| : \lambda_{l} \neq \lambda_{k} \} \,, 
	\end{equation}
	where $\lambda_{k}, \lambda_l$ are the $k$-th and $l$-th eigenvalues of $\Delta_\rho$. We use this definition to refine the family $\mathcal{P}_\M$ introduced earlier.
	
	\begin{definition}[Density class $\mathcal{P}_{\M, l}$] 
		For a given natural number $l \geq 2$, and a given constant $\gamma>0$, we define $\mathcal{P}_{\M, l}$ to be the family of densities $\rho$ in $\mathcal{P}_\M$ for which 
		\[\gamma_l\ge \gamma.\]
	\end{definition}

	With the above definitions, our estimation problem can now be described as that of approximating, for a fixed $l$, the eigenpairs $(\lambda_l, f_l)$ associated to the operator $\Delta_\rho$ using a collection of $n$ i.i.d. points $\X_n =\{ x_1, \dots, x_n \}$ sampled from the density $\rho$, which we assume belongs to the class $\mathcal{P}_{\M, l}$ for a given $\M \in \MM$. More concretely, in this paper we study the minimax risk
	\begin{equation}\label{eq:eigenpair}
	\inf_{\hat{f}_l,\hat{\lambda}_l}\sup_{\M\in\MM,\rho\in\mathcal{P}_{\M, l}}\E_{\X_n\sim \rho} \left[ |\lambda_l - \hat{\lambda}_l|+ \lVert f_l - \hat{f}_l \lfloor_{\M} \rVert_{H^1(\M)} \right],
	\end{equation}
	where the inf ranges over all measurable maps $(\hat{\lambda}_l , \hat{f}_l)$ from $\R^{D \times n}$ into $\R \times C(\R^D)$; here $C(\R^D)$ denotes the space of real-valued continuous functions on the ambient space $\R^D$. We use $\hat{f}_l \lfloor_{\M} $ to denote the restriction of $\hat{f}_l$ to the manifold $\M$ and use the convention that the $H^1(\M)$ norm of $\hat{f}_l \lfloor_{\M} $ is $\infty$ if this function is not an element of $H^1(\M)$. Under the assumption that $\rho \in \mathcal{P}_{\M, l}$, the eigenfunction $f_l$ is defined up to rotation in the corresponding eigenspace, and we will thus interpret $ \lVert f_l - \hat{f}_l \lfloor_{\M} \rVert_{H^1(\M)}$ as the smallest of the $H^1(\M)$ distances between $\hat{f}_l \lfloor_{\M}$ and the unit-norm eigenfunctions $f_l$ corresponding to $\la_l$; for simplicity, however, we will tacitly assume that the eigenvalue $\lambda_l$ is simple and that the sign of $f_l$ has been chosen to best align with $\hat{f}_l \lfloor_{\M}$. Implicit in the formulation \eqref{eq:eigenpair} is that in our estimation problem the density $\rho$ \textit{and} the manifold $\M$ where $\rho$ is supported are both unknown. We also note that, although here we restrict our attention to the estimation of a single eigenpair of $\Delta_\rho$, it is straightforward to extend the problem to one where the goal is to estimate a fixed finite number of eigenpairs. Our main results, which we present next, can easily be generalized to that setting.  Finally, we note that the motivation for considering a norm that incorporates the error of approximation of eigenfunction gradients comes from the fact that in several applications, specifically, when eigenfunctions are used to define spectral embeddings for manifold learning, gradients describe first-order geometric information of these embeddings and, in particular, the tangent planes of the embedded manifold; see, e.g., \cite{li2019metrics,koelle2024consistency,BoMeila}. 
	\nc 
	

	\subsection{Main Results}
	\label{sec:MainResultsDiscussion}
	
	In our first main result, we state a lower bound for our problem \eqref{eq:eigenpair} as a function of the number of data points $n$.

	\begin{theorem}\label{thm:lower bound} 
		
		There exists a constant~$c  > 0$, depending on the parameters describing the families~$\MM$ and $\mathcal{P}_{\M,l}$, such that
		\begin{equation}\inf_{\hat{f}_l,\hat{\lambda}_l}\sup_{\M\in\MM,\rho\in\mathcal{P}_{\M,l}}\E_{\X_n\sim\rho}\Biggl[ |\lambda_l - \hat{\lambda}_l | +  \lVert f_l - \hat{f}_l \lfloor_{\M} \rVert_{H^1(\M)}  \Biggr] \ge c\la_{l}(\mathbb{T}^d,\mathbbm{1}) n^{-\frac{2}{d+4}} \nc \,. 
		\label{eq:minimaxTheorem}
		\end{equation}
		In particular, 
		\[\inf_{\hat{f}_l,\hat{\lambda}_l}\sup_{\M\in\MM,\rho\in\mathcal{P}_{\M,l}}\E_{\X_n\sim \rho}\Biggl[ |\lambda_l - \hat{\lambda}_l | +  \lVert f_l - \hat{f}_l \lfloor_{\M} \rVert_{H^1(\M)}  \Biggr] \ge cl^{\frac{2}{d}} n^{-\frac{2}{d+4}}\,. \]
		In the above, $\la_{l}(\mathbb{T}^d,\mathbbm{1})$ denotes the $l$-th eigenvalue of the operator $\Delta_{\rho}$ when $\M$ is the standard flat torus $\mathbb{T}^d$ of dimension $d$ and $\rho$ is the constant density function over this torus.  
		
	\end{theorem}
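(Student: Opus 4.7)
The plan is to prove the lower bound through a classical information-theoretic reduction to multiple hypothesis testing, implemented on the simplest manifold in the class $\MM$: the flat torus $\M=\mathbb{T}^d$ (embedded in an ambient $\R^D$ so that $\mathbb{T}^d\in\MM$), equipped with perturbations of the uniform density $\rho_0\equiv\mathbbm{1}$. On $(\mathbb{T}^d,\mathbbm{1})$ the eigenpairs of $\Delta_{\rho_0}$ are the classical Laplace eigenpairs (trigonometric modes with eigenvalue $\lambda_l(\mathbb{T}^d,\mathbbm{1})$), and for fixed $l$ with simple $\lambda_l$ the spectral gap $\gamma_l$ is bounded below by a constant, so that small density perturbations remain in $\mathcal{P}_{\mathbb{T}^d,l}$ by standard resolvent continuity.

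Fix a smooth radial mean-zero template $\psi\colon\R^d\to\R$ with $\supp\psi\subset B(0,1)$ and $\|\psi\|_{C^2}\le 1$; for a bandwidth $h>0$ to be chosen, tile $\mathbb{T}^d$ by $m=\lfloor h^{-d}\rfloor$ disjoint cells centered at $x_1,\dots,x_m$ and set $\phi_i(x)\defeq \psi((x-x_i)/h)$. For each $\tau\in\{0,1\}^m$ define
\[
\rho_\tau \defeq 1+\delta\sum_{i=1}^m\tau_i\phi_i,
\]
where the amplitude $\delta>0$ must satisfy $\delta\lesssim h^2$ in order to respect the $C^2$ constraint in $\mathcal{P}_{\M}$; mass conservation is automatic because $\int\psi=0$. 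A standard $\chi^2$ bound yields the tensorized KL estimate $\mathrm{KL}(\rho_\tau^{\otimes n}\|\rho_{\tau'}^{\otimes n})\lesssim n\delta^2\sum_{i\in S}\|\phi_i\|_{L^2}^2\lesssim n\delta^2 h^d |S|$, where $S$ is the set of coordinates on which $\tau$ and $\tau'$ differ. Balancing $n\delta^2 h^d\asymp 1$ against $\delta\asymp h^2$ produces the natural scales $h\asymp n^{-1/(d+4)}$ and $\delta\asymp n^{-2/(d+4)}$, which is exactly the target rate.

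The heart of the argument is a quantitative first-order spectral perturbation analysis. Self-adjointness of $\Delta_{\rho_\tau}$ and the uniform spectral gap allow one to write $f_l^\tau = f_l^0+\delta\sum_i\tau_i u_i+r_\tau$ and $\lambda_l^\tau=\lambda_l^0+O(\delta^2)$, where the remainder $r_\tau$ is $O(\delta^2)$ in $H^1$ uniformly in $\tau$ and each correction $u_i\perp f_l^0$ satisfies the Fredholm resolvent equation
\[
(-\Delta-\lambda_l^0)\,u_i \;=\; P^{\perp}\bigl(\lambda_l^0\phi_i f_l^0-2\nabla\phi_i\cdot\nabla f_l^0\bigr).
\]
Solving this equation in the explicit Fourier basis of $\mathbb{T}^d$ gives a two-sided estimate $\|u_i\|_{H^1(\mathbb{T}^d)}\asymp C(\lambda_l^0)\,h^{d/2}$, where the $\lambda_l^0$-dependent prefactor $C(\lambda_l^0)$ is what ultimately supplies the $\lambda_l(\mathbb{T}^d,\mathbbm{1})$ scaling of the lower bound; a decay estimate for the resolvent's Green's kernel further shows that each $u_i$ is essentially localized near $x_i$, so the family $\{u_i\}_i$ is nearly $H^1$-orthogonal. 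Combining these ingredients,
\[
\|f_l^\tau-f_l^{\tau'}\|_{H^1(\mathbb{T}^d)}^2 \;\gtrsim\; \delta^2\,C(\lambda_l^0)^2\,h^d\,d_H(\tau,\tau').
\]
Applying Fano's lemma to a Gilbert--Varshamov packing of $\{0,1\}^m$ with $\log M\asymp m$ codewords and pairwise Hamming distance at least $m/8$ (the KL hypothesis $n\delta^2 m h^d\lesssim m$ holds by construction) then yields a minimax lower bound of the order of the pairwise $H^1$ separation, which simplifies to $\delta\,C(\lambda_l^0)\asymp \lambda_l(\mathbb{T}^d,\mathbbm{1})\,n^{-2/(d+4)}$. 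The main obstacle is the sharp resolvent estimate: obtaining the matching \emph{lower} bound on $\|u_i\|_{H^1}$, establishing the near-orthogonality of the $\{u_i\}_i$ uniformly over bump locations $x_i\in\mathbb{T}^d$, and controlling the second-order remainder $r_\tau$ uniformly over $\tau\in\{0,1\}^m$; each is delicate but follows from standard spectral-theoretic arguments once the Fourier-side computations are carried out.
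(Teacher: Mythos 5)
Your high-level strategy — work on the flat torus, perturb the uniform density by an array of mean-zero bumps at scale $h\asymp n^{-1/(d+4)}$ with amplitude $\delta\asymp h^2$, bound KL by $\chi^2$, apply Fano over a Gilbert--Varshamov packing of the hypercube — matches the paper's skeleton (the paper uses $\vc\in\{\pm1\}^{m^d}$ with $m\asymp h^{-1}$ and Wainwright's Example 5.3 for the packing). Where the two proofs diverge is the technical heart: the lower bound on $\|f_l^{\tau}-f_l^{\tau'}\|_{H^1}$ for well-separated hypotheses. You propose first-order spectral perturbation theory ($f_l^{\tau}\approx f_l^0+\delta\sum\tau_iu_i$ with $u_i$ solving a Fredholm resolvent equation), with a two-sided estimate $\|u_i\|_{H^1}\asymp C(\lambda_l^0)h^{d/2}$ and near-$H^1$-orthogonality of the $u_i$ via decay of the resolvent's Green's kernel. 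The paper (Lemma~\ref{lem:eigenpair-theta-difference}) instead subtracts the two eigenvalue PDEs, restricts to each cell $\M_i$, and feeds a pointwise lower bound on the local forcing $-\frac{4}{m^2}\nabla a_i\cdot\nabla f_l(\mathbbm 1)$ into an elementary cutoff lemma (Lemma~\ref{lem:Laplacian to grad}) that turns this into a lower bound on $\int_{\mathcal N_i}|g|+\int_{\mathcal N_i}|\nabla g|$ for $g=f_{l,1}-f_{l,2}$; since cells are disjoint, these $L^1$-type quantities simply add, and Jensen converts $L^1$ to $L^2$. This avoids both resolvent analysis and any notion of global orthogonality.

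Two issues in your route need to be addressed before it closes. First, the uniform two-sided bound $\|u_i\|_{H^1}\asymp C(\lambda_l^0)h^{d/2}$ is \emph{false} as stated: the forcing for $u_i$ couples to $\nabla f_l^0$ and $f_l^0$, and on cells near the critical set $\{\nabla f_l^0=0\}$ (which is nonempty for $l\geq 2$) the coupling degenerates and $\|u_i\|_{H^1}$ can be much smaller than $h^{d/2}$. The paper handles precisely this by restricting attention to a ``good'' index set $I$ where $|\nabla f_l(\mathbbm 1)|\gtrsim\sqrt{\lambda_l}$ (Lemma~\ref{lem:region B} shows $|I|\geq cm^d$ via the Chebyshev-type volume estimate~\eqref{e.condition2}), and it redefines ``sufficiently different'' hypotheses to only count disagreements inside $I$ (Definition~\ref{d.sd}). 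Your argument needs the same restriction, otherwise the separation fails for pairs $\tau,\tau'$ that differ only on bad cells. Second, the near-orthogonality claim via ``decay of the resolvent's Green's kernel'' is not automatic: $-\Delta-\lambda_l^0$ with $\lambda_l^0>0$ is a positive-energy Helmholtz operator whose free Green's function only decays polynomially (with oscillation), and on a compact torus there is no decay at all. What saves you, if anything, is the vanishing moments of the mean-zero bump (so the effective multipole is higher order, suppressed by extra powers of $h$) together with $h\ll\lambda_l^{-1/2}$; but as written this is asserted, not proved, and it is the crux of the argument. The paper's locally-aggregated $L^1$ estimate sidesteps this entirely and is, in that sense, more robust.
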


	We observe that the rate $n^{-\frac{2}{d+4}}$ coincides with the minimax rate for the closely related \textit{density estimation problem} when the error of approximation is measured in a $L^2$ sense and the class of data generating models is $\mathcal{P}_\M$. In precise terms, it follows from well-known existing theory that
	\begin{equation}
	\inf_{\hat{\rho}_n} \sup_{ \M \in \MM, \rho \in \mathcal{P}_\M} \E_{\X_n \sim \rho}\left[ \lVert \hat{\rho}_n - \rho \rVert_{L^2(\M)}  \right] \geq  c n^{-\frac{2}{d+4}}\,,
	\end{equation}
	and it is also well-known that this rate is achieved by \textit{kernel density estimators} $\hat{\rho}_n$ of the form
	\[\hat{\rho}_n(x) \defeq \frac{1}{n} \sum_{i=1}^n K_{r_n}( x_i- x), \text{ for } x\in\M,\]
	for suitable kernels $K_{r_n}$ with bandwidth roughly defined as $r_n \sim  \ n^{- \frac{1}{d+4}} $. As it turns out, these kernel density estimators can actually be turned into minimax optimal estimators for the eigenpairs of the operator $\Delta_\rho$, at least when the manifold $\M$ is known. More precisely, if for the moment we assume to know the manifold $\M$ and we also assume that the unknown density $\rho$ belongs to the class $\mathcal{P}_{\M, l}$, it is possible to use an optimal density estimator $\hat{\rho}_n$ to define the \textit{plug-in} estimator $(\lambda_{\hat{\rho}_n}, f_{\hat{\rho}_n})$,  i.e., the $l$-th eigenpair of the operator
	\begin{equation} 
	\Delta_{\hat{\rho}_n}  f = -\frac{1}{\hat{\rho}_n} \divergence(\hat{\rho}_n^2 \nabla  f  ). 
	\label{eqn:PlugInPDE}
	\end{equation}
	That the estimator $(\lambda_{\hat{\rho}_n}, f_{\hat{\rho}_n})$ reaches the rate $n^{-\frac{2}{d+4}}$ follows from standard computations in the perturbation analysis of elliptic differential operators combined with the approximation error rates for kernel density estimation in the $L^2$-sense. Precisely, one can prove that
	\[ \E \left( \lvert \lambda_l - \lambda_{\hat{\rho}_n} \rvert  + \lVert f_l -  f_{\hat{\rho}_n}\rVert_{H^1(\M)} \right) \leq C_l \E \left( \lVert \rho- \hat{\rho}_n \rVert_{L^2(\M)}\right) \leq C_l n^{-\frac{2}{d+4}} ;\]
	some details of this computation are presented in Appendix \ref{sec:upper bound kde} for the convenience of the reader. Granted that the estimator $(\lambda_{\hat{\rho}_n}, f_{\hat{\rho}_n})$ is far from practical, as it involves solving a differential equation and, more importantly, relies on the assumption that the manifold $\M$ is known ---in order to make sense of the divergence and gradient operators appearing in the definition of the operator $\Delta_{\hat{\rho}_n}$---, it is still useful to consider this estimator because it serves as a benchmark when discussing the performance of other estimators that are more practical and that do not depend on prior knowledge of the manifold $\M$. Moreover, this construction suggests a connection between the density and eigenpair estimation problems and motivates the hypothesis that the lower bound for density estimation is also a lower bound for eigenpair estimation. While this is true \textit{a posteriori}, the lower bound for the eigenpair estimation problem does not actually follow directly from the lower bound for density estimation. This is because it is possible to construct regular densities $\rho, \tilde{\rho}$ over the same manifold (e.g., over the $d$-dimensional flat torus) that, although not close in an $L^2$ sense, are such that their induced operators $\Delta_\rho$ and $\Delta_{\tilde \rho}$ have eigenpairs that are very close to each other; see, e.g., \cite{MR3082248,MR4711957}. We are thus forced to directly attempt to employ information-theoretic tools used in non-parametric estimation problems, but now with new elements at play, given that the objects to be estimated require solving partial differential equations. The details of our analysis are presented in Section \ref{sec:lower bound}. While here we have focused on studying the eigenpair estimation problem under a metric that measures the error of approximation of eigenvalues, eigenfunctions, and eigenfunction gradients, in the future it would be worth studying similar estimation problems under other metrics of interest to practitioners; see Remark \ref{remark:decoupled distance}.
	
	%

	\begin{remark}
		It is worth highlighting that in order to obtain lower bounds for \eqref{eq:eigenpair} it suffices to study lower bounds for the minimax risk that is defined as in $\eqref{eq:eigenpair}$ but where the supremum ranges over the family of data generating distributions $\mathcal{P}_{\mathbb{T}^d,l}$ only. Our proof, which thus focuses on the flat torus setting, can be adapted to obtain similar lower bounds (in terms of $n$) for the minimax risk when the supremum ranges over $\mathcal{P}_{\M,l}$ for a fixed (thus implicitly known) manifold $\M \in \MM$. A complete discussion of these points will be presented in Section \ref{sec:lower bound}. In particular, see Remark \ref{rem:OnExtendingTorus}. 
	\end{remark}

	\medskip

After enunciating our lower bounds in Theorem \ref{thm:lower bound}, we revisit the use of graph Laplacians to approximate eigenpairs of $\Delta_\rho$. In order to prove our strongest approximation results, and due to limitations coming from the regularity theory of elliptic PDEs, we will have to impose slightly stronger assumptions on the model $(\M, \rho)$ than the ones we have adopted up to this point. However, before discussing the additional assumptions on the model $(\M, \rho)$, we first introduce some notation and assumptions on the construction of the graph Laplacian that are used in the sequel. \nc 
	

	We use $\underline{L}^2(\X_n)$ to denote the space of mappings $u: \X_n \rightarrow \R$ and endow this space with the inner product $\langle \cdot , \cdot \rangle_{\underline{L}^2(\X_n)} $ defined according to
	\[ \langle u , v \rangle_{\underline{L}^2(\X_n)} \defeq \frac{1}{n}\sum_{x \in \X_n} u(x) v(x).   \]
	For convenience, in the sequel we work with a rescaled version of the graph Laplacian $\Delta_n$ defined as 
	\begin{equation}
	\mathcal{L}_{\e_n,n } := \frac{2}{\sigma_\eta} \Delta_n, 
	\label{eq:Normalized}
	\end{equation}
	where the constant $\sigma_\eta$ is defined as
	\begin{align} \label{e.sigmaeta}
	\sigma_\eta\defeq \int_{\R^d} |y_1|^2 \eta(|y|) \dd y\,;
	\end{align}
	in the above, $y_1$ represents the first coordinate of $y\in\R^d$. We note that $\sigma_\eta/2$ is the factor needed to appropriately scale the graph Laplacian to guarantee that its spectrum converges to that of $\Delta_\rho$. Throughout the paper, we make the following assumptions on $\eta$ and the connectivity $\e_n$, which together specify the graph Laplacian $\mathcal{L}_{\e_n,n }$.  
	\begin{assumption}
		The function $\eta : [0,\infty) \to [0,\infty)$ used to build the graph Laplacian is assumed to be a non-increasing function with support on the interval $[0, 1]$. We assume that $\eta$'s restriction to $[0, 1]$ is Lipschitz continuous —the assumption that $\eta$ is a non-increasing function can be relaxed by using arguments similar to those in the recent work \cite{trillos2023large}. We further assume that $\eta$ satisfies $\eta(0)=1$, $\eta(1)=0$, $\eta(\frac{1}{2})>0$, and $\|\eta'\|_{L^{\infty}([0,1])}\leq C_\eta$. Finally, without loss of generality, we assume 
		\begin{align} \label{e.normalize}
		\int_{\R^d} \eta(|x|)\dx=1.
		\end{align}	
		\label{assump.Eta}
	\end{assumption}
	\begin{assumption}
		The connectivity parameter $\e_n$ is assumed to satisfy
		\begin{equation}\label{eq:assumption:eps small}
		C\frac{ (\ln n)^{1/d}}{n^{1/d}} < \e_n < \frac{1}{2}\min\{1,i_0,K^{-\frac{1}{2}},R/2\},
		\end{equation}
		where the constant~$C > 0$ on the left hand side is a geometric constant (depending on the parameters determining the families $\MM$ and $\mathcal{P}_\M$ for $\M \in \MM$) to ensure that the weighted proximity graph~$(\X_n,\omega^{\e_n})$ (with $\omega^{\e_n}(x_i, x_j):=\eta(\frac{|x_i - x_j|}{\e_n})$) is connected with probability~$1 - C\e_n^{-d}\exp(-Cn \e_n^d)$;  the geometric quantities $i_0,K,R$ on the right hand side were introduced in Definition \ref{def:ManifoldClass}. Note that the upper bound on $\e_n$ is a standard assumption in the literature of graph Laplacians over proximity graphs; see, e.g., \cite{trillos2019error,calder2019improved}.
		\label{assump:Connectivity}
	\end{assumption}

	The graph Laplacian $\mathcal{L}_{\e_n,n }$ is easily seen to be a positive semi-definite operator with respect to the inner product $\langle \cdot , \cdot \rangle_{\underline{L}^2(\X_n)}$; see, e.g., \cite{von2007tutorial}. We will henceforth list $\mathcal{L}_{\e_n,n }$'s eigenvalues in increasing order as
	\begin{align}
	0= \lambda_{n,1} <  \lambda_{n,2} \le \dots\le \lambda_{n,l}\le \dots \le \lambda_{n,n},
	\end{align}
	and use $\phi_{n,1},\dots,\phi_{n,n}$ to denote a corresponding orthonormal (w.r.t $\langle \cdot , \cdot \rangle_{\underline{L}^2(\X_n)}$) basis of eigenvectors. Thanks to the lower bound in Assumption \ref{assump:Connectivity}, we can indeed assume, without the loss of generality, that $\lambda_{n,2}>0$.


	Having introduced the previous notation and assumptions, we now present two results that quantify the error of approximation of estimators based on eigenpairs of $\mathcal{L}_{\e_n,n }$. For the first result, we measure the difference between the $l$-th eigenvector of $\mathcal{L}_{\e_n,n }$ and the $l$-th eigenfunction of $\Delta_\rho$ in a \textit{discrete} $H^1$-type norm that we introduce next. Given $u: \X_n \to \R$, we define its discrete $\underline{H}^1(\X_n)$ semi-norm as
	\begin{equation}\label{e.def-H_1}
	\|u\|^2_{\underline{H}^1(\X_n)} \defeq \frac1{n^2\veps^{d+2} }\sum_{x\in \X_n}\sum_{y \in \X_n} \eta\left(\frac{|x-y|}{\veps}\right) (u(x) - u(y))^2,
	\end{equation}
	and use the above definition to introduce the scale-invariant error:
	\begin{equation}\label{eq-def:E_l}
	\mathcal{E}_l (\X_n)  \defeq  \frac{|\la_{n,l} - \la_{l}|}{\la_{l}} + \frac{\gamma_l}{\lambda_l} \lVert \phi_{n,l}- f_l \rVert_{\underline{L}^2(\X_n)}
	+\frac{ \gamma_l }{\sqrt{\lambda_l}} \lVert \phi_{n,l} - f_l \rVert_{\underline{H}^1(\X_n)} \,,
	\end{equation}
	where we use $\la_{l}$ and $f_{l}$ \nc to denote the $l$-th eigenvalue and eigenfunction of $\Delta_\rho$ for $\rho$ the distribution used to sample the data $\X_n$, and use $\gamma_l$ for the spectral gap defined in \eqref{eq-def:gamma_l}.

 As mentioned earlier, we must impose some slightly stronger regularity assumptions on the geometry of $\M$ and the density $\rho$ for our next main theorem.

      \begin{assumption}  
      Suppose that $\M \in \MM$ for $\alpha>0$. Assume also that $\rho$ belongs to the class of densities $\mathcal{P}_\M^{2,\alpha}$ defined as the set of densities in $\mathcal{P}_\M$ whose second derivatives are H\"older continuous with H\"older constant less than $c_{2,\alpha}$.       
\label{assump:MoreRegularity}
      \end{assumption}

	\nc

	\begin{theorem}
		\label{t.upperbound}
		Assume $\X_n=\{ x_1, \dots, x_n \}$ are i.i.d. sampled from a distribution $\rho$ satisfying Assumption \ref{assump:MoreRegularity}. There exist constants~$0<c<1, B,C>1$ only depending on the parameters defining $\MM$ and $\mathcal{P}_{\M}^{2,\alpha}$ (potentially including $\alpha$), such that the following holds: If
		\begin{equation} \label{e.howhighup}
		{\veps   \sqrt{\lambda_{l}} < \min\bigl( 1, B \lambda_l^{-\frac{d-1}{2}} \bigr)}\,,
		\end{equation}
		then 
		\begin{equation} \label{e.quench}
		\P \Bigl[ \mathcal{E}_l(\X_n) >  C\log(\e_n^{-1})\veps^2\Bigr] \leqslant Cn\e_n^{-d} \exp \left( -cn \e_n^{d+4}\right)\,.
		\end{equation}
		In particular, when choosing $\e_n\sim \left(\frac{ \log(n)}{ n}\right)^{\frac{1}{d+4}}$, we have
		\begin{equation} \label{e.anneal}
		\E_{\X_n\sim\rho}\Bigl[ \mathcal{E}_l(\X_n) \Bigr] \leqslant  C \left(\frac{1}{n}\right)^{\frac{2}{d+4}}\frac{\log n}{\log\log n}\,.
		\end{equation}
	\end{theorem}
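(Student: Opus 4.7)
The strategy is to split $\mathcal{E}_l(\X_n)$ into a deterministic bias of order $\veps^2$ coming from a two-scale expansion that compares the graph operator $\mathcal{L}_{\veps,n}$ to $\Delta_\rho$, plus a stochastic fluctuation controlled by Bernstein-type concentration at scale $\veps$. First I would set up an interpolation framework consisting of a restriction operator $R : H^1(\M) \to \underline{L}^2(\X_n)$ given by sample evaluation, and an extension/mollification operator $P_\veps : \underline{L}^2(\X_n) \to L^2(\M,\rho)$ built from the kernel $\eta$ and the manifold exponential map, both designed to be approximate isometries for the $L^2$ and Dirichlet-energy inner products up to an $O(\veps^2)$ error. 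Plugging restrictions of continuum eigenfunctions and extensions of graph eigenvectors into the Courant--Fischer principle \eqref{eq:minimax principle for laplacian} and its graph analogue then yields the two-sided eigenvalue bound $|\lambda_{n,l} - \lambda_l| \lesssim \log(\e_n^{-1}) \veps^2$ under the bandwidth constraint \eqref{e.howhighup}.

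The heart of the argument is a quantitative pointwise two-scale expansion stating that, for $f \in C^{2,\alpha}(\M)$ and $x \in \M$,
\begin{equation*}
\frac{1}{\veps^{d+2}} \int_{\M} \eta\!\left(\tfrac{|x-y|}{\veps}\right)\bigl(f(x)-f(y)\bigr)\rho(y)\,\dy \;=\; \tfrac{\sigma_\eta}{2}\,\Delta_\rho f(x) \;+\; O(\veps^2).
\end{equation*}
The $O(\veps^2)$ remainder is extracted by Taylor expanding along geodesics of $\M$, using the bounded-geometry expansions \eqref{eqn:RegularGeometry1} and \eqref{e.geodesic third derivative expansion} to handle the Euclidean-versus-geodesic discrepancy and the second fundamental form in the ambient embedding, and invoking radial symmetry of $\eta$ to cancel the first-order term; the $C^{2,\alpha}$ regularity granted by Assumption \ref{assump:MoreRegularity} is exactly what allows the remainder to be genuinely of order $\veps^2$. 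Applying this expansion to the eigenfunction $f_l$, whose higher derivatives scale with powers of $\sqrt{\lambda_l}$ by elliptic regularity, is precisely what forces the bandwidth restriction \eqref{e.howhighup}. Passing from the deterministic nonlocal operator to $\mathcal{L}_{\veps,n}$ is then an empirical-process step: the centered quantities have bounded amplitudes and variances benefiting from the second-order cancellation, so Bernstein's inequality gives the tail $\exp(-cn\veps^{d+4})$ after a union bound over a $\veps$-net of $\M$ of cardinality $O(\veps^{-d})$ together with a Lipschitz estimate in $x$.

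To promote the eigenvalue bound to the $\underline{L}^2(\X_n)$ and $\underline{H}^1(\X_n)$ eigenvector bounds entering $\mathcal{E}_l(\X_n)$, I would invoke a Davis--Kahan style argument at the level of the graph operator, testing $\mathcal{L}_{\veps,n}$ against $R f_l$ and using the spectral gap $\gamma_l$ from \eqref{eq-def:gamma_l}; the $\sqrt{\lambda_l}$ and $\lambda_l$ prefactors in \eqref{eq-def:E_l} are the correct normalizations for the Dirichlet energy and $L^2$ mass of $f_l$. The annealed bound \eqref{e.anneal} then follows by choosing $\e_n$ so that $n\e_n^{d+4} \sim \log n$ and integrating the sub-Gaussian tail of \eqref{e.quench}, the factor $\log n / \log\log n$ arising from the $\log(\e_n^{-1})$ term evaluated at this optimal bandwidth.

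The main obstacle is executing the two-scale expansion with genuine $\veps^2$ accuracy while simultaneously keeping the stochastic remainder sharp enough to produce the $\veps^{d+4}$ (and not $\veps^{d+2}$) exponent in the concentration prefactor; this requires fully exploiting the radial symmetry of $\eta(|\cdot|)$, the $C^{2,\alpha}$ regularity of $\rho$, and precise geometric expansions of $\M \subset \R^D$ that translate extrinsic kernel weights into intrinsic ones without losing the second-order structure. The corrector viewpoint borrowed from quantitative homogenization of elliptic PDEs, highlighted in the introduction, is the new ingredient that enables this refinement over the first-order-in-$\veps$ analyses of earlier graph Laplacian convergence results.
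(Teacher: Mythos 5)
Your proposal is structurally a bias--variance split carried out \emph{pointwise}: a second-order Taylor/two-scale expansion yields a bias of order $\veps^2$, and then Bernstein plus a $\veps$-net controls the fluctuation of $\mathcal{L}_{\e_n,n} f_l(x) - \tfrac{\sigma_\eta}{2}\Delta_\rho f_l(x)$ uniformly in $x$. This route cannot produce the stated rate, and the paper is explicit about why. At a single point the variance of $\mathcal{L}_{\e_n,n} f_l(x)$ is not small: each summand $\eta(|x-y|/\veps)(f_l(x)-f_l(y))/\veps^{d+2}$ has amplitude $O(1/\veps)$, so Bernstein at a fixed $x$ gives $\P[|\mathcal{L}_{\e_n,n}\ell_e(x)| > t] \le 2\exp(-cn\veps^{d+2}t^2)$, as recorded in~\eqref{eq:ConcenBoundell_e}. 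To force $t \sim \veps^2$ you would need the exponent $n\veps^{d+6}$, not $n\veps^{d+4}$, and the $\veps^{-d}$ union bound does not help. Balancing pointwise bias $\veps^2$ against pointwise standard deviation $(n\veps^{d+2})^{-1/2}$ gives precisely $\veps \sim n^{-1/(d+4)}$ and an error of $n^{-1/(d+4)}$ --- i.e.\ the rate of~\cite{calder2019improved} --- not $n^{-2/(d+4)}$. Your claim that ``the centered quantities have bounded amplitudes and variances benefiting from the second-order cancellation'' is where the gap sits: radial symmetry kills the first-order term in the \emph{mean}, but it does nothing to reduce the pointwise fluctuation, which genuinely scales as $1/\veps$.

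The paper's actual mechanism is to avoid the pointwise estimate altogether and instead bound $\mathcal{L}_{\e_n,n}f_l - \Delta_\rho f_l$ in the weak norm $\underline{H}^{-1}(\X_n)$. Starting from the algebraic identity~\eqref{eq:EigenvalueEstimate}, the numerator is controlled by $\|\phi_{n,l}\|_{\underline{H}^1(\X_n)}\,\|\mathcal{L}_{\e_n,n}f_l - \Delta_\rho f_l\|_{\underline{H}^{-1}(\X_n)}$, and the $\underline{H}^{-1}$ semi-norm is in turn reduced, via the multiscale Poincar\'e inequality (Proposition~\ref{l.msp}), to averages of $\mathcal{L}_{\e_n,n}f_l - \Delta_\rho f_l$ against rescaled indicators of cells at every dyadic scale. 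When paired against such a test function the expression becomes a symmetrized U-statistic of the form $\frac{1}{n^2\veps^{d+2}}\sum_{i,j} \eta(\cdot)(g(x_i)-g(x_j)) e\cdot(x_i-x_j)$, and the symmetrization cancels the large $1/\veps$ amplitude; this is what yields the improved concentration $\exp(-cn\veps^d t^2)$ in~\eqref{e.introestimate} and, after the multiscale summation, the exponent $n\veps^{d+4}$ in~\eqref{e.quench}. The bias estimate is also genuinely better when tested against $g$ (order $\veps^2$ versus order $\veps$ pointwise, cf.\ Proposition~\ref{prop:bias}). Your extension/restriction-operator Courant--Fischer framework is essentially what the paper invokes only as an \emph{a priori} estimate (Propositions~\ref{prop:f_n,f_0 angle},~\ref{prop:la_n<2la_0}) to guarantee the denominator in~\eqref{eq:EigenvalueEstimate} is bounded below by $1/2$; used alone it gives first-order rates. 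The step you are missing is the duality $\underline{H}^1:\underline{H}^{-1}$ combined with the multiscale Poincar\'e inequality, which is the homogenization-inspired core of the argument.
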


	A few remarks are in order.

    \begin{remark}
   Let us start by discussing the need for the additional regularity assumptions on $\M$ and $\rho$ that are contained in Assumption \ref{assump:MoreRegularity}.  In the preparatory lemmas leading to the proof of Theorem \ref{t.upperbound}, we rely on the fact that we can control the $C^3$-norm of the function $f_l$. The problem with the initial assumptions on $\rho$ and $\M$ is that if we only require the coefficients of the operator $\Delta_\rho$ (see \eqref{eqn:LaplacianCoordinates}) to be in the borderline space $C^2$, then we will not be able to guarantee that $f_l$ is $C^3$. To guarantee the desired regularity of $f_l$, we thus require the coefficients of the operator to belong to $C^{2, \alpha}$ for $\alpha$ \textit{strictly larger than zero} and this is in turn guaranteed under Assumption \ref{assump:MoreRegularity}. For a discussion on the failure of the ``expected" regularity estimates in the borderline case $\alpha=0$ see the counterexamples in \cite[Section 2.2]{FernndezReal2022}, and for the regularity result that we implicitly use in the sequel see \cite[Corollary 2.29]{FernndezReal2022}.    
    \end{remark}
     
	\begin{remark}
	Up to the logarithmic factors in~\eqref{e.anneal},  we can already draw comparisons with the lower bound we obtained in Theorem \ref{thm:lower bound}. First, note that our upper bound with graph Laplacian based estimators holds under slightly stronger assumptions. However, since the result holds for all $\alpha>0$ (although constants in the error bound \eqref{e.anneal} may degenerate as $\alpha $ gets close to zero), morally speaking we can interpret that the rate of convergence in $n$ that graph Laplacians achieve (again, up to logarithmic corrections) does match the lower bound in our first main result. Secondly, note that in Theorem \ref{t.upperbound} we measure the error of approximation in a \textit{discrete} $H^1$-norm. In Theorem \ref{thm:Data dependent construction} below, we will measure the error of approximation in the $H^1(\M)$ norm of a suitable extension of graph Laplacian eigenvectors.

	\end{remark}
	
	\nc 
	%
	%

\begin{remark}[On the optimality of the scalings in Theorem~\ref{t.upperbound}]
		 We remark on the optimality of Theorem~\ref{t.upperbound} and further perspectives. In light of the lower bound in Theorem~\ref{thm:lower bound} and the discussion in the previous remark, the exponent~$\frac{2}{d+4}$ in~\eqref{e.anneal} cannot be improved, and in this sense, Theorem~\ref{t.upperbound} is sharp in the regime~$\e_n \sim \Bigl( \frac{\log(n)}{n} \Bigr)^{\frac{1}{d+4}}$. We expect that the logarithmic terms in~\eqref{e.anneal} can be removed, and this point will be addressed elsewhere. 
        
        \medskip The quantity~$\e_n \sqrt{\lambda_l}$ appearing on the left-hand side of~\eqref{e.howhighup} is scale-invariant. We expect that, by analogy with~\cite{MR3082248,armstrong2023optimal}, convergence rates such as~\eqref{e.quench}-\eqref{e.anneal} hold so long as~$\e_n \sqrt{\lambda_l} < 1. $ 
        
        \medskip It is also natural to wonder what the optimal convergence rates are for intermediate graph connectivities satisfying~$  \bigl( \frac{\log (n)}{n}\bigr)^{\frac{1}d} \ll \e_n \ll \bigl(\frac{\log(n)}{n} \bigr)^{\frac1{d+4}}.$ Such questions could potentially be addressed using some of the tools in~\cite{armstrong2023optimal}, where for very sparse graphs (above percolation) optimal convergence rates (in the sense that they agree with the rates of convergence in periodic homogenization), that are valid for~$\e_n \sqrt{\lambda_l} < 1 $, are obtained. The graphs considered in~\cite{armstrong2023optimal} are built from supercritical continuum percolation cluster with \emph{uniform} intensity that is greater than the critical intensity that guarantees the existence of a unique unbounded connected percolation cluster. This point, which is beyond the scope of this paper, will be addressed elsewhere. 
	\end{remark}

	\medskip

	In our second result, we use Theorem \ref{t.upperbound} in combination with a suitable extension operator to define an estimator that approximates $(\lambda_{l}, f_{l})$ in the original norm from Theorem \ref{thm:lower bound} at a rate $n^{-\frac{2}{d+4}}$ (up to logarithmic corrections and under the slightly stronger regularity requirements in Assumption \ref{assump:MoreRegularity}). To state this result, we follow similar ideas as in~\cite{trillos2019error} and define a kernel $\psi$ closely connected to the kernel $\eta$ in the definition of the graph Laplacian. Precisely, one can define
		$\psi(t)\defeq \int_{t}^\infty \eta(s) s \dd s$ and set
		\begin{equation}
		k_r(x,y) := \frac{1}{r^d} \psi\left(\frac{|x-y|}{r} \right),
		\label{eq:ExtensionKernel}
		\end{equation}
		for $r>0$. Given $u: \X_n \rightarrow \R$, we define its extension 
		\begin{equation}
		\Lambda_ru(x) \defeq  \frac{1}{\theta(x)} \Lambda_r^0 u (x)\,,
		\label{eq:ExtensionLambda}
		\end{equation}
		where
		\[ \Lambda^0_r u(x) \defeq \frac{1}{n}\sum_{j=1}^n u(x_j) k_r(x,x_j), \]
		and $\theta: \M \rightarrow \R$ is a normalization function defined via
		\begin{equation*}
		\theta(x) \defeq \Lambda_r^0 \mathbf{1} (x) = \frac{1}{n} \sum_{j=1}^nk_r(x, x_j)\,.
		\end{equation*}
    We consider the extension of the eigenvector $\phi_{n,l}$:
    \begin{equation}
        \tilde{\phi}_{n, l} := \Lambda_{r/2} \phi_{n,l},
        \label{def:ExtensionEigenvector}
    \end{equation}
    for the choice $r= \e_n$. Importantly, note that $\tilde{\phi}_{n , l}$ is completely data-driven and in particular does not require knowledge of either $\rho$ or $\M$, in contrast with the plug-in estimator $(\lambda_{\hat{\rho}_n}, f_{\hat{\rho}_n})$, which is based on density estimation and requires knowledge of $\M$ to formulate the PDE \eqref{eqn:PlugInPDE}.

	\begin{theorem}\label{thm:Data dependent construction}
	Suppose that the pair $(\M, \rho)$ satisfies Assumption \ref{assump:MoreRegularity}. Let $(\lambda_{n,l}, \phi_{n,l})$ be the~$l$-th eigenpair of the graph Laplacian $\mathcal{L}_{n , \e_n}$ defined in \eqref{eq:Normalized} constructed with samples from $\rho$. Consider the estimator $(\lambda_{n,l}, \tilde{\phi}_{n,l})$ with $\tilde{\phi}_{n,l}$ as defined in \eqref{def:ExtensionEigenvector} for the choice $\e_n\sim \left(\frac{ \log(n)}{ n}\right)^{\frac{1}{d+4}}$. Then 
		\begin{equation} \label{e.mainestimatewithextension}
		\E\left[\left|{\la}_{n, l}-\la_{l}\right|+\gamma_l \left\|\tilde{\phi}_{n,l}-f_{l}\right\|_{H^1(\M)}\right] \leq C\la_{l} n^{-\frac{2}{ d+ 4  }} \frac{\log n}{\log\log n} \,,
		\end{equation}    
		where the spectral gap~$\gamma_l$ is as defined in~\eqref{eq-def:gamma_l}. 
		In particular, if $\rho \in \mathcal{P}_{\M, l}^{2,\alpha} := \mathcal{P}_{\M}^{2,\alpha} \cap   \mathcal{P}_{\M, l}$, then
		\begin{equation*} 
		\E\left[\left|{\la}_{n,l}-\la_{l}\right|+\left\|\tilde{\phi}_{n,l}-f_{l}\right\|_{H^1(\M)}\right] \leq C\la_l  \max\left\{\frac{1}{\gamma_{}},1\right\} n^{-\frac{2}{d+4}}\frac{\log n}{\log\log n} \,.
		\end{equation*}  
	\end{theorem}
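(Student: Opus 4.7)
The plan is to reduce Theorem~\ref{thm:Data dependent construction} to Theorem~\ref{t.upperbound} by analyzing the regularity and approximation properties of the extension operator~$\Lambda_{r/2}$. The eigenvalue bound $|\lambda_{n,l}-\lambda_l|$ is already provided directly by Theorem~\ref{t.upperbound}, so the substantive work is to control $\gamma_l\,\E\|\tilde{\phi}_{n,l}-f_l\|_{H^1(\M)}$. Since $\Lambda_{r/2}$ is linear on discrete data, I would use the additive split
\[
\tilde{\phi}_{n,l} - f_l \;=\; \Lambda_{r/2}\bigl(\phi_{n,l} - f_l\lfloor_{\X_n}\bigr) \;+\; \bigl(\Lambda_{r/2}(f_l\lfloor_{\X_n}) - f_l\bigr),
\]
and bound each piece separately in $H^1(\M)$. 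The first term will carry all of the spectral information from Theorem~\ref{t.upperbound}; the second is a purely deterministic kernel-smoothing/Monte-Carlo question.

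For the second term, the task is a standard kernel-smoothing estimate at the $H^1(\M)$ level. Under Assumption~\ref{assump:MoreRegularity}, elliptic regularity for $\Delta_\rho$ with $C^{2,\alpha}$ coefficients yields $f_l\in C^3(\M)$ with $C^3$-norm controlled by a polynomial in $\lambda_l$ via the eigenvalue equation and Schauder estimates. Taylor-expanding $f_l$ in geodesic normal coordinates around the evaluation point $x$, and exploiting that the kernel $k_{r/2}$ has unit zeroth moment (once normalized by $\theta$) and vanishing first moment, one splits $\Lambda_{r/2}(f_l\lfloor_{\X_n})(x)-f_l(x)$ into a deterministic bias of order $r^2\|f_l\|_{C^3}$ plus a stochastic fluctuation controlled by $(nr^d)^{-1/2}\|f_l\|_{C^1}$ through a Bernstein-type concentration inequality, with analogous estimates at the gradient level obtained by repeating the argument on $\nabla_x k_{r/2}$. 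The choice $r=\e_n\sim(\log n/n)^{1/(d+4)}$ balances these contributions and delivers the target rate.

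The delicate piece is $\Lambda_{r/2}(\phi_{n,l}-f_l\lfloor_{\X_n})$, and this is where the main obstacle lies: Theorem~\ref{t.upperbound} provides information about $u:=\phi_{n,l}-f_l\lfloor_{\X_n}$ only in the \emph{discrete} $\underline{H}^1(\X_n)$ semi-norm, yet we need a \emph{continuum} $H^1(\M)$ bound on its extension. The crucial identity, which motivates the specific choice $\psi(t)=\int_t^\infty \eta(s)\,s\,ds$, is that
\[
\nabla_x k_{r/2}(x,x_j) \;=\; -C\,\e_n^{-(d+2)}\,\eta\!\left(\tfrac{|x-x_j|}{\e_n/2}\right)(x-x_j),
\]
so differentiation of the extension kernel reproduces exactly the $\eta$-weighted structure that defines $\|\cdot\|_{\underline{H}^1(\X_n)}$ in~\eqref{e.def-H_1}. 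Subtracting any constant from $u$ in the sum is legitimate because $\sum_j \nabla_x k_{r/2}(x,x_j)=\nabla\theta(x)$, which is lower order in the interior; applying Cauchy--Schwarz on the $j$-sum then gives a pointwise estimate of the schematic form
\[
|\nabla \Lambda_{r/2}^0 u(x)|^2 \;\lesssim\; \e_n^{-(d+2)} \sum_{j,k\,:\,x_j,x_k\in B_{2\e_n}(x)} \eta\!\left(\tfrac{|x_j-x_k|}{\e_n}\right)\bigl(u(x_j)-u(x_k)\bigr)^2,
\]
and integrating in $x$ (using that for each fixed pair $(x_j,x_k)$ the set of contributing $x$ has volume $\lesssim \e_n^d$) collapses the right-hand side into $\|u\|_{\underline{H}^1(\X_n)}^2$. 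An analogous, simpler computation yields $\|\Lambda^0_{r/2}u\|_{L^2(\M)}\lesssim\|u\|_{\underline{L}^2(\X_n)}$. Since $\theta=\Lambda^0_{r/2}\mathbf{1}$ equals $\rho+O(\e_n^2)$ with high probability throughout the prescribed range of $\e_n$, a product-rule computation for $\Lambda_{r/2}u=\Lambda^0_{r/2}u/\theta$ transfers both bounds from $\Lambda^0_{r/2}$ to $\Lambda_{r/2}$ up to harmless constants.

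Finally, combining the two pieces and invoking the high-probability discrete estimates $\|u\|_{\underline{L}^2(\X_n)}\lesssim(\lambda_l/\gamma_l)\log(\e_n^{-1})\e_n^2$ and $\|u\|_{\underline{H}^1(\X_n)}\lesssim(\sqrt{\lambda_l}/\gamma_l)\log(\e_n^{-1})\e_n^2$ from Theorem~\ref{t.upperbound}, I would take expectations as follows: on the very-high-probability event where Theorem~\ref{t.upperbound} applies, the two terms combine into the desired rate $\lambda_l\, n^{-2/(d+4)}\log n/\log\log n$, while on the exceptional event of probability $\le Cn\e_n^{-d}\exp(-cn\e_n^{d+4})$ the estimator can be truncated to a deterministic $H^1(\M)$ ball (for instance, by enforcing $\|\tilde{\phi}_{n,l}\|_{H^1(\M)}\le $ a geometric constant), so this tail contribution is super-polynomially small and negligible. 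The second displayed inequality of the theorem then follows by specializing to $\rho\in\mathcal{P}_{\M,l}^{2,\alpha}$, where $\gamma_l\ge\gamma$ is a fixed positive constant.
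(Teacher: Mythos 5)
Your overall strategy is the same as the paper's: decompose
\[
\tilde{\phi}_{n,l} - f_l = \Lambda_{\e_n/2}\bigl(\phi_{n,l} - f_l\lfloor_{\X_n}\bigr) + \bigl(\Lambda_{\e_n/2}(f_l\lfloor_{\X_n}) - f_l\bigr),
\]
treat the first piece as a ``variance'' term to be dominated by $\|\phi_{n,l}-f_l\|_{\underline{H}^1(\X_n)}$ via a continuity estimate for the extension operator, and treat the second piece as a kernel-smoothing ``bias'' of size $O(\e_n^2)$ plus a Monte-Carlo fluctuation. The bias analysis you sketch, the rate balancing, and the truncation on the exceptional event are all in line with the paper (which instead uses the a priori bound $\max_{\X_n}\mathcal{E}_l \lesssim \e_n^{-(d+2)}$, a cosmetic difference). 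The eigenvalue part is, as you note, inherited directly from Theorem~\ref{t.upperbound}.

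However, there is a genuine gap in your treatment of the variance term, and it is precisely the step where the paper has to work. You claim the \emph{pointwise} bound
\[
|\nabla \Lambda_{\e_n/2}^0 u(x)|^2 \lesssim \e_n^{-(d+2)} \sum_{j,k:\,x_j,x_k\in B_{2\e_n}(x)} \eta\!\bigl(\tfrac{|x_j-x_k|}{\e_n}\bigr)\bigl(u(x_j)-u(x_k)\bigr)^2,
\]
justified by ``subtracting any constant $c(x)$ is legitimate because $\sum_j\nabla_x k_{\e_n/2}(x,x_j) = n\,\nabla\theta(x)$, which is lower order in the interior.'' This is false: taking $u\equiv 1$ gives a right-hand side of zero but a left-hand side equal to $|\nabla\theta(x)|^2$, which is generically nonzero (and in fact $\nabla\theta(x)\approx\tau_\psi\nabla\rho(x)=O(1)$, not lower order; in the worst case one only has the crude bound $|\nabla\theta|\lesssim\e_n^{-1}$). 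More to the point, the leftover term $c(x)\,\nabla\theta(x)$ is of size $O(\|u\|_{L^\infty})$, which cannot be controlled by $\|u\|_{\underline{H}^1(\X_n)}$ alone, so the bound cannot hold for the \emph{unnormalized} extension $\Lambda^0$. The continuity estimate only holds for the $\theta$-normalized operator $\Lambda_{\e_n/2}=\Lambda^0_{\e_n/2}/\theta$, because the quotient rule produces a second term $-\Lambda^0_{\e_n/2}u\,\nabla\theta/\theta^2$ whose piece $-\tilde u(x)\nabla\theta/\theta$ cancels the troublesome $+\tilde u(x)\nabla\theta/\theta$ exactly. After this cancellation one is left with $\nabla\Lambda_{\e_n/2}u = A_1/\theta + A_2$, where both $A_1 = \frac{1}{n}\sum_i (u(x_i)-\tilde u(x))\nabla_x k_{\e_n/2}(x,x_i)$ and $A_2 = \nabla(\theta^{-1})\cdot\frac{1}{n}\sum_i(u(x_i)-\tilde u(x))k_{\e_n/2}(x,x_i)$ involve only the local differences $u(x_i)-\tilde u(x)$; here $\tilde u$ is the piecewise constant interpolant induced by the optimal-transport map $T:\M\to\X_n$ from \cite{trillos2019error}, and integrating in $x$ over the cells $T^{-1}(\{x_j\})$ is what converts the local differences $u(x_i)-\tilde u(x)=u(x_i)-u(x_j)$ into the discrete $\underline{H}^1$ semi-norm. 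Your plan to first prove a bound for $\Lambda^0$ and then ``transfer to $\Lambda$ by the product rule'' destroys exactly this cancellation; the subtraction of $c(x)$ must be performed in the \emph{combined} expression for $\nabla\Lambda_{\e_n/2}u$, not for $\nabla\Lambda^0_{\e_n/2}u$.
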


\begin{remark}
Theorem \ref{thm:Data dependent construction} implies that, after a suitable extension, graph Laplacian eigenpairs are, omitting logarithmic corrections and making slightly stringer regularity assumptions on the data generating model $(\rho, \M)$, essentially optimal for estimating eigenpairs of $\Delta_\rho$. Note also that the optimal bandwidth choice for both the graph Laplacian and the extension operator $\Lambda_r$ is, roughly speaking, $\e_n \sim n^{-1/(d+4)}$, which matches the optimal bandwidth choice for kernel density estimators. This in particular suggests that, from a statistical perspective, it is not always beneficial to consider sparser proximity graphs to capture the geometry of $\rho, \M$, even if sparser graphs may entail a slight computational advantage. Given that Laplacians over proximity graphs constructed with the Euclidean distance are manifold agnostic, this result also implies that additional knowledge of the manifold $\M$, or the use of more sophisticated constructions to estimate specific geometric features of $\M$, does not necessarily translate into additional approximation power of weighted Laplace-Beltrami eigenpairs.
\end{remark}

	\begin{remark}
		By Weyl's law, we have $\la_l\sim l^{\frac{2}{d}}$. This, in turn, implies $\gamma_l\sim l^{\frac{2-d}{d}}$. By plugging these rates into the lower and upper bound, we obtain
		\begin{equation*} 
		cl^{\frac{2}{d}} n^{-\frac{2}{d+4}} \leq \E\left[\left|{\la}_{n,l}-\la_{l}\right|+\left\|\tilde {\phi}_{n,l}-f_l\right\|_{H^1(\M)}\right] \leq C \max\{l,l^{\frac{2}{d}}\} \nc n^{-\frac{2}{d+4}}\frac{\log n}{\log\log n}.
		\end{equation*}  
	\end{remark}

	\subsection{Key Ideas in the Proofs of Theorems~\ref{t.upperbound} and~\ref{thm:Data dependent construction}.} \label{ss.ideas}
	The overarching theme in the proofs of both Theorems~\ref{t.upperbound} and~\ref{thm:Data dependent construction} is that, above the connectivity regime of random geometric graphs, i.e., when~$\veps > C_d \bigl( \frac{\ln n}{n}\bigr)^{\sfrac1{d}}$ (see Assumption \ref{assump:Connectivity}), the environment of the random geometric graph ``appears Euclidean'' on large scales. The precise way to quantify this is via the graph-based functional inequalities that we present in section \ref{ss.graphpoincare}, in particular in Lemma~\ref{lem:local poincare inequality} and Proposition~\ref{l.msp}. As explained in our literature review section, the idea of using functional inequalities to study the large-scale behavior of harmonic functions on random graphs goes back to~\cite{AD}, which studies a supercritical bond percolation setting. The last named author, together with Armstrong, in~\cite{armstrong2023optimal,AV2} adapted these techniques to study eigenvectors of the graph Laplacian arising from continuum percolation in random media modeled by a uniform Poisson point process. Inspired by some of these ideas, the present paper studies graph Laplacians on random geometric graphs on closed manifolds in a high-connectivity setting, which, after a very carefulanalysis, allows second-order convergence rates; it is important to highlight that our estimates hold for all distributions in the class $\mathcal{P}_{\M}$ for a manifold $\M$ that has sufficiently regular geometry. The following discussion is intended to highlight the fundamental innovations in this paper in relation to existing works in the literature of graph Laplacians.
	
	
	
	The proof of Theorem \ref{t.upperbound} follows from a careful analysis of rates of convergence for solutions to the graph Poisson equation through functional inequalities on graphs. These estimates are proven in section~\ref{ss.poisson}. To illustrate how they are used in the proof of~\ref{t.upperbound}, and to contrast our proof strategy with the approach followed in previous works, we focus on the estimation of eigenvalues and follow \cite{calder2019improved} to write:
	\begin{align*}
	\begin{split}
	\la_{n,l} \langle \phi_{n,l} , f_{l} \rangle_{\underline{L}^2(\X_n)} & = \langle \mathcal{L}_{\e_n,n } \phi_{n,l} , f_{l} \rangle_{\underline{L}^2(\X_n)}  
	\\& = \langle \phi_{n,l} , \mathcal{L}_{\e_n,n } f_l \rangle_{\underline{L}^2(\X_n)}  
	\\& = \langle \phi_{n,l} ,  \Delta_\rho f_l \rangle_{\underline{L}^2(\X_n)} +  \langle \phi_{n,l} , \mathcal{L}_{\e_n,n } f_l - \Delta_\rho f_l \rangle_{\underline{L}^2(\X_n)}
	\\& = \la_{l}   \langle \phi_{n,l} , f_l \rangle_{\underline{L}^2(\X_n)} +  \langle \phi_{n,l} , \mathcal{L}_{\e_n,n } f_l - \Delta_\rho f_l \rangle_{\underline{L}^2(\X_n)}.
	\end{split}
	\end{align*}
	Here we have used the fact that the graph Laplacian $\mathcal{L}_{\e_n,n }$ is self-adjoint with respect to the inner product $\langle \cdot , \cdot\rangle_{\underline{L}^2(\X_n)}$ as well as some elementary algebraic manipulations. In the above, we consider the restrictions of $f_l$ and $\Delta_\rho f_l$ to the data, a valid operation given that both functions are continuous. Rearranging the above expression, we arrive at the identity:
	\begin{equation}
	\la_{n,l} - \la_{l}  = \frac{\langle \phi_{n,l} , \mathcal{L}_{\e_n,n } f_l - \Delta_\rho f_l \rangle_{\underline{L}^2(\X_n)}}{ \langle \phi_{n,l}, f_l \rangle_{\underline{L}^2(\X_n)}  }.
	\label{eq:EigenvalueEstimate}
	\end{equation}
	To analyze the right-hand side of \eqref{eq:EigenvalueEstimate}, we can first rely on a priori approximation estimates to argue that, with very high probability, the denominator can be assumed to be, in absolute value, greater than $1/2$; see Proposition \ref{prop:f_n,f_0 angle} below. From this, it follows that the difference between the eigenvalues is determined by the size of the quantity $\langle \phi_{n,l} , \mathcal{L}_{\e_n,n } f_l - \Delta_\rho f_l \rangle_{\underline{L}^2(\X_n)}$. If we used \textit{pointwise consistency} estimates —which for a fixed smooth function $f: \M \rightarrow \R$ give bounds on the difference between $\mathcal{L}_{\e_n,n } f (x_i)$ and $\Delta_\rho f(x_i)$ over all the $x_i \in \X_n$— to bound this term, we would recover the results from \cite{calder2019improved} and obtain a convergence rate for eigenvalues in the order $\mathcal{O}(n^{-\frac{1}{d+4}})$ after optimizing over the value of $\e_n$.
	
	To make progress and obtain faster rates, we take a step back and use inspiration from PDE theory to give a better control on the term $\langle \phi_{n,l} , \mathcal{L}_{\e_n,n } f_l - \Delta_\rho f_l \rangle_{\underline{L}^2(\X_n)}$. The key idea is to estimate $ \mathcal{L}_{\e_n,n } f_l - \Delta_\rho f_l $ in a much weaker norm than $L^\infty(\X_n)$ (or even $\underline{L}^2(\X_n)$), while  controlling $\phi_{n,l}$ with a stronger norm. To control $\mathcal{L}_{\e_n,n } f_l - \Delta_\rho f_l$, we use the discrete $\underline{H}^{-1}(\X_n)$ semi-norm introduced next.
	\begin{definition}[Discrete $\underline{H}^{-1}(\X_n)$ norm] 
		For any given $h : \X_n \to \R$ we define 
		\begin{equation}\label{e.def:H_-1}
		\|h\|_{\underline{H}^{-1}(\X_n)} \defeq \sup\biggl\{ \langle g , h \rangle_{\underline{L}^2(\X_n)} \bigg\vert g: \X_n \rightarrow \R, \sum_{ x \in \X_n} g(x) = 0, \|g\|_{\underline{H}^1(\X_n)} \leqslant 1 \biggr\}.
		\end{equation}
		This semi-norm can be interpreted as dual (pivoting on the $\underline{L}^2(\X_n)$ inner product) to the $\underline{H}^1(\X_n)$ semi-norm defined in \eqref{e.def-H_1}. 
	\end{definition}

	
	In order to get intuition on what one could potentially gain by the use of the weak semi-norm~$\underline{H}^{-1}(\X_n),$ we consider the simplified setting of~$\M = [0,1]^d,$ and assume~$\X_n$ to be i.i.d samples from the uniform distribution on~$\M$ (i.e., we have~$\rho \equiv 1$). \footnote{The fact that this choice of~$\M$ has boundary, in contrast with the setting of our paper, will be unimportant. Our goal here is simply to motivate the proof without getting bogged down by the (nontrivial) details of working on a general curved manifold and a non-uniform density.} Let~$e \in \Rd, |e| = 1$ denote a fixed direction, and consider the affine function~$\ell_e(x) := x \cdot e. $ With suitable regularity assumptions, this example captures the heart of the matter, since, locally, a sufficiently regular function (such as a continuum eigenfunction of~$\Delta_\rho$) is affine (up to small higher order errors); in the actual proof of our results, this is where Assumption \ref{assump:MoreRegularity} will be needed. It is clear that~$\ell_e$ is harmonic in~$\M$ in the continuum sense (that is,~$ 0=-\Delta \ell_e = \Delta_\rho \ell_e$). By definition,
	\begin{equation*}
	\Delta_n \ell_e(x) = \frac1{n\veps^{d+2}} \sum_{y \in \X_n} \eta\biggl(\frac{|x - y|}{\veps} \biggr) e \cdot (x- y )\,. 
	\end{equation*}
	A bound using an elementary concentration inequality implies that for any~$x \in \X_n \cap \M,$ for any~$t > 0,$
	\begin{equation}
	\P \biggl[ |\Delta_n \ell_e(x) | > t \biggr] \leqslant 2\exp \bigl( - c n \veps^{d+2}t^2\bigr)\,. 
    \label{eq:ConcenBoundell_e}
	\end{equation}
	The crucial point to notice is that the variance of the random variable $\Delta_n \ell_e (x)$ is large and scales like~$\frac1{\veps^2}.$ The proof in~\cite{calder2019improved} then carries out a bias-variance optimization to obtain an expectation estimate of the form 
	\begin{equation*}
	\E_{\X_n\sim\rho} \biggl[ |\Delta_n \ell_e(x)|\biggr] \sim n^{-\frac{1}{d+4}} \quad \mbox{ for all }x\in \M\,,
	\end{equation*}
	by choosing~$\e_n$ to be, roughly, $n^{-\frac1{d+4}}$. 
	
	In contrast, our viewpoint here is to estimate a weak norm of~$\Delta_n \ell_e.$ To this end, for any test function~$g:\M \to \RR$ let us notice, by symmetry, that 
	\begin{equation*}
	\frac{1}{n}\sum_{x \in \X_n} \Delta_n \ell_e(x) g(x) = \frac{1}{n^2\veps^{d+2}} \sum_{i,j=1}^d \eta\biggl( \frac{|x_i - x_j|}{\veps} \biggr) e\cdot ( x_j- x_i) \bigl( g(x_j) - g(x_i) \bigr)\,. 
	\end{equation*}
	This expression resembles a~$U-$statistic and one can show (see Proposition \ref{Prop:ConcenSmoothg}), in fact, that for any function~$g,$\footnote{that vanishes near~$\partial \M$ in the current example.} with~$\int_{\M} |\nabla g|^4 \dx  \leqslant 1,$ and $\lVert g \rVert_{L^\infty(\M)}\leq 1$, it follows that for any~$t > 0,$
	\begin{equation} \label{e.introestimate}
	\P \biggl[ \Biggl| \frac{1}{n}\sum_{i=1}^n \Delta_n \ell_e(x_i) g(x_i) \Biggr| > t  \biggr] \leqslant 2 \exp \bigl( - c   n\veps^d t^2 \nc \bigr) \,, 
	\end{equation}
	where $0<c<1$ only depends on geometric quantities; this  bound is indeed sharper than the concentration bound in \eqref{eq:ConcenBoundell_e}. At this point, we notice from the definition in~\eqref{e.def:H_-1} that, in order to estimate the~$\|\Delta_n\ell_e\|_{\underline{H}^{-1}(\X_n)}$ norm, we would need to take a supremum over test functions~$g.$ We do not do this directly from the previous estimate and, instead, rely on a technical tool, referred to as the \emph{multi-scale Poincaré inequality} (see Proposition~\ref{l.msp}), which allows us to reduce the space of test functions defining the $\|\cdot \|_{\underline{H}^{-1}(\X_n)}$ semi-norm to a finite class of \textit{deterministic} rescaled indicator functions of suitable ``cubes". Additional non-trivial technical work is needed to adapt concentration bounds like \eqref{e.introestimate} to the setting where $g$ may be discontinuous and may have large $L^\infty$ norm. This additional work allows us to get similar concentration bounds that, when combined with the full expression in the multiscale Poincaré inequality, implies tighter concentration bounds for $\lVert \Delta_n \ell_\e \rVert_{\underline{H}^{-1}(\X_n)}$ than what one can get for $\Delta_{n}\ell_\e(x)$. For the reader that is expert in the theory of homogenization, let us remark that the preceding computation demonstrates that when~$n \veps^d \gg 1,$ as is true above the connectivity regime, the above discussion implies that the \emph{homogenization correctors are small,} and so, a homogenization-based averaging reduces to local averaging. 
    
    \medskip 
    {To further elucidate this connection with homogenization, one can think of the graph as a random resister network, with the edge weights of the graph as being conductances (reciprocal of resistance). In this case, capturing the effective or homogenized conductance is arguably the oldest problem in homogenization, studied even by Lord Rayleigh \cite{Rayleigh}. It is then easy to imagine (and is standard to show rigorously) that, in a suitable limit of infinitely many resistors, the {effective conductance} is bounded between the ideal situations when the resistors are connected in series (so that the effective conductivity is the harmonic mean of the individual conductances) on the one extreme, and when the resistors are connected in parallel (so that the effective conductivity is the arithmetic mean of the individual conductances). In our present setting, the rate at which the number of points in a typical ball of size~$\e_n,$ given by~$n\e_n^d$, goes to infinity as~$n \to \infty,$ controls the contrast between conductivities. In the present paper, we consider the limit~$n \e^{d+4} \gg \ln n;$ this then corresponds to a low contrast setting of conductivities, and therefore the arithmetic and harmonic means are very close. It is in this sense that homogenization reduces to local averaging. In contrast, the papers~\cite{armstrong2017optimal,AV2,armstrong2023optimal} deal with the high contrast setting in which~$n \e^d > C$ for a critical constant~$C$, and therefore the effective conductivity is given by a nonlinear, nonlocal average. }

    The above discussion illustrates the improved \textit{variance} estimates for the $\underline H^{-1}(\X_n)$ semi-norm of $\Delta_n \ell_e$, but, as it turns out, the \textit{bias} estimates for $\frac{1}{n} \sum_{i=1}^n g(x_i) \Delta_n \ell_e(x_i) $ also improve substantially when $\rho$ is non-uniform and it is only assumed to have bounded second derivatives (or slightly more regularity). Indeed, for non-uniform $\rho$ that is only assumed to have bounded second derivatives, at best we can hope to get
    \[   \E[ \Delta_n \ell_e(x)] - \frac{\sigma_\eta}{2}\Delta_\rho \ell_e(x)  = \mathcal{O}(\e_n),  \]
whereas for a smooth $g$ as described before one gets
\[     \E[ \frac{1}{n} \sum_{i=1}^n g(x_i) \Delta_n \ell_e(x_i)  ] - \int_\M g(x) \frac{\sigma_\eta}{2}\Delta_\rho \ell_e(x)  \rho(x) \dx = \mathcal{O}(\e_n^2),  \] 
as we will show in Proposition \ref{prop:bias}. It is the combination of the better variance \textit{and} bias estimates for the $\underline{H}^{-1}(\X_n)$ semi-norm of $\Delta_n \overline{u} - \frac{\sigma_\eta}{2}\Delta_\rho \overline{u}$ for a smooth function $\overline{u}$ that ultimately allows us to obtain the rates of convergence in our main theorems.

	\smallskip 
	Returning to the discussion surrounding how to proceed from~\eqref{eq:EigenvalueEstimate}, and observing that with high probability the denominator is at least~$\frac12$, we estimate the numerator using the~$\underline{H}^1(\X_n):\underline{H}^{-1}(\X_n)$ duality discussed above. The~$\underline{H}^1(\X_n)$ norm of~$\phi_{n,l}$, with high probability, is close to that of~$f_l,$ which is~$\sqrt{\lambda_{l}}$. On the other hand, the~$\underline{H}^{-1}(\X_n)$ norm of~$\mathcal{L}_{\e_n,n } f_l - \Delta_\rho f_l$ is estimated by adapting (and making rigurous) the discussion that we presented for the function $\ell_e$. Indeed, by taking a suitable Taylor expansion of the function $f_l$ one can show that the leading order term, the linear one, is the most involved (since the higher order derivative terms come with more powers of~$\veps$).

\medskip

	

	\subsubsection*{Key ideas in the proof of Theorem~\ref{thm:Data dependent construction}} The proof of this theorem proceeds in two steps. In the first, we prove that, with very high probability, for all $u : \X_n \rightarrow \R$  the $H^1(\M)$ semi-norm of $\Lambda_{\e_n/2} u$ is dominated from above by the $\underline{H}^1(\X_n)$ semi-norm. Applying this bound to the choice $u:= \phi_{n, l}  - f_l $, we will be able to control $\lVert \Lambda_{\e_n/2} \phi_{n,l} - \Lambda_{\e_n /2} f_l \rVert_{H^1(\M)}$ using the bounds from Theorem \ref{t.upperbound}. In the second step, which can be thought of as the analysis of the ``bias term" (note, however, that the operator$\Lambda_r$ is random) we obtain probabilistic bounds on the difference between $\nabla \Lambda_{\e_n/2} f_l$ and $\nabla f_l$. The error rates of the bias term are shown to be no worse than those derived in Theorem \ref{t.upperbound}. Combining these two steps Theorem \ref{thm:Data dependent construction} will follow.

	\subsection{Literature Review}\label{sec:literature}


	\textbf{Statistical Minimax Estimation.}
	%
	%
	As briefly discussed earlier, the eigenpair estimation problem studied in this paper is related to density estimation, a problem with a rich literature that has received attention since at least the 1960s. For example, statistical lower bounds for the fixed-point or pointwise density estimation problem were obtained in \cite{farrell1972best,ibragimov2013statistical,stone1980optimal}. Minimax rates for global density estimation using $L^q$ ($1\le q\le \infty$) norms were explored by \cite{Cencov:62SM,cencov2000statistical,khas1979lower,bretagnolle1979estimation}. In particular, assuming that the target density function is in a $\beta$-H\"older class (or Sobolev or Nikol’ski classes), those works show that the optimal density estimation rates are $n^{-\frac{\beta}{2\beta+d}}$ and $(n/ \log n)^{-\frac{\beta}{2\beta+d}}$, the former for the $L^q$ norm with $q <\infty$ and the latter for the $L^\infty$ norm. Lower bounds for estimating the $L^2$-norm of the \textit{gradient} of the density function were established in \cite{birge1995estimation} under the assumption that the target density belongs to a suitable class of distributions; a matching upper bound for the same problem was presented in \cite{bickel1988estimating}. For a more comprehensive discussion on some of these results and other related topics, we refer the reader to \cite{wainwright2019high,tsybakov2009nonparametric}. In this paper, we prove that learning eigenpairs of weighted Laplace-Beltrami operators is as difficult as solving a closely related density estimation problem, at least when we assume that the target model belongs to the class of densities with bounded second derivatives (which corresponds, roughly, to $\beta=2$ in the above results).

	In recent years, many interesting works have studied estimation problems motivated by manifold learning tasks through the lens of minimax statistical theory. Some examples of these tasks and their corresponding papers include dimension estimation \cite{kim2019minimax}, estimation of manifold support \cite{genovese2012minimax,kim2015tight}, maximum subspace estimation \cite{vu2013minimax}, and density estimation on unknown manifolds \cite{divol2022measure} under the Wasserstein loss. We remark that our results can be directly applied to the setting in \cite{divol2022measure} to construct an eigenfunction basis for the target density function. Regarding estimation problems that in practice are usually tackled through the use of graph Laplacians, we would like to highlight the work \cite{green2023minimax}, which establishes the minimax optimality of a method referred to as Principal Components Regression with Laplacian-Eigenmaps used for \textit{supervised} learning (regression) on unknown manifolds. Indeed, by assuming an additive model with Gaussian noise for the observations, as well as certain population level spectral series condition for the underlying regression function, the Principal Components Regression with Laplacian Eigenmaps is shown to achieve the optimal estimation rate of $\mathcal{O}(n^{-\frac{2}{d+4}})$ and shown to induce a goodness-of-fit testing rate of $\mathcal{O}(n^{-\frac{4}{d+4}})$; the paper \cite{PolyLapl} also considers a similar regression problem and obtains close to optimal rates of estimation for graph-based regressors that are solutions to graph-PDEs involving powers of the graph-Laplacian. Unlike the regression problem in \cite{green2023minimax}, in our paper we study an unsupervised learning problem for which, to the best of our knowledge, there is no related literature. We believe that some of the ideas presented in this paper can be used to derive lower bounds for similar estimation problems involving other differential operators such as those arising as scaling limits of other widely used graph-Laplacians (e.g., random walk Laplacians), as well as elliptic operators of interest in physics and other sciences such as Schr\"odinger operators.

	\medskip

	\textbf{Graph Laplacians and Laplace-Beltrami Operators.} To justify consistency, stability, and regularity of graph Laplacian-based data analysis methodologies, multiple works (e.g., \cite{trillos2020bayesian,hoffmann2020consistency,trillos2023large,SpectralNN,neuman2023graph}) have studied the relation between graph Laplacians built from random data points and differential operators over manifolds such as weighted Laplace-Beltrami operators. In particular, for different modes of convergence, and under different norms and assumptions, those works investigate the convergence of discrete graph Laplacians toward weighted variants of continuum Laplace-Beltrami operators in the large data limit. In what follows we provide some overview of the literature exploring this problem.  
	
	Early work on consistency of graph Laplacians focused on \textit{pointwise consistency}. Pointwise consistency results are about the convergence, as $n\to\infty$ and the connectivity parameter $\e_n\to 0$ at a sufficiently slow rate, of the sequence of graph Laplacians applied to a \textit{fixed} smooth function $f$ toward a weighted Laplace Beltrami operator applied to $f$. For data analysis, \textit{spectral consistency} of graph Laplacians is more relevant than pointwise consistency, since, as discussed earlier, many methodologies in machine learning are based on computing eigenvalues and eigenvectors of the graph Laplacian. For this reason, in the past decades, researchers have mainly focused on studying this type of consistency, introducing in the process a variety of analytical and probabilistic tools for carrying out their analysis. 
	
	
	\medskip 
	
	When the data points used to build the graph Laplacian are sampled from a distribution supported on a smooth and compact $d$-dimensional manifold \textit{without} boundary (assumptions that we also make in this paper):
	
	
	

	\begin{itemize}
		\item The authors in~\cite{hein2007graph} obtain a pointwise convergence rate of $\mathcal{O}((\frac{\log n}{n})^{\frac{1}{d+4}})$.
		\item The author in~\cite{singer2006graph} improves upon the previous paper and for the so called random walk Laplacian (a particular normalization of the standard Laplacian) establishes a pointwise convergence rate of $\mathcal{O}((\frac{1}{n})^{\frac{2}{d+6}})$.
		\item The authors in~\cite{BIK} study the spectral consistency of certain (not necessarily random) graph discretizations of Laplace-Beltrami operators.
		\item Using come constructions based on optimal transport, the authors in~\cite{trillos2019error} deduce a spectral convergence rate of $\mathcal{O}((\frac{\log n}{ n})^{\frac{1}{2d}})$ when the graph connectivity parameter $\e_n$ is chosen appropriately. They actually derive spectral convergence rates for all values of connectivity parameter $\e_n$ down to the ``connectivity threshold'' $\left(\frac{\log n}{n} \right)^{1/d}$, at least for $d\geq 3$. 
		\item For higher intensity graphs (i.e., higher connectivity), the authors in \cite{calder2019improved} obtain convergence rates of $\mathcal{O}((\frac{\log n}{n})^{\frac{1}{d+4}}),$ where the rates of convergence for eigenvectors are with respect to an~$L^2$-type norm. These results rely on pointwise convergence rates of graph Laplacians measured in an $L^2$-sense, which, as discussed in Section \ref{sec:MainResultsDiscussion}, leads to suboptimal spectral convergence rates. 
		\item Under the same assumptions as in the previous paper, the authors in~\cite{calder2022lipschitz} obtain similar convergence rates for eigenfunctions in the stronger $L^\infty$-norm and in an even stronger almost $C^{0,1}$-sense. Regularity estimates for graph Laplacian eigenvectors are also obtained. The analysis in that paper relies on regularity estimates, via coupling methods, for solutions of PDEs with a non-local continuum Laplacian that can be thought of as the average of the graph Laplacian. In \cite{SpectralNN}, these results were refined and combined with results from neural network approximation theory to analyze a method called \textit{spectral neural networks} (see, e.g., \cite{haochen2021provable}), a framework for learning from spectral geometric information in data that is based on the suitable training of neural networks.

		\item The authors in \cite{DunsonWuWu} obtain similar spectral convergence rates for eigenfunctions in an $L^\infty$-sense under the additional assumption that the graph weights are constructed using the heat kernel.
		\item For the specific case of data sampled from the uniform distribution over the flat torus, the authors in \cite{WormellReich} study a Sinkhorn-based weighted graph Laplacian and prove a spectral convergence rate of $\mathcal{O}(n^{-\frac{2}{d+8}+o(1)})$ in the $L^\infty$-norm.
		\item In the recent paper~\cite{cheng2022eigen}, the authors obtain a convergence rate for eigenvalues of $(\frac{\log n}{n})^{\frac{2}{d+4}}$ and a rate for eigenvectors, in an $L^2$-sense, of $(\frac{\log n}{n})^{\frac{1}{d+4}}$ for certain choice of $\e_n$. They also obtain an eigenvalue convergence rate of $(\frac{\log n}{n})^{\frac{2}{d+6}}$ and an eigenfunction convergence rate in $L^2$ of $(\frac{\log n}{n})^{\frac{2}{d+6}}$ for $\e_n$ tuned differently. These results hold when data points are sampled from the uniform distribution over a smooth manifold, while slower rates are obtained for more general distributions.


		\item In~\cite{wahl2024kernel}, the authors deduce a rate of convergence of $(\frac{(\log n)^3}{n})^{\frac{2}{d+6}}$ for both eigenvalues and eigenfunctions, the latter in an $L^2$-sense, for the graph Laplacian with weights built using the heat kernel. Their proof relies on a careful perturbation theory analysis. They also assume data to be uniformly distributed.


		\item In \cite{tan2024improved}, the authors deduce the same rate of convergence as in \cite{cheng2022eigen} but for $k$-NN graph Laplacians, another popular graph construction in machine learning different from the $\veps$ proximity graph setting analyzed in our work. On the other hand, the authors of the work \cite{FermatBasedLaplacian} study Laplacians on $\e_n$ proximity graphs relative to data-driven Fermat distances and show that their graph Laplacians converge spectrally to different versions of weighted Laplace-Beltrami operators. In this operators, the way the data density appears facilitates the detection of elongated clusters that are separated by narrow low density regions. 
	\end{itemize}

	There is also a recent line of works that study graph Laplacians, as well as other constructions used in unsupervised learning, built using data sampled from manifolds \textit{with} boundaries, a setting where the presence of boundary layers significantly complicates the analysis. Here we summarize some of the existing literature.
	
	\begin{itemize}
		\item The authors in the paper \cite{trillos2018variational} prove spectral consistency of graph Laplacians toward a suitable weighted Laplace-Beltrami operator with Neumann boundary conditions provided $\e_n$ stays above the connectivity threshold. They do not provide a rate. 
		\item  The authors in \cite{wu2023locally} study the pointwise consistency of local linear embeddings, another popular unsupervised learning technique, when the embeddings are built using data sampled from manifolds with boundary.
		\item \cite{peoples2021spectral} derives a spectral convergence result for the truncated graph Laplacian with \textit{Dirichlet} boundary condition and obtain the rate $\mathcal{O}((\frac{\log n}{n})^{\frac{1}{2d+6}})$ and $\mathcal{O}((\frac{\log n}{n})^{\frac{1}{4d+10}})$ for the convergence of eigenvalues and eigenvectors of these operators, respectively.
	\end{itemize}

	There are other learning tasks of interest that have motivated the analysis of graph Laplacians under different assumptions. One example is 
	\textit{multi-manifold clustering}, where data points are assumed to be sampled from a distribution supported on a union of multiple smooth manifolds that may intersect with each other transversely. In that setting, a natural goal is to cluster the data according to the manifold from which they were sampled. The work \cite{trillos2023large}, for example, uses a path-based algorithm to construct a variant of the graph Laplacian whose eigenapairs reveal, with high probability, the underlying multi-manifold structure of the data. In particular, they prove that the eigenpairs of their graph Laplacian converge toward the eigenpairs of a tensorized Laplace-Beltrami operator at the rate $\mathcal{O}((\frac{\log n}{n})^{\frac{1}{3d-1}})$, where $d$ is the dimension of the manifolds (assuming they all have the same dimension); other results are available in case the manifolds are assumed to have different dimensions. The work \cite{Lu2019GraphAT}, on the other hand, assumes the multi-manifold structure of the data, but focuses on studying the Laplacian of an $\e_n$-proximity graph built using a reflected geodesic distance, a sort of canonical metric in the multi-manifold setting. However, the operators analyzed in that paper do not tensorize in the large data limit and it is thus unclear whether they can be used to identify the underlying multi-manifold structure in the data. Finally, we mention the recent work \cite{BungertPoisson}, which studies continuum limits of solutions to graph Poisson equations (i.e., equations of the form $\Delta_n u = g$) for very degenerate right-hand sides. The motivation for their work comes from semi-supervised learning, where the goal is to propagate, in a principled and non-trivial manner, the information contained in the scarce labeled data to all other available data points. In this paper, we also study graph-Poisson equations but for right-hand sides that are regular. In this smoother setting, obtaining sharp rates of convergence in the $H^1$-sense of solutions to the graph-Poisson equation toward their continuum counterparts is an important step toward the proof of our Theorem \ref{t.upperbound}.

	
	\smallskip 
	As discussed in section~\ref{ss.ideas}, the second half of our paper, which pertains to graph Laplacian based estimators, is influenced by the recent papers~\cite{armstrong2023optimal,AV2}, which introduce tools from quantitative stochastic homogenization to problems in graph-based learning. Briefly: 
	\begin{itemize}
		\item These papers concern the large-scale behavior of solutions to the graph Poisson equation/eigenvalues and eigenfunctions on ``relatively \emph{sparse} graphs''. To be precise, they consider a Poisson point process with intensity above the critical intensity to guarantee the existence of a unique unbounded percolation cluster. This represents a model of a random geometric graph, say on a manifold, but focusing on a localized length-scale that is small enough so that the manifold appears flat, yet large enough so there is room for averaging. 
		\item Using tools from quantitative stochastic homogenization the authors of those papers adapt the results in~\cite{AD} to show that, above a random minimal scale with stretched exponential moments, one has convergence rates of order~$\O((\frac1{n})^{\frac1{d}}),$ \footnote{with an extra~$\sqrt{\log}$ factor in dimensions~$d=2$ that is intrinsic.} for the solutions to graph Poisson equation/eigenfunctions \footnote{associated with Dirichlet boundary conditions on a bounded,~$C^{1,1}$ or convex domain.} in~$\underline{L}^2, \underline{H}^1$ and~$C^{0,1}-$based norms. 
		\item \emph{At the level of sparsity of the graphs considered in~\cite{armstrong2023optimal,AV2}}, the rate that they obtain is optimal, as it matches celebrated convergence rates in the classical case of periodic homogenization~\cite{MR3082248}. We emphasize that here optimality is not meant in the statistical sense of minimaxity that we explore in this paper. 
		
		\item Compared to~\cite{armstrong2023optimal}, our present paper works in a ``high-intensity setting'' and therefore it is possible to (and we do) obtain higher order convergence rates. However, as explained in Section~\ref{ss.ideas}, this is possible because, in these settings, the ``nonlocal, nonlinear'' averaging in homogenization reduces to ``local'' averaging that can be captured by linear concentration inequalities. 
		
	\end{itemize}

	\nc

	


	Finally, we believe that our approach for obtaining convergence rates for problems associated with graph Laplacians is quite general and adapts to other related applications. The overall proof strategy is very robust and not sensitive to the exact details of the graph construction, provided that the graph is connected with high probability (and, as described earlier, even this connectivity assumption can be substantially relaxed to go down to the percolation threshold, provided one is willing to use tools from stochastic homogenization). Particular examples of such generalizations to which we expect our overall approach to be applied (and likely even yield essentially optimal convergence rates) include normalized and random walk graph Laplacians,~$k-$nearest neighbor graphs, etc; see, e.g.,  \cite{von2007tutorial,trillos2019error,calder2022lipschitz}. 
	

	

	\subsection{Outline}
	The rest of the paper is organized as follows. In section \ref{sec:lower bound}, we prove the lower bound for eigenpair estimation that we stated in Theorem \ref{thm:lower bound}. First, we present some background on Fano's method (section \ref{sec:minimax lower bound introduction}) and then proceed to apply this method to our setting (section \ref{sec:proof of lower bounds}). In section \ref{s.upperbound}, we prove the results on graph Laplacian based estimators enunciated in Theorems \ref{t.upperbound} and \ref{thm:Data dependent construction}. First, we revisit some existing approximation results in the literature that we use as a priori bounds in our main proofs. In section \ref{ss.graphpoincare}, we discuss the functional inequalities on graphs that are at the core of our main proofs. In section \ref{sec:ConcentrationBounds}, we present our main probabilistic bound, a concentration inequality for the inner product between a fixed test function $g$ and the difference $\mathcal{L}_{\e_n,n } \overline{u} - \Delta_\rho \overline{u}$ for a sufficiently regular $\overline{u}$. In section \ref{ss.poisson}, we present some results on estimating solutions to Poisson equation. These results prepare the ground for the proofs of our main theorems on graph Laplacian based estimators, which we present in sections \ref{ss.final} and \ref{sec:proof of upperboundcontinuum}.

In Appendix \ref{App:GeoBack}, we provide some brief background on the notions and tools from Riemannian geometry that we use in the remainder of the paper. Appendix \ref{A.proofs} contains the proofs of some technical lemmas. In Appendix \ref{sec:upper bound kde} we discuss how we can use some bounds from perturbation theory to conclude that density estimators can be turned into eigenpair estimators that achieve the lower bound from Theorem \ref{thm:lower bound}, at least when $\M$ is known. Finally, in Appendix \ref{app:Concentration} we collect some standard concentration bounds for sums of i.i.d. random variables and U-statistics.


	\textbf{Additional Notation.} We use $|\mathbf{v}|$ for the Euclidean norm of a vector $\mathbf{v}$ in $\R^d$. For $a,b\in\R^d$ such that $a^i<b^i$ for every $i=1, \dots, d$, we let $[a,b]$ be the set $[a,b]=\{x\in\R^d:a^i\le x^i\le b^i\}$, where $x^i$ represents the $i^\mathrm{th}$ coordinate of $x$. We use $[m]$ to denote the set $\{1,2,\dots,m\}$. $\mathbf{1}_d$ denotes the length-$d$ vector of all ones and $\0_d$ the length-$d$ vector of all zeros. For a set $I$, we use $|I|$ to denote its cardinality, $\mathbf{1}_I$ its indicator function, and use the notation $\vol_n(I)$ for $\frac{1}{n}|I \cap \X_n| $, the normalized discrete volume of a subset of $\X_n$. We use the notation $\avsum$ to denote averages over fixed finite sets. Precisely, 
	\[ \avsum_{y \in A}  f(y) := \frac{1}{|A|} \sum_{y \in A} f(y).\]

	For a given manifold $\M$, we denote by $L^2(\M)$ the space of (equivalence classes of) measurable functions on $\M$ endowed with the $L^2$-inner product with respect to $\M$'s volume form. Likewise, given a density function $\rho: \M \rightarrow \R$, we use $L^2(\rho)$ for the $L^2$ space of functions endowed with the weighted by $\rho$ inner product. With the notation $L^2(\rho)$ we thus obviate mentioning $\M$ when no confusion arises from doing so. For a given measurable subset $A$ of $\M$, we use the notation $\rho(A)= \int_A \rho(x) \dx $.
	
	Given $x,y\in\M$, we will write $|x-y|$ for the Euclidean distance between $x$ and $y$, and write $d_\M(x,y)$ (or simply $d(x,y)$ when no confusion arises from doing so) for their geodesic distance. We denote by $B_\M(x,r)$ the geodesic ball centered at $x\in \M$ with radius $r$, and by $B_r(0) \subseteq T_x \M$ the standard Euclidean ball of radius $r$ centered at the origin of the tangent plane at $x \in \M$, which we denote by $T_x \M$. In section 3, we often abbreviate $\frac{1}{\e_n^d}\eta(\frac{|x-y|}{\eps_n})$ by $\eta_{\e_n}(|x-y|)$.

	Finally, we use $C>1,0<c<1$ to denote constants that only depend on the dimension $d$, the kernel $\eta$ used to construct the graph, and the parameters in the definitions of $\mathcal{P}_{\M,l}$ and $\mathbf{M}$. We use $C_l$ for constants that, in addition, may also depend on $l \in \N$, the target eigenmode.



	\section*{Acknowledgments} 
	NGT was supported by NSF-DMS grant 2236447. C.L. gratefully acknowledges support from the IFDS at UW-Madison through NSF TRIPODS grant 2023239. C.L. thanks R.V. for an invitation to visit NYU during the summer of 2024, and also thanks Jitian Zhao, who provided support for accommodations during that visit. Part of the work was completed during this visit. R.V. thanks Dallas Albritton and Laurel Ohm for an invitation to UW-Madison, where this work was initiated. R.V. thanks Scott Armstrong for helpful conversations. Finally, R.V. acknowledges support from the National Science Foundation through NSF-DMS: 2407592.

	\section{Lower Bounds}\label{sec:lower bound}

	\subsection{Preliminaries}
	\label{sec:minimax lower bound introduction}
	
	In this section, we briefly review how to establish statistical lower bounds for general estimation problems using Fano's method. An introduction to this topic can be found in \cite{wainwright2019high}, where, in addition to Fano's method, other methods for obtaining statistical lower bounds such as LeCam's and Assouad's are discussed.

	Let $\M \in \MM$. For a class of distributions $\mathcal{P}$ over the data space $\M$, we use $\theta$ in this discussion to denote a function (parameter) of interest from the family $\mathcal{P}$ to a finite dimensional space or, more generally, to an arbitrary metric space. After observing i.i.d. samples $\mathcal{X}_n=\{x_1,\dots ,x_n\}$ from an unknown distribution $\rho$ in $\mathcal{P}$, a general statistical task is to estimate the unknown $\theta(\rho)$ from the observations. In the specific setting of this paper, we consider $\mathcal{P}$ to be a class of sufficiently regular distributions over $\M$, and $\theta(\rho)$ is the $l$-th eigenpair $(\lambda_l,f_l) \in \R \times H^1(\M)$ of the weighted Laplace-Beltrami operator $\Delta_\rho$ over $\M$, recalling our clarification on the use of the expression ``the" eigenpair. Recall, also, that we focus on the case $l \geq 2$ since the case $l=1$ is a trivial estimation problem.
	
	For an estimator $\hat{\theta}$ of $\theta$, i.e., a measurable function from $\M^n$ into the codomain of $\theta$, we use a metric $d(\hat{\theta}(\X_n),\theta(\rho))$ to evaluate the quality of the estimation of the true parameter $\theta(\rho)$. The worst case risk associated to an estimator $\hat{\theta}$ relative to a metric $d$ over the codomain of $\theta$ is defined as 
	\[\sup_{\rho \in \mathcal{P}}\E_{\X_n\sim\rho} \left[d(\hat{\theta}(\X_n),\theta(\rho))\right]. \] 
	This worst case risk is used to evaluate the performance of $\hat{\theta}$ over the entire family $\mathcal{P}$ and not just for a single model $\rho$. In our paper, we will take $d$ to be
	\begin{equation} \label{e.metric}
	d(\hat{\theta}(\X_n), \theta(\rho)) \defeq |\hat{\lambda}_l-\lambda_l|+\sqrt{\int_\M (f_l-\hat{f}_l)^2\dx}+\sqrt{\int_\M |\nabla f_l-\nabla \hat{f}_l|^2\dx},
	\end{equation}
	where we think of $\hat{\theta}$ as the pair $\hat{\theta}=(\hat{\lambda}_l, \hat{f}_l) \in \R \times H^1(\M)$. Another metric of potential interest for the estimation problem studied in this paper is
	\begin{equation} \label{e.metric.L2}
	d_{L^2}(\hat{\theta}(\X_n), \theta(\rho)) \defeq |\hat{\lambda}_l-\lambda_l|+\sqrt{\int_\M (f_l-\hat{f}_l)^2\dx},
	\end{equation}
	which omits the error of approximation of eigenfunction gradients. Our focus in this paper will be the metric \eqref{e.metric}.

	The \textit{minimax risk} associated to the estimation of an arbitrary $\theta(\rho)$ over the class $\mathcal{P}$ relative to the metric $d$ takes the form
	\begin{equation*}
	\mathfrak{M}_n(\theta(\mathcal{P});d)\defeq \inf_{\hat{\theta}} \sup_{\rho\in\mathcal{P}} \E_{\X_n\sim\rho}\left[  d(\hat{\theta}(X_n),\theta(\rho)) \right],
	\end{equation*}
	where the infimum ranges over \textit{all} measurable functions from $\M^n$ into the codomain of $\theta$; we abbreviate $\mathfrak{M}_n(\theta(\mathcal{P});d)$ by $\mathfrak{M}_n$ whenever no confusion arises from doing so. Note that $\mathfrak{M}_n$ depends on the number of data points $n$ and it is of interest to characterize how it behaves as $n$ grows. 
	
	As mentioned at the beginning of this section, Fano's method is a systematic approach for obtaining lower bounds for $\mathfrak{M}_n$ by reducing a given estimation problem to a testing or multiclasss classification problem. The idea is as follows. Suppose that $\{\rho_{1},\rho_{2},\dots, \rho_{M}\}\subseteq \mathcal{P}$ is a $2\delta$-separated set in the sense that 
	\begin{equation}
	d(\theta(\rho_j) ,  \theta(\rho_k)) \geq 2 \delta, \quad \forall j \not = k \in [M].
	\label{def:2deltaSeparated}
	\end{equation}
	We consider the pair of random variables $(Z,J)$ where $J $ is uniformly distributed over the set $[M]$ and $Z| J=j \sim \rho_j^n$; here and in what follows we use $\rho^n$ to represent the product measure of a distribution $\rho$ over $\M$ with itself $n$ times. Note that the marginal distribution of $Z$ is the mixture model $\mathbb{Q}_Z \defeq\frac{1}{M} \sum_{j=1}^M \rho^n_{j}$. Having introduced the variables $(Z, J)$, the classification problem of interest is to estimate the unobserved $J$ from the observed $Z$. Intuitively, the difficulty of this problem depends on the mutual information between $Z$ and $J$. That is, the higher the mutual information between $J$ and $Z$, the easier the identification of $J$  from observing $Z$ should be. In what follows we recall the precise definition of mutual information between two random variables and discuss some identities useful to find upper bounds for it.  
	
	Recall that the Kullback-Leibler (KL) divergence between two probability distributions $\mathbb{P}$ and $\tilde{\mathbb{P}}$ defined over the same space is given by
	\begin{align*}
	\KL(\mathbb{P} \| \tilde{\mathbb{P}} )\defeq \int  \log \left(\frac{d \mathbb{P}}{d\tilde{\mathbb{P}}}(y)\right) d\mathbb{P}(y)
	\end{align*}
	for $\mathbb{P}$ an absolutely continuous measure with respect to $\tilde{\mathbb{P}}$. The \textit{mutual information} between the random variables $Z$ and $J$ is defined in terms of $\mathrm{KL}$-divergence as
	\[
	\Information(Z;J)\defeq\KL\left(\mathbb{Q}_{Z, J} \| \mathbb{Q}_Z \mathbb{Q}_J\right),
	\]
	where $\mathbb{Q}_{Z, J}$ represents the joint distribution of $(Z,J)$, and $\mathbb{Q}_J$, $\mathbb{Q}_Z$ are the marginal distributions of $J$ and $Z$, respectively; notice that $\Information(Z;J) \geq 0$ and $\Information(Z;J)=0$ if and only if $Z$ and $J$ are independent. For the random variables $Z$ and $J$ introduced previously, their mutual information can be written as 
	\begin{align*}
	\Information(Z;J)=\frac{1}{M} \sum_{j=1}^M \KL\left(\rho^n_j \| \mathbb{Q}_Z\right),
	\end{align*}
	following the discussion from \cite[Section 15.3.1]{wainwright2019high}. Moreover, thanks to the above formula and a direct computation, the mutual information $\Information(Z;J)$ can be upper bounded by
	\begin{align}\label{eq:Information is bounded by KL divergence}
	\Information(Z;J) \leq \frac{1}{M^2} \sum_{j, k=1}^M \KL\left(\rho^n_j \| \rho^n_k\right),
	\end{align}
	as shown in Equation 15.34 in that same reference. From this formula it follows that if we can provide a uniform upper bound for $\KL\left(\rho^n_j \| \rho^n_k\right)$ over $j\not= k \in [M]$, we will then be able to directly obtain an upper bound for $\Information(Z;J)$. In turn, since
	\begin{align}\label{eq:KL product measure}
	\KL\left(\rho^n_j \| \rho^n_k\right)=n\KL(\rho_j\|\rho_k),
	\end{align}
	it will suffice to upper bound the $\mathrm{KL}$ divergence between elements in our $2\delta$-separated set $\{\rho_1, \dots, \rho_M\} \subseteq \mathcal{P}$. 
	
	In terms of $\delta, M$, and $I(Z;J)$, Fano's method provides a lower bound for $\mathfrak{M}_n$.

	\begin{proposition}[Fano's Method, Proposition 15.12 in \cite{wainwright2019high}]\label{prop:fano} 
		Let $\left\{\rho_1, \ldots, \rho_M\right\}$ be a fixed $2 \delta$-separated subset of $\mathcal{P}$ as defined in \eqref{def:2deltaSeparated}, where $d$ satisfies the axioms of a distance function. Suppose that $J$ is uniformly distributed over the index set $\{1, \ldots, M\}$, and $Z \mid$ $J=j \sim \rho_j^n$. Then the minimax risk $\mathfrak{M}_n(\theta(\mathcal{P}) ; d) $ is lower bounded by
		\begin{equation}
		\label{e.fano}
		\mathfrak{M}_n(\theta(\mathcal{P}) ; d ) \geq \delta\left\{1-\frac{\Information(Z;J)+\log 2}{\log M}\right\},
		\end{equation}
		where $\Information(Z;J)$ is the mutual information between $Z$ and $J$. In particular, thanks to \eqref{eq:Information is bounded by KL divergence} and \eqref{eq:KL product measure}, we have 
		\begin{equation}
		\label{e.fano.2}
		\mathfrak{M}_n(\theta(\mathcal{P}) ; d) \geq \delta\left\{1-\frac{n \max_{j ,k} \mathrm{KL}(\rho_j \|\rho_k) +\log 2}{\log M}\right\}.
		\end{equation}
	\end{proposition}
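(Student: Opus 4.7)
The plan is to follow the textbook proof of Fano's method, since this is a standard result whose invocation in the paper is the key technical tool for the later application to the eigenpair estimation problem. The overall strategy is a reduction from estimation to multi-way hypothesis testing, followed by the information-theoretic Fano inequality for classification.

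First, I would carry out the reduction to testing. Given any measurable estimator $\hat\theta$ of $\theta$, define the associated test statistic $\hat J(Z) \defeq \argmin_{j \in [M]} d(\hat\theta(Z), \theta(\rho_j))$, with ties broken arbitrarily. The $2\delta$-separation assumption \eqref{def:2deltaSeparated}, together with the triangle inequality for $d$, yields the implication
\begin{equation*}
d(\hat\theta(Z), \theta(\rho_J)) < \delta \;\Longrightarrow\; \hat J(Z) = J\,,
\end{equation*}
since any other index $k \neq J$ satisfies $d(\hat\theta(Z), \theta(\rho_k)) \geq d(\theta(\rho_J), \theta(\rho_k)) - d(\hat\theta(Z), \theta(\rho_J)) \geq 2\delta - \delta = \delta$. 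Consequently, $\P(\hat J \neq J) \leq \max_{j \in [M]} \P_{\rho_j^n}\bigl(d(\hat\theta(Z), \theta(\rho_j)) \geq \delta\bigr)$.

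Second, I would pass from expected risk to tail probability via Markov's inequality. Restricting the supremum in the definition of $\mathfrak{M}_n$ to the finite family $\{\rho_1, \dots, \rho_M\} \subseteq \mathcal{P}$ and applying Markov gives
\begin{equation*}
\mathfrak M_n(\theta(\mathcal P); d) \geq \inf_{\hat\theta} \max_{j \in [M]} \E_{\rho_j^n}\bigl[d(\hat\theta(Z), \theta(\rho_j))\bigr] \geq \delta \inf_{\hat\theta} \max_{j \in [M]} \P_{\rho_j^n}\bigl(d \geq \delta\bigr) \geq \delta \inf_{\hat J} \P(\hat J \neq J)\,,
\end{equation*}
where the last infimum is now over all measurable classifiers $\hat J : \M^n \to [M]$.

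Third, I would apply the classical Fano inequality from information theory. For any classifier $\hat J$ of the uniformly distributed $J \in [M]$ from the observation $Z$, the inequality
\begin{equation*}
H(J \mid Z) \leq H_b(P_e) + P_e \log(M-1) \leq \log 2 + P_e \log M
\end{equation*}
holds, where $P_e = \P(\hat J \neq J)$ and $H_b$ is the binary entropy. Combining with the identity $H(J \mid Z) = H(J) - \Information(Z;J) = \log M - \Information(Z;J)$ (since $J$ is uniform) and rearranging yields
\begin{equation*}
P_e \geq 1 - \frac{\Information(Z;J) + \log 2}{\log M}\,.
\end{equation*}
Plugging into the Markov bound from the previous step yields \eqref{e.fano}.

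Finally, the second inequality \eqref{e.fano.2} is immediate: combining \eqref{eq:Information is bounded by KL divergence} and \eqref{eq:KL product measure} gives $\Information(Z;J) \leq n \max_{j,k} \KL(\rho_j \| \rho_k)$, which substituted into \eqref{e.fano} yields the claim. There is no genuine obstacle in this proof; the only subtlety is the triangle-inequality argument that converts $2\delta$-separation into the testing reduction, and the rest is a direct application of Fano's inequality plus Markov's inequality.
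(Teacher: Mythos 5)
Your proof is correct and reproduces the standard argument: the paper does not give its own proof of this proposition, but simply cites it as Proposition~15.12 of Wainwright's textbook, and your derivation (reduction to testing via the $2\delta$-separation and triangle inequality, Markov's inequality, then the classical Fano inequality combined with $H(J\mid Z)=\log M-\Information(Z;J)$) is precisely the argument appearing in that cited source. The only stylistic point is that Fano's inequality is usually stated for $H(J\mid\hat J)$ and then combined with $H(J\mid Z)\le H(J\mid\hat J)$; your shortcut of writing the bound directly for $H(J\mid Z)$ is valid but implicitly uses that monotonicity.
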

	

	\subsection{Proof of Theorem \ref{thm:lower bound}}\label{sec:proof of lower bounds}

	We start by observing that the sup in the expression on the right hand side of \eqref{eq:minimaxTheorem} runs over all manifolds $\M \in \MM$. To prove Theorem \ref{thm:lower bound}, it will thus suffice to find a lower bound for 
	\begin{equation}
	\inf_{\hat{f}_l,\hat{\lambda}_l}\sup_{\rho\in\mathcal{P}_{\M,l}}\E_{\X_n\sim\rho}\Biggl[ |\lambda_l - \hat{\lambda}_l | +  \lVert f_l - \hat{f}_l \rVert_{H^1(\M)}  \Biggr]
	\label{eq:minimaxFixedM}
	\end{equation}
	for a \textit{fixed} manifold $\M \in \MM$ that we will choose conveniently. Note that \eqref{eq:minimaxFixedM} is the minimax risk associated to the same eigenpair estimation problem that we have discussed throughout the paper but where we implicitly assume that the manifold $\M$ is known. To obtain a lower bound for \eqref{eq:minimaxFixedM}, we apply Fano's method. We focus on the case where $\M$ is the $d$-dimensional flat torus $\mathbb{T}^d$, since, as we discuss through the proof, this choice simplifies the analysis. We remark that lower bounds for \eqref{eq:minimaxFixedM} when $\M$ is more general can be obtained in a rather analogous way to the torus case under the assumptions discussed in Remark \ref{rem:OnExtendingTorus} below. 
	
	Recall that we use the notation $(\lambda_l,f_l)$ to denote an eigenpair of the weighted Laplace-Beltrami operator defined in~\eqref{eq:eigenpair}, where we assume that $f_l$ is normalized according to
	\begin{equation*}
	\int_{\M}  f_l^2\rho \dx = 1\,. 
	\end{equation*}
	As an eigenpair, $(\lambda_l, f_l)$ solves equation \eqref{eq:eigenform}, which here we rewrite for convenience as
	\begin{equation}
	\label{e.eigenvalue}
	-\mathrm{div} \bigl( \rho^2 \nabla f_l\bigr) = \lambda_l f_l \rho, \quad  \mbox{ in } \mathcal{M}\,.
	\end{equation}
	Naturally, both the eigenvalue $\lambda_l$ and eigenfunction $f_l$ depend on the density~$\rho,$ but we will often suppress this dependence when no confusion arises. Otherwise, we will write $\lambda_l(\rho), f_l(\rho)$. From the discussion in Section \ref{sec:minimax lower bound introduction}, we know that the key ingredient to obtain a lower bound for \eqref{eq:minimaxFixedM} is the construction of a family of ``sufficiently different'' densities~$\{\rho_1,\cdots, \rho_M\} \subseteq \mathcal{P}_{\M,l}$ that have the following properties: for a given~$\delta > 0$ to be chosen later, we want: 
	\begin{itemize}
		\item for any~$i \neq j,$ 
		\begin{equation*}
		\KL(\rho_i || \rho_j) \leqslant C\delta^2\,,
		\end{equation*}
		and 
		\item for any~$i\neq j,$ 
		\begin{equation*}
		|\lambda_l(\rho_i) - \lambda_l(\rho_j) | +  \lVert f_l(\rho_i) - f_l(\rho_j) \rVert_{H^1(\M)} \geqslant 2\delta\,. 
		\end{equation*}
	\end{itemize}
	We will execute this plan by constructing a local packing of distributions in~$\mathcal{P}_{\M, l}$ ``near'' the uniform distribution over~$\mathbb{T}^d$.  For that purpose, we begin by fixing a large enough positive integer $m \in \N$ such that
	\begin{equation} \label{e.choiceofm}
	m \gg C_l\,
	\end{equation}
	for a constant $C_l$ that may depend on $l$ and on $d$. In the torus setting that we will consider through the proof, this constant can be taken to be	\begin{equation}\label{e.choiceofm-Td}
	C_l = C(\sqrt{\lambda_l(\mathbbm{1})}+\frac{1}{\sqrt{\la_l(\mathbbm{1})}}),
	\end{equation}
	where $C$ is independent of $l$. Here and in the remainder of this section we use $\mathbbm{1}$ to denote the density function over $\M= \mathbb{T}^d$ that is identically equal to one. We also use the fact that $f_l(\mathbbm{1})$ satisfies the following regularity estimates:
	\begin{equation}\label{eq:L_infty norm for torus}
	\begin{split}
	\|D^2 f_l(\mathbbm{1})\|_{L_\infty(\M)}\le C\la_l,\\
	\|\nabla f_l(\mathbbm{1})\|_{L_\infty(\M)}\le C\sqrt{\la_l},\\
	\|f_l(\mathbbm{1})\|_{L_\infty(\M)}\le  C\,.
	\end{split}
	\end{equation}


	\begin{remark}
		\label{rem:RegGeneralManifold}
		When~$\M$ is not the torus or the density is not uniform, eigenfunctions are not necessarily uniformly bounded in~$l$ (see ~\cite{sogge2001riemannian}). In general, for a smooth manifold $\M$ and a smooth density $\rho$ with derivatives of all orders, elliptic estimates imply that 
		\begin{equation*} \label{e.derivativebound}
		\lVert D^k f_l\rVert_{L^\infty(\M)}   \leqslant C_{\M, \rho}\la_l^{\frac{d}{2} + \frac{k}{2}}, \quad k \in \mathbb{N}\,,
		\end{equation*}
        for a constant~$C_{\M, \rho}$ that is independent of~$l.$

	\end{remark}

	Next, we partition $\mathbb{T}^d$ into~$m^d$ pairwise disjoint cubes $\M_i$ with volume~$\vol(\M_i) = m^{-d}$ and consider the class of densities over $\mathbb{T}^d$ given by
	\begin{equation}\label{e.family}
	\mathcal{F} \defeq \biggl\{\rho_{\mathbf{c}}=\mathbbm{1}+\frac{1}{m^2}\sum_{i=1}^{m^d} c_i a_i(x) : c_i \in \{\pm 1\} \biggr\}.
	\end{equation}
	Here, the $a_i$ are fixed functions defined according to
	\[a_i \defeq \phi(m(x-b_i)), \quad x \in \mathbb{T}^d,\]  
	for $b_i$ the center of the cube $\M_i$. The function $\phi: \R^d \rightarrow \R$ used in the definition of the $a_i$ is a fixed template function given by  
	\begin{equation}
	\phi(x) = \varphi(|x- u_+|)  - \varphi(|x- u_-|),
	\end{equation}
	where $\varphi:[0,\infty)  \rightarrow [0,\infty)$ is the smooth scalar mollifier 
	\[ \varphi(t) = \begin{cases} C \exp \left( \frac{1}{64t^2 -1} \right) & \text{ if } t < 1/8, \\ 0 & \text{ else}, \end{cases} \]
	and
	\[ u_+:= (1/4, \dots, 1/4), \quad u_- := (-1/4, \dots, -1/4);  \]
	see an illustration of the above construction in Figure \ref{fig:lower bound}. 
	\begin{figure}[htbp]
		\centering
		\centering    	        \includegraphics[width=6cm]{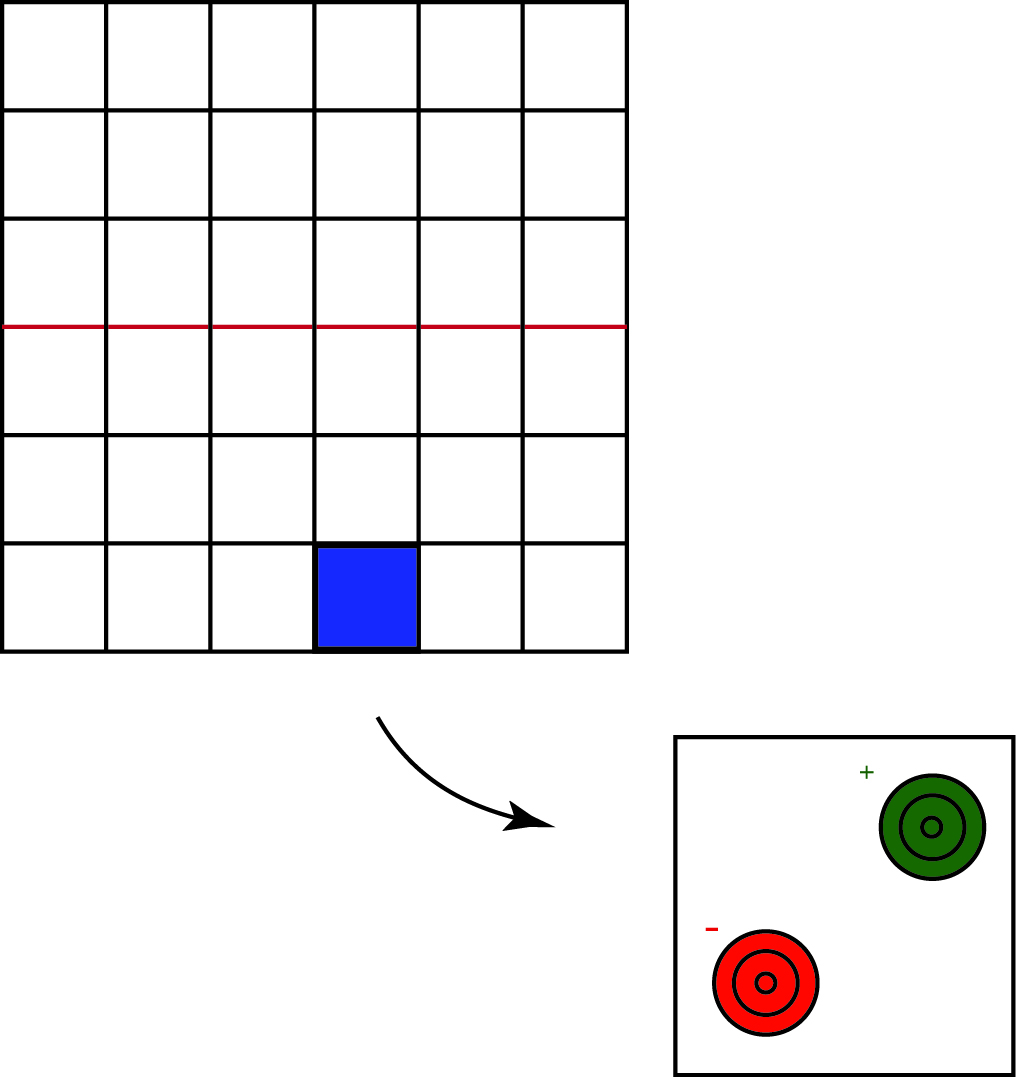}
		\put(5, 0){$\M_j$}
		\put(-55, 122){$\{ x \in \M \text{ s.t. }\nabla f_l(x) = 0 \}$}
		\caption{An illustration of a cube $\M_j$ with $j \in I$. Points in this cube are sufficiently far away from the set of points where the gradient of the eigenfunction $f_l$ vanishes. The function $a_j$ is also illustrated in this figure. The red concentric circles denote the negative level sets of $a_j$, i.e., the points around the point $b_j + \frac{1}{m}u_-$, while the green concentric circles represent the region in $\M_j$ where $a_j$ is positive.}
		\label{fig:lower bound}
	\end{figure}
	
	One can directly deduce the following.
	\begin{proposition}
		The following properties hold:
		\begin{enumerate}
			\item Each of the functions $a_i$ is $C^\infty$ and its support is contained in $\M_i$.
			\item $\lVert a_i \rVert_{L^\infty(\M)} \leq C$, $\lVert \nabla a_i \rVert_{L^\infty(\M)} \leq Cm$, $\lVert D^2 a_i \rVert_{L^\infty (\M)} \leq Cm^2$.
			\item $\int_\M a_i(x) dx =0$.
			\item The family $\F$ introduced in \eqref{e.family} is contained in the family $\mathcal{P}_\M$ from Definition \ref{def:DensityClass}.
		\end{enumerate}
	\end{proposition}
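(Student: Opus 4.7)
The plan is to verify the four assertions sequentially; each follows directly from unpacking the construction of $\phi$ and the scaling $a_i(x) = \phi(m(x-b_i))$.

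For item (1), one can write $\varphi(t) = h(t^2)$ where $h(s) = \exp(1/(64 s - 1))$ (up to the normalizing prefactor) is $C^\infty$ on $[0, 1/64)$; thus $\varphi(|x - u_\pm|) = h(|x - u_\pm|^2)$ is smooth in $x$, and so is $\phi$, and therefore $a_i$. The support statement is a geometric check: each summand $\varphi(|\cdot - u_\pm|)$ is supported in the closed Euclidean ball of radius $1/8$ about $u_\pm$, and since $|u_\pm|_\infty = 1/4$, both balls lie inside $(-1/2, 1/2)^d$. Rescaling by $m$ around $b_i$, the support of $a_i$ lies inside the cube $\M_i$ of side $1/m$ centered at $b_i$. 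Item (3) follows from an oddness argument: the map $y\mapsto -y$ swaps $u_+$ and $u_-$, so $\phi(-y) = -\phi(y)$ and $\int_{\R^d}\phi\,dy = 0$; the substitution $y = m(x-b_i)$ then gives $\int_\M a_i\,dx = 0$.

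Item (2) is the chain rule: $\nabla a_i(x) = m(\nabla\phi)(m(x-b_i))$ and $D^2 a_i(x) = m^2 (D^2\phi)(m(x-b_i))$. Since $\phi$ is $C^\infty$ with compact support, each of $\|\phi\|_\infty$, $\|\nabla\phi\|_\infty$, $\|D^2\phi\|_\infty$ is a finite constant depending only on the template $\varphi$, and the stated bounds follow at once.

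For item (4), normalization $\int_\M \rho_{\mathbf{c}}\,dx = 1$ is immediate from (3). The pairwise disjointness of the supports of $\{a_i\}$ forces at any $x$ at most one term in $\sum_i c_i a_i(x)$ to be nonzero, so $|\rho_{\mathbf{c}}(x) - 1| \leq m^{-2}\|\phi\|_\infty \leq Cm^{-2}$, which lies in $[\rho_{\min}, \rho_{\max}]$ once $m$ is large enough. Combining (2) with disjoint supports yields $\|\nabla\rho_{\mathbf{c}}\|_\infty \leq Cm^{-1}$ and $\|D^2\rho_{\mathbf{c}}\|_\infty \leq C$, both uniform in $\mathbf{c}$. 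The main subtle point is that the Hessian bound is the only one that does not improve as $m \to \infty$; hence the mollifier prefactor in the definition of $\varphi$ must be fixed small enough at the outset so that the resulting absolute constant bounding $\|D^2\phi\|_\infty$ falls below $c_2$. With this one-time calibration, all the conditions in Definition \ref{def:DensityClass1} are satisfied uniformly over $\mathbf{c} \in \{\pm 1\}^{m^d}$ and all sufficiently large $m$, which is precisely the regime enforced by \eqref{e.choiceofm}--\eqref{e.choiceofm-Td}.
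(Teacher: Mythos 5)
Your proof is correct and is precisely the direct verification the paper leaves to the reader (the paper states the proposition without proof); each item follows as you say from smoothness of $h(s)=\exp(1/(64s-1))$, the odd symmetry $\phi(-y)=-\phi(y)$, the chain rule under the scaling $a_i(x)=\phi(m(x-b_i))$, and the pairwise disjointness of the supports. Your remark that the Hessian bound $\lVert D^2\rho_{\mathbf{c}}\rVert_{L^\infty}\leq C$ is scale-invariant in $m$, so that the normalizing prefactor in $\varphi$ must be chosen small enough at the outset to guarantee $\F\subseteq\mathcal{P}_\M$, is a genuine subtlety the paper leaves implicit, and it is good that you flagged it.
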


	We will now show that from the family $\F$ we can extract a large enough sub-family that satisfies the conditions needed to apply Fano's method. We need to state and prove a few auxiliary results first.

	\begin{lemma}\label{lem:region B}
		Let $m \in \mathbb{N}$ be fixed as in~\eqref{e.choiceofm-Td}. There exists constants~$C > 0,$ and~$0 < c < 1,$  depending only on~$d$, and an index set $I \subseteq [m^d]$ with cardinality at least $c m^d$, such that for every $i \in I$ there are subcubes $\mathcal{N}_i^+, \mathcal{N}_i^-$ of $\M_i$ satisfying:
		
		\begin{enumerate}
			\item The side length of~$\mathcal{N}_i^\pm$ is at least~$\frac{c}{m}.$
			\item  For every $x\in \mathcal{N}_i^\pm$ we have:
			\[ \pm \nabla a_i(x) \cdot \nabla f_l(\mathbbm{1})(x) \geq c \sqrt{\lambda_l(\mathbbm{1})} \nc m. \]
		\end{enumerate}
	\end{lemma}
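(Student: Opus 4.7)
The plan is to locate, within a positive fraction of the cubes $\M_i$, angular caps of spherical shells around the peak $b_i + u_+/m$ and the trough $b_i + u_-/m$ of $a_i$, on which $\pm \nabla a_i \cdot \nabla f_l(\mathbbm{1})$ achieves the required lower bound. Two ingredients drive the argument: the explicit formula $\nabla a_i(x) = m\,\nabla\phi(m(x-b_i))$ has magnitude of order $m$ on such shells and its direction sweeps through every unit radial vector as $x$ varies; and the identity $\|\nabla f_l(\mathbbm{1})\|_{L^2(\M)}^2 = \lambda_l$ together with the $L^\infty$ bound in~\eqref{eq:L_infty norm for torus} forces $|\nabla f_l(\mathbbm{1})| \gtrsim \sqrt{\lambda_l}$ on a set of definite measure, while the Hessian bound in~\eqref{eq:L_infty norm for torus} combined with~\eqref{e.choiceofm-Td} renders $\nabla f_l(\mathbbm{1})$ effectively constant across any single cube $\M_i$ of diameter $\sqrt{d}/m$.

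First I would construct the index set. Fix $c_0 \in (0,1)$ small and define $G := \{x \in \M : |\nabla f_l(\mathbbm{1})(x)|^2 \geq c_0\lambda_l\}$. The splitting
\begin{equation*}
\lambda_l = \int_G |\nabla f_l(\mathbbm{1})|^2\dx + \int_{G^c}|\nabla f_l(\mathbbm{1})|^2\dx \leq C^2\lambda_l\,|G| + c_0\lambda_l(1-|G|)
\end{equation*}
yields $|G| \geq \gamma_0$ for an explicit constant $\gamma_0 = \gamma_0(c_0,C) > 0$. Take $I := \{i \in [m^d] : \M_i \cap G \neq \emptyset\}$; every cube outside $I$ lies entirely in $G^c$, so $|I| \geq \gamma_0 m^d$. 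For each $i \in I$ pick $x_i^* \in \M_i \cap G$ and set $v_i := \nabla f_l(\mathbbm{1})(x_i^*)/|\nabla f_l(\mathbbm{1})(x_i^*)|$; the Hessian bound in~\eqref{eq:L_infty norm for torus}, $\mathrm{diam}(\M_i) \leq \sqrt{d}/m$, and~\eqref{e.choiceofm-Td} give $|\nabla f_l(\mathbbm{1})(x) - \nabla f_l(\mathbbm{1})(x_i^*)| \leq \tfrac{1}{2}\sqrt{c_0\lambda_l}$ on $\M_i$, so $\nabla f_l(\mathbbm{1})(x) \cdot v_i \geq c_2\sqrt{\lambda_l}$ throughout $\M_i$ with $c_2 := \sqrt{c_0}/2$.

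I would then build the subcubes from an angular cap. Consider the shell $S_i^+ := \{x : 1/(32m) \leq |x - b_i - u_+/m| \leq 1/(16m)\} \subset \M_i$, and write $\hat{r}(x) := (x - b_i - u_+/m)/|x - b_i - u_+/m|$. The explicit formula $\nabla a_i(x) = m\,\varphi'(|m(x-b_i)-u_+|)\hat{r}(x)$ combined with the fact that $|\varphi'|$ is bounded below by a universal $c_3 > 0$ on the compact interval $[1/32,1/16] \subset (0,1/8)$ gives $|\nabla a_i| \geq c_3 m$ on $S_i^+$, with direction $-\hat{r}$ (since $\varphi'<0$). Fix $\theta_0 := c_2/(2C)$ and set $\mathcal{C}_i^+ := \{x \in S_i^+ : -\hat{r}(x)\cdot v_i \geq \cos\theta_0\}$. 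The decomposition
\begin{equation*}
\nabla a_i(x) \cdot \nabla f_l(\mathbbm{1})(x) = m|\varphi'|\Bigl[v_i \cdot \nabla f_l(\mathbbm{1})(x) + \bigl(-\hat{r}(x) - v_i\bigr)\cdot\nabla f_l(\mathbbm{1})(x)\Bigr],
\end{equation*}
together with $|-\hat{r}(x) - v_i| \leq \theta_0$ and $\|\nabla f_l(\mathbbm{1})\|_{L^\infty(\M)} \leq C\sqrt{\lambda_l}$, yields $\nabla a_i(x)\cdot\nabla f_l(\mathbbm{1})(x) \geq (c_2 c_3/2)\,m\sqrt{\lambda_l}$ on $\mathcal{C}_i^+$. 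In rescaled coordinates $y = m(x-b_i)$, $\mathcal{C}_i^+$ is a truncated spherical cone of radial thickness $1/32$ and half-opening $\theta_0$, so it contains a Euclidean ball of radius equal to a universal positive constant; pulling back, this ball contains a subcube of side at least $c/m$ for a universal $c>0$, which we take as $\mathcal{N}_i^+$. The construction of $\mathcal{N}_i^-$ near the trough is analogous: the minus sign on the $\varphi(|\cdot - u_-|)$ term combined with $\varphi'<0$ makes $\nabla a_i$ point in the direction $+\hat{r}'$ (with $\hat{r}'$ the radial unit vector from $b_i+u_-/m$), and the same cap condition $-\hat{r}'(x)\cdot v_i \geq \cos\theta_0$ yields $-\nabla a_i(x)\cdot\nabla f_l(\mathbbm{1})(x) \geq (c_2c_3/2)\,m\sqrt{\lambda_l}$.

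The main technical point is keeping all the constants ($c_0, c_2, c_3, \theta_0$, and the final $c$) universal, depending only on $d$ and on the absolute constants in~\eqref{eq:L_infty norm for torus}, and not on $l$ or $m$. This consistency rests on the assumption~\eqref{e.choiceofm-Td}: with $m \gg \sqrt{\lambda_l}$, the Lipschitz drift $C\sqrt{d}\lambda_l/m$ of $\nabla f_l(\mathbbm{1})$ across each cube $\M_i$ stays much smaller than its lower bound $\sqrt{c_0\lambda_l}$, so that $\nabla f_l(\mathbbm{1})$ is essentially constant on $\M_i$ and the angular cap argument applies uniformly in $l$.
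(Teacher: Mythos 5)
Your proof is correct and follows essentially the same route as the paper's: both use the identity $\int_\M |\nabla f_l|^2\,\dx=\lambda_l$ together with the $L^\infty$ bound from~\eqref{eq:L_infty norm for torus} to find a definite fraction of cubes where $|\nabla f_l|\gtrsim\sqrt{\lambda_l}$, then use the Hessian bound and $m\gg C_l$ to freeze $\nabla f_l$ across a cube, and finally exploit the radial structure of $\varphi$ to locate the subcubes. Where the paper's proof merely asserts, citing radial symmetry, that one can find subcubes with the stated angle and magnitude bounds, you spell this out with the explicit truncated-cone construction on the shell $[1/(32m),1/(16m)]$ and the cap condition $-\hat r\cdot v_i\ge\cos\theta_0$; tracking the fixed direction $v_i=\nabla f_l(x_i^*)/|\nabla f_l(x_i^*)|$ rather than the varying direction $\nabla f_l(x)/|\nabla f_l(x)|$ is a small but clean simplification of the same argument.
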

	\begin{proof}
		For simplicity, in this proof we abbreviate $f_l(\mathbbm{1})$ as $f_l$ and similarly $\lambda_l(\mathbbm{1})$ as $\lambda_l$. 
		We first observe that the equality $\int_\M |\nabla f_l|^2\dx=\lambda_l$ implies
		\begin{align*}
		\la_l=\int_\M |\nabla f_l|^2\dx & \le \|\nabla f_l(x)\|^2_{L_\infty(\M) }\vol\left(\left\{x \in \M \text{ s.t. }|\nabla f_l(x)|>c_I\right\}\right) 
		\\& \quad + c_I^2 \vol\left(\left\{x\in \M \text{ s.t. }|\nabla f_l(x)|\le c_I\right\}\right)\,,
		\end{align*}
		for any positive constant $c_I$. From this it follows
		\begin{equation}\label{e.condition1}
		\vol\left(\left\{x:|\nabla f_l(x)|\ge  c_I\right\}\right)\ge \frac{\la_l-c_I^2}{1+\|\nabla f_l(x)\|^2_{L_\infty(\M) }}\,.
		\end{equation}    
		Choosing $c_I=\frac{\sqrt{\la_l}}{2}$, we see from \eqref{eq:L_infty norm for torus} and \eqref{e.condition1} that
		\begin{equation}\label{e.condition2}
		\vol\left(\left\{x \in \M \text{ s.t. } |\nabla f_l(x)|\ge  \frac{\sqrt{\la_l}}{2}\right\}\right)\ge c\,.
		\end{equation}

		Let us now consider the set
		\[ I:= \left\{ i \in [m^d] \text{ s.t. } \lVert \nabla f_l(x) \rVert_{L^\infty(\M_i)} > \frac{\sqrt{\lambda_l}}{2} \right\}. \]
		From the fact that
		\[ \bigcup_{i \not \in I} \M_i \subseteq \left\{x \in \M \text{ s.t. }|\nabla f_l(x)|\leq  \frac{\sqrt{\la_l}}{2}\right\}, \]
		it follows from \eqref{e.condition2} that
		\[ \frac{m^d - |I|}{m^d} \leq 1 - \vol\left(\left\{x \in \M \text{ s.t. } |\nabla f_l(x)|>  \frac{\sqrt{\la_l}}{2}\right\}\right) \leq 1-c.  \]
		Rearranging the above inequality, we obtain
		\[ cm^d \leq |I|. \]
		
		Now, let $i \in I$. By definition, there is $x_i\in \M_i$ such that $|\nabla f_l(x_i)| \geq \frac{\sqrt{\lambda_l}}{2}$. For any other $x \in \M_i$, it follows from~\eqref{eq:L_infty norm for torus} that 
		\begin{equation}\label{eq:nabla f_l(x)-nabla f_l(x_i)}
		|\nabla f_l(x)-\nabla f_l(x_i)|\le C \lambda_l|x-x_i|\le \frac{C\lambda_l}{m} \,. 
		\end{equation}    
		Hence, for every $x \in \M_i$ we have
		\begin{align}
		\label{eq:LowerboundNablaf_l}
		\begin{split}
		|\nabla f_l(x)| &\geq |\nabla f_l(x_i)| - |\nabla f_l(x_i) - \nabla f_l(x)|
		\\& \geq \frac{\sqrt{\lambda_l}}{2}- \frac{C \lambda_l}{m}
		\\ & \geq \frac{\sqrt{\lambda_l}}{4},
		\end{split}
		\end{align}
		where the last inequality follows from our assumption that $m$ is sufficiently larger than $\sqrt{\lambda_l}$; see \eqref{e.choiceofm}. To complete the proof, observe that, thanks to the radial symmetry of the function $\varphi(|\cdot|)$ (function used in the definition of the template function $\phi$) and \eqref{eq:nabla f_l(x)-nabla f_l(x_i)}, we can find a subcube $\mathcal{N}_i^+$ of $\M_i$ of side length at least $\frac{c}{m^d}$ such that
		\[ \frac{\nabla a_i (x)}{|\nabla a_i(x)|} \cdot \frac{\nabla f_l(x)}{|\nabla f_l(x)|} \geq 1/4, \quad \text{ and } \quad  |\nabla a_i(x)| \geq c m,\]
		for all $x \in \mathcal{N}_i^+$. Likewise, we can find a subcube $\mathcal{N}_i^-$ of $\M_i$ of side length at least $\frac{c}{m^d}$ such that
		\[ -\frac{\nabla a_i (x)}{|\nabla a_i(x)|} \cdot \frac{\nabla f_l(x)}{|\nabla f_l(x)|} \geq 1/4, \quad \text{ and } \quad |\nabla a_i(x)| \geq c m,\]
		for all $x \in \mathcal{N}_i^-$. From the above and \eqref{eq:LowerboundNablaf_l} we obtain
		\[ \pm \nabla a_i(x) \cdot \nabla f_l(x) \geq c\sqrt{\lambda_l} m, \quad \forall x \in \mathcal{N}_i^{\pm}. \]

	\end{proof}

	\begin{lemma}\label{lem:Laplacian to grad}
		Let $D\defeq [0,1/m]^d $, $g \in C^2(D)$, and suppose that~$u \in L^1(D)$ is such that 
		\[  \alpha \nc  g(x) + \Delta g (x) +\frac{1}{m}u(x) \geq \frac{C_0}{m}, \quad \mbox{ for almost every } x \in D\,,   \]
		for some~$C_0 > 0$ and some  $\alpha>0$\nc . Then 
		\[   \frac{\alpha}{2m} \nc \int_{D}  | g(x) | d x  +   \int_{D} |\nabla g(x)| \dx  \geq \frac{ C_0}{ 2^{d+2}}(1/m)^{d+2}-\frac{1}{2m^2}\int_{D}|u(x)|\dx.  \]
	\end{lemma}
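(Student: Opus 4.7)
The plan is to test the pointwise differential inequality against a non-negative Lipschitz cutoff $\varphi$ supported in $D$ with $\varphi = 0$ on $\partial D$, and then integrate by parts. A convenient explicit choice is
\[
\varphi(x) := \max\bigl(0,\,1-2m\,\|x-x_0\|_{\ell^\infty}\bigr),
\]
where $x_0$ denotes the center of $D$. Since $\|\,\cdot - x_0\|_{\ell^\infty}$ is $1$-Lipschitz with respect to the Euclidean norm, this $\varphi$ satisfies $0\leq \varphi\leq 1$, vanishes on $\partial D$, has $|\nabla\varphi|\leq 2m$ almost everywhere, and an elementary change of variables gives $\int_D \varphi\,dx = \frac{1}{(d+1)\,m^d}$.

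First I would multiply the hypothesis $\alpha g + \Delta g + u/m \geq C_0/m$ by $\varphi$ and integrate over $D$; after applying Green's identity (the boundary contribution vanishes since $\varphi|_{\partial D}=0$), this yields
\[
\alpha \int_D g\varphi\,dx \;-\; \int_D \nabla g\cdot\nabla\varphi\,dx \;+\; \frac{1}{m}\int_D u\varphi\,dx \;\geq\; \frac{C_0}{m}\int_D \varphi\,dx.
\]
Next, passing to absolute values term by term, using $\|\varphi\|_\infty \leq 1$ and the pointwise bound $|\nabla g\cdot\nabla\varphi|\leq 2m\,|\nabla g|$, and substituting the explicit value of $\int_D \varphi$, one obtains
\[
\alpha \int_D |g|\,dx \;+\; 2m\int_D |\nabla g|\,dx \;+\; \frac{1}{m}\int_D |u|\,dx \;\geq\; \frac{C_0}{(d+1)\,m^{d+1}}.
\]
Dividing through by $2m$ and then weakening the right-hand side using the elementary inequality $2(d+1) \leq 2^{d+2}$, valid for every $d\geq 0$, produces exactly the claimed estimate after rearrangement.

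I do not anticipate a serious obstacle. The one mild technicality is that $\varphi$ is merely Lipschitz, not $C^1$, so the application of Green's identity requires brief justification; this is standard since $\varphi\in H^1_0(D)$ and $g\in C^2(\overline D)$, and the identity follows either by mollifying $\varphi$ and passing to the limit, or by invoking the standard distributional pairing. Should a smooth cutoff be preferred, any smooth radial bump supported in a ball of radius of order $1/m$ with $L^\infty$-norm of order $1$ and gradient of order $m$ works equally well; such a replacement changes the prefactor $\tfrac{1}{d+1}$ by at most a dimensional constant, which still comfortably beats $\tfrac{1}{2^{d+1}}$ and hence produces the $\tfrac{1}{2^{d+2}}$ constant appearing in the statement.
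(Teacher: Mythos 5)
Your proof is correct and takes essentially the same route as the paper: test the differential inequality against a nonnegative cutoff vanishing on $\partial D$ with gradient of order $m$, integrate by parts to transfer $\Delta g$ onto the cutoff, and bound each term in absolute value. The paper uses an abstract smooth cutoff $\zeta$ with $\zeta\geq\tfrac12$ on a concentric subcube, whereas you use an explicit Lipschitz $\ell^\infty$-cone and compute $\int_D\varphi$ exactly (yielding the slightly sharper prefactor $\tfrac{1}{2(d+1)}$ before weakening to $2^{-(d+2)}$); you correctly note that Green's identity extends to the $H^1_0$ cutoff paired against $g\in C^2$, so this is a cosmetic rather than substantive difference.
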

	
	\begin{proof}
		
		Let $D_1 \defeq [1/4m , 3/4m ]^d$. Consider a smooth function $\zeta: D \rightarrow [0,1]$ satisfying:
		\begin{enumerate}
			\item $\zeta(x)= 0$ for all $x$ in $\partial D$.
			\item $ \zeta(x)\geq 1/2 $ for every $x$ in $D_1$.
			\item $|\nabla \zeta (x) |\leq 2m $ for all $x \in D$.
		\end{enumerate}
		
		Then
		\begin{align*}
		\begin{split}
		(1/2)^{d+1} (1/m)^d C_0 \frac{1}{m}  & \leq \int _{ D_1}  \zeta(x) \Bigl(  \alpha \nc g(x) + \Delta g(x) +\frac{1}{m}|u(x)| \Bigr) \dx  
		\\& \leq \int_{D}  \zeta(x) \Bigl(  \alpha \nc g(x) + \Delta g(x) +\frac{1}{m}|u(x)| \Bigr) \dx
		\\ & \leq  \alpha \nc \int_{D}  |g(x)| \dx  +  \int_ D  \zeta(x) \Delta g (x) \dx +  \frac{1}{m}\int_{D}  |u(x)| \dx
		\\& =   \alpha \nc \int_{D}  |g(x)| \dx  -  \int_ D  \nabla \zeta(x) \cdot \nabla g (x) \dx  + \int_{\partial D} \zeta(x) \vec{n}(x) \cdot \nabla g(x) \dx\\
		&\quad  + \frac{1}{m}\int_{D}  |u(x)| \dx
		\\&  =   \alpha \nc \int_{D}  |g(x)| \dx  -  \int_ D  \nabla \zeta(x) \cdot \nabla g (x) \dx + \frac{1}{m}\int_{D}  |u(x)| \dx
		\\&  \leq   \alpha \nc \int_{D}  |g(x)| \dx  +  2 m  \int_ D  |\nabla g(x)| \dx + \frac{1}{m}\int_{D}  |u(x)| \dx
		\\&    \leq   2m\Bigl( \frac{\alpha}{2m} \nc \int_{D}  |g(x)| \dx  +  \int_ D  |\nabla g(x)| \dx\Bigr)+\frac{1}{m}\int_{D}  |u(x)| \dx. 
		\end{split}
		\end{align*}
		After rearranging the above inequality, the result follows. 
	\end{proof}

	Next, we find lower bounds for the difference between the eigenpairs of two density functions $\rho_{\vc^1}, \rho_{\vc^2} \in \mathcal{F}$ for $\vc^1$ sufficiently different from $\vc^2\in \{\pm 1\}^{m^d}$. For our discussion, it will be useful to introduce the following definition.
	\begin{definition} \label{d.sd}
		Given~$\vc^1,\vc^2 \in \{\pm 1\}^{m^d}$, we say that~$\vc^1$ is \emph{sufficiently different} from~$\vc^2$ if there exist at least $\frac{|I|}{4}$ indices $i\in I$ such that $\vc^1_i\not= \vc^2_i$. $I$ is the index set introduced in Lemma \ref{lem:region B}.
	\end{definition}

	Densities built out of sufficiently different~$\vc^1$ and~$\vc^2$ induce eigenpairs that differ substantially. We quantify this precisely in the next lemma, which, in turn, we use to carry out Fano's method; see Proposition~\ref{prop:fano}. For convenience, in what follows we write $\rho_{\vc^1} \defeq \mathbbm{1}+\frac{1}{m^2} a^1$ and $\rho_{\vc^2} \defeq \mathbbm{1}+\frac{1}{m^2} a^2$, where $a^j(x)=\sum_{i=1}^{m^d} \vc^j_i a_i(x)$ for $j=1,2$.
	
	
	\begin{lemma}\label{lem:eigenpair-theta-difference}
		Let $m \in \mathbb{N}$ be fixed as in~\eqref{e.choiceofm-Td}, and consider densities~$\rho_{\vc^1} , \rho_{\vc^2} \in \mathcal{F},$ defined in~\eqref{e.family},  for sufficiently different~$\vc^1, \vc^2$, according to Definition \ref{d.sd}. Then we have either
		\begin{multline}\label{eq:eigenvector difference is at least delta}
		\frac{\lambda_{l}(\mathbbm{1})}{m}   \sqrt{\int_{\M} \left(f_l\left(\mathbbm{1}+\frac{1}{m^2} a^1\right)-f_l\left(\mathbbm{1}+\frac{1}{m^2} a^2\right)\right)^2\dx}\\
		+ \sqrt{\int_{\M} \left|\nabla f_l\left(\mathbbm{1}+\frac{1}{m^2} a^1\right)-\nabla f_l\left(\mathbbm{1}+\frac{1}{m^2} a^2\right)\right|^2\dx}\ge \frac{c \sqrt{\la_l(\mathbbm{1})\nc}}{ m^2}\,,
		\end{multline}
		or 
		\begin{equation}\label{eq:eigenvalue gap is at least delta}
		\left|\lambda_l\left(\mathbbm{1}+\frac{1}{m^2} a^1\right)-\lambda_l\left(\mathbbm{1}+\frac{1}{m^2} a^2\right)\right|\ge \frac{c \sqrt{\la_l(\mathbbm{1})} \nc }{m}\,. 
		\end{equation}
	\end{lemma}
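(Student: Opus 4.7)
\textbf{Proof Proposal for Lemma \ref{lem:eigenpair-theta-difference}.}
The plan is to argue by contradiction: assume that the eigenvalue gap \eqref{eq:eigenvalue gap is at least delta} fails, i.e., $|\mu| := |\lambda_l(\rho_{\vc^1}) - \lambda_l(\rho_{\vc^2})| < c\sqrt{\lambda_l(\mathbbm{1})}/m$ for a small absolute constant $c$, and then derive the eigenfunction lower bound \eqref{eq:eigenvector difference is at least delta}. The key idea is to derive a PDE for the difference $h := f_l(\rho_{\vc^1}) - f_l(\rho_{\vc^2})$ whose forcing term inherits, locally, the magnitude and the sign information of $\nabla a_i \cdot \nabla f_l(\mathbbm{1})$ provided by Lemma \ref{lem:region B}, and then apply Lemma \ref{lem:Laplacian to grad} in each of the ``good'' subcubes $\mathcal{N}_i^{\pm}$ to convert this pointwise source information into $L^2$-type lower bounds on $h$ and $\nabla h$.

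Concretely, starting from $-\mathrm{div}(\rho_{\vc^j}^2 \nabla f_l(\rho_{\vc^j})) = \lambda_l(\rho_{\vc^j}) f_l(\rho_{\vc^j}) \rho_{\vc^j}$ for $j=1,2$, I will divide by $\rho_{\vc^j}^2$, expand $1/\rho_{\vc^j} = 1 - a^j/m^2 + O(1/m^4)$, and subtract the two equations to obtain, after rearranging,
\begin{equation*}
-\Delta h - \lambda_l(\mathbbm{1}) h \;=\; \mu\, f_l(\mathbbm{1}) \;+\; \frac{2}{m^2}\bigl(\nabla a^1 \cdot \nabla f_l(\rho_{\vc^1}) - \nabla a^2 \cdot \nabla f_l(\rho_{\vc^2})\bigr) \;+\; R,
\end{equation*}
where $R$ collects lower-order remainders of order $O(\lambda_l(\mathbbm{1})/m^2)$. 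Bounding $R$ requires standard a priori perturbation estimates of the form $|\lambda_l(\rho_{\vc^j}) - \lambda_l(\mathbbm{1})| = O(\lambda_l/m^2)$ and $\lVert f_l(\rho_{\vc^j}) - f_l(\mathbbm{1}) \rVert_{C^1} = o(\sqrt{\lambda_l})$, obtained via min--max characterization and elliptic regularity applied to the density perturbation $\rho_{\vc^j} - \mathbbm{1} = a^j/m^2$ (which is $O(1/m^2)$ in $L^\infty$). Since the supports of the $a_i$ are disjoint and contained in $\M_i$, inside each cube $\M_i$ the source reduces to $\frac{2}{m^2}(\vc^1_i - \vc^2_i)\, \nabla a_i \cdot \nabla f_l(\mathbbm{1})$ up to a perturbation error of order $o(\sqrt{\lambda_l}/m)$.

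Now let $I' \subseteq I$ be the set of indices where $\vc^1_i \neq \vc^2_i$; by the sufficient-difference hypothesis $|I'| \geq |I|/4 \geq c m^d$. For each $i \in I'$, Lemma \ref{lem:region B} furnishes a subcube of side length $\geq c/m$ (either $\mathcal{N}_i^+$ or $\mathcal{N}_i^-$, chosen according to the sign of $\vc^1_i - \vc^2_i$) in which
\begin{equation*}
(\vc^1_i - \vc^2_i)\, \nabla a_i \cdot \nabla f_l(\mathbbm{1}) \;\geq\; 2c\sqrt{\lambda_l(\mathbbm{1})}\, m,
\end{equation*}
so that $\frac{2}{m^2}(\vc^1_i - \vc^2_i)\nabla a_i \cdot \nabla f_l(\mathbbm{1}) \geq 4c\sqrt{\lambda_l(\mathbbm{1})}/m$. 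Using the contradiction hypothesis $|\mu| \|f_l(\mathbbm{1})\|_{L^\infty} \leq C c\sqrt{\lambda_l(\mathbbm{1})}/m$ together with the remainder bound $\|R\|_{L^\infty} = O(\lambda_l/m^2) \ll \sqrt{\lambda_l}/m$ (using $m \gg C_l \gtrsim \sqrt{\lambda_l}$), it follows that, after possibly replacing $h$ by $-h$ to fix signs,
\begin{equation*}
\lambda_l(\mathbbm{1})\, h + \Delta h \;\geq\; \frac{c'\sqrt{\lambda_l(\mathbbm{1})}}{m} \qquad \text{pointwise in the chosen subcube,}
\end{equation*}
for a positive constant $c'$ depending only on $d$ and $c$.

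With this pointwise lower bound in hand, I will apply Lemma \ref{lem:Laplacian to grad} with $\alpha = \lambda_l(\mathbbm{1})$, $u \equiv 0$, and $C_0 = c'\sqrt{\lambda_l(\mathbbm{1})}$, after rescaling the cube of side $c/m$ to the reference cube $[0,1/m]^d$ (which only alters dimensional constants), to obtain in each good subcube
\begin{equation*}
\frac{\lambda_l(\mathbbm{1})}{2m} \int_{\mathcal{N}_i^\pm} |h|\, dx + \int_{\mathcal{N}_i^\pm} |\nabla h|\, dx \;\geq\; \frac{c'' \sqrt{\lambda_l(\mathbbm{1})}}{m^{d+2}}.
\end{equation*}
Summing over the at least $c m^d$ good indices (whose subcubes are pairwise disjoint since they lie in distinct $\M_i$) and using Cauchy--Schwarz together with $\mathrm{vol}(\M) = 1$ to pass from $L^1$ to $L^2$ yields \eqref{eq:eigenvector difference is at least delta}. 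The main obstacle is the careful bookkeeping of error terms in step two, specifically ensuring that the perturbation estimates on $(f_l(\rho_{\vc^j}), \lambda_l(\rho_{\vc^j}))$ around $(f_l(\mathbbm{1}), \lambda_l(\mathbbm{1}))$ produce remainders strictly smaller than the principal source $\sqrt{\lambda_l(\mathbbm{1})}/m$; this is where the assumption $m \gg C_l$ in \eqref{e.choiceofm-Td} and the explicit regularity bounds in \eqref{eq:L_infty norm for torus} are used essentially.
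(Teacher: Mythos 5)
Your overall structure matches the paper's: subtract the two eigenvalue PDEs, localize to the subcubes from Lemma~\ref{lem:region B} to extract a pointwise-large source, apply Lemma~\ref{lem:Laplacian to grad}, and sum over good indices. However, there is a substantive gap in your error handling that the paper takes explicit care to avoid.

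You claim that the remainder $R$ satisfies $\|R\|_{L^\infty} = O(\lambda_l/m^2)$, relying on an asserted bound $\|\nabla f_l(\rho_{\vc^j}) - \nabla f_l(\mathbbm{1})\|_{L^\infty} = o(\sqrt{\lambda_l})$, and you then invoke Lemma~\ref{lem:Laplacian to grad} with $u \equiv 0$. The $L^\infty$ bound is never established and is not at all straightforward: the remainder contains a term of the form $\frac{1}{m^2}\nabla a_i \cdot (\nabla f_l(\rho_{\vc^j}) - \nabla f_l(\mathbbm{1}))$, and since $\|\nabla a_i\|_{L^\infty} \sim m$, controlling this in $L^\infty$ by $o(\sqrt{\lambda_l}/m)$ requires a genuine $C^1$ perturbation estimate for the eigenfunction. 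The density perturbation $a^j/m^2$ has $O(1)$ second derivatives, the coefficients of the operator are only $C^2$, and the paper's Appendix~\ref{sec:upper bound kde} proves only the $L^2$ gradient bound $\|\nabla f_l(\rho_{\vc^j}) - \nabla f_l(\mathbbm{1})\|_{L^2(\M)} \leq C\lambda_l/m^2$ (equation~\eqref{eq:gradient of eigenfunction is upper bounded by the density perturbation}), not a uniform one. Without the $L^\infty$ control, the pointwise inequality you feed into Lemma~\ref{lem:Laplacian to grad} with $u \equiv 0$ cannot be certified.

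The paper's proof is designed precisely to circumvent this: the remainder $\mathcal{R}$ is only ever bounded in $L^1(\M)$, using Cauchy--Schwarz and the $L^2$ gradient perturbation bound, which is why \eqref{eq:u integration small} reads $\int_\M |\mathcal{R}| \leq C\lambda_l(\mathbbm{1})/m$ and not a pointwise bound. The $u$ argument of Lemma~\ref{lem:Laplacian to grad} is exactly the mechanism for absorbing this $L^1$-only controlled error; the paper takes $u = \mathcal{R} + c_1\sqrt{\lambda_l(\mathbbm{1})}|f_{l,1}|$, so that the error enters only through $\int |u|$, which is summable over the good cubes. To repair your argument, do not attempt to upgrade the remainder bound to $L^\infty$; instead retain the pointwise lower bound on the non-remainder part of the source alone (from Lemma~\ref{lem:region B}), move the remainder into the $u$ slot of Lemma~\ref{lem:Laplacian to grad}, and track its contribution in $L^1$ over the union of good subcubes — at which point the perturbation term you need is exactly the $L^2$ bound cited above, not a $C^1$ one.
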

	
	\begin{proof}
		Through the course of the proof, we will use $\la_{l,1},\la_{l,2},f_{l,1},f_{l,2}$ to represent, respectively, $\lambda_l\left(\mathbbm{1}+\frac{1}{m^2} a^1\right)$ ,$\lambda_l\left(\mathbbm{1}+\frac{1}{m^2} a^2\right)$, $f_l\left(\mathbbm{1}+\frac{1}{m^2} a^1\right)$, $f_l\left(\mathbbm{1}+\frac{1}{m^2} a^2\right)$. Writing the PDE~\eqref{e.eigenvalue} for each of the two densities, we obtain
		\begin{align}
		\la_{l,1} f_{l,1} \rho_{\vc^1}  &=-\div(\rho_{\vc^1}^2\nabla f_{l,1}) \,, \label{eq:PDE1} \\
		\la_{l,2} f_{l,2} \rho_{\vc^2} &=-\div(\rho_{\vc^2}^2\nabla f_{l,2}) \,. \label{eq:PDE2}
		\end{align}
		Recalling the definition of the index set $I$ from Lemma \ref{lem:region B}, we focus on those cubes with indices $i\in I$ for which $\vc^1_i\neq \vc^2_i$. Since our estimates below will be obtained cube by cube, we assume without the loss of generality that $\vc^1_i=1$ and $\vc^2_i=-1$. 
		
		First, we subtract \eqref{eq:PDE2} from \eqref{eq:PDE1} to obtain
		\begin{equation}\label{eq:subtract two PDEs}
		\begin{split}
		&\la_{l,1} f_{l,1} \left(1+\frac{1}{m^2} a_i\right)-\la_{l,2} f_{l,2}\left(1-\frac{1}{m^2} a_i\right) \\
		&=\div\left(\left(1-\frac{1}{m^2} a_i\right)^2\nabla f_{l,2}\right)-\div\left(\left(1+\frac{1}{m^2} a_i\right)^2\nabla f_{l,1}\right), \quad \mbox{ on } \mathcal{M}_i\,.
		\end{split}
		\end{equation}  
		Developing each of the terms, we can rewrite \eqref{eq:subtract two PDEs} as
		\begin{equation} \label{e.simplifiedeq1}
		\Delta \bigl( f_{l,1} - f_{l,2} \bigr) + (\lambda_{l,1} - \lambda_{l,2})f_{l,1} + \lambda_{l,2}\bigl( f_{l,1} - f_{l,2} \bigr)  +\frac{1}{m}\mathcal{R} = - \frac{4}{m^2} \nabla a_i \cdot \nabla f_l(\mathbbm{1}) \quad \mbox{ on } \M_i\,,
		\end{equation}
		where the function $\mathcal{R}$ satisfies
		\begin{equation}\label{eq:u integration small}
		\int_{ \M } |\mathcal{R}(x)|\dx \le \frac{C \la_l(\mathbbm{1})}{m}\,.
		\end{equation}
		We will prove~\eqref{e.simplifiedeq1} and \eqref{eq:u integration small} later on, and for now we complete the proof of the lemma assuming that the above estimates have been proved.  
		
		First, let $\mathcal{N}_i$ be $\mathcal{N}_i^-$ from Lemma \ref{lem:region B} so that
		\begin{align}\label{eq:a f_l>Cm}
		-\nabla a_i(x)\cdot \nabla f_l(\mathbbm{1})(x) >c  \sqrt{\lambda_l(\mathbbm{1})} \nc m, \quad \forall x \in \mathcal{N}_i, \quad i \in I.
		\end{align}
		Inserting \eqref{eq:a f_l>Cm} in~\eqref{e.simplifiedeq1}, and using assumptions \eqref{e.choiceofm}, \eqref{e.choiceofm-Td}, we deduce
		\begin{equation} \label{e.simplifiedeq}
		\Delta \bigl( f_{l,1} - f_{l,2} \bigr) + (\lambda_{l,1} - \lambda_{l,2})f_{l,1} + \lambda_{l,2}\bigl( f_{l,1} - f_{l,2} \bigr) + \frac{1}{m}\mathcal{R} >\frac{c \sqrt{\la_l(\mathbbm{1})} \nc} {m} \,, \quad x \in \mathcal{N}_i\,. 
		\end{equation}
		
		We discuss different scenarios.
		\begin{enumerate}
			\item If $|\lambda_{l,1} - \lambda_{l,2}| \ge  c_1\nc \frac{ \sqrt{\la_l(\mathbbm{1})} \nc }{m}$ for some constant $c_1$ (that we choose later on), then the claim \eqref{eq:eigenvalue gap is at least delta} holds and the proof is already complete.
			\item Otherwise, we must have $|\lambda_{l,1} - \lambda_{l,2}|<  c_1\nc \frac{ \sqrt{\la_l(\mathbbm{1})} }{m}$ and in that case we get
			\begin{align*}
			\lambda_{l,2}\bigl( f_{l,1} - f_{l,2} \bigr)+\Delta \bigl( f_{l,1} - f_{l,2} \bigr) + \frac{1}{m}(\mathcal{R}+c_1\sqrt{\lambda_l({\mathbbm{1}})} |f_{l,1}|)\ge \frac{c \sqrt{\la_l(\mathbbm{1})}\nc}{m}, \quad x \in \mathcal{N}_i\,. 
			\end{align*}
			Applying Lemma \ref{lem:Laplacian to grad} with $D=\mathcal{N}_i$, $\alpha = \lambda_{l,2}$, $g=f_{l,1}-f_{l,2}$, and $u = \mathcal{R} + c_1 \sqrt{\la_l(\mathbbm{1})}|f_{l,1}|$, we obtain
			\begin{multline}\label{eq:control for f_l perturbation}
			\frac{\lambda_{l,2}}{m} \nc\int_{\mathcal{N}_i}|f_{l,1}-f_{l,2}|\dx + \int_{\mathcal{N}_i} |\nabla f_{l,1}-\nabla f_{l,2}|\dx\\
			\ge \frac{c  \sqrt{\la_l(\mathbbm{1})}\nc}{m^{d+2}}-\frac{1}{m^2}\int_{\mathcal{N}_i} |\mathcal{R}(x)|\dx-\frac{c_1\sqrt{\lambda_l(\mathbbm{1})}}{m^2}\int_{\mathcal{N}_i} |f_{l,1}(x)|\dx \,.
			\end{multline}
			Given that~$\vc^1$ and~$\vc^2$ are sufficiently different, by Lemma~\ref{lem:region B} the number of~$i \in I$ for which $\vc^1_i\not=\vc^2_i$ is at least $cm^d$. Therefore, summing over such~$i$, and recalling~\eqref{eq:u integration small}, \eqref{e.choiceofm}, and \eqref{e.choiceofm-Td}, we obtain
			\begin{align}\label{eq:norm of lambda_l,f_l,nabla f_l}
			\begin{split}
			\frac{\lambda_{l,2}}{m}  \nc\int_{\M}|f_{l,1}-f_{l,2}|\dx + \int_{\M} |\nabla f_{l,1}-\nabla f_{l,2}|\dx
			& \ge \frac{c \sqrt{\la_l(\mathbbm{1})}\nc}{m^2}- \frac{1}{m^2}\int_\M |\mathcal{R}|\dx 
			\\ &\quad - c_1 \frac{\sqrt{\lambda_l(\mathbbm{1})}}{m^2}\int_{\M}|f_{l,1}(x)|\dx
			\\ & 
			\ge \frac{c  \sqrt{\la_l(\mathbbm{1})}\nc}{ m^2} - 2 c_1 \frac{\sqrt{\la_l(\mathbbm{1})}}{m^2} \lVert f_{l,1} \rVert_{L^2(\rho_{\vc^1})} \,
			\\ & \geq \frac{c  \sqrt{\la_l(\mathbbm{1})}\nc}{ m^2},
			\end{split}
			\end{align}
			where the last line follows from the fact that $ \lVert f_{l,1} \rVert_{L^2(\rho_{\vc^1})} =1$ and by choosing the constant $c_1$ to be sufficiently small.

			%

			%

			Finally, since~$\vol_\M(\M)=1,$  Jensen's inequality implies
			\[   \frac{\lambda_{l}(\mathbbm{1})}{m}  \nc\sqrt{\int_\M|f_{l,1}-f_{l,2}|^2\dx}+\sqrt{\int_\M |\nabla f_{l,1}-\nabla f_{l,2}|^2  \dx}
			\ge \frac{c \sqrt{\la_l(\mathbbm{1})}\nc }{m^2}\,,\]
		\end{enumerate}
		where we have also used \eqref{e.lambdadot} to replace $\la_{l,2}$ with $\la_l(\mathbbm{1})$. This would complete the proof of the lemma, and thus it would remain to justify our claims in~\eqref{e.simplifiedeq1} and \eqref{eq:u integration small}.

		We start by reorganizing \eqref{eq:subtract two PDEs} as follows:
		\begin{equation*}
		\begin{split}
		&(\la_{l,1}-\la_{l,2})f_{l,1}
		+\la_{l,2}(f_{l,1}-f_{l,2})+\frac{1}{m^2} a_i\left(\lambda_{l,1} f_{l,1}+\lambda_{l,2} f_{l,2}\right)\\
		=& \Delta (f_{l,2}-f_{l,1})
		-\frac{2}{m^2} \div(a_i(\nabla f_{l,1}+\nabla f_{l,2}))
		+\frac{1}{m^4}\div (a_i^2(\nabla f_{l,2}-\nabla f_{l,1}))\\
		=&\Delta (f_{l,2}-f_{l,1})
		-\frac{2}{m^2} \nabla a_i \cdot (\nabla f_{l,1}+\nabla f_{l,2})
		-\frac{2}{m^2} a_i(\Delta f_{l,1}+\Delta f_{l,2})\\
		&+\frac{2}{m^4} a_i\nabla a_i \cdot (\nabla f_{l,2}-\nabla f_{l,1})
		+\frac{1}{m^4} a_i^2(\Delta f_{l,2}
		-\Delta f_{l,1})\\
		=&\Delta (f_{l,2}-f_{l,1})
		-\frac{2}{m^2} \nabla a_i \cdot (\nabla f_{l,1}-\nabla f_l(\mathbbm{1})+\nabla f_{l,2}-\nabla f_l(\mathbbm{1}))-\frac{4}{m^2} \nabla a_i \cdot \nabla f_l(\mathbbm{1})\\
		&-\frac{2}{m^2} a_i(\Delta f_{l,1}+\Delta f_{l,2})
		+\frac{2}{m^4} a_i\nabla a_i\cdot (\nabla f_{l,2}-\nabla f_{l,1})
		+\frac{1}{m^4} a_i^2(\Delta f_{l,2}
		-\Delta f_{l,1})\,,
		\end{split}
		\end{equation*}
		where in the last line we have added and subtracted the term $\nabla f_l(\mathbbm{1})$. By rearranging the above equation, we obtain
		\begin{equation}\label{eq:lower bound condition}
		\begin{split}
		&(\la_{l,1}-\la_{l,2})f_{l,1}
		+\la_{l,2}(f_{l,1}-f_{l,2})
		+\frac{1}{m^2} a_i\left(\lambda_{l,1} f_{l,1}+\lambda_{l,2} f_{l,2}\right)
		+\Delta (f_{l,1}-f_{l,2})\\
		&+\frac{2}{m^2} \nabla a_i \cdot (\nabla f_{l,1}-\nabla f_l(\mathbbm{1})+\nabla f_{l,2}-\nabla f_l(\mathbbm{1}))
		+\frac{2}{m^2} a_i(\Delta f_{l,1}+\Delta f_{l,2})\\
		&-\frac{2}{m^4} a_i\nabla a_i \cdot (\nabla f_{l,2}-\nabla f_{l,1})
		-\frac{1}{m^4} a_i^2(\Delta f_{l,2}-\Delta f_{l,1})=-\frac{4}{m^2} \nabla a_i \cdot \nabla f_l(\mathbbm{1})\,.
		\end{split}
		\end{equation}
		Let $\mathcal{R}$ be given by
		\begin{equation*}
		\begin{split}
		\frac{\mathcal{R}}{m}\defeq& \frac{2}{m} \nabla a_i \cdot (\nabla f_{l,1}-\nabla f_l(\mathbbm{1})+\nabla f_{l,2}-\nabla f_l(\mathbbm{1}))\\
		&+ \frac{1}{m^2} a_i\left(\lambda_{l,1} f_{l,1}+\lambda_{l,2} f_{l,2}\right)+\frac{2}{m^2} a_i(\Delta f_{l,1}+\Delta f_{l,2})\\
		&-\frac{2}{m^4} a_i\nabla a_i\cdot (\nabla f_{l,2}-\nabla f_{l,1})
		-\frac{1}{m^4} a_i^2(\Delta f_{l,2}-\Delta f_{l,1})\,.
		\end{split}
		\end{equation*}
		With this definition, \eqref{e.simplifiedeq1} holds, and now we must verify \eqref{eq:u integration small} by bounding each of the above terms. We observe 
		\begin{align*}
		\int_\M \left|\frac{1}{m^2} a_i\left(\lambda_{l,1} f_{l,1}+\lambda_{l,2} f_{l,2}\right)\right|\dx & \le \frac{C \lambda_{l}(\mathbbm{1})}{m^2}\int_\M |f_{l,1}|+|f_{l,2}|\dx
		\\ & \leq \frac{C \lambda_{l}(\mathbbm{1}) \nc}{m^2}(\|f_{l,1}\|_{L^2(\rho_{\vc^1})}+\|f_{l,2}\|_{L^2(\rho_{\vc^2})})= \frac{2C \lambda_{l}(\mathbbm{1})}{m^2}.
		\end{align*} 
		From \eqref{e.eigenvalue}, we have
		\begin{equation*}
		\la_{l,1}f_{l,1}\rho_{\vc^1} = -2\rho_{\vc^1} \nabla\rho_{\vc^1} \cdot\nabla f_{l,1} -\rho_{\vc^1}^2\Delta f_{l,1}.
		\end{equation*}
		Integrating the above equality and using Cauchy-Schwarz inequality, we obtain
		\begin{align*}
		\int_\M |\Delta f_{l,1}|\dx  & \leq C\la_{l,1} \int_\M |f_{l,1}|\dx +C \int_\M |\nabla f_{l,1}|\dx \\ &
		\leq  C\la_{l,1}\|f_{l,1}\|_{L^2(\rho_{\vc^1})} + C\|\nabla f_{l,1}\|_{L^2(\rho_{\vc^1})}\leq C\la_{l,1}.
		\end{align*}
		A similar inequality holds for $\int_\M |\Delta f_{l,2}|\dx .$
		This yields
		\begin{equation*}
		\int_\M \left|\frac{2}{m^2} a_i(\Delta f_{l,1}+\Delta f_{l,2})\right|\dx\leq  \frac{C }{m^2}\int_\M \left|\Delta f_{l,1}\right|+\left|\Delta f_{l,2}\right|\dx \leq  \frac{C (\la_{l,1}+\la_{l,2} )}{m^2} \leq \frac{C \la_l(\mathbbm{1})}{m^2},
		\end{equation*}
		using \eqref{e.lambdadot} in the last step. Similarly, we have
		\begin{equation*}
		\int_\M \left|\frac{1}{m^4} a_i^2(\Delta f_{l,2}-\Delta f_{l,1})\right|\dx \le \frac{C \la_{l}(\mathbbm{1})}{m^4}\,.
		\end{equation*}
		\nc
		From Cauchy–Schwartz inequality and a similar reasoning as above, we have
		\begin{equation*}
		\int_\M \left|\frac{2}{m^4} a_i\nabla a_i \cdot (\nabla f_{l,2}-\nabla f_{l,1})\right|\dx \le \frac{C}{m^4}\|\nabla a_i\|_{L_\infty(\M)} \|\nabla f_{l,2}-\nabla f_{l,1}\|_{L_1(\M)}\le \frac{C \sqrt{\la_{l}(\mathbbm{1})}}{m^3}.
		\end{equation*}
		
		Finally, we notice that $\|\nabla f_{l,1}-\nabla f_l(\mathbbm{1})\|_{L^2(\M)}\le \frac{C\la_l(\mathbbm{1})}{m^2}$ and $\|\nabla f_{l,2}-\nabla f_l(\mathbbm{1}) \|_{L^2(\M)} \le \frac{C\la_l(\mathbbm{1})}{m^2}$ by using \eqref{eq:gradient of eigenfunction is upper bounded by the density perturbation}. Therefore, the term
		\begin{align*}
		\int_\M  \left| \frac{2}{m} \nabla a_i  \cdot (\nabla f_{l,1}-\nabla f_l(\mathbbm{1})+\nabla f_{l,2}-\nabla f_l(\mathbbm{1})) \right| \dx	
		\end{align*}
		is smaller than or equal to
		\begin{align*}
		C\frac{2}{m} \|\nabla a_i\|_{L^2(\M)} &\|\nabla f_{l,1}-\nabla f_l(\mathbbm{1})+\nabla f_{l,2}-\nabla f_l(\mathbbm{1})\|_{L^2(\M)}
		\\ & \le C\left(\|\nabla f_{l,1}-\nabla f_l(\mathbbm{1})\|_{L^2(\M)}
		+\|\nabla f_{l,2}-\nabla f_l(\mathbbm{1})\|_{L^2(\M)}\right)
		\\&\le \frac{C\la_l(\mathbbm{1})}{m^2}\,.
		\end{align*}
		Putting together all the above estimates, we verify \eqref{eq:u integration small} and with it conclude the proof. 
	\end{proof}
	With Lemma \ref{lem:eigenpair-theta-difference} in hand, we are now ready to prove the lower bound in Theorem \ref{thm:lower bound}.

	\begin{proof}[Proof of Theorem \ref{thm:lower bound}]
		The proof is based on Fano's method applied to a sufficiently large subfamily $\hat{\mathcal{F}}$ of the family~$\mathcal{F}$ constructed in~\eqref{e.family}. 
		
		
		
		\textbf{Step 1.}  The first step in the proof is to construct $\hat{\mathcal{F}}$. For that purpose, 		
		let~$\mathbb{H} \defeq \{+1,-1\},$ and consider the binary hypercube~$\mathbb{H}^{|I|},$ where the index set~$I$ was defined in Lemma~\ref{lem:region B}. We turn this set into a metric space by endowing it with a rescaled Hamming distance defined according to 
		\begin{equation*}
		d_H(\vc^\gamma, \vc^\beta)=\frac{1}{|I|} \sum_{i\in I} \mathbbm{1}_{\{\vc^\gamma_{i} \neq \vc^\beta_{i}\}} \,,    
		\end{equation*}
		where the indicator function~$\mathbbm{1}_{\{\vc^\gamma_{i} \neq \vc^\beta_{i}\}} $ is~$1$ if~$\vc^\gamma_i \neq \vc^\beta_i$ and zero otherwise. 
		We recall that the~$\frac14$-packing number of the set~$\mathbb{H}^{|I|}$, with respect to the above metric, denoted by~$M_H(\frac14; \mathbb{H}^{|I|}),$  is defined as the maximal cardinality of a subset of $ \mathbb{H}^{|I|}$ such that for any two of its (distinct) elements $\vc^\gamma, \vc^\beta$ we have $d_H(\vc^\alpha,\vc^\beta)\ge \frac{1}{4}$. It is proved in~\cite[Example 5.3]{wainwright2019high} that this packing number can be bounded from below as
		\begin{equation} \label{e.sizeofM}
		\log M_H\left(\frac{1}{4} ; \mathbb{H}^{|I|}\right) \geq |I| \cdot \KL\left(\frac{1}{4} \| \frac{1}{2}\right) \geq \frac{|I|}{10}\ge cm^d\,,
		\end{equation}
		since, by Lemma~\ref{lem:region B},~$|I| \geqslant c m^d$ for a $c > 0$ depending only on~$d$; in the above, $\KL\left(\frac{1}{4} \| \frac{1}{2}\right)$ denotes the $\KL$ divergence between the Bernoulli distributions with parameters $1/4$ and $1/2$.  
		
		Let~$\mathbf{T} \subseteq\mathbb{H}^{|I|}$ be a maximal $\frac{1}{4}$-packing of~$\mathbb{H}^{|I|}$, which, by~\eqref{e.sizeofM}, is such that $M:= |\mathbf{T}|$ is at least $e^{cm^d}.$ Our desired sub-family of~$\mathcal{F}$ consisting of densities that are sufficiently different is defined in terms of $\mathbf{T}$ as follows 
		\begin{equation*}
		\hat{\mathcal{F}} \defeq	\left\{\rho_{\vc} \text{ s.t. } \vc \in \mathbf{T}\right\} \subseteq \mathcal{F}\,. 
		\end{equation*}
		At this stage, it only remains to complete the following two steps:
		\begin{enumerate}
			\item Compute an upper bound for the KL divergence between distributions in~$\hat{\mathcal{F}}$.
			\item Obtain a lower bound for~$d(\theta(\rho_{\vc^\gamma}),\theta(\rho_{\vc^\beta}))$ for any two~$\gamma \neq \beta$ with~$\rho_{\vc^\gamma},\rho_{\vc^\beta} \in \hat{\mathcal{F}}.$ 
		\end{enumerate}
		\textbf{Step 2: Upper bound for KL divergence of densities in~$\hat{\mathcal{F}}$. }
		For any $\vc^\beta\not= \vc^\gamma$, since~$m$ is large,  the densities $\rho_{\vc^\beta}$ and $\rho_{\vc^\gamma}$ are bounded from below by $\frac{1}{2}.$ It follows that
		\begin{equation*}\label{eq:Hellinger distance control}
		\KL( \rho_{\vc^\beta}\|\rho_{\vc^\gamma})
		= \int_{\M} \!\!\rho_{\vc^\beta} \log \frac{\rho_{\vc^\beta}}{\rho_{\vc^\gamma}} \dx
		\stackrel{\text{Lemma \ref{lem:KL divergence upper bound}}}{\le} \int_\M\frac{\left(\rho_{\vc^\beta}{-}\rho_{\vc^\gamma}\right)^2}{\rho_{\vc^\gamma}} \dx
		\le  2\int_{\M}\left(\rho_{\vc^\beta}{-}\rho_{\vc^\gamma}\right)^2 \dx \,.
		\end{equation*}
		We notice that
		\begin{equation*}\label{eq:hellinger distance pairwise 1}
		\int_{\M}\left(\rho_{\vc^\beta}-\rho_{\vc^\gamma}\right)^2\dx  \leq \frac{4}{m^4}\int_{\M} \phi^2 \dx \leqslant \frac{C}{m^4} \,,
		\end{equation*}
		thanks to the definition of $\phi$. The above two inequalities imply the upper bound
		\[ \KL( \rho_{\vc^\beta}\|\rho_{\vc^\gamma}) \leq \frac{C}{m^4}.\]
		\textbf{Step 3. Lower bound for~$d(\theta(\rho_{\vc^\gamma}),\theta(\rho_{\vc^\beta}))$ for~$\gamma \neq \beta$ with~$\rho_{\vc^\gamma},\rho_{\vc^\beta} \in \hat{\mathcal{F}}$.}
		Indeed, for any $\gamma,\beta\in\mathbf{T}$ such that $\gamma\not=\beta$, we have $ d_H(\vc^\gamma,\vc^\beta)\ge \frac{1}{4}$, and so $\rho_{\vc^\gamma}$ and $\rho_{\vc^\beta}$ are sufficiently different according to Definition \ref{d.sd}. Lemma \ref{lem:eigenpair-theta-difference} and \eqref{e.choiceofm} (with $C_l$ as in \eqref{e.choiceofm-Td}) imply
		\begin{equation} \label{e.MH}
		d(\theta(\rho_{\vc^\gamma}),\theta(\rho_{\vc^\beta}))\ge \frac{c }{m^2}\,. 
		\end{equation}
		\textbf{Step 4. Conclusion.} The lower bound in Proposition~\ref{prop:fano} requires first an upper bound on the mutual information between~$Z$ and~$J$; we recall that $\Information$ was introduced in section \ref{sec:minimax lower bound introduction}, $J$ is a uniform distribution over $[M]$, and $Z$ is the equal-weights mixture distribution with components in $\hat{\F}$. Using \eqref{eq:Information is bounded by KL divergence}, \eqref{eq:KL product measure}, and Step 2, we obtain
		\begin{equation}\label{eq:upper bound of Information}
		\Information(Z;J) \leq \frac{1}{M^2} \sum_{\beta=1}^M \sum_{\gamma=1}^M \KL\left(\rho^n_{\vc^{\beta}} \| \rho^n_{\vc^{\gamma}}\right)\leq C\frac{n}{m^4}\,. 
		\end{equation}

		Inserting~\eqref{e.MH} and~~\eqref{eq:upper bound of Information} in~\eqref{e.fano}, and choosing $\delta \defeq  \frac{c}{m^2}$, we deduce
		\begin{equation*}\label{eq:R lower bound in proof}
		\mathfrak{M}_n( (\lambda_l , f_l) ;   d) 
		\geq \delta\left\{1-\frac{\Information(Z;J)+\log 2}{\log M}\right\}
		\geq \delta\left\{1-\frac{\frac{n}{m^4}+\log 2}{Cm^d}\right\} \geq \frac{c}{m^2}{}\left\{1-\frac{\frac{n}{m^4}+\log 2}{Cm^d}\right\}  \,.
		\end{equation*}
		Choosing
		\begin{equation*}
		 m= \lfloor Cn^{\frac{1}{d+4}} \rfloor
		\end{equation*}
		in the above inequality, we obtain the lower bound $c n^{-\frac{2}{d+4}}$ for \eqref{eq:minimaxFixedM} (with $\M = \mathbb{T}^d$) for all $n$ with $n^{1/(d+4)} \geq C \sqrt{\lambda(\mathbbm{1})}$, and from this the desired lower bound \eqref{eq:minimaxTheorem}.  
	\end{proof}
	
	
	\begin{remark}
		Our approach for obtaining the lower bound in Theorem \ref{thm:lower bound} can be easily adapted to deduce lower bounds for closely related eigenpair estimation problems for different families of weighted Laplace-Beltrami operators. Some examples of interest include the operators discussed in \cite{HOFFMANN2022189}, which can be thought of as scaling limits of different normalizations of graph Laplacians that are used in data clustering and other machine learning applications.     
	\end{remark}
	
	\begin{remark}\label{remark:decoupled distance}
		While here we have focused on deriving lower bounds for the minimax risk for our estimation problem relative to the distance function \eqref{e.metric}, it would be of interest to obtain analogous lower bounds for the metric \eqref{e.metric.L2} (which does not take into account the error of approximation of eigenfunction gradients) or for a metric that only measures the error of approximation of eigenvalues. We believe that the $n$ dependence of the minimax risk for estimating eigenvalues alone may be strictly smaller than the lower bound obtained in this paper, but proving or disproving this claim is left as an open problem that would be worth exploring in the future.

	\end{remark}

	\begin{comment}
	
	\begin{theorem}\label{thm:arbitrary manifold}
	For any~$l \in \mathbb{N}$ and $\M$ is an arbitrary manifold in $\MM$ and $\rho$ is a density supported on $\M$, there exists a constant~$C,N_0>0$ depending on~$d,$ and the parameters describing~$\mathcal{P}$ and $\MM$, such that for all~$n \geqslant N_0,$ we have 
	\begin{equation}\label{eq:minimax lower bound M fixed}
	\inf_{\hat{f}_l,\hat{\lambda}_l}\sup_{\rho\in\mathcal{P}}\E_{\X_n\sim \rho} \sqrt{\int_\M (f_l-\hat{f}_l)^2\dx} +\sqrt{\int_\M |\nabla f_l-\nabla \hat{f}_l|^2\dx}+|\hat{\lambda}_l-\lambda_l| \ge C\la_l(\rho) n^{-\frac{2}{d+4}}\,.
	\end{equation}
	\end{theorem}
	\begin{proof}
	$b_i$ in the proof of Theorem \ref{thm:lower bound} should be selected as a maximal $\frac{1}{m}$-separated set $y_i$, and $\M_i$ should be selected as the corresponding Voronoi set of $y_i$. See the properties of these concepts in \cite{faraco2024homogenization}.
	\end{proof}
	\end{comment}
	
	\begin{remark}
		\label{rem:OnExtendingTorus}
		In the proof of Theorem \ref{thm:lower bound} we obtained lower bounds for the minimax risk \eqref{eq:minimaxFixedM} for $\M$ the $d$-dimensional torus $\mathbb{T}^d$. We focused on the flat torus case to simplify our analysis, but we remark that it is possible to directly analyze \eqref{eq:minimaxFixedM} for more general $\M$ if we make some additional assumptions and adjust some of our constructions. For example, we would need to guarantee that $\M$ is such that the set where $\nabla f_l(\mathbbm{1})$ vanishes is sufficiently regular; by $\mathbbm{1}$ here we mean the uniform measure over $\M$. We would also need to introduce some cumbersome (but completely analogous) differential geometric constructions to be able to generalize the definitions of the template functions $a_i$ and to carry out some computations in a curved manifold setting. Some of the dependence of the lower bounds on $l$ could change due to Remark \ref{rem:RegGeneralManifold}. With these straightforward modifications in mind, our analysis should continue to imply that, for a generic manifold $\M \in \MM$, the minimax risk  \eqref{eq:minimaxFixedM} is lower bounded, up to a constant, by $n^{-\frac{2}{d+4}}$. 
		
	\end{remark}

	\nc
	\section{An Upper Bound Through Graph Laplacians}
	\label{s.upperbound} 
	In this section, we analyze the graph Laplacian based estimator for $(\lambda_l(\rho), f_l(\rho))$ that we discussed in the introduction. In particular, we prove Theorems \ref{t.upperbound} and \ref{thm:Data dependent construction}. The technical core of this section derives estimates for the graph Poisson equation and the relationship its solutions bear to solutions of a related Poisson equation for $\Delta_\rho$. The desired rates of convergence for the eigenvalues and eigenfunctions will be a consequence of these estimates. This section is organized as follows. In subsection~\ref{sec:introduction:Graph Laplacian}, we record some preliminary estimates that are used in the rest of the paper. In subsection~\ref{ss.graphpoincare}, we present some functional inequalities that are at the heart of the rest of this section. In subsection~\ref{sec:ConcentrationBounds}, we present our main concentration bounds. Then, in subsection~\ref{ss.poisson}, we present detailed estimates on solutions to the graph Poisson equation with a smooth right hand side. In subsection~\ref{ss.final}, we present the proof of Theorem~\ref{t.upperbound} and we finish with the proof of Theorem \ref{thm:Data dependent construction} in subsection \ref{sec:proof of upperboundcontinuum}.
	
	\textbf{In what follows, we assume that $\M \in \MM$ (for $\alpha >0$) and $\rho \in \mathcal{P}_{\M}^{2,\alpha}$ have been fixed, according to Assumption \ref{assump:MoreRegularity}.}

	\subsection{A Priori Bounds}\label{sec:introduction:Graph Laplacian}
	We begin by recording some results in the literature on approximation of eigenvalues and eigenfunctions of weighted Laplace-Beltrami operators using graph Laplacians. We use these a priori bounds in the sequel.

	\begin{proposition}\label{prop:f_n,f_0 angle}
		With probability at least $1-C n\exp(-cn\e_n^d)$, we have
		\begin{equation*}
		\| f_l - \phi_{n,l}\|_{\underline{L}^2(\X_n)}\le \frac{1}{2}.
		\end{equation*}
	\end{proposition}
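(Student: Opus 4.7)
The plan is to reduce this qualitative estimate to an already-established spectral convergence result from the literature on graph Laplacians, exploiting the fact that we only require the error to be at most the fixed constant $\frac12$ rather than any explicit rate in $n$. Thus, although later results in this paper will establish substantially sharper bounds, for this a priori step we do not need to reprove anything from scratch.

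First, I would invoke one of the existing $\underline{L}^2(\X_n)$-type spectral convergence theorems for graph Laplacians, for example the estimates proved in~\cite{trillos2019error} or \cite{calder2019improved}. Under Assumptions~\ref{assump.Eta}--\ref{assump:Connectivity} and the regularity hypotheses on $(\M,\rho)$, those works yield an event $\mathcal{A}_n$ of probability at least $1 - Cn\exp(-cn\e_n^d)$ on which
\[
\|\phi_{n,l} - f_l\|_{\underline{L}^2(\X_n)} \le \tau_n,
\]
for some sequence $\tau_n \to 0$ as $n\to \infty$. The precise rate $\tau_n$ is immaterial here; what matters is that it vanishes and that the probability is expressed in the form $1-Cn\exp(-cn\e_n^d)$. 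The appearance of $n\e_n^d$ in the exponent is standard above the connectivity threshold: it corresponds both to the connectivity of the proximity graph and to the empirical concentration estimates used in those references at scale $\e_n$.

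Second, I would use that $\e_n \to 0$ and $\tau_n \to 0$ to pick a threshold $N_0$, depending only on the parameters defining $\MM,\mathcal{P}_\M$ and on $l$, such that $\tau_n \le \frac12$ for all $n\ge N_0$. This immediately gives the proposition for large $n$. For the finitely many remaining values $n < N_0$, since both $\e_n$ and $n$ are bounded, the prefactor $Cn\exp(-cn\e_n^d)$ can be made larger than $1$ by suitably enlarging the constant $C$, rendering the high-probability statement vacuous; thus the conclusion holds trivially in that regime as well.

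The only mild obstacle is verifying that the probabilistic form $1-Cn\exp(-cn\e_n^d)$ truly appears in the referenced convergence estimates. This is essentially a bookkeeping step: the two ingredients are graph connectivity (which requires exactly $n\e_n^d \gg \log n$, matching the left-hand side of~\eqref{eq:assumption:eps small}) and Bernstein-type concentration of the empirical measure against $\rho$ at scale $\e_n$; both furnish events of the stated form. No new technical ideas are needed at this stage, and the sharp second-order rates will be built up later in Sections~\ref{ss.poisson} and~\ref{ss.final}.
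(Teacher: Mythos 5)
Your proposal is correct and takes the same route as the paper, which simply cites the spectral convergence estimates of~\cite{trillos2019error} and~\cite[Theorem 2.6]{calder2019improved} and leaves the reduction implicit. Your filling-in of the threshold argument (pick $N_0$ so $\tau_n\le\frac12$, absorb $n<N_0$ by enlarging $C$) is exactly the standard bookkeeping the paper is invoking.
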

	\begin{proof}
		This is adapted from the main results in \cite{trillos2019error}. See also \cite[Theorem 2.6]{calder2019improved}.
	\end{proof}
	
	\begin{proposition}\label{prop:la_n<2la_0}
		With probability at least $1-Cn\exp(-cn\e_n^d)$, we have
\begin{equation*}
		\frac12\lambda_{l} \leqslant \lambda_{n,l} \leqslant 2\lambda_{l}
		\end{equation*}
        
	\end{proposition}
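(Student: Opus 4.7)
The plan is to use the Courant--Fischer variational characterization of eigenvalues in both the continuum and graph settings, combined with an approximate two-way correspondence between sufficiently smooth functions on $\M$ and elements of $\underline{L}^2(\X_n)$. The continuum characterization is \eqref{eq:minimax principle for laplacian}, and one has the analogous graph-level principle
\[ \lambda_{n,l} \;=\; \min_{\substack{S \subseteq \underline{L}^2(\X_n) \\ \dim S = l}} \; \max_{u \in S \setminus \{0\}} \frac{\langle u, \mathcal{L}_{\e_n,n} u \rangle_{\underline{L}^2(\X_n)}}{\|u\|_{\underline{L}^2(\X_n)}^2}. \]

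For the upper bound $\lambda_{n,l} \leq 2\lambda_l$, I would use as trial $l$-dimensional subspace the span of the restrictions $\{f_k\lfloor_{\X_n}\}_{k=1}^l$ of the first $l$ continuum eigenfunctions. The two ingredients needed are: (i) that, for each smooth $f_k$, the discrete Dirichlet form $\langle f_k\lfloor_{\X_n}, \mathcal{L}_{\e_n,n} f_k\lfloor_{\X_n}\rangle_{\underline{L}^2(\X_n)}$ is at most $(1+o(1)) D_\rho(f_k)$, and (ii) that the discrete norms $\|f_k\lfloor_{\X_n}\|_{\underline{L}^2(\X_n)}^2$ are at least $(1-o(1))\|f_k\|_{L^2(\rho)}^2$. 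Both follow from pointwise consistency and Bernstein-type concentration arguments applied to the finitely many $f_k$ (and their pairwise products), and a union bound gives the claimed probability $1 - Cn\exp(-cn\e_n^d)$; these are precisely the estimates already in use in \cite{trillos2019error, calder2019improved}.

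For the matching lower bound $\lambda_l \leq 2\lambda_{n,l}$, the plan is to use an extension procedure---for example the smoothing operator $\Lambda_r$ of \eqref{eq:ExtensionLambda} with $r=\e_n$, or the transport-based interpolation of \cite{trillos2019error}---to turn the first $l$ orthonormal graph eigenvectors $\phi_{n,1},\ldots,\phi_{n,l}$ into $l$ linearly independent continuum functions, and then plug their span into \eqref{eq:minimax principle for laplacian}. The key estimates, established in those references with the probability stated here, are that the continuum $L^2(\rho)$-norm of the extension matches the discrete $\underline{L}^2(\X_n)$-norm of the eigenvector up to a $(1\pm o(1))$ factor, and that the continuum Dirichlet energy of the extension is controlled from above by $(1+o(1))$ times its discrete analogue. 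The principal technical hurdle---constructing an extension map that faithfully controls continuum energy from above by the discrete one---is precisely what the transport-map / smoothing machinery of \cite{trillos2019error} was designed for, so I would simply invoke those bounds. In summary, Proposition~\ref{prop:la_n<2la_0} is an immediate consequence of the qualitative spectral convergence theorems in \cite{trillos2019error,calder2019improved} (the factor of two being very loose), together with Assumption \ref{assump:Connectivity} to ensure connectivity and the stated exponential probability.
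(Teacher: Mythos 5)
Your proposal is correct and takes essentially the same route as the paper, whose proof of this proposition is simply a one-line citation to~\cite{trillos2019error}; your sketch of the two-sided Courant--Fischer argument with restriction/extension maps is a faithful account of what that reference (and~\cite{calder2019improved}) actually does.
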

	\begin{proof}
		This is also adapted from \cite{trillos2019error}.  
	\end{proof}

	

\begin{remark}
In the statement of Proposition \ref{prop:f_n,f_0 angle}, as well as in the statement of the main theorems in this paper, we have implicitly avoided the sign ambiguity of eigenfunctions by declaring $f_l$ to be the eigenfunction that is best aligned with a chosen $\phi_{n,l}$. Throughout the discussion in this section, we assume this sign convention. 
\end{remark}
    

	\subsection{Functional Inequalities on Random Geometric Graphs Above Connectivity Threshold} \label{ss.graphpoincare}
	
	Our first goal will be to derive various functional inequalities that quantify the assertion that when we are above the connectivity threshold (i.e., when the lower bound in~\eqref{eq:assumption:eps small} holds), then, on large scales, the discrete environment ``appears Euclidean". We can think of the estimates derived in this section as preparatory for the proof of Theorem \ref{t.upperbound}.

	We recall that for any~$g: \X_n \to \R$ its discrete $H^1$-semi-norm at length scale $\e_n$ is given by
	\begin{equation}\label{e.def-H_1.inside}
	\|g\|^2_{\underline{H}^1(\X_n)}  =  \frac1{\e_n^{d+2} }\avsum_{x,y\in \X_n} \eta\left(\frac{|x-y|}{\e_n}\right) (g(x) - g(y))^2 =  \sigma_\eta \langle \mathcal{L}_{\e_n, n} g , g\rangle_{\underline{L}^2(\X_n)}\,,
	\end{equation}
	and we also recall that the dual discrete~$\underline{H}^{-1}(\X_n)$ semi-norm is defined according to
	\begin{equation*}
	\|h\|_{\underline{H}^{-1}(\X_n)} := \sup_{g:\X_n\to\R \text{ s.t. } \avsum_{x \in \X_n} g(x) =0, \,\, \|g\|_{\underline{H}^1(\X_n)\leq 1}}\Biggl\{ \avsum_{x \in \X_n} h(x) g(x) \Biggr\}\,,
	\end{equation*}
	for $h: \X_n \to \R$.

	
	A basic inequality that follows from Proposition \ref{prop:la_n<2la_0} and that connects the $\underline{H}^1(\X_n)$ semi-norm with the $\underline{L}^2(\X_n)$ norm is the so called discrete (global) Poincar\'{e} inequality, which we state precisely in the next lemma.
	\begin{lemma}[Discrete Poincaré inequality]\label{l.poincaregraph}
		With probability at least $1-Cn\exp(-cn\e_n^d)$, for every $g:\X_n\to \R$ such that $\avsum_{\X_n} g=0$ we have
		\begin{equation*}
		\|g\|_{\underline{L}^2(\X_n)}^2\leq C \|g\|_{\underline{H}^1(\X_n)}^2.
		\end{equation*}
	\end{lemma}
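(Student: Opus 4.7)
The plan is to derive this discrete Poincaré inequality from the spectral decomposition of the graph Laplacian~$\mathcal{L}_{\e_n,n}$ combined with Proposition~\ref{prop:la_n<2la_0} applied at the second eigenvalue.

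First, I would recall the variational identity~\eqref{e.def-H_1.inside}, namely
\begin{equation*}
\|g\|^2_{\underline{H}^1(\X_n)} = \sigma_\eta \langle \mathcal{L}_{\e_n,n} g, g\rangle_{\underline{L}^2(\X_n)},
\end{equation*}
and expand a general $g: \X_n \to \R$ in the orthonormal (w.r.t.\ $\langle \cdot, \cdot \rangle_{\underline{L}^2(\X_n)}$) eigenbasis $\{\phi_{n,k}\}_{k=1}^n$ of $\mathcal{L}_{\e_n,n}$, writing $g = \sum_{k=1}^n c_k \phi_{n,k}$. Parseval's identity gives $\|g\|_{\underline{L}^2(\X_n)}^2 = \sum_k c_k^2$, while $\langle \mathcal{L}_{\e_n,n}g,g\rangle_{\underline{L}^2(\X_n)} = \sum_k \lambda_{n,k} c_k^2$.

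Next, I would use the constraint $\avsum_{x\in \X_n} g(x) = 0$ to eliminate the zero mode. Since the weighted proximity graph is connected with probability at least $1 - C\e_n^{-d}\exp(-cn\e_n^d)$ under Assumption~\ref{assump:Connectivity}, on this event the first eigenvector $\phi_{n,1}$ is proportional to the constant function, so the mean-zero condition forces $c_1 = 0$. Consequently,
\begin{equation*}
\langle \mathcal{L}_{\e_n,n} g, g\rangle_{\underline{L}^2(\X_n)} = \sum_{k=2}^n \lambda_{n,k} c_k^2 \ge \lambda_{n,2} \sum_{k=2}^n c_k^2 = \lambda_{n,2} \|g\|^2_{\underline{L}^2(\X_n)}.
\end{equation*}

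Finally, I would apply Proposition~\ref{prop:la_n<2la_0} with $l = 2$: on an event of probability at least $1 - Cn\exp(-cn\e_n^d)$, we have $\lambda_{n,2} \ge \tfrac{1}{2}\lambda_2$. Combining with the previous display and using $\lambda_2 > 0$ (which holds since $\rho \in \mathcal{P}_\M$ and $\M$ is connected), we conclude
\begin{equation*}
\|g\|_{\underline{L}^2(\X_n)}^2 \le \frac{1}{\sigma_\eta \lambda_{n,2}} \|g\|^2_{\underline{H}^1(\X_n)} \le \frac{2}{\sigma_\eta \lambda_2}\|g\|^2_{\underline{H}^1(\X_n)},
\end{equation*}
which gives the claim with $C = 2/(\sigma_\eta \lambda_2)$. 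No step here is really an obstacle: the entire content of the proof is packaged inside the a priori bound on $\lambda_{n,2}$ from Proposition~\ref{prop:la_n<2la_0}, which in turn rests on the graph connectivity estimate in Assumption~\ref{assump:Connectivity}. The only thing to verify carefully is that the probabilistic event on which Proposition~\ref{prop:la_n<2la_0} holds is contained in the one asserted in the lemma, which is immediate from the form of the failure probability.
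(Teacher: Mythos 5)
Your proof is correct and follows essentially the same route as the paper's: pass to the Rayleigh quotient $\langle \mathcal{L}_{\e_n,n}g,g\rangle_{\underline{L}^2(\X_n)} \geq \lambda_{n,2}\|g\|^2_{\underline{L}^2(\X_n)}$ for mean-zero $g$, identify this with $\sigma_\eta^{-1}\|g\|^2_{\underline{H}^1(\X_n)}$ via~\eqref{e.def-H_1.inside}, and invoke Proposition~\ref{prop:la_n<2la_0} to get a deterministic lower bound on $\lambda_{n,2}$ with the stated probability. The paper's one-line proof leaves the elimination of the constant mode implicit; your version correctly flags that this requires the connectivity of the graph guaranteed under Assumption~\ref{assump:Connectivity}.
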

	\begin{proof}
		By letting $\lambda_{n,2}$ be the second eigenvalue of $\Delta_n$, we have
		\begin{equation*}
		\|g\|_{\underline{L}^2(\X_n)}^2\leq \frac{\sigma_\eta}{\la_{n,2}} \|g\|_{\underline{H}^1(\X_n)}^2.
		\end{equation*}
		Using Proposition \ref{prop:la_n<2la_0}, we may find a deterministic positive lower bound for $\la_{n,2}$ with probability at least $1-Cn\exp(-cn\e_n^d)$. The desired inequality follows.
	\end{proof}

	A series of refinements of the discrete global Poincaré inequality are possible. Of particular relevance is the so-called \emph{multi-scale Poincar\'e inequality} (Proposition \ref{l.msp} below), which is at the core of our proof of Theorem \ref{t.upperbound}.  Thanks to this inequality, we will be able to find probabilistic bounds for the~$\underline{H}^{-1}(\X_n)$ semi-norm of a given (random) function of interest~$h:\X_n \to \RR$. While, \emph{a priori}, this semi-norm is defined as the supremum of inner products of $h$ with infinitely many test functions~$g,$ it is technically convenient to reduce its estimation to the computation of inner products against a suitable finite collection of test functions that capture the geometry of the random graph at all scales above $\e_n$. In order to state this result, we first need to introduce some notation and prove some auxiliary lemmas. 
	
	\medskip
	
	Let~$\nu>0$ be defined as
	\begin{equation}\label{eq:assumption:alpha small}
	4\nu := \min\{1,i_0,K^{-\frac{1}{2}},R/2\},
	\end{equation}
	where we recall $K$, $i_0$, and $R$ are bounds on geometric quantities of $\M$; see Definition \ref{def:ManifoldClass}. For this $\nu$, we let~$\{z_j\}_{j=1}^{N_\nu}$, with $N_\nu \in \mathbb{N}$, denote a maximal~$\nu-$separated net on~$\M $, and let~$\{V_j\}_{j=1}^{N_\nu}$ be the balls of radius $2\nu$ with centers~~$\{z_j\}_{j=1}^{N_\nu}$. By the choice of $\nu$, we know that, for each $j$, the \textit{logarithmic map} $\log_{z_j}: V_j \rightarrow T_{z_j}\mathcal{M}  \cong \R^d $ is a diffeomorphism onto its image. We use $\exp_{z_j}$ to denote the \textit{exponential map}, i.e., the inverse of $\log_{z_j}$, and define
	\begin{equation*}
	U_j  \defeq  \log_{z_j} \Bigl( V_j \Bigr) \subseteq \Rd\,.
	\end{equation*}
	We also consider~$\{\psi_j\}_{j=1}^{N_\nu} \subseteq C^\infty (\M)$ a smooth partition of unity subordinated to the covering~$\{V_j\}_{j=1}^{N_\nu}.$ In Appendix \ref{App:GeoBack} we review the notions of exponential and logarithmic maps.

	For a fixed length scale $\e_n$, we let~$m\in\N$ be such that 
	\begin{equation}
	3^m = \lceil  C \e_n^{-1} \rceil \,,
    \label{eqn:Choicem}
	\end{equation}
     for a constant $C$ chosen later,
	and for~$p \in \{1,\dots,m\}$ we let $\cu_p $ be the cube 
	\begin{equation}\label{eq-def:U}
	\cu_p  \defeq   \biggl[ -\frac{3^p}{2}, \frac{3^p}{2}\biggr)^d\,. 
	\end{equation} 
	Notice that, for any~$p \in \{1,\dots, m\},$ the family of cubes 
	\[ \upsilon + 3^{-m}\cu_p,  \quad \upsilon \in 3^{p-m}\mathbb{Z}^d \] 
	forms a partition for $\R^d$. Therefore, the sets~$U_j$ are contained in some finite union of these cubes. Whenever no confusion arises from doing so, we abbreviate $3^{-m}\cu_1$ (the cubes with side length of order $\e_n$) by $\cu$. For $j =1, \dots, N_\nu$, $1\le p\le m$, and $\upsilon \in 3^{p-m}\mathbb{Z}^d $, we let $  \cu_{p,j,\upsilon}^m$ be the set 
	\begin{equation*}
	\cu_{p,j,\upsilon}^m := \exp_{z_j}(\upsilon+ {3^{-m}}\cu_p),
	\end{equation*}
	for $\upsilon$ such that $\cu_{p,j,\upsilon}^m \cap V_j \not = \emptyset$, which, whenever there is no confusion on the root point $z_j$ and the center point $\upsilon$ being used, will be abbreviated by $\cu_{p}^m$. Note that, for $p=1$, the cube $3^{-m} \cu_p$ has side length of order $\e_n$, while for $p=m$ the cube $3^{-m} \cu_p$ has side length of order one. In particular, the partition of $\R^d$ into cubes when $p=1$ is the finest among all partitions, while the partition for $p=m$ is the coarsest.

	In what follows we present a localized version of the discrete Poincar\'e inequality. Our proof follows a similar structure as in \cite[Lemma 2.3]{armstrong2023optimal} but we use the fact that in our setting we have $\e_n\gg n^{-\frac{1}{d}}(\log n)^{\frac{1}{d}}$, which makes our graph well connected at all scales up to the length scale $\e_n$.

	\begin{lemma}[Discrete local Poincar\'{e}  inequality]
		\label{lem:local poincare inequality}
		For a fixed $j=1, \dots, N_\nu$, with probability at least $1-\e_n^{-d}\exp(-cn\e_n^d)$, there exists a constant $C>1$ such that
		$$
		n \e_n^d \sum_{x \in \cu^m_p \cap \X_n}\left|u(x)-(u)_{\cu^m_p\cap \X_n}\right|^2 \leqslant C 3^{2p}\sum_{y,y^{\prime} \in \cu^m_p \cap \X_n, \: \mathrm{ and } \: y \sim y'}\left|u(y)-u\left(y^{\prime}\right)\right|^2
		$$
		for every function $u:\cu^m_{p} \rightarrow \mathbb{R}$ (where $1\le p\le m$). In the above,
		\[(u)_{\cu^m_p\cap \X_n}  := \frac{1}{| \cu^m_p \cap \X_n| }  \sum_{x \in \cu^m_p \cap \X_n}  u(x),\]
		and we use $y\sim y'$ if $|y-y'|\leq \e_n/2$.

	\end{lemma}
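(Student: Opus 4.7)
The plan is to prove the inequality by induction on the scale parameter $p$, after transferring the problem to Euclidean coordinates via the exponential map $\exp_{z_j}: U_j \to V_j$. Because $\nu \le i_0/4$, this map is a smooth diffeomorphism whose Jacobian, and whose distortion of the ambient distance $|\cdot|$ versus the Euclidean distance in $U_j \subset \R^d$, are uniformly controlled by constants depending only on the geometric parameters of $\M$ from Definition \ref{def:ManifoldClass}. Consequently, $\cu_p^m$ may be regarded as a Euclidean cube of side length of order $3^p\e_n$, and the neighbor relation $y \sim y'$ coincides, up to constants absorbed into the final $C$, with a Euclidean $\e_n/2$-ball relation in these coordinates; Riemannian volumes and the density $\rho$ also transfer with bounded distortion.

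The probabilistic content of the statement is carried by a single good event. For each of the at most $O(\e_n^{-d})$ cubes $\cu_{p',j,\upsilon}^m$ in the multi-scale family with $1 \le p' \le m$ and $\upsilon \in 3^{p'-m}\Z^d$ meeting $V_j$, a Chernoff bound applied to the binomial $|\cu_{p',j,\upsilon}^m \cap \X_n|$ yields $cn\vol(\cu_{p',j,\upsilon}^m) \le |\cu_{p',j,\upsilon}^m \cap \X_n| \le Cn\vol(\cu_{p',j,\upsilon}^m)$ with probability at least $1-\exp(-cn\e_n^d)$, using that $\vol(\cu_{p',j,\upsilon}^m) \ge c\e_n^d$ for the smallest scale. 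A union bound produces the claimed event, on which the remainder of the argument is deterministic. The base case $p=1$ is then essentially trivial: by choosing the constant in \eqref{eqn:Choicem} at least $6\sqrt d$, the Euclidean diameter of $\cu_1^m$ is at most $\e_n/2$, so every pair of points of $\X_n$ in $\cu_1^m$ is a graph neighbor, and the elementary identity $\sum_x |u(x)-\bar u|^2 = (2|\cu_1^m\cap\X_n|)^{-1}\sum_{y,y'}|u(y)-u(y')|^2$ together with the upper bound $|\cu_1^m\cap\X_n| \le Cn\e_n^d$ gives the claim.

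For the inductive step, I would partition $\cu_p^m$ into the $3^d$ subcubes $Q_1,\ldots,Q_{3^d}$ of type $\cu_{p-1}^m$ and use the ANOVA-type decomposition
\begin{equation*}
\sum_{x\in\cu_p^m\cap\X_n}|u(x)-(u)_{\cu_p^m}|^2 = \sum_i\sum_{x\in Q_i\cap\X_n}|u(x)-(u)_{Q_i}|^2 + \sum_i|Q_i\cap\X_n|\,|(u)_{Q_i}-(u)_{\cu_p^m}|^2.
\end{equation*}
The first sum is bounded directly by the induction hypothesis applied to each $Q_i$, producing the factor $C 3^{2(p-1)}$. For the second, viewing $v_i\defeq(u)_{Q_i}$ as a function on the fixed $3^d$-vertex coarse grid and invoking the deterministic Poincar\'e inequality on that finite grid (with a dimension-only constant) reduces matters to controlling $|v_i-v_j|^2$ for each adjacent pair $Q_i\sim Q_j$ by the discrete $\underline{H}^1$-seminorm of $u$ across $Q_i\cup Q_j$.

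The main obstacle will be precisely this chaining step: since $Q_i\cup Q_j$ is not itself a cube of the multi-scale family, the induction hypothesis does not directly apply to it. I expect to handle it by expressing $v_i-v_j$ as an average of the pair differences $u(x)-u(y)$ over $(x,y)\in(Q_i\cap\X_n)\times(Q_j\cap\X_n)$, and then bounding each such pair difference along a short graph path inside $Q_i\cup Q_j$ via a standard path/flow argument of length $\lesssim 3^{p-1}$. In the high-connectivity regime $\e_n\gg(\log n/n)^{1/d}$, the good event provides enough cross-edges between $Q_i$ and $Q_j$ and enough candidate paths that a careful Cauchy--Schwarz accounting produces a chaining constant that is dimension-only and independent of $p$, so that the induction closes with the total factor $C\cdot 3^{2p}$ rather than $C^p$. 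Tracking the numerical constants sharply at this step, in particular the comparison of the weights $|Q_i\cap\X_n|$ used to convert between averages and the fixed coarse-grid Poincar\'e constant, is what is needed to ensure that the inductive constant $C$ does not blow up as $p$ grows.
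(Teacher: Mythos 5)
Your proposal takes a genuinely different route from the paper. The paper's proof is a single two-scale decomposition: it partitions $\cu_p^m$ directly into cells of the \emph{finest} scale (side $\sim 3^{-m}\sim\e_n$), splits the variance into a within-cell term $\mathcal{I}$ (handled by Jensen, since all points in one fine cell are graph neighbors) and a between-cell term $\mathcal{II}$, and dispatches $\mathcal{II}$ by viewing the fine cells as vertices of a weighted $3^p\times\cdots\times 3^p$ grid graph and invoking the weighted grid Poincar\'e inequality of Lemma~\ref{lem:PoincareGrid} in Appendix~\ref{A.proofs}, whose $\ell^2 = 3^{2p}$ scaling comes from a Cheeger-constant estimate for the grid. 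You instead propose an induction on $p$ with an ANOVA split into $3^d$ sub-blocks at each step, using a fixed $3^d$-vertex grid Poincar\'e for the coarse means plus a chaining argument for adjacent blocks.

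The inductive skeleton is sound — since the induction hypothesis is applied to disjoint sub-blocks whose Dirichlet energies add, the within-block term contributes only $3^{2(p-1)} = 3^{2p}/9$, so the constant does not compound as $C^p$. But the crux, namely bounding $|(u)_{Q_i}-(u)_{Q_j}|^2$ for adjacent blocks $Q_i\sim Q_j$ by $\frac{C\,3^{2(p-1)}}{n\e_n^d}$ times the Dirichlet energy over $Q_i\cup Q_j$, is left as a sketch and is in fact a statement of exactly the same type and difficulty as Lemma~\ref{lem:PoincareGrid}: it is a Poincar\'e-type inequality on a $3^{p-1}\times\cdots\times 2\cdot 3^{p-1}$ box of $\e_n$-cells. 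Writing "a careful Cauchy--Schwarz accounting produces a chaining constant that is dimension-only and independent of $p$" names the target, not a proof; to carry it out you would have to construct a flow between $Q_i$ and $Q_j$ and bound its maximal edge congestion uniformly in $p$, at which point you have re-derived the grid Poincar\'e inequality in Diaconis--Stroock style rather than the paper's Cheeger style. So the approach is viable in principle but does not bypass the key lemma — it just moves the same work elsewhere. One small slip worth fixing: in the base case $p=1$, bounding $\frac{n\e_n^d}{2|\cu_1^m\cap\X_n|}$ from above requires the \emph{lower} bound $|\cu_1^m\cap\X_n|\gtrsim n\e_n^d$ on the good event, not the upper bound as you wrote.
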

	
	\begin{proof}
		Let $\mathfrak{P}$ be a partition of $ \log_{z_j}(\cu^m_p)  $ into cubes of side length $3^{-m}$ and let $w: \mathfrak{P} \rightarrow \R$ be the function
		given by
		\[  w(\cu)  \defeq  \frac{|  \exp_{z_j}(\cu)\cap \X_n  |}{   n\e_n^d  } , \]
		which, with probability at least $1-\e_n^{-d}\exp(-cn\e_n^d)$, satisfies 
		\begin{equation}
		\label{eq:EventDisPoincare}
		0<  c_1 \leq w(\cu) \leq c_2 , \quad \forall \cu \in \mathfrak{P}
		\end{equation}
		for some constants $c_1$ and $c_2$ that only depend on the constants in the definitions of $\MM$ and $\mathcal{P}_\M$; see Remark 3.8 in \cite{calder2022lipschitz}. In the remainder of this proof we will assume that the event \eqref{eq:EventDisPoincare} holds.
		
		Given a function $u : \cu^m_p \rightarrow \R$, we abuse notation slightly and define:
		\begin{equation}\label{eq-def:(g)}
		u({\cu})  \defeq   \frac{1}{|\exp_{z_j}(\cu)\cap \X_n|}  \sum_{y \in \exp_{z_j}(\cu)\cap \X_n} u(y), \quad  \cu \in \mathfrak{P} .
		\end{equation}
		Note that the average of $u$ over $\cu_p^m \cap \X_n$ can be written as
		\[  (u)_{\cu^m_p \cap \X_n}  = \frac{1}{| \cu^m_p \cap \X_n| }  \sum_{x \in \cu^m_p \cap \X_n}  u(x) =  \frac{1}{\sum_{\cu \in \mathfrak{P} }  w(\cu)   } \sum_{\cu \in \mathfrak{P} } w(\cu)  u(\cu)   =: \overline{u}_{\mathfrak{P},w} .\]
		In other words, the average of the function $u: \cu^m_p \rightarrow \R$ over $\cu^m_p \cap \X_n$ is equal to the weighted (by $w$) average of the function $u : \mathfrak{P} \rightarrow \R$.  
		
		Now, observe that
		\begin{align*}
		\begin{split}
		n \e_n^d \sum_{x \in \cu^m_p\cap \X_n }\left|u(x)-(u)_{\cu^m_p \cap \X_n}\right|^2 & = n\e_n^d  \sum_{ \cu \in \mathfrak{P}} \sum_{x \in \exp_{z_j}(\cu) \cap \X_n}\left|u(x)- \overline{u}_{\mathfrak{P}, w}\right|^2 
		\\& \leq  \mathcal{I} + \mathcal{II},
		\end{split} 
		\end{align*}
		where 
		\[ \mathcal{I}  \defeq  2 n\e_n^d   \sum_{ \cu \in \mathfrak{P}} \sum_{x \in \exp_{z_j}(\cu) \cap \X_n }\left|u(x)-  u (\cu) \right|^2, \quad \mathcal{II}  \defeq   2 n\e_n^d   \sum_{ \cu \in \mathfrak{P}} \sum_{x \in \exp_{z_j}(\cu) \cap \X_n }\left|u(\cu)-  \overline{u}_{\mathfrak{P}, w} \right|^2. \]
		For the first term, Jensen's inequality implies 
		\begin{align*}
		\begin{split}
		\mathcal{I} & = 2n\e_n^d \sum_{ \cu \in \mathfrak{P}} \sum_{x \in \exp_{z_j}(\cu) \cap \X_n}\left|   \frac{1}{|\exp_{z_j}(\cu)\cap\X_n|} \sum_{y \in \exp_{z_j}(\cu)\cap \X_n} (u(x) - u(y))  \right|^2
		\\&  \leq \frac{2}{c_1} \sum_{ \cu \in \mathfrak{P}} \sum_{x \in \exp_{z_j}(\cu) \cap \X_n} \sum _{y \in \exp_{z_j}(\cu)\cap \X_n} \left|u(x)-  u (y) \right|^2
		\\& \leq \frac{2}{c_1} \sum_{x,y \in \cu^m_p \cap \X_n, \: \mathrm{ and } \: x \sim y}\left|u(x)-u(y)\right|^2 ,
		\end{split}
		\end{align*}
		where the last inequality follows from the fact that we can choose the constant $C$ in \eqref{eqn:Choicem} to guarantee that if $\cu$ has side length equal to $3^{-m}$, then $x, y \in \exp_{z_j}(\cu) $ implies $x \sim y$.  
		
		To bound the term $\mathcal{II}$, it is convenient to define a relation on $\mathfrak{P}$ as follows: we write $\cu \sim_{\mathfrak{P}} \cu'$ if the cubes are either the same or share a face. In this way we can view $(\mathfrak{P}, \sim_\mathfrak{P})$ as a grid graph in $\R^d$ and use Lemma \ref{lem:PoincareGrid} (taking $\ell = 3^{p}$) in the Appendix to conclude that 
		\begin{align*}
		\begin{split}
		\mathcal{II}  & =  2 n\e_n^d   \sum_{ \cu \in \mathfrak{P}} \sum_{x \in \exp_{z_j}(\cu) \cap \X_n}\left|u(\cu)-  \overline{u}_{\mathfrak{P}, w} \right|^2
		\\ & \leq 2 c_2(n \e_n^d)^2 \sum_{ \cu \in \mathfrak{P}} | u(\cu )  - \overline{u}_{\mathfrak{P}, w}  |^2
		\\& \leq 2 c_2 C (n \e_n^d)^2  3^{2p} \sum_{\cu, \tilde \cu \in \mathfrak{P}, \: \cu \sim_\mathfrak{P} \tilde \cu  } |  u(\cu) - u(\tilde \cu)  |^2
		\\&  \leq 2 c_2 C (n \e_n^d)^2  3^{2p} \sum_{\cu, \tilde \cu \in \mathfrak{P}, \: \cu \sim_\mathfrak{P} \tilde \cu  } \frac{1}{|\exp_{z_j}(\cu)\cap\X_n||\exp_{z_j}(\tilde \cu)\cap\X_n|}\\
		& \qquad\qquad \qquad\sum_{x \in \exp_{z_j}(\cu)\cap\X_n} \sum_{ y \in \exp_{z_j}(\tilde \cu)\cap\X_n} ( u(x) - u(y))  |^2
		\\& \leq 2 c_2 C (c_1)^{-2}  3^{2p} \sum_{\cu, \tilde \cu \in \mathfrak{P}, \: \cu \sim_\mathfrak{P} \tilde \cu  }    \sum_{x \in \exp_{z_j}(\cu)\cap\X_n} \sum_{ y \in \exp_{z_j}(\tilde \cu)\cap\X_n} ( u(x) - u(y))^2
		\\& \leq 4 c_2 C (c_1)^{-2}  3^{2p}  \sum_{x,y \in \cu^m_p\cap \X_n, \: \mathrm{ and } \: x \sim y}\left|u(x)-u(y)\right|^2,
		\end{split}
		\end{align*}   
		where in the second to last inequality we used Jensen's inequality and in the last one we used the fact that any points $x , y$ with $x \in\exp_{z_j}( \cu)$ and $y \in \exp_{z_j}(\tilde \cu)$, for two cubes with $\cu \sim_{\mathfrak{P}} \tilde \cu$, are within distance $\e_n$ from each other (by choosing $C$ in \eqref{eqn:Choicem} conveniently).
	\end{proof}

	With the discrete local Poincaré inequality in hand, we can now state and prove the multiscale Poincaré inequality announced at the beginning of this section.

	

	\begin{proposition}[Multiscale Poincar\'e inequality] \label{l.msp}
		With probability at least~$1-\e_n^{-d}\exp(-cn\e_n^d)$, for any~$h: \X_n \to \R$ we have 
		\begin{equation*}
		\|h\|_{\underline{H}^{-1}(\X_n)} \leq  C\e_n \|h\|_{\underline{L}^2(\X_n)} 
		+   C\sum_{j=1}^{N_\nu}\sum_{p=1}^{m} 3^{p-m} \biggl( \avsum_{\upsilon \in \Upsilon_{p,j}^m} \biggl| \avsum_{ x \in \cu_{p,j,\upsilon}^m \cap \X_n}  h(x)\biggr|^2 \biggr)^{\sfrac12}\,, 
		\end{equation*}
		where we recall $m$ and $\e_n$ are related as in \eqref{eqn:Choicem} and where we use $\Upsilon_{p,j}^m$ to denote the set of points $\upsilon \in 3^{-m+p} \Zd $ such that $\cu_{p,j,\upsilon}^m \cap V_j \not = \emptyset$. \nc 
	\end{proposition}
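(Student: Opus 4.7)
My plan is to prove this by dualization, localization via the partition of unity $\{\psi_j\}$, and a multiscale triadic telescoping of the test function, estimating each scale's contribution with the discrete local Poincar\'e inequality of Lemma \ref{lem:local poincare inequality}.

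Starting from the duality definition \eqref{e.def:H_-1}, I fix a mean-zero test function $g:\X_n\to\R$ with $\|g\|_{\underline{H}^1(\X_n)}\leq 1$ and decompose it as $g = \sum_{j=1}^{N_\nu} g_j$ with $g_j \defeq \psi_j g$. A product-rule computation, combined with the global Poincar\'e inequality $\|g\|_{\underline{L}^2(\X_n)}\leq C\|g\|_{\underline{H}^1(\X_n)}$ (Lemma \ref{l.poincaregraph}) and the $\e_n$-independent Lipschitz bound on $\psi_j$, yields $\|g_j\|_{\underline{H}^1(\X_n)}\leq C\|g\|_{\underline{H}^1(\X_n)}\leq C$. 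Within each chart $V_j$, the exponential map puts $\X_n\cap V_j$ in Euclidean coordinates, and for $p=1,\ldots,m$ I define the scale-$p$ local average $[g_j]_p^j(x)\defeq \avsum_{y\in\cu_{p,j,\upsilon(x)}^m\cap\X_n} g_j(y)$, where $\upsilon(x)\in\Upsilon_{p,j}^m$ is such that $x\in \cu_{p,j,\upsilon}^m$. This produces the telescoping identity
\[
g_j(x) = \bigl(g_j(x) - [g_j]_1^j(x)\bigr) + \sum_{p=1}^{m-1}\bigl([g_j]_p^j(x) - [g_j]_{p+1}^j(x)\bigr) + [g_j]_m^j(x).
\]
The finest-scale residual is handled by Cauchy--Schwarz and Lemma \ref{lem:local poincare inequality} at $p=1$, summed cube by cube, giving $\|g_j - [g_j]_1^j\|_{\underline{L}^2(\X_n)}\leq C\e_n\|g_j\|_{\underline{H}^1(\X_n)}$ and hence the first term $C\e_n\|h\|_{\underline{L}^2(\X_n)}$ on the right-hand side.

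For each intermediate scale $p\in\{1,\ldots,m-1\}$, the decisive observation is that $[g_j]_p^j-[g_j]_{p+1}^j$ is piecewise constant on scale-$p$ cubes, so
\[
\avsum_{x\in\X_n} h(x)\bigl([g_j]_p^j(x)-[g_j]_{p+1}^j(x)\bigr) = \sum_{\upsilon\in\Upsilon_{p,j}^m}\vol_n(\cu_{p,j,\upsilon}^m)\bigl(\bar{g_j}_{p,\upsilon}-\bar{g_j}_{p+1,\upsilon'(\upsilon)}\bigr)\bar h_{p,j,\upsilon},
\]
where $\upsilon'(\upsilon)$ is the parent of $\upsilon$ at scale $p+1$ and $\bar h_{p,j,\upsilon}\defeq\avsum_{x\in\cu_{p,j,\upsilon}^m\cap\X_n}h(x)$. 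Crucially the plain average $\bar h_{p,j,\upsilon}$ appears, with no $\psi_j$ factor, which is exactly why I localize $g$ rather than $h$. Cauchy--Schwarz in $\upsilon$ then factors the right-hand side as $A_{p,j}\cdot B_{p,j}$, where $A_{p,j}^2=\sum_\upsilon\vol_n(\cu_{p,j,\upsilon}^m)(\bar{g_j}_{p,\upsilon}-\bar{g_j}_{p+1,\upsilon'})^2$ is bounded by expanding each squared difference of nested averages as a variance over the parent cube at scale $p+1$ and then applying Lemma \ref{lem:local poincare inequality} at scale $p+1$, yielding $A_{p,j}\leq C\cdot 3^{p-m}\|g_j\|_{\underline{H}^1(\X_n)}$; and $B_{p,j}^2=\sum_\upsilon\vol_n(\cu_{p,j,\upsilon}^m)|\bar h_{p,j,\upsilon}|^2\lesssim \avsum_\upsilon|\bar h_{p,j,\upsilon}|^2$ via the uniform density estimate $\vol_n(\cu_{p,j,\upsilon}^m)\sim 3^{(p-m)d}$ on the favorable event. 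The coarsest contribution $\avsum h [g_j]_m^j$ is treated the same way, using the pointwise Jensen bound $|\bar{g_j}_{m,j,\upsilon}|^2 \leq \avsum_{\cu_{m,j,\upsilon}^m\cap\X_n} g_j^2$ and $\|g_j\|_{\underline{L}^2(\X_n)}\leq C$ to produce the prefactor $3^0=1$ matching the $p=m$ summand. Summing over $p$ and $j$ (with $N_\nu$ a geometric constant absorbed into $C$) yields the stated inequality.

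The main technical obstacle is uniformity: I need the density estimate $\vol_n(\cu_{p,j,\upsilon}^m)\sim 3^{(p-m)d}$ and the discrete local Poincar\'e inequality to hold simultaneously at all scales $p\in\{1,\ldots,m\}$ and all cubes. Since coarser cubes contain strictly more points and therefore behave better, a single union bound at the finest scale $p=1$ over the $O(\e_n^{-d})$ cubes suffices, producing the failure probability $\e_n^{-d}\exp(-cn\e_n^d)$, which itself relies on the lower bound $n\e_n^d \gg \log n$ from Assumption \ref{assump:Connectivity}. A secondary subtlety is that localizing $g\rightsquigarrow g_j$ enlarges the $\underline{H}^1$-seminorm by a multiplicative constant depending on the Lipschitz norm of $\psi_j$ but independent of $\e_n$, so the whole argument remains uniform in the connectivity parameter.
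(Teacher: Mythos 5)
Your proposal reproduces the paper's argument: dualize, localize $g$ via the partition of unity $\{\psi_j\}$, telescope $g_j$ through the nested triadic scales, then treat the finest scale via Cauchy--Schwarz and the local Poincar\'e inequality at $p=1$, the intermediate scales via the piecewise-constant structure and the local Poincar\'e inequality at scale $p+1$, and the coarsest scale via the bounded number of top-level cubes. The only cosmetic difference is that the paper centers each localized piece ($g^j = \psi_j g - \avsum_{\X_n}\psi_j g$) so that it is mean-zero, whereas you take $g_j = \psi_j g$ directly; this does not affect the estimates, since the argument only needs the $\underline{H}^1(\X_n)$ and $\underline{L}^2(\X_n)$ bounds on $g_j$, both obtained from the global Poincar\'e inequality applied to the mean-zero $g$.
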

	
	\begin{proof}
		Recall that
		\begin{equation*}
		\|h\|_{\underline{H}^{-1}(\X_n)} = \sup \biggl\{ \avsum_{\X_n} h(x) g(x): \avsum_{x \in \X_n} g(x) = 0, \| g\|_{\underline{H}^1(\X_n)}  \leqslant 1\biggr\}\,. 
		\end{equation*}
		Let~$\{\psi_j\}_{j=1}^{N_\nu} \subseteq C^\infty (\M)$ be the smooth partition of unity subordinated to the covering~$\{V_j\}_{j=1}^{N_\nu}$ defined at the beginning of Section \ref{ss.graphpoincare}. For a given test function $g: \X_n \rightarrow \R$ in the definition of the discrete $H^{-1}$ semi-norm, observe that 
		\begin{equation*}
		g=g \sum_j \psi_j=\sum_j  g \psi_j
		\end{equation*}
		and
		\begin{equation*}
		0=\avsum_{\X_n} g=\avsum_{\X_n} g\sum_{j} \psi_j=\sum_{j} \avsum_{\X_n} g\psi_j.
		\end{equation*}
		Thanks to this, we can rewrite $g$ as 
		\begin{align*}
		g=\sum_{j} g\psi_j-\sum_{j}\avsum_{\X_n} g\psi_j=:\sum_{j} g^j.
		\end{align*}
		We notice that for every $j$ the function $g^j= g\psi_j - \avsum_{\X_n}g \psi_j$ satisfies
		\begin{equation*}
		\avsum_{x\in \X_n} g^j(x)=0.
		\end{equation*}
		For every $p \leq m $, define
		\[ g^j_p (x) \defeq  g(\cu_p^m)  \defeq  \frac{1}{| \cu _p^m \cap \X_n|} \sum_{y \in  \cu_p^m \cap\X_n } g^j(y), \quad \text{ if } x \in \cu_p^m\cap \X_n, \quad  \cu_p^m \cap  V_j \not = \emptyset .\]
		For the local Dirichlet energy of $g^j$, we have
		\begin{equation}\label{eq:local dirichlet energy gj}
		\begin{split}
		&\frac{1}{n^2\e_n^{d+2}} \sum_{\cu_1^m \: : \: \cu_1^m \cap V_j \not = \emptyset} \sum_{y, y' \in \cu_1^m\cap \X_n , \: y \sim y'}  | g^j(y) - g^j(y')|^2\\
		&\quad =  \frac{1}{n^2\e_n^{d+2}} \sum_{\cu_1^m \: : \: \cu_1^m \cap V_j \not = \emptyset} \sum_{y, y' \in \cu_1^m\cap \X_n , \: y \sim y'}  | \psi_j(y)g(y) - \psi_j(y') g(y')|^2\\
		&\quad \leq\frac{2}{n^2\e_n^{d+2}} \sum_{\cu_1^m \: : \: \cu_1^m \cap V_j \not = \emptyset} \sum_{y, y' \in \cu_1^m \cap \X_n, \: y \sim y'} \left( | \psi_j(y)(g(y)-g(y')) |^2+| (\psi_j(y)-\psi_j(y'))g(y') |^2\right)
		\\ &\quad  \leq  C (\| g\|_{\underline{H}^1(\X_n)}^2 + \lVert g \rVert^2_{L^2(\X_n)}) \le C,
		\end{split}
		\end{equation}
	 where in the last inequality we used the fact that if $y \sim y'$ it follows that $\eta(\frac{|y-y'|}{\e_n}) \geq \eta(1/2)>0$. We also used the global discrete Poincaré inequality and the fact that $\lVert g \rVert_{\underline{H}^1(\X_n)} \leq 1$.

		Now, 
		\begin{multline}
		\sum_{x \in \X_n } h(x) g(x) = \sum_{x \in \X_n } \sum_{j=1}^{N_\nu}  h(x) g^j(x)  =\sum_{j=1}^{N_\nu}\sum_{x\in\X_n\cap V_j} h(x)( g^j(x) - g^j_1 (x))\\ 
		+ \sum_{j=1}^{N_\nu}\sum_{p=1}^{m-1} \sum_{x\in \X_n\cap V_j} h(x) ( g^j_p(x) - g^j_{p+1}(x))
		+\sum_{j=1}^{N_\nu}  \sum_{x\in\X_n\cap V_j} h(x) g^j_m(x) .
		\label{eqn:AuxMultiPoincare0}
		\end{multline}   
		Let us start by analyzing the term $\sum_{x \in \X_n} h(x)( g^j(x) - g^j_1 (x)) $. Indeed, observe that by a double application of Cauchy-Schwarz inequality we have
		\begin{align*}
		&\sum_{x\in\X_n \cap V_j} h(x)( g^j(x) - g^j_1 (x))  \\
		&\qquad \qquad = \sum _{\cu_1^m \: : \: \cu_1^m \cap V_j \not = \emptyset} \sum_{x \in \cu_1^m \cap \X_n }  h(x)( g^j(x) - g^j_1 (x)) 
		\\&\qquad \qquad \leq \sum_{\cu_1^m \: : \: \cu_1^m \cap V_j \not = \emptyset} \left( \sum_{x \in \cu_1^m  \cap \X_n} h(x) ^2 \right)^{1/2} \left( \sum_{x \in \cu_1^m  \cap \X_n} ( g^j(x) - g^j_1(x)) ^2 \right)^{1/2} 
		\\& \qquad \qquad\leq  \left( \sum_{\cu_1^m \: : \: \cu_1^m \cap V_j \not = \emptyset} \sum_{x \in \cu_1^m  \cap \X_n} h(x) ^2 \right)^{1/2} \left(   \sum_{\cu_1^m \: : \: \cu_1^m \cap V_j \not = \emptyset} \sum_{x \in \cu_1^m  \cap \X_n} ( g^j(x) - g^j_1(x)) ^2 \right)^{1/2} 
		\\& \qquad \qquad\leq \sqrt{n} \lVert  h \rVert_{\underline{L}^2(\X_n\cap V_j)} \left(   \sum_{\cu_1^m \: : \: \cu_1^m \cap V_j \not = \emptyset} \sum_{x \in \cu_1^m  \cap \X_n} ( g^j(x) - g^j(\cu_1^m)) ^2 \right)^{1/2} 
		\\&\qquad \qquad \leq \sqrt{n} \lVert  h \rVert_{\underline{L}^2(\X_n \cap V_j)}  \left( \frac{C 3^2}{n\e_n^d}  \sum_{\cu_1^m \: : \: \cu_1^m \cap V_j \not = \emptyset} \sum_{y, y' \in \cu_1^m \cap \X_n, \: y \sim y'}  | g^j(y) - g^j(y')|^2      \right)^{1/2}
		\\&\qquad \qquad \leq  n  \lVert  h \rVert_{\underline{L}^2(\X_n \cap V_j)}  \left( \e_n^2 \frac{C 3^2}{n^2\e_n^{d+2}}  \sum_{\cu_1^m \: : \: \cu_1^m \cap V_j \not = \emptyset} \sum_{y, y' \in \cu_1^m \cap \X_n, \: y \sim y'}  | g^j(y) - g^j(y')|^2      \right)^{1/2}
		\\&\qquad \qquad \leq C n  \lVert  h \rVert_{\underline{L}^2(\X_n \cap V_j)}  \e_n,
		\end{align*} 
		where in the third inequality we have used the local Poincar\'{e} inequality, i.e., Lemma \ref{lem:local poincare inequality}, when $p=1$; 
		in the last inequality, we have used \eqref{eq:local dirichlet energy gj}. Since $N_\nu$ is a constant, Cauchy-Schwarz inequality implies 
		\begin{eqnarray*}
			\sum_{j=1}^{N_\nu} \lVert  h \rVert_{\underline{L}^2(\X_n \cap V_j)}\leq C\lVert  h \rVert_{\underline{L}^2(\X_n)},
		\end{eqnarray*}
		from where it follows that
		\[\sum_{j=1}^{N_\nu}\sum_{x\in\X_n\cap V_j} h(x)( g^j(x) - g^j_1 (x)) \leq  C n \e_n \|h\|_{\underline{L}^2(\X_n)}. \]
		
		To bound the second term on the right hand side of \eqref{eqn:AuxMultiPoincare0}, observe that for any $p\le m-1$ we have
		\begin{align*}
		&\sum_{x\in\X_n\cap V_j} h(x)( g^j_p(x) - g^j_{p+1} (x))  \\
		& \qquad = \sum _{\cu_p^m \: : \: \cu_p^m \cap V_j \not = \emptyset} \sum_{x \in \cu_p^m\cap \X_n}  h(x)( g^j_p(x) - g^j_{p+1} (x)) 
		\\ &  \qquad =  \sum _{\cu_p^m \: : \: \cu_p^m \cap V_j \not = \emptyset}  |\cu_p^m\cap\X_n|  ( g^j(\cu_p^m) - g^j(\cu_{p+1}^m))   \avsum_{x \in \cu_p^m\cap \X_n} h(x)  
		\\ & \qquad =  \sum _{\cu_p^m \: : \: \cu_p^m \cap V_j \not = \emptyset}  |\cu_p^m\cap\X_n|   \avsum_{x \in \cu_p^m\cap \X_n} h(x)  \avsum_{x \in \cu_p^m\cap \X_n}( g^j(x) - g^j(\cu_{p+1}^m))
		\\& \qquad = \sum _{\cu_{p+1}^m \: : \: \cu_{p+1}^m \cap V_j \not = \emptyset}  \sum_{\cu_p^m \subseteq \cu_{p+1}^m} |\cu_p^m\cap\X_n|   \avsum_{x \in \cu_p^m\cap \X_n} h(x)  \avsum_{x \in \cu_p^m\cap \X_n}( g^j(x) - g^j(\cu_{p+1}^m)) 
		\\& \qquad \leq  \sum _{\cu_{p+1}^m \: : \: \cu_{p+1}^m \cap V_j \not = \emptyset}  \sum_{\cu_p^m \subseteq \cu_{p+1}^m} |\cu_p^m\cap\X_n| \lvert  \avsum_{x \in \cu_p^m\cap \X_n} h(x) \rvert \left(   \avsum_{x \in \cu_p^m\cap \X_n}( g^j(x) - g^j(\cu_{p+1}^m))^2 \right)^{1/2}
		\\& \qquad \leq    Cn 3^{(p-m)d}  \sum _{\cu_{p+1}^m \: : \: \cu_{p+1}^m \cap V_j \not = \emptyset}  \left(  \sum_{\cu_p^m \subseteq \cu_{p+1}^m}  \lvert \avsum_{x \in \cu_p^m\cap \X_n} h(x) \rvert^2 \right)^{1/2}\\
		& \qquad\qquad\qquad\qquad\qquad\qquad \times\left(  \sum_{\cu_p^m \subseteq \cu_{p+1}^m}    \avsum_{x \in \cu_p^m\cap \X_n}( g^j(x) - g^j(\cu_{p+1}^m))^2 \right)^{1/2} 
		\\& \qquad \leq   C\sqrt{n 3^{(p-m)d}}  \sum _{\cu_{p+1}^m \: : \: \cu_{p+1}^m \cap V_j \not = \emptyset}  \left(  \sum_{\cu_p^m \subseteq \cu_{p+1}^m}  \lvert \avsum_{x \in \cu_p^m\cap \X_n} h(x) \rvert^2 \right)^{1/2}\\
		& \qquad\qquad\qquad\qquad\qquad\qquad \times\left(  \sum_{\cu_p^m \subseteq \cu_{p+1}^m}    \sum_{x \in \cu_p^m\cap \X_n}( g^j(x) - g^j(\cu_{p+1}^m))^2 \right)^{1/2} 
		\\& \qquad =  C\sqrt{n 3^{(p-m)d}}  \sum _{\cu_{p+1}^m \: : \: \cu_{p+1}^m \cap V_j \not = \emptyset}  \left(  \sum_{\cu_p^m \subseteq \cu_{p+1}^m}  \lvert \avsum_{x \in \cu_p^m\cap \X_n} h(x) \rvert^2 \right)^{1/2} \\
		& \qquad\qquad\qquad\qquad\qquad\qquad \times \left(    \sum_{x \in \cu_{p+1}^m\cap \X_n}( g^j(x) - g^j(\cu_{p+1}^m))^2 \right)^{1/2},
		\end{align*} 
		where in the second equality we have used the fact that both $g^j_p$ and $g^j_{p+1}$ are constant within each $\cu_p^m$ (because the partitions are nested) and we have implicitly used $\cu_{p+1}^m$ to denote the set at scale $p+1$ that contains the set $\cu_{p}^m$; all inequalities follow from Cauchy-Schwarz inequality and counting. The latter term in the above inequality can be bounded, using the local Poincar\'{e}  inequality from Lemma \ref{lem:local poincare inequality} and \eqref{eq:local dirichlet energy gj}, by the term
		\begin{align*}
		&\leq C n \sqrt{ 3^{(p-m)d}}  \sum _{\cu_{p+1}^m \: : \: \cu_{p+1}^m \cap V_j \not = \emptyset}  \left(  \sum_{\cu_p^m \subseteq \cu_{p+1}^m}  \Big\lvert \avsum_{x \in \cu_p^m\cap \X_n} h(x) \Big\rvert^2 \right)^{1/2} \\
		&\qquad\qquad \times\left(   C3^{2(p+1 - m )}\frac{1}{n^2\e_n^{d+2}} \sum _{x, y \in \cu_{p+1}^m\cap\X_n, \: x \sim y} (g^j(x) - g^j(y))^2    \right)^{1/2} 
		\\& \leq Cn  \sqrt{ 3^{(p-m)d}}   \left( \sum _{\cu_{p+1}^m \: : \: \cu_{p+1}^m \cap V_j \not = \emptyset}  \sum_{\cu_p^m \subseteq \cu_{p+1}^m}  \Big\lvert \avsum_{x \in \cu_p^m\cap \X_n} h(x) \Big\rvert^2 \right)^{1/2} \\
		&\qquad\qquad \times\left(   C3^{2(p+1 - m )}\frac{1}{n^2\e_n^{d+2}} \sum _{\cu_{p+1}^m \: : \: \cu_{p+1}^m \cap V_j \not = \emptyset} \sum _{x, y \in \cu_{p+1}^m\cap\X_n, \: x \sim y} (g^j(x) - g^j(y))^2    \right)^{1/2} 
		\\& \leq Cn 3^{(p+1 -m)}  \sqrt{ 3^{(p-m)d}}   \left( \sum _{\cu_{p+1}^m \: : \: \cu_{p+1}^m \cap V_j \not = \emptyset}  \sum_{\cu_p^m \subseteq \cu_{p+1}^m}  \Big\lvert \avsum_{x \in \cu_p^m\cap \X_n} h(x) \Big\rvert^2 \right)^{1/2}
		\\& =  Cn 3^{(p+1 -m)}  \sqrt{ 3^{(p-m)d}}   \left(   \sum_{\cu_p^m \: : \: \cu_{p}^m \cap V_j \not = \emptyset}  \lvert \avsum_{x \in \cu_p^m\cap \X_n} h(x) \rvert^2 \right)^{1/2}
		\\& = C n 3^{p+1 -m}  \left(   \avsum_{\upsilon \in \Upsilon_{p,j}^m}  \Big\lvert \avsum_{x \in \cu_{p,j,\upsilon}^m\cap \X_n} h(x) \Big\rvert^2 \right)^{1/2}.
		\end{align*}
		
		For the third term on the right hand side of \eqref{eqn:AuxMultiPoincare0}, we first observe that the number of distinct $\cu^m_m$ in $\M$ is at most a constant $C>1$. By an application of Cauchy-Schwarz inequality, we obtain
		\begin{align*}
		\sum_{x\in\X_n \cap V_j} h(x)g^j_m(x) &= \sum _{\cu_m^m \: : \: \cu_m^m \cap V_j \not = \emptyset}  |\cu_m^m\cap\X_n|   \avsum_{x \in \cu_m^m\cap\X_n} h(x)  \avsum_{x \in \cu_m^m\cap\X_n} g^j(x) \\
		&\leq n  \sum _{\cu_m^m \: : \: \cu_m^m \cap V_j \not = \emptyset}  \left| \avsum_{x \in \cu_m^m\cap\X_n} h(x)  \avsum_{x \in \cu_m^m\cap\X_n} g^j(x)\right|\\
		&\leq C n \sqrt{\sum_{\cu_m^m \: : \: \cu_m^m \cap V_j \not = \emptyset} \left|\avsum_{x\in\cu_m^m\cap\X_n} h(x)\right|^2}\sqrt{\sum_{\cu_m^m \: : \: \cu_m^m \cap V_j \not = \emptyset} \left|\avsum_{x\in\cu_m^m\cap\X_n} g^j(x)\right|^2}\\
		&\leq C n \sqrt{\sum_{\cu_m^m \: : \: \cu_m^m \cap V_j \not = \emptyset} \left|\avsum_{x\in\cu_m^m\cap\X_n} h(x)\right|^2}.
		\end{align*} 
		
		
		Combining the above bounds we deduce the desired inequality.
	\end{proof}

	\subsection{Concentration Bounds on the Difference of Laplacians}	
	\label{sec:ConcentrationBounds}
	
	
	As discussed in section \ref{ss.graphpoincare}, in order to estimate the $\underline{H}^{-1}(\X_n)$ semi-norm of a function $h : \X_n \rightarrow \R$ it suffices to bound terms of the form $\avsum_{x \in \X_n} g(x) h(x) $ for a suitable collection of test functions $g$. For reasons that will become more apparent in sections \ref{ss.poisson} and \ref{ss.final}, but that were already hinted at in the discussion in section \ref{ss.ideas}, it will be important to develop these estimates for the choice $h:= \mathcal{L}_{\e_n, n} \overline{u} - \Delta_\rho \overline u $, where $\overline u: \M \rightarrow \R$ is sufficiently regular. Explicitly, we study concentration bounds for terms of the form 
	\begin{equation}
	\avsum_{x \in \X_n } g(x) (\Delta_\rho \overline{u}(x) -\mathcal{L}_{\e_n, n} \overline{u}(x)) ,
	\label{eqn:InnerProductDiffLapl}
	\end{equation}
	under different smoothness assumptions on $g$ and assuming that $\overline{u}$ has uniformly bounded derivatives of order three. 
    We present two results in this direction, Propositions \ref{Prop:ConcenSmoothg} and \ref{lem:pointwise convergence} below.  

    \begin{remark}
     In the computations that follow, by a priori assuming that $\overline{u}$ has bounded third order derivatives we will only require that $\M \in \MM$ (for some $\alpha \in [0,1]$) and that $\rho \in \mathcal{P}_\M$. The additional regularity on the data model required in Assumption \ref{assump:MoreRegularity} will only be needed in the next subsections, when justifying that certain choices of $\overline{u}$ of interest indeed have bounded third order derivatives.  
    \end{remark}
	
	In order to study \eqref{eqn:InnerProductDiffLapl}, we consider a second order Taylor expansion with exact remainder for a function $\overline{u}$ that has bounded third order derivatives:
	\begin{equation} \label{e.taylorexpansion}
	\begin{aligned}
	&\Delta_\rho \overline{u}(x) -\mathcal{L}_{\e_n,n} \overline{u}(x)   \\
	&\quad = \Delta_\rho \overline{u}(x) + \frac{2}{\sigma_\eta \e_n^2}\avsum_{y\in\X_n } \eta_{\e_n}(|y-x|) (\overline{u}(y)  - \overline{u}(x))  \\
	&\quad =  \Delta_\rho \overline{u}(x)+  \frac{2}{\sigma_\eta \e_n^2}\avsum_{y \in \X_n} \eta_{\e_n}(d(y,x)) (\overline{u}(y)  - \overline{u}(x))  \\
	&\qquad \qquad \qquad + \frac{2}{\sigma_\eta \e_n^2}\avsum_{y \in \X_n} (\eta_{\e_n}(|y-x|)-\eta_{\e_n}(d(y,x))) (\overline{u}(y)  - \overline{u}(x))  \\
	& \quad = \Delta_\rho \overline{u}(x)+ \frac{2}{\sigma_\eta \e_n^2} \avsum_{v:\exp_x(v)\in\X_n }\eta_{\e_n}(|v|)\biggl\{\nabla w(0)\cdot v + \frac12 \sum_{k,j=1}^d \partial^2_{kj} w(0) v_k v_j + \mathcal{R}(x,\exp_x(v))  \biggr\}  
	\\ & \qquad \qquad \qquad  +\mathcal{R}_1(x),
	\end{aligned}
	\end{equation}
	where we recall $d(\cdot, \cdot)$ denotes the geodesic distance on $\M$, $\exp_x$ the exponential map at $x$, and $w(v):= \overline{u}(\exp_x(v))$. Here, $\mathcal{R}(x, \exp_x(v))$ is the exact second order Taylor expansion remainder, which, according to Appendix \ref{app:Taylor}, can be written as
	\begin{equation}\label{eq-def:R}
	\mathcal{R}(x,\exp_x(v)) = \frac{\e_n^3}{2} \int_0^1 (1-t)^2   \sum_{i,j,k} \partial^3_{ijk} f(\exp_x(tv)) ((d\exp_x)_{tv}(v))_i ((d\exp_x)_{tv}(v))_j ((d\exp_x)_{tv}(v))_k \dt.
	\end{equation}
	The ``geometric'' remainder term~$\mathcal{R}_1$, on the other hand, is given by
	\begin{equation}\label{eq-def:R1}
	\mathcal{R}_1(x):=\frac{2}{\sigma_\eta \e_n^2}\avsum_{y \in \X_n} (\eta_{\e_n}(|y-x|)-\eta_{\e_n}(d(y,x))) (\overline{u}(y)  - \overline{u}(x))
	\end{equation}
	and measures the discrepancy between the graph Laplacians constructed with the Euclidean and geodesic distances.
	
	In our first result, we provide error bounds for the expectation of \eqref{eqn:InnerProductDiffLapl}.
	
	\begin{proposition}[Bias analysis]
		\label{prop:bias}
		 Let $g: \M \rightarrow \R$ and $\overline{u}: \M \rightarrow \R$ be two functions where $\int_\M|g(x)| \dx \leq C$, and $\overline{u}$ has bounded third order derivatives. Then
		\begin{equation}
		\label{eqn:BiasBound}
		\left|  \E\left[ \avsum_{x \in \X_n } g(x) (\Delta_\rho \overline{u}(x) - \mathcal{L}_{\e_n, n} \overline{u}(x))\right]  \right| \leq C( [g]_{1,\e_n} + C)B \e_n^2,
		\end{equation}
		where
		\begin{equation} \label{e.Bdef}
		\begin{split}
		B  &\defeq  \left(\int_\M \|\overline{u}\|^2_{L^\infty_{B_\M(x,\e_n)}}+ \|\nabla \overline{u}\|^2_{L^\infty_{B_\M(x,\e_n)}}+\|D^2 \overline{u}\|^2_{L^\infty_{B_\M(x,\e_n)}}+\|D^3 \overline{u}\|^2_{L^\infty_{B_\M(x,\e_n)}}\dx\right)^{\sfrac{1}{2}} \,,\\
		\end{split}
		\end{equation}
		and
		\begin{equation}
		[g]_{1, \e_n} :=  \sqrt{\int_{\M}\left(\int_{B_{1}(0)\subseteq \T_x\M} \eta(|v|) \sup_{t \in [0,1]} \frac{|g(x)-g(x+2t\e_n v)|}{\e_n} \dd v\right)^2\dx}.
		\label{eqn:DefNonLocalTV}
		\end{equation}
	\end{proposition}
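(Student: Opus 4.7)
I would begin by using the U-statistic structure of the graph Laplacian. Conditioning on the samples and taking expectation, the diagonal contribution ($i=j$ in the double sum defining $\mathcal{L}_{\e_n,n}\overline{u}$) vanishes, and the off-diagonal part reduces, by independence, to a deterministic double integral. This yields, up to a harmless $\mathcal{O}(1/n)$ correction that is easily absorbed under Assumption~\ref{assump:Connectivity},
\begin{equation*}
\E\Bigl[\avsum_{x\in\X_n}g(x)\bigl(\Delta_\rho\overline{u}(x)-\mathcal{L}_{\e_n,n}\overline{u}(x)\bigr)\Bigr] \;=\; \int_\M g(x)\bigl[\Delta_\rho\overline{u}(x)-\bar{\mathcal{L}}_{\e_n}\overline{u}(x)\bigr]\rho(x)\,dx \;+\; \mathcal{O}(1/n),
\end{equation*}
where $\bar{\mathcal{L}}_{\e_n}\overline{u}(x):=\tfrac{2}{\sigma_\eta\e_n^2}\int_\M\eta_{\e_n}(|y-x|)(\overline{u}(x)-\overline{u}(y))\rho(y)\,dy$ is the continuum nonlocal counterpart of $\mathcal{L}_{\e_n,n}\overline{u}$. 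The key estimate is then a purely deterministic comparison between $\Delta_\rho\overline{u}$ and $\bar{\mathcal{L}}_{\e_n}\overline{u}$.

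To carry this out, I introduce the geodesic-kernel analogue $\bar{\mathcal{L}}_{\e_n}^{\mathrm{geo}}\overline{u}(x):=\tfrac{2}{\sigma_\eta\e_n^2}\int_\M\eta_{\e_n}(d(x,y))(\overline{u}(x)-\overline{u}(y))\rho(y)\,dy$ and decompose the integrand as $\bigl(\Delta_\rho\overline{u}-\bar{\mathcal{L}}_{\e_n}^{\mathrm{geo}}\overline{u}\bigr)+\bigl(\bar{\mathcal{L}}_{\e_n}^{\mathrm{geo}}\overline{u}-\bar{\mathcal{L}}_{\e_n}\overline{u}\bigr)$. For the first (geodesic) piece, I would perform the Taylor expansion in geodesic normal coordinates suggested by~\eqref{e.taylorexpansion}, combined with Taylor expansions of $\rho\circ\exp_x$ and of the Jacobian $J$ of the exponential map. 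By radial symmetry of $\eta(|v|)$ all odd-power moments in $v$ vanish, while $\int\eta(|v|)v_iv_j\,dv=\sigma_\eta\delta_{ij}$; a careful matching of the surviving terms reassembles precisely $\Delta_\rho\overline{u}(x)$ in the coordinate form~\eqref{eqn:LaplacianCoordinates}, so that the pointwise difference $|\Delta_\rho\overline{u}(x)-\bar{\mathcal{L}}_{\e_n}^{\mathrm{geo}}\overline{u}(x)|$ is of order $\e_n^2$ with a coefficient bounded by the local $L^\infty$-norms of derivatives of $\overline{u}$ of order at most three on $B_\M(x,\e_n)$. Integrating this pointwise bound against $g(x)\rho(x)$ and using $\int_\M|g|\le C$ together with Cauchy--Schwarz against the definition~\eqref{e.Bdef} of $B$ gives a contribution bounded by $CB\e_n^2$.

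For the geometric correction $\bar{\mathcal{L}}_{\e_n}^{\mathrm{geo}}\overline{u}-\bar{\mathcal{L}}_{\e_n}\overline{u}$ (the deterministic analogue of $-\mathcal{R}_1$ from~\eqref{eq-def:R1}), the key observation is that $K_{\e_n}(x,y):=\eta_{\e_n}(|x-y|)-\eta_{\e_n}(d(x,y))$ is \emph{symmetric} in $(x,y)$. Hence by the same symmetrization used elsewhere in this section,
\begin{equation*}
\int_\M g(x)\bigl(\bar{\mathcal{L}}_{\e_n}^{\mathrm{geo}}\overline{u}-\bar{\mathcal{L}}_{\e_n}\overline{u}\bigr)\!(x)\rho(x)\,dx
= \tfrac{1}{\sigma_\eta\e_n^2}\!\int_\M\!\!\int_\M \bigl(g(x){-}g(y)\bigr)K_{\e_n}(x,y)\bigl(\overline{u}(x){-}\overline{u}(y)\bigr)\rho(x)\rho(y)\,dx\,dy.
\end{equation*}
The bounded reach and bounded second fundamental form in Definition~\ref{def:ManifoldClass} yield the standard embedded-manifold estimate $\bigl||x-y|-d(x,y)\bigr|\le C\,d(x,y)^3$ for $d(x,y)\le \e_n$, which combined with the Lipschitz continuity of $\eta$ gives $|K_{\e_n}(x,y)|\le C\e_n^{-d-1}d(x,y)^3\,\mathbbm{1}_{d(x,y)\le\e_n}$. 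Applying Cauchy--Schwarz to the double integral and changing variables $y=\exp_x(\e_n v)$, the $\overline{u}$-factor is controlled using $|\overline{u}(x)-\overline{u}(y)|\le \|\nabla\overline{u}\|_{L^\infty(B_\M(x,\e_n))}d(x,y)$ and the cubic weight in $K_{\e_n}$ to give $\le C\e_n^2 B$, while the $g$-factor is controlled, via Jensen's inequality applied to the inner $v$-integral, by exactly the seminorm $\e_n^2[g]_{1,\e_n}$ from~\eqref{eqn:DefNonLocalTV}. Multiplying the two factors together with the prefactor $1/\e_n^2$ yields a bound $C[g]_{1,\e_n}B\e_n^2$, and combining with the geodesic piece establishes~\eqref{eqn:BiasBound}.

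The main technical obstacle is the pointwise expansion in the geodesic-kernel piece: one must simultaneously track the Taylor expansions of $w(v)=\overline{u}(\exp_x(v))$, $\rho(\exp_x(v))$, and $J(v)$ (the last of which carries curvature information), verify that all odd-order moments of $\eta(|v|)$ vanish, and check that the sum of second-order contributions reproduces exactly $\Delta_\rho\overline{u}(x)$ rather than merely agreeing to leading order. The cubic Taylor remainder $\mathcal{R}(x,\exp_x(v))$ from~\eqref{eq-def:R} must then be bounded uniformly using the assumed third-derivative regularity of $\overline{u}$, so that the entire $\e_n^2$ error against $B$ is clean. The geometric-correction half, by contrast, is essentially symmetrization plus Cauchy--Schwarz and is where the seminorm $[g]_{1,\e_n}$ naturally arises.
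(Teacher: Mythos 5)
Your overall structure — reduce to a continuum double integral, split into a geodesic Taylor piece and a geometric correction, symmetrize the geometric correction to extract $[g]_{1,\e_n}$ — is a reasonable starting point and matches the paper up to Step~1. But the geodesic piece contains a gap that is fatal to the stated $\e_n^2$ rate.

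You claim that the pointwise difference $\bigl|\Delta_\rho\overline{u}(x)-\bar{\mathcal{L}}_{\e_n}^{\mathrm{geo}}\overline{u}(x)\bigr|$ is of order $\e_n^2$. This is false under the standing hypotheses: $\rho\in\mathcal{P}_\M$ has only bounded second derivatives (no modulus of continuity on $D^2\rho$), and $\overline{u}$ has only bounded (not H\"older) third derivatives. Consider the first-order Taylor contribution $\frac{2}{\sigma_\eta\e_n}\int\eta(|v|)\,\nabla w(0)\cdot v\,\rho(\exp_x(\e_n v))J_x(\e_n v)\,\dd v$. The $\rho(x)$ and $\e_n\nabla\rho(x)\cdot v$ pieces are handled by parity as you say, but the exact Taylor remainder for $\rho$, namely $\e_n^2\int_0^1(1-t)\langle D^2\rho(\exp_x(t\e_n v))\,d(\exp_x)_{t\e_n v}(v),\,d(\exp_x)_{t\e_n v}(v)\rangle\,\dd t$, is \emph{not} even in $v$ because $D^2\rho$ is evaluated at $\exp_x(t\e_n v)$, and with $\rho$ merely $C^2$ you cannot decompose it as $D^2\rho(x)+o(1)$. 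Multiplying by $\frac{1}{\e_n}\nabla w(0)\cdot v$ and integrating leaves a contribution of order $\e_n$, not $\e_n^2$. The same happens with the exact cubic remainder $\mathcal{R}(x,\exp_x(v))$ of $\overline{u}$: the prefactor $\frac{2}{\sigma_\eta\e_n^2}$ against $\mathcal{R}\sim\e_n^3\|D^3\overline{u}\|$ gives $\e_n$, and without H\"older control on $D^3\overline{u}$ there is no pointwise cancellation. Indeed the paper makes this explicit in its heuristic discussion (Section~\ref{ss.ideas}): the pointwise bias is $\mathcal{O}(\e_n)$, and only the $g$-weighted average is $\mathcal{O}(\e_n^2)$.

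The missing idea is that the extra power of $\e_n$ in the geodesic piece must also come from a symmetrization \emph{of the $x$-integral}, not just of the metric comparison. This is what the paper's change of variables $\Psi_t(x,v)=(\exp_x(2t\e_n v),-d(\exp_x)_{2t\e_n v}(v))$ accomplishes in Steps~2 and~4: by Liouville's theorem it preserves the volume form on the tangent bundle, so the integrals $\mathcal{I}_t$ and $\mathcal{II}_t$ equal their own images under $\Psi_t$, and averaging yields a difference $g(x)\rho(x)-g(\tilde x)\rho(\tilde x)$ that is $\mathcal{O}(\e_n)$ and produces the $[g]_{1,\e_n}$ seminorm. In your scheme, $[g]_{1,\e_n}$ appears only in the geometric-correction half, whereas it must arise from both halves; without this, the geodesic piece stalls at $\mathcal{O}(B\e_n)$ and the bound~\eqref{eqn:BiasBound} cannot be reached.
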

	\begin{proof}        
		Since our goal is to bound the expectation of $ \avsum_{x \in \X_n } g(x) (\Delta_\rho \overline{u}(x) -\mathcal{L}_{\e_n, n} \overline{u}(x))$, it will suffice to compute the contribution of each of the terms in the last line of \eqref{e.taylorexpansion} to the overall expected value.

		\noindent \textbf{Step 1. Control of geometric remainder~$\mathcal{R}_1$.}
		Using \eqref{e.geodesic third derivative expansion} and a Taylor expansion of $\eta$ around $d(x,y)$, we deduce
		\begin{multline*}
		\avsum_{x\in\X_n} g(x) \mathcal{R}_1(x) =  \frac{2}{\sigma_\eta \e_n^2}\avsum_{x \in \X_n} \avsum_{y \in \X_n}  g(x) (\eta_{\e_n}(|y-x|)-\eta_{\e_n}(d(x,y)) (\overline{u}(y)  - \overline{u}(x))\\
		= \frac{2}{\sigma_\eta \e_n^2}\avsum_{x \in \X_n} \avsum_{y \in \X_n}  g(x) \eta'_{\e_n}(d(x,y)) \frac{s(x,y)d(x,y)^3}{\e_n} (\overline{u}(y)  - \overline{u}(x))+ \mathcal{O}(\e_n^2B),
		\end{multline*}
		where in the above and in what follows we use the notation $\eta'_{\e_n}(\cdot) = \frac{1}{\e^d_n} \eta'(\cdot/\varepsilon_n)$. 
		After changing to suitable normal coordinates, the expectation of the first term on the right hand side of the above expression can be written as 
		\begin{align}\label{eq:R11 term0}
		\begin{split}
		\E &\left[\frac{2}{\sigma_\eta \e_n^2}\avsum_{x \in \X_n} \avsum_{y \in \X_n}  g(x) \eta'_{\e_n}(d(x,y)) \frac{s(x,y)d(x,y)^3}{\e_n} (\overline{u}(y)  - \overline{u}(x))\right]\\
		& = \E\left[\frac{2}{ \sigma_\eta \e_n^3}  g(x) \eta'_{\e_n}(d(x,y)) s(x,y)d(x,y)^3 (\overline{u}(y)  - \overline{u}(x))\right]\\
		& = \frac{2}{\sigma_\eta \e_n^3}\int_\M\int_{B_{\e_n}(0)\subseteq \T_x\M}  \tilde{g}(0) \eta'_{\e_n}(|v|)\tilde{s}(0, \e_n v )|v|^3 (w(v)  - w(0)) \tilde{p}(0)\tilde{p}(v)J_x(v)\dd v\dd x\\
		&=\frac{2}{\sigma_\eta}\int_\M\int_{B_{1}(0)\subseteq \T_x\M}  \tilde{g}(0) \eta'(|v|) \tilde{s}(0, \e_n v )|v|^3 (w(\e_n v)  - w(0)) \tilde{p}(0)\tilde{p}(\e_n v)J_x(\e_n v)\dd v\dd x,
		\end{split}
		\end{align}
		where we use $\tilde{s}(0,v)=s(x,y)$ as the normal coordinate representation of $s$ around $x$, and $\tilde{p}(v)= \rho(\exp_x(v))$. By \eqref{e.Jactaylor}, we obtain
		\begin{multline}\label{eq:R11 term1}
		\left| \int_\M\int_{B_{1}(0)\subseteq \T_x\M}  \tilde{g}(0) \eta'(|v|) \tilde{s}(0, \e_n v )|v|^3 (w(\e_n v)  {-} w(0)) \tilde{p}(0)\tilde{p}(\e_n v)(J_x(\e_n v){-}1)\dd v\dd x \right| \\
		\leq CB\e_n^2,
		\end{multline}
		which allows us to focus on bounding the integral in the last line of \eqref{eq:R11 term0} but with the term $J_x(\e_n v)$ replaced by $1$. The resulting term can be decomposed into two further terms:
		\begin{multline}\label{eq:R11 term2}
		\int_\M\int_{B_{1}(0)\subseteq \T_x\M}  \tilde{g}(0) \eta'(|v|) \tilde{s}(0, \e_n v )|v|^3 (w(\e_n v)  - w(0)) \tilde{p}(0)\tilde{p}(\e_n v)\dd v\dd x \\
		= \int_\M\int_{B_{1}(0)\subseteq \T_x\M}  \tilde{g}(0) \eta'(|v|) \tilde{s}(0, \e_n v )|v|^3 \e_n \nabla w(0) \cdot v \tilde{p}(0)(\tilde{p}(0) + \e_n \nabla \tilde{p}(0) \cdot v )\dd v\dd x\\
		+\mathcal{R}_1^2,
		\end{multline}
		where $|\mathcal{R}_1^2|\leq CB\e_n^2$. The bound on $\mathcal{R}_1^2$ uses the estimate
		\begin{equation}\label{eq:taylor expansion of p}
		\Bigl| \tilde{p}(\e_n v) -  \tilde{p}(0) - \e_n \nabla \tilde{p}(0) \cdot v  \Bigr| \leqslant C \e_n^2 \,,
		\end{equation}
		which follows from a Taylor expansion on $\rho$ and the fact that second derivatives of $\rho$ are bounded, as well as a first order Taylor expansion of $\overline{u}$ around $x$. To bound the first term on the right-hand side of \eqref{eq:R11 term2}, we observe that due to symmetry over the integration variable $v$ (see \eqref{eqn:symmetrySecondFundForm}) we have 
		\begin{equation*}
		\int_\M\int_{B_{1}(0)\subseteq \T_x\M}  \tilde{g}(0) \eta'( |v|) \tilde{s}(0, \e_n v )|v|^3 \e_n \nabla w(0) \cdot v \tilde{p}(0)\tilde{p}(0) \dd v\dd x=0,
		\end{equation*}
		while the remaining terms in \eqref{eq:R11 term2} are, in absolute value, at most $CB\e_n^2$. From the above discussion, we conclude that 
		\begin{equation}
		\label{e.01}
		\left| \E \left[ \avsum_{x\in\X_n} g(x) \mathcal{R}_1(x) \right] \right| \leq CB\e_n^2.
		\end{equation}

		\noindent \textbf{Step 2: First order Taylor term.} The contribution of the first-order term in the Taylor expansion of $\overline{u}$ can be written as
		\begin{equation}
		\frac2{\sigma_\eta \e_n^2}\avsum_{x\in\X_n, y \in \X_n } \eta_{\e_n}(d(x,y))  g(x) \langle \nabla \overline{u}(x),  \log_x(y) \rangle.
		\label{eqn:FirstOrderTaylor}
		\end{equation}
		Note that
		\begin{align}
		\label{eq:first order taylor term main term}
		\begin{split}
		\E & \left[ \frac2{\sigma_\eta \e_n^2}\avsum_{x\in\X_n, y \in \X_n } \eta_{\e_n}(d(x,y))  g(x) \langle \nabla \overline{u}(x), \log_x(y) \rangle \right]
		=\E\left[ \frac2{\sigma_\eta\e_n^2}\eta_{\e_n}(d(x,y))  g(x) \langle \nabla \overline{u}(x) , \log_x(y) \rangle \right]\\
		& =  \int_\M\int_{B_{\e_n}(0)\subseteq\T_x\M} \frac2{\sigma_\eta \e_n^2}\eta_{\e_n}(|v|)  \tilde{g}(0)\nabla w(0) \cdot v\tilde{p}(0)\tilde{p}(v)J_x(v)\dd v \dx\\
		& =  \int_\M\int_{B_{1}(0)\subseteq\T_x\M} \frac2{ \sigma_\eta \e_n}\eta(|v|)  \tilde{g}(0)\nabla w(0) \cdot v\tilde{p}(0)\tilde{p}(\e_n v)J_x(\e_n v)\dd v \dx,
		\end{split}
		\end{align}
		where we have used the normal coordinates around a given $x\in \M$. By \eqref{e.Jactaylor-third derivative},
		\begin{align*}
		\Bigg| \int_\M\int_{B_{1}(0)\subseteq\T_x\M}  &\frac1{\e_n}\eta(|v|)   \tilde{g}(0)\nabla w(0) \cdot v\tilde{p}(0)\tilde{p}(\e_n v)(J_x(\e_n v)-1-q_x(\e_n v))\dd v \dx \Bigg| 
		\\  & \leq  C\e_n^2\int_\M\int_{B_{1}(0)\subseteq\T_x\M} \eta(|v|)  \left|\tilde{g}(0)\nabla w(0) \cdot v\tilde{p}(0)\tilde{p}(\e_n v)\right|\dd v \dx
		\\  & \leq CB\e_n^2.
		\end{align*}
		We can thus replace the last line in \eqref{eq:first order taylor term main term} with the term 
		\begin{equation}\label{eq:zero term in mathcalR2}
		\int_\M\int_{B_{1}(0)\subseteq\T_x\M} \frac2{\sigma_\eta \e_n}\eta(|v|)  \tilde{g}(0)\nabla w(0) \cdot v\tilde{p}(0)\tilde{p}(\e_n v)(1+q_x(\e_n v))\dd v \dx.
		\end{equation}
		To analyze this new term, we first study the contribution of the term $q_x(\e_n v)$. Using the Taylor expansion of $\tilde{p}$ in \eqref{eq:taylor expansion of p}, we deduce
		\begin{align}
		\begin{split}
		\label{eq:first term in mathcalR2}
		\int_\M\int_{B_{1}(0)\subseteq\T_x\M}& \frac1{\e_n}\eta(|v|)  \tilde{g}(0)\nabla w(0) \cdot v\tilde{p}(0)\tilde{p}(\e_n v)q_x(\e_n v)\dd v \dx\\
		&= \int_\M\int_{B_{1}(0)\subseteq\T_x\M} \frac1{\e_n}\eta(|v|)  \tilde{g}(0)\nabla w(0) \cdot v\tilde{p}^2(0)q_x(\e_n v)\dd v \dx\\
		&\quad +\int_\M\int_{B_{1}(0)\subseteq\T_x\M} \eta(|v|)  \tilde{g}(0)\nabla w(0) \cdot v\tilde{p}(0) (\nabla \tilde{p}(0)\cdot v)q_x(\e_n v)\dd v \dx+\mathcal{R}_2,
		\end{split}
		\end{align}
		where $\mathcal{R}_2$ satisfies
		\begin{equation*}
		\mathcal{R}_2\leq C\e_n^2\int_\M\int_{B_{1}(0)\subseteq\T_x\M} \eta(|v|)\left| \tilde{g}(0)\nabla w(0) \cdot v\tilde{p}(0) \right|p(\e_n v)\dd v \dx\leq CB\e_n^2.
		\end{equation*}
		Continuing the computation in \eqref{eq:first term in mathcalR2}, and recalling the symmetry of $q_x(\cdot)$ in \eqref{e.Jactaylor-third derivative}, we note that
		\begin{equation*}
		\int_\M\int_{B_{1}(0)\subseteq\T_x\M} \frac1{\e_n}\eta(|v|)  \tilde{g}(0)\nabla w(0) \cdot v\tilde{p}^2(0)q_x(\e_n v)\dd v \dx    = 0,
		\end{equation*}
		while the remaining term in \eqref{eq:first term in mathcalR2} is directly seen to satisfy
		\begin{equation*}
		\int_\M\int_{B_{1}(0)\subseteq\T_x\M} \eta(|v|)  \tilde{g}(0)\nabla w(0) \cdot v\tilde{p}(0) (\nabla \tilde{p}(0)\cdot v)q_x(\e_n v)\dd v \dx\leq CB\e_n^2.
		\end{equation*}

		For the other term in \eqref{eq:zero term in mathcalR2}, we consider the exact second order Taylor expansion for the function $t \mapsto \rho(\exp_{x}(t\e_n v))$,
		\begin{align*}
		\rho(\exp_x(\e_n v))-\rho(x) -\e_n\nabla \rho (x)\cdot v=\e_n^2\int_{0}^1 (1-t) \langle D^2\rho(\exp_x(t\e_nv )) d (\exp_x)_{t\e_n v }(v)   ,  d (\exp_x)_{t\e_n v }(v) \rangle \dt, 
		\end{align*}
		to obtain
		\begin{align}\label{eq:E left term}
		\begin{split}
		&\int_\M\int_{B_{1}(0)\subseteq\T_x\M}  \frac2{\sigma_\eta \e_n}\eta(|v|)  \tilde{g}(0)\nabla w(0) \cdot v\tilde{p}(0)\tilde{p}(\e_n v)\dd v \dx\\
		& =\int_\M\int_{B_{1}(0)\subseteq\T_x\M} \frac2{\sigma_\eta \e_n}\eta(|v|)  \tilde{g}(0)\nabla w(0) \cdot v\tilde{p}(0)\Bigl(\tilde{p}(0)+\e_n \nabla \tilde{p}(0)\cdot v \Bigr)\dd v \dx 
		\\ & \quad  +  \frac{2\e_n}{\sigma_\eta}\int_0^1(1-t) \mathcal{I}_t \dd t,
		\end{split}
		\end{align}
		where 
		\[ \mathcal{I}_t :=  \int_\M \int_{B_1(0) \subseteq T_x\M } \eta(|v|) g(x) \nabla \overline{u}(x) \cdot v  \rho(x) \langle D^2\rho(\exp_x(t\e_nv )) d (\exp_x)_{t\e_n v }(v)   ,  d (\exp_x)_{t\e_n v }(v) \rangle    \dd v \dx; \]
		it is important to highlight that here we need to use a first order Taylor expansion with exact remainder (and not, for example, a second order expansion with a remainder of order three) because in the class $\mathcal{P}_\M$ we have no control on derivatives of order more than two. 
		
		By symmetry over the variable $v$, 
		\begin{equation*}
		\begin{split}
		\int_\M\int_{B_{1}(0)\subseteq\T_x\M} \frac1{\e_n}\eta(|v|)  \tilde{g}(0)\nabla w(0) \cdot v\tilde{p}^2(0)\dd v \dx=0,\\
		\end{split}
		\end{equation*}
		while a direct computation using the definition of $\sigma_\eta$ reveals that
		\begin{equation*}
		\int_\M\int_{B_{1}(0)\subseteq\T_x\M} \eta(|v|)  \tilde{g}(0)\nabla w(0) \cdot v\tilde{p}(0)\nabla \tilde{p}(0)\cdot v\dd v \dx=\sigma_\eta\int_\M   g(x)\nabla \rho(x)\cdot \nabla \bar{u}(x) \rho(x)\dd x.
		\end{equation*}
		In order to bound the last term in \eqref{eq:E left term}, we bound $\mathcal{I}_t$ for all $t$. To do this, it is useful to introduce a tool from Riemannian geometry that will allow us to carry out a convenient symmetrization. Indeed, we use $\M'$s tangent bundle (see Appendix \ref{app:TangentBundle}), which we denote by $T\M$,  and endow it with its canonical metric and associated volume form. Then, for every $t \in [0,1]$, we introduce the change of variables $\Psi_t: T\M \rightarrow T \M $ given by
		\begin{equation}
		\label{eqn:ChangeVariables}
		(\tilde x , \tilde v ) = \Psi_t( x, v ):= (\exp_x(2t \e_n v), -d (\exp_x)_{2t\e _n v }(v) ). \footnote{In the flat setting, this corresponds to the change of variables $\tilde x = x + 2\e_n t v $, $\tilde v = -v$, and one can easily verify that this change of variables has Jacobian equal to one.}     
		\end{equation}
		The key feature of this map is that, up to the minus sign in the second coordinate, it is equal to the \textit{geodesic flow}, and hence, by Liouville's theorem (see, e.g., \cite[Chapter 3]{do1992riemannian}), it preserves the tangent bundle's volume form. In particular, the pushforward of $T\M$'s volume form by $\Psi_t$ is, again, $T\M's$ volume form. Because of this (see Appendix \ref{app:TangentBundle}), we have
		\begin{align*}
		&\int_\M \int_{B_1(0) \subseteq T_x\M } \eta (|v|) g(x) \rho(x) \nabla \overline{u} (x) \cdot v  \langle D^2\rho(\exp_x(t\e_nv )) d (\exp_x)_{t\e_n v }(v)   ,  d (\exp_x)_{t\e_n v }(v) \rangle  \dd v \dx
		\\ & =   \int_\M \int_{B_1(0) \subseteq T_x\M } \eta (|\tilde v|) g(\tilde x) \rho(\tilde x) \nabla \overline{u} (\tilde x) \cdot \tilde v  \langle D^2\rho(\exp_{\tilde x}(t\e_n\tilde v )) d (\exp_{\tilde x})_{t\e_n \tilde v }(\tilde v)   ,  d (\exp_{\tilde x})_{t\e_n \tilde v }(\tilde v) \rangle  \dd v \dx.
		\end{align*}
		On the other hand, the following identities are straightforward to verify from the definition of $(\tilde x, \tilde v)$:
		\[ |\tilde v | = |v|, \quad \exp_{\tilde x}(t \e_n \tilde v ) = \exp_{x}(t \e_n v ), \quad  d (\exp_x)_{t\e_n v}(v) = -  d (\exp_{\tilde x})_{t\e_n \tilde v}(\tilde v). \]
		Combining the above, we obtain
		\begin{align*}
		&\int_\M \int_{B_1(0) \subseteq T_x\M } \eta (|v|) g(x) \rho(x) \nabla \overline{u} (x) \cdot v  \langle D^2\rho(\exp_x(t\e_nv )) d (\exp_x)_{t\e_n v }(v)   ,  d (\exp_x)_{t\e_n v }(v) \rangle  \dd v \dx
		\\ & =   \int_\M \int_{B_1(0) \subseteq T_x\M } \eta (|v|) g(\tilde x) \rho(\tilde x) \nabla \overline{u} (\tilde x) \cdot \tilde v   \langle D^2\rho(\exp_x(t\e_nv )) d (\exp_x)_{t\e_n v }(v)   ,  d (\exp_x)_{t\e_n v }(v) \rangle  \dd v \dx.
		\end{align*}
		In addition, as in \eqref{eq:geodesic gamma computation}, we have
		\[ | \nabla \overline{u}(\tilde x) \cdot \tilde v   + \nabla \overline{u}( x) \cdot  v | \leq  C \lVert D^2 \overline{u} \rVert_{L^\infty(B_\M(x,\e_n))} \e_n ,   \]
		and, as a consequence, 
		\begin{equation*}
		\begin{split}
		&\Bigl| \int_\M \int_{B_1(0) \subseteq T_x\M } \eta (|v|) g(x) \rho(x) \nabla \overline{u} (x) \cdot v  \langle D^2\rho(\exp_x(t\e_nv )) d (\exp_x)_{t\e_n v }(v)   ,  d (\exp_x)_{t\e_n v }(v) \rangle  \dd v \dx \Bigr|\\
		&\qquad\leq \Bigl|\frac{1}{2}\int_\M \int_{B_{1}(0)\subseteq\T_{x}\M} \eta(|v|)  \langle D^2\rho(\exp_x(t\e_nv )) d (\exp_x)_{t\e_n v }(v)   ,  d (\exp_x)_{t\e_n v }(v) \rangle   \\
		&\qquad\qquad \qquad  \qquad \qquad \quad \nabla \overline{u}( x) \cdot  v  \Bigl(g(x) \rho(x)-g(\exp_x(2\e_nt v)) \rho(\exp_x(2\e_nt v))\Bigr) \dd v \dx\Bigr| + CB \e_n\\
		&\qquad\leq \frac{1}{2}\int_\M \int_{B_{1}(0)\subseteq\T_{x}\M} \eta(|v|) \left| \langle D^2\rho(\exp_x(t\e_nv )) d (\exp_x)_{t\e_n v }(v)   ,  d (\exp_x)_{t\e_n v }(v) \rangle \right|      \\
		& \qquad\qquad \qquad  \qquad \qquad \quad  \Bigl|\nabla \overline{u}(x) \cdot v \Bigr| \Bigl| g(x) \rho(x)-g(\exp_x(2\e_nt v)) \rho(\exp_x(2\e_nt v)) \Bigr| \dd v \dx + CB \e_n\\
        &\qquad\leq \frac{1}{2}\int_\M \sup_{v\in B_{1}(0)}\Bigl|\nabla \overline{u}(x) \cdot v \Bigr| \int_{B_{1}(0)\subseteq\T_{x}\M} \eta(|v|) \left| \langle D^2\rho(\exp_x(t\e_nv )) d (\exp_x)_{t\e_n v }(v)   ,  d (\exp_x)_{t\e_n v }(v) \rangle \right|      \\
		& \qquad\qquad \qquad  \qquad \qquad \quad   \Bigl| g(x) \rho(x)-g(\exp_x(2\e_nt v)) \rho(\exp_x(2\e_nt v)) \Bigr| \dd v \dx + CB \e_n\\
        &\qquad\leq C\sqrt{\int_\M\left( \sup_{v\in B_{1}(0)}\Bigl|\nabla \overline{u}(x) \cdot v \Bigr|\right)^2\dx }\sqrt{\int_\M \Bigl(\int_{B_{1}(0)\subseteq\T_{x}\M} \eta(|v|)  \Bigl| g(x) \rho(x)-g(\exp_x(2\e_nt v)) \Bigr| \dd v\Bigr)^2 \dx} + CB \e_n\\
		&\qquad\leq  C B \e_n \left(\sqrt{\int_{\M}\left(\int_{B_{1}(0)\subseteq \T_x\M} \eta(|v|) \frac{|g(x)-g(x+2t\e_n v)|}{\e_n} \dd v\right)^2\dx} + C\right)\leq C B ( [g ]_{1, \e_n} + C) \e_n .
		\end{split}
		\end{equation*}
        
		Putting together all the above computations, we obtain the following estimate:
		\begin{multline}\label{e.2}
		\Biggl|\E\left[ \frac2{\sigma_\eta \e_n^2}\avsum_{x\in\X_n, y \in \X_n } \eta_{\e_n}(d(x,y))  g(x)\nabla \overline{u}(x) \cdot \log_x(y) \right]\\
		-2\int_\M   g(x)\nabla \rho(x)\cdot \nabla \bar{u}(x) \rho(x)\dd x\Biggr|\leq CB(  [g]_{1,\e_n} +C)\e_n^2.
		\end{multline}
		In other words, up to an error of order $C([g]_{1,\e_n} +C)B\e_n^2$, the expectation of \eqref{eqn:FirstOrderTaylor} is equal to
		\[2\int_\M   g(x)\nabla \rho(x)\cdot \nabla \bar{u}(x) \rho(x)\dd x. \]
		
		\noindent \textbf{Step 3: Second order Taylor term.} Next, we consider the contribution of the second order Taylor term of $\overline{u}$, which can be written as:
		\begin{equation}
		\frac1{\sigma_\eta \e_n^2}\avsum_{x\in\X_n, y \in \X_n } \eta_{\e_n}(d(x,y))  g(x) \langle D^2\overline{u}(x) \log_x(y),  \log_x(y) \rangle
		\label{eqn:SecondOrderTaylor}
		\end{equation}
		and whose expectation takes the form
		\begin{align}
		\label{eq:second order taylor term main term}
		\begin{split}
		\E & \left[ \frac1{\sigma_\eta \e_n^2}\avsum_{x\in\X_n, y \in \X_n } \eta_{\e_n}(d(x,y))  g(x) \langle D^2\overline{u}(x) \log_x(y),  \log_x(y) \rangle \right]
		\\ & =  \int_\M\int_{B_{1}(0)\subseteq\T_x\M}  \frac1{\sigma_\eta}\eta(|v|)  \langle D^2\overline{u}(x) v,  v \rangle \tilde g(0) \tilde{p}(0)\tilde{p}(\e_n v)J_x(\e_n v)\dd v \dx.
		\end{split}
		\end{align}
		Now, thanks to \eqref{e.Jactaylor}, we can focus on estimating the above integral when we substitute the term $J_x(\e_n v)$ with $1$. In turn, after considering a simple first order Taylor expansion for $\rho$, we can conclude that, up to a term of order $CB\e_n^2$, the above expectation is equal to 
		\[   \int_\M\int_{B_{1}(0)\subseteq\T_x\M}  \frac1{\sigma_\eta}\eta(|v|)  \langle D^2\overline{u}(x) v,  v \rangle \tilde g(0) \tilde{p}(0)(\tilde p(0) + \nabla \rho(x) \cdot v  ) \dd v \dx. \]
		From a direct computation using the definition of $\sigma_\eta$, we conclude that
		\[   \int_\M\int_{B_{1}(0)\subseteq\T_x\M}  \frac1{\sigma_\eta}\eta(|v|)  \langle D^2\overline{u}(x) v,  v \rangle \tilde g(0) (\tilde{p}(0))^2\dd v \dx = \int_\M \Delta \overline{u}(x) g(x) \rho^2(x) \dx  ,\]
		where $\Delta$ denotes the standard Laplace-Beltrami operator on $\M$ (i.e., $\Delta = \divergence \nabla$), while 
		\[   \int_\M\int_{B_{1}(0)\subseteq\T_x\M}  \frac1{\sigma_\eta}\eta(|v|)  \langle D^2\overline{u}(x) v,  v \rangle \tilde g(0) \tilde{p}(0) \nabla \rho(x) \cdot v  \dd v \dx =0 , \]
		since each inner integral in the latter expression is equal to zero by symmetry.
		
		Putting together all the above computations, we obtain the following estimate:
		\begin{multline}\label{e.3}
		\Biggl|\E\left[  \frac1{\sigma_\eta \e_n^2}\avsum_{x\in\X_n, y \in \X_n } \eta_{\e_n}(d(x,y))  g(x) \langle D^2\overline{u}(x) \log_x(y),  \log_x(y) \rangle \right]\\
		- \int_\M \Delta \overline{u}(x) g(x) \rho^2(x) \dx \Biggr|\leq CB \e_n^2.
		\end{multline}

		\noindent \textbf{Step 4: Taylor expansion remainder.} Finally, we consider the contribution of the remainder term $\mathcal{R}$, which can be written as 
		\[ \frac{2}{\e_n^2\sigma_\eta}\avsum_{x,y \in \X_n} \eta_{\e_n}(d(x,y)) g(x) \mathcal{R}(x,y).\]
		By similar arguments as in the previous steps, to estimate the expectation of this term it would suffice to estimate the integral
		\begin{align*}
		\mathcal{II}_t:= \int_\M \int_{B_1(0) \subseteq T_x \M} \Bigl(\eta(|v|) g(x) & (\rho(x))^2  \sum_{i,j,k} \partial^3_{ijk} \overline{u}(\exp_x(tv)) 
		\\ & ((d\exp_x)_{tv}(v))_i ((d\exp_x)_{tv}(v))_j ((d\exp_x)_{tv}(v))_k \Bigr) \dd v \dx 
		\end{align*}
		for each $t\in [0,1]$ and show that it is of order $([g]_{1, \e_n} + C)B \e_n$. By following a similar computation as in the change of variables in \eqref{eqn:ChangeVariables} for the first order Taylor term, and symmetrizing the above integral, we obtain 
		\[  |\mathcal{II}_t| \leq C([g]_{1, \e_n} + C) B \e_n. \]
		From this we deduce
		\begin{equation}
		\left|\E\left[ \frac{2}{\e_n^2\sigma_\eta}\avsum_{x,y \in \X_n} \eta_{\e_n}(d(x,y)) g(x) \mathcal{R}(x,y)\right]\right| \leq C([g]_{1, \e_n} + C) B \e_n^2. 
		\label{e.4}
		\end{equation}
		
		\noindent \textbf{Conclusion:} Putting together the estimates in all the above steps, we conclude that
		\begin{align*}
		\Bigl| \E[ \avsum_{x \in \X_n } g(x) \mathcal{L}_{\e_n, n} \overline{u}(x)  ] + 2 \int_{\M} g(x) \nabla \rho(x) \cdot \nabla \overline{u}(x) \rho(x) \dx &+  \int_{\M} g(x) \rho^2(x) \Delta \overline{u}(x) \dx \Bigr|
		\\ & \leq C B \e_n^2 ,
		\end{align*}  
		whereas a direct computation gives
		\begin{align*}
		\E[ \avsum_{x \in \X_n } g(x) \Delta_\rho \overline{u}(x)] & = \int_\M g(x)  \Delta_\rho(x) \rho(x) \dx 
		\\& = -  2\int_\M g(x) \nabla \rho(x) \cdot \nabla \overline{u}(x) \rho(x)\dx - \int_\M g(x) \rho^2(x) \Delta \overline{u}(x) \dx  . 
		\end{align*}
		Inequality \eqref{eqn:BiasBound} follows from these two estimates.
	\end{proof}

	After carrying out the above bias analysis, we focus on obtaining concentration bounds for \eqref{eqn:InnerProductDiffLapl}. We consider two cases depending on the regularity of $g$. In our first result, we assume $g$ to be uniformly bounded by an order one constant and to have a bounded gradient.

	\begin{proposition}[Concentration bounds for regular test functions]
		\label{Prop:ConcenSmoothg}
		Under the same assumptions on $\overline{u}$ as in Proposition \ref{prop:bias}, for any $g: \M \rightarrow \R$ with~$\int_\M |\nabla g|^4 \dx \leq C$, and $\|g\|_{L^\infty(\M)}\leq C$, the following holds: for any~$t>0,$
		\begin{multline*}
		\P\Biggl[\left|\avsum_{x\in\X_n} g(x)\biggl(\Delta_\rho \overline{u}(x) - \mathcal{L}_{\e_n,n} \overline{u}(x) \biggr)\right| >t+C\e_n^2B\biggr]\\
		\le 2\exp \biggl(- \frac{cn\e_n^d t^2}{B^2+ \e_n t\|D^3\overline{u}\|_{L^\infty(\M)}+ t \|D^2\overline{u}\|_{L^\infty(\M)}+\frac{t\|D\overline{u}\|_{L^\infty(\M)}}{\e_n}} \biggr)\,,
		\end{multline*}
		with $B$ as in \eqref{e.Bdef}.
         Here $C>0$ and $0<c<1$ are constants that only depend on the parameters determining the families $\MM$ and $\mathcal{P}_{\M}$. 
	\end{proposition}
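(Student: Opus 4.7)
The plan is to combine the bias estimate of Proposition~\ref{prop:bias} with Bernstein-type concentration inequalities applied to the U-statistic arising from $\mathcal{L}_{\e_n,n}\overline u$. First I would verify that $[g]_{1,\e_n}$ is bounded under the given hypotheses: applying the mean value theorem on the geodesic $t\mapsto\exp_x(2t\e_n v)$ gives $|g(x)-g(\exp_x(2t\e_n v))|\leq 2t\e_n|v|\sup_{B_\M(x,2\e_n)}|\nabla g|$, and hence by Jensen
\[
[g]_{1,\e_n}^2\leq C\int_\M\Bigl(\sup_{B_\M(x,2\e_n)}|\nabla g|\Bigr)^2\dx\leq C\|\nabla g\|_{L^4(\M)}^2\leq C.
\]
Therefore Proposition~\ref{prop:bias} yields $\bigl|\E[\avsum_{\X_n}g(\Delta_\rho\overline u-\mathcal L_{\e_n,n}\overline u)]\bigr|\leq CB\e_n^2$, which accounts for the additive $CB\e_n^2$ appearing on the left-hand side of the claim.

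Next I would split the centered statistic as $Z-\E Z=(Z_1-\E Z_1)-(Z_2-\E Z_2)$, with $Z_1=\avsum_{\X_n}g(x)\Delta_\rho\overline u(x)$ an empirical average of bounded i.i.d.\ variables, and $Z_2=\avsum_{\X_n}g(x)\mathcal L_{\e_n,n}\overline u(x)$. Symmetrizing over the index pair, $Z_2$ becomes the order-two U-statistic
\[
Z_2=\frac{1}{\sigma_\eta n^2\e_n^{d+2}}\sum_{i,j}\eta\bigl(|x_i-x_j|/\e_n\bigr)\bigl(g(x_i)-g(x_j)\bigr)\bigl(\overline u(x_i)-\overline u(x_j)\bigr).
\]
For $Z_1$, Bernstein's inequality applies with $\var(g\Delta_\rho\overline u)\leq CB^2$ (the $L^2$ norms of the first and second derivatives of $\overline u$ being absorbed in $B$) and $\|g\Delta_\rho\overline u\|_\infty\leq C\|D^2\overline u\|_\infty$, producing the tail factor $\exp\bigl(-cnt^2/(B^2+t\|D^2\overline u\|_\infty)\bigr)$, which yields the $t\|D^2\overline u\|_\infty$ contribution in the denominator of the target bound.

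The main work is the analysis of $Z_2$. I would Taylor expand $\overline u(y)-\overline u(x)$ as in~\eqref{e.taylorexpansion}, writing $Z_2=Z_2^{(1)}+Z_2^{(2)}+Z_2^{(3)}+\mathcal R_1$ according to first-order, second-order, and remainder terms of the Taylor expansion (the geometric correction $\mathcal R_1$ is handled as in Step~1 of the proof of Proposition~\ref{prop:bias}). Each $Z_2^{(k)}$ is a U-statistic with symmetric kernel $K_k$ satisfying, uniformly in $(x,y)$ with $|x-y|\leq\e_n$,
\[
\|K_1\|_\infty\leq C\|D\overline u\|_\infty/\e_n^{d+1},\quad \|K_2\|_\infty\leq C\|D^2\overline u\|_\infty/\e_n^d,\quad \|K_3\|_\infty\leq C\|D^3\overline u\|_\infty\,\e_n/\e_n^d,
\]
using the trivial estimate $|g(x)-g(y)|\leq 2\|g\|_\infty\leq C$. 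Combining $\int\eta^2(|x-y|/\e_n)\rho(y)\dy=O(\e_n^d)$ with the pointwise bounds embedded in $B$, the variance of each kernel is at most $CB^2/\e_n^d$. Applying Hoeffding's decomposition of each $Z_2^{(k)}$ into a linear statistic plus a degenerate U-statistic, then using Bernstein's inequality on the linear piece and a Bernstein-type inequality for degenerate U-statistics (Arcones--Gin\'e, or a decoupling argument), the three sup bounds above, multiplied through by $\e_n^d$, contribute respectively the $\e_n t\|D^3\overline u\|_\infty$, $t\|D^2\overline u\|_\infty$, and $t\|D\overline u\|_\infty/\e_n$ terms in the denominator of the target.

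The main obstacle will be the variance estimate for $Z_2^{(1)}$: since $g$ is only assumed to have an $L^4$ gradient, a naive bound via $\|\nabla g\|_\infty$ is unavailable. The variance must instead be extracted by the tangent-bundle symmetrization argument already used in Step~2 of the proof of Proposition~\ref{prop:bias} (via Liouville's theorem for the geodesic flow), so that the $O(\e_n)$ gain from first-order antisymmetric cancellation against $y-x$ is preserved and the resulting variance is controlled by quantities already absorbed into $B^2$. Once this is done, the degenerate part of the Hoeffding decomposition is of lower order thanks to the assumption $n\e_n^d\gg 1$ above the connectivity threshold, the linear part dominates, and the Bernstein tail bound above matches the claim.
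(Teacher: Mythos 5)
Your overall plan is the same as the paper's: verify $[g]_{1,\e_n}\leq C$ so that Proposition~\ref{prop:bias} supplies the additive $CB\e_n^2$, then control the centered statistic by decomposing $\mathcal{L}_{\e_n,n}\overline u$ according to the Taylor expansion~\eqref{e.taylorexpansion} and applying a Bernstein-type $U$-statistic inequality (the paper's Lemma~\ref{l.boundeddiff}). Your $i\leftrightarrow j$ symmetrization of $Z_2$ into the kernel $\eta\bigl(|x_i-x_j|/\e_n\bigr)\bigl(g(x_i)-g(x_j)\bigr)\bigl(\overline u(x_i)-\overline u(x_j)\bigr)$ is algebraically valid and captures the same cancellation as the paper's symmetrization, which is applied only to the first-order Taylor term $g(x)\nabla\overline u(x)\cdot\log_x(y)+g(y)\nabla\overline u(y)\cdot\log_y(x)$; both routes produce the identical quartet of terms in the exponent's denominator, with the first-order piece giving the dominant $t\|D\overline u\|_\infty/\e_n$ and the geometric remainder handled exactly as in Proposition~\ref{prop:bias}.

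The one genuine misstep is in your final paragraph, where you claim the variance estimate for $Z_2^{(1)}$ ``must be extracted by the tangent-bundle symmetrization argument\ldots via Liouville's theorem for the geodesic flow.'' That is not what happens here. In the paper, the Liouville/geodesic-flow change of variables $\Psi_t$ appears only in the \emph{bias} estimate of Proposition~\ref{prop:bias}, where it symmetrizes integrals over $T\M$ so that odd-in-$v$ leading terms vanish in expectation. For the \emph{variance} of the first-order term in Proposition~\ref{Prop:ConcenSmoothg}, the relevant cancellation is already supplied by the $i\leftrightarrow j$ symmetrization you performed: after writing the kernel as $\frac{1}{\sigma_\eta\e_n^2}(g(x)-g(y))\nabla\overline u(x)\cdot\log_x(y)$ up to an $O(\|D^2\overline u\|_{L^\infty(B_\M(x,\e_n))})$ error (using~\eqref{eq:geodesic gamma computation}), one bounds $|g(x)-g(y)|\leq C\e_n\sup_{B_\M(x,\e_n)}|\nabla g|$ and then applies Cauchy--Schwarz together with $\int_\M|\nabla g|^4\dx\leq C$ to absorb the result into $B_\mathcal{K}\leq CB^2$. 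No further symmetrization over the tangent bundle is needed, and invoking Liouville's theorem here is a red herring; your own kernel form $(g(x)-g(y))(\overline u(x)-\overline u(y))$ already exposes the required factor of $\e_n\sup|\nabla g|$, so the ``main obstacle'' you identify dissolves once you simply apply the pointwise estimate plus the $L^4$ hypothesis. Apart from this misattribution (and some imprecision in the scaling bookkeeping of the $\|K_k\|_\infty$ bounds, which do ultimately produce the right denominator), the proposal is sound and tracks the paper's argument.
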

	\begin{proof}
		Under the stated additional assumptions on $g$, it is straightforward to show that $[g]_{1,\e_n} \leq C$. Thanks to this, Proposition \ref{prop:bias} implies
		\begin{equation*}
		\left|  \E\left[ \avsum_{x \in \X_n } g(x) (\Delta_\rho \overline{u}(x) - \mathcal{L}_{\e_n, n} \overline{u}(x))\right]  \right| \leq CB\e_n^2,
		\end{equation*} 
		and it will thus suffice to prove that
		\begin{multline}
		\P\Biggl[\left|\avsum_{x\in\X_n} g(x)\biggl(\Delta_\rho \overline{u}(x) - \mathcal{L}_{\e_n,n} \overline{u}(x) \biggr) - \E\left[  \avsum_{x\in\X_n} g(x)\biggl(\Delta_\rho \overline{u}(x) - \mathcal{L}_{\e_n,n} \overline{u}(x) \biggr) \right]\right| >t+CB\e_n^2\biggr]\\
		\le 2\exp \biggl(- \frac{cn\e_n^d t^2}{B^2+ \e_n t\|D^3\overline{u}\|_{L^\infty(\M)}+ t \|D^2\overline{u}\|_{L^\infty(\M)}+\frac{t\|D\overline{u}\|_{L^\infty(\M)}}{\e_n}} \biggr)\,.
		\label{eq:ConcentrationBounds}
		\end{multline}
		To prove this inequality, we use a similar decomposition as in the proof of Proposition \ref{prop:bias} and focus on obtaining concentration bounds for each of the terms. 
		
		\medskip
		
		\noindent \textbf{Step 1: Control of geometric remainder~$\mathcal{R}_1$.}
		Recall that from the first step in the proof of Proposition \ref{prop:bias} we have	
		\begin{multline*}
		\avsum_{x\in\X_n} g(x) \mathcal{R}_1(x) = 
		\frac{2}{\sigma_\eta \e_n^2}\avsum_{x \in \X_n} \avsum_{y \in \X_n}  g(x) \eta'_{\e_n}(d(x,y)) \frac{s(x,y)d(x,y)^3}{\e_n} (\overline{u}(y)  - \overline{u}(x)) + \mathcal{O}(B \e_n^2),
		\end{multline*}
		and so it will suffice to prove a concentration bound for the first term on the right hand side. We use the concentration bound for U-statistics in Lemma \ref{l.boundeddiff} with $-\eta'$ instead of $\eta$, and with 
		\[\mathcal{K}(x,y)= \frac{1}{\e_n^3} g(x) s(x,y) d(x,y)^3 (\overline{u}(y) - \overline{u}(x)).  \]
		We can easily verify that $\mathcal{B}_\mathcal{K} =  C\e_n^2 B^2,C_{\mathcal{K}} = \e_n \|\nabla \overline{u}\|_{L^\infty(\M)}$ and thus conclude that, with probability at least 
        $1-C\exp\left(-\frac{cn\e_n^dt^2}{\e_n^2 B^2+t\e_n \|\nabla \overline{u}\|_{L^\infty(\M)} }\right),$
        we have
		\begin{multline*}
		\Bigg|\frac{1}{\e_n^2}\avsum_{x \in \X_n} \avsum_{y \in \X_n}  g(x) \eta'_{\e_n}(d(x,y)) \frac{s(x,y)d(x,y)^3}{\e_n} (\overline{u}(y)  - \overline{u}(x))\\
		-\E\left[\frac{1}{\e_n^2}\avsum_{x \in \X_n} \avsum_{y \in \X_n}  g(x) \eta'_{\e_n}(d(x,y)) \frac{s(x,y)d(x,y)^3}{\e_n} (\overline{u}(y)  - \overline{u}(x))\right]\Bigg|\leq t.
		\end{multline*}

		\noindent \textbf{Step 2: Control of first order Taylor term.}
		Next, we derive concentration bounds for the difference between
		\begin{equation}
		\frac2{\sigma_\eta \e_n^2}\avsum_{x\in\X_n, y \in \X_n } \eta_{\e_n}(d(x,y))  g(x) \langle \nabla \overline{u}(x),  \log_x(y) \rangle
		\end{equation}		
		and its expectation. However, we do not apply Lemma \ref{l.boundeddiff} directly and instead (and quite importantly) first symmetrize this term to write it as 
		\begin{equation}\label{eq:symmetrization}
		\frac1{\sigma_\eta \e_n^2}\avsum_{x\in\X_n, y \in \X_n } \eta_{\e_n}(d(x,y)) ( g(x)\nabla \overline{u}(x) \cdot \log_x(y) +  g(y)\nabla \overline{u}(y) \cdot \log_y(x)).
		\end{equation}
		We then seek to apply the concentration inequality in Lemma~\ref{l.boundeddiff}, with the choice
		\begin{equation}\label{eq-def:K}
		\mathcal{K}(x,y)  \defeq  \frac{1}{\sigma_\eta\e_n^2}\left(g(x)\nabla \overline{u}(x) \cdot \log_x(y) +  g(y)\nabla \overline{u}(y) \cdot \log_y(x)\right)\,.  
		\end{equation}
		For this choice of kernel, and using the estimate in \eqref{eq:geodesic gamma computation}, we deduce
		\begin{equation*}
		\begin{aligned}
		& \int_\M\left(\int_{B_{1}(0)\subseteq T_x\M} \eta(|v|)\mathcal{K}(x,\exp_x(\e_n v))\dd v\right)^2\dx\\
		&\quad   \leq \frac{1}{\e_n^4}\int_\M \Biggl(\int_{B_{1}(0)\subseteq T_x\M} \eta_{\e_n}(|v|)\Bigl(g(x)\nabla \overline{u}(x) \cdot v \\
		&\qquad\qquad\qquad\qquad\qquad\qquad +  g(\exp_x(\e_n v))\nabla \overline{u}(\exp_x(\e_n v)) \cdot \log_{\exp_x(\e_n v)}(x)\Bigr)\dd v \Biggr)^2\dx\\
		&\quad   \leq \frac{1}{\e_n^4}\int_\M\left(\int_{B_{1}(0)\subseteq T_x\M} \eta_{\e_n}(|v|)\left(g(x)\nabla \overline{u}(x) \cdot v -  g(\exp_x(v))\nabla \overline{u}(x) \cdot v\right)\dd v\right)^2\dx +CB^2\\
		&\quad   \leq \frac{1}{\e_n^4}\int_\M\left(\int_{B_{1}(0)\subseteq T_x\M} \eta_{\e_n}(|v|)\left|g(x)-g(\exp_x(v))\right| \left|\nabla \overline{u}(x) \cdot v \right|\dd v\right)^2\dx+CB^2\\
		&\quad   \leq \frac{1}{\e_n^2}\int_\M\int_{B_{1}(0)\subseteq T_x\M} \eta^2(|v|)\left(g(x)-g(\exp_x(\e_n v))\right)^2 \left|\nabla \overline{u}(x) \cdot v \right|^2\dd v\dx+CB^2
		\leq CB^2,
		\end{aligned}
		\end{equation*}
		where the last inequality follows from Cauchy-Schwarz inequality and the fact that the $L^4(\M)$ norm of the gradient of $g$ is bounded by a constant. 
        Similar to the above computation for variance, we see that
		\begin{equation*}
		\sup_{x\in\M}\sup_{v\in B_{1}(0) \subseteq T_x\M } \left|\mathcal{K}(x,\exp_x(\e_n v))\right|
        \leq \frac{C\|\nabla \overline{u}\|_{L^\infty(\M)}}{\e_n}.
		\end{equation*}
		We conclude that
		\begin{align}
		\begin{split}
		\P\Biggl[ & \Biggl| \frac2{\sigma_\eta \e_n^2}\avsum_{x\in\X_n, y \in \X_n } \eta_{\e_n}(d(x,y))  g(x) \langle \nabla \overline{u}(x),  \log_x(y) \rangle \\
		&\qquad  -\E\left[ \frac2{\sigma_\eta \e_n^2}\avsum_{x\in\X_n, y \in \X_n } \eta_{\e_n}(d(x,y))  g(x) \langle \nabla \overline{u}(x),  \log_x(y) \rangle \right]\Biggr| \geqslant t \Biggr]	\\
        &\qquad \qquad \le  C \exp\biggl( -  \frac{cn\e_n^d t^2}{B^2+ \frac{Ct\|\nabla \overline{u}\|_{L^\infty(\M)}}{\e_n}}\biggr)\,.
		\end{split}
		\end{align}

		\medskip
		
		\noindent \textbf{Step 3: Second and third order Taylor terms.} For the second and third order Taylor terms, we observe that
		\[  \frac{2}{\e_n^2\sigma_\eta}\avsum_{x, y \X_n}g(x) \eta_{\e_n}(d(x,y)) \left(\frac{1}{2}\langle D^2\overline{u}(x) \log_x(y),  \log_x(y) \rangle    +  \mathcal{R}(x,y)\right)    \]
		can be written as a U-statistic as in \eqref{def:UStatistic} for a kernel $\mathcal{K}$ with 
		\[ B_\mathcal{K} \leq C B^2   , \quad C_{\mathcal{K}} \leq  C\|D^2\overline{u}\|_{L^\infty(\M)}+C\e_n\|D^3\overline{u}\|_{L^\infty(\M)}.  \]
		Hence, 
		\begin{align}\label{eq:1}
		\begin{split}
		\P\Bigl[  & \biggl| \frac{2}{\e_n^2\sigma_\eta}\avsum_{x, y \X_n}g(x) (\frac{1}{2}\langle D^2\overline{u}(x) \log_x(y),  \log_x(y) \rangle    +  \mathcal{R}(x,y))  \\
		&-   \E\left[ \frac{2}{\e_n^2\sigma_\eta}\avsum_{x, y \X_n}g(x) (\frac{1}{2}\langle D^2\overline{u}(x) \log_x(y),  \log_x(y) \rangle    +  \mathcal{R}(x,y))  \right] \biggr| \geqslant t \Bigr]	
		\\ & \qquad  \le  C \exp\biggl( -  \frac{cn\e_n^d t^2}{B^2+ \e_n t\|D^3\overline{u}\|_{L^\infty(\M)}+ t \|D^2\overline{u}\|_{L^\infty(\M)}}\biggr)\,.
		\end{split}
		\end{align}
		\medskip        
		
		\noindent \textbf{Conclusion:}
		Using Bernstein's inequality (i.e., Lemma \ref{lemm:Bernstein}), we conclude that
		\begin{align*}
		\P\Bigl[   \biggl| \avsum_{x\in\X_n} g(x) \Delta_\rho \overline{u}(x)   
		-   \E\left[ \avsum_{x\in\X_n} g(x) \Delta_\rho \overline{u}(x)  \right] \biggr| \geqslant t \Bigr] \\
        \leq   C \exp\biggl( -  \frac{cn\e_n^d t^2}{B^2+ \e_n t\|D^3\overline{u}\|_{L^\infty(\M)}+ t \|D^2\overline{u}\|_{L^\infty(\M)}+\frac{t\|D\overline{u}\|_{L^\infty(\M)}}{\e_n}}\biggr).
		\end{align*}
		Putting together the estimates in the previous steps and the above bound, 
		we deduce \eqref{eq:ConcentrationBounds} and with it we complete the proof.

	\end{proof}

	Next, we adjust the previous result to consider test functions $g$ that are rescaled versions of indicators of sets as in the multiscale Poincaré inequality in Proposition \ref{l.msp}. In contrast to the case considered previously, when $g$ is a rescaled version of an indicator function the term $[g]_{1, \e_n}$ cannot be bounded by an order one constant. This affects both the bias and variance of \eqref{eqn:InnerProductDiffLapl}. Our analysis will thus have to rely on a much more careful handling of the different error terms that appear in our calculation.

	\begin{proposition}[Concentration bounds for rescaled indicator functions of cells]
		\label{lem:pointwise convergence}
		Let $\cu= \cu_{p,j,\upsilon}^m$ be a cell as in section \ref{ss.graphpoincare} and let $\ell = 3^{p-m}$. Let $g= \frac{\indc_{\cu}}{\rho(\cu)}$. There exist constants~$C>0$ and $0<c<1$ depending on $\eta,$ $\MM$ and $\mathcal{P}_\M$ such that
		\begin{multline*}
		\P\Biggl[\left|\avsum_{x\in\X_n} g(x)\biggl(\Delta_\rho \overline{u}(x) - \mathcal{L}_{\e_n,n} \overline{u}(x) \biggr)\right| \geq C\frac{\e_n^2}{\ell}(B+1)\biggr]
		\\
        \leq Cn\exp \Biggl(- \frac{cn\e_n^{d+4}B^2}{B^2 +\frac{\e_n^2}{\e_n}\|\nabla \overline{u}\|_{L^\infty(\M)}+\e_n^2\|D^2\overline{u}\|_{L^\infty(\M)}+\e_n^3\|D^3\overline{u}\|_{L^\infty(\M)}  } \Biggr)\,,
		\end{multline*}
		with $B$ as in \eqref{e.Bdef}.
	\end{proposition}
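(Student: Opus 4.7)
The proof will follow the template of Proposition~\ref{Prop:ConcenSmoothg}: decompose $\Delta_\rho \overline{u}(x) - \mathcal{L}_{\e_n,n}\overline{u}(x)$ via the exact Taylor expansion~\eqref{e.taylorexpansion}, estimate the expectation of $\avsum_{x\in\X_n} g(x)(\Delta_\rho\overline{u}(x) - \mathcal{L}_{\e_n,n}\overline{u}(x))$ using Proposition~\ref{prop:bias}, and control fluctuations by writing each Taylor piece (after symmetrization via the Liouville change of variables~\eqref{eqn:ChangeVariables}) as a $U$-statistic amenable to the Bernstein-type bound of Lemma~\ref{l.boundeddiff}. The new challenge relative to Proposition~\ref{Prop:ConcenSmoothg} is that $g=\indc_\cu/\rho(\cu)$ is not smooth, so the seminorm $[g]_{1,\e_n}$ from~\eqref{eqn:DefNonLocalTV} is no longer $\mathcal{O}(1)$. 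The structural observation that saves the argument is that $g$ jumps only across $\partial\cu$: the difference $g(x)-g(\exp_x(2t\e_n v))$ vanishes unless the geodesic segment of length $\lesssim\e_n$ crosses $\partial\cu$, which happens only for $x$ in the $\mathcal{O}(\e_n)$-tubular neighborhood of $\partial\cu$, a set of $\dx$-measure $\lesssim \ell^{d-1}\e_n$, on which $|g|\lesssim \ell^{-d}$.

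For the bias, the plan is to revisit the proof of Proposition~\ref{prop:bias} and isolate precisely which terms carry the factor $[g]_{1,\e_n}$. The second-order Taylor piece and the geometric remainder $\mathcal{R}_1$ contribute only $CB\e_n^2 \lVert g\rVert_{L^1(\M)}\lesssim CB\e_n^2$, since they are driven by the mass of $g$ rather than its variation; whereas the first-order piece and the third-order Taylor remainder pick up the factor $[g]_{1,\e_n}$ through the Liouville symmetrization. Using the boundary-layer observation,
\[
\int_\M \int_{B_1(0)} \eta(|v|)\, \sup_{t\in [0,1]}\bigl\lvert g(x)-g(\exp_x(2t\e_n v))\bigr\rvert\, dv\, dx \;\lesssim\; \frac{\ell^{d-1}\e_n}{\ell^d} \;=\; \frac{\e_n}{\ell}\,,
\]
which, combined with the additional $\e_n B$ factor coming from $\nabla\overline{u}$ in the symmetrized integrand, yields the desired bias bound $\lvert \E[\avsum_{x\in\X_n} g(x)(\Delta_\rho\overline{u}(x)-\mathcal{L}_{\e_n,n}\overline{u}(x))]\rvert \leq C(B+1)\e_n^2/\ell$.

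For the concentration step, each symmetrized Taylor piece is rewritten as a $U$-statistic with kernel $\mathcal{K}$, and Lemma~\ref{l.boundeddiff} is applied to estimate its variance proxy $\mathcal{B}_\mathcal{K}$ and its uniform bound $C_\mathcal{K}$. The crucial point is that the variance proxy is the $L^2$-norm of the \emph{symmetrized} kernel, so the same boundary-layer restriction used for the bias localizes the integration to the $\e_n$-tubular neighborhood of $\partial\cu$, producing $\mathcal{B}_\mathcal{K}\lesssim B^2/\ell^2$ after the $\ell^{-d}$ normalization of $g$ is balanced against the $\ell^{d-1}\e_n$ boundary volume and the $\e_n$ from the gradient; the uniform bound gives $C_\mathcal{K}\lesssim \lVert\nabla\overline{u}\rVert_\infty/(\e_n\ell)$ (and analogously $\lVert D^2\overline{u}\rVert_\infty/\ell$ and $\e_n\lVert D^3\overline{u}\rVert_\infty/\ell$ for the second- and third-order pieces). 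Substituting the threshold $t\sim \e_n^2(B+1)/\ell$ into the Bernstein bound $\exp(-cn\e_n^d t^2/(\mathcal{B}_\mathcal{K} + tC_\mathcal{K}))$ and cancelling powers of $\ell$ produces the stated stretched exponential $\exp(-cn\e_n^{d+4}B^2/(B^2 + \e_n\lVert\nabla\overline{u}\rVert_\infty + \e_n^2\lVert D^2\overline{u}\rVert_\infty + \e_n^3\lVert D^3\overline{u}\rVert_\infty))$; a union bound over the three Taylor pieces and the geometric remainder contributes the $Cn$ prefactor.

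The main obstacle will be the delicate bookkeeping of $\ell$-powers. One must verify that both the kernel-squared variance and the uniform kernel bound exhibit the correct cancellation of the $\ell^{-d}$ normalization of $g$ against the volume factors from the integration domains, so that the final exponent is \emph{$\ell$-independent} as stated; a naive estimate that does not exploit the Liouville symmetrization produces exponents that degrade with $\ell$ and fails in the regime of interest $\ell\sim\e_n$ used in the multiscale Poincar\'e inequality~\ref{l.msp}. The symmetrization must moreover be performed on every piece where $g$ enters asymmetrically, including the third-order Taylor remainder; otherwise the boundary contribution from that remainder contains a residual $1/\ell$ that pollutes the final exponent.
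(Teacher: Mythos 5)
Your plan for the bias term is sound, and your boundary-layer observation --- that $g(x)-g(\exp_x(2t\e_n v))$ is supported in the $\mathcal{O}(\e_n)$-tubular neighborhood of $\partial\cu$, of measure $\lesssim \ell^{d-1}\e_n$, and has size $\lesssim \ell^{-d}$ there --- is exactly the right observation for the expectation estimate. The paper arrives at the same bias bound (in fact your direct $L^1$-in-$x$ computation is cleaner than the stated reduction to the $L^2$-in-$x$ quantity $[g]_{1,\e_n}$, which is not obviously $\leq C/\ell$ for this $g$ at small scales). The gap is in the concentration step.

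The variance proxy you claim, $\mathcal{B}_\mathcal{K}\lesssim B^2/\ell^2$, is not correct for the symmetrized first-order kernel. Localizing to $\partial\cu_{\e_n}$ does not save you the way it saved the bias. On the boundary layer the $v$-averaged kernel is $\sim \lVert\nabla\overline u\rVert_\infty/(\rho(\cu)\e_n)\sim \lVert\nabla\overline u\rVert_\infty/(\ell^d\e_n)$, with no further cancellation over $v$ (you integrate $\eta(|v|)\nabla\overline u(x)\cdot v$ over a spherical cap, not the full ball). Squaring this and integrating over the boundary layer of measure $\sim \ell^{d-1}\e_n$ gives
\[
\mathcal{B}_\mathcal{K} \;\gtrsim\; \frac{\ell^{d-1}\e_n\,\lVert\nabla\overline u\rVert_\infty^2}{\ell^{2d}\e_n^{2}} \;=\; \frac{\lVert\nabla\overline u\rVert_\infty^2}{\ell^{d+1}\e_n}\,,
\]
which exceeds $B^2/\ell^2$ by a factor $\gtrsim (\ell^{d-1}\e_n)^{-1}$. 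Feeding this into Lemma~\ref{l.boundeddiff} with $t\sim \e_n^2 B/\ell$ produces an exponent $\sim n\e_n^{d+5}\ell^{d-1}B^2/\lVert\nabla\overline u\rVert_\infty^2$, which at the finest scale $\ell\sim\e_n$ degrades to $\sim n\e_n^{2d+4}$ rather than the stated $n\e_n^{d+4}$. A single global application of Lemma~\ref{l.boundeddiff} to the symmetrized kernel cannot reach the claim.

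The paper closes this gap with a structurally different argument. It applies a Hoeffding (H\'ajek) decomposition to $\avsum_x \mathcal{I}_{\e_n,n}(x)$, separating the sum of i.i.d.\ conditional expectations $\E[\mathcal{I}_{\e_n,n}(x)\mid x]$ from the degenerate remainder $\mathcal{I}_{\e_n,n}(x)-\E[\mathcal{I}_{\e_n,n}(x)\mid x]$, and then --- crucially --- splits the degenerate sum over $x$ into the interior $\cu_{\e_n}^\circ$ and the boundary layer $\partial\cu_{\e_n}$. On the interior, $g(y)=g(x)$ for all graph neighbors $y$, so the rescaled kernel $\rho(\cu)\mathcal{I}_{\e_n,n}(x)$ collapses to the second-order combination $\nabla\overline u(x)\cdot\log_x(y)+\nabla\overline u(y)\cdot\log_y(x)$, which is $\mathcal{O}(\e_n^2\lVert D^2\overline u\rVert_\infty)$ --- a full factor of $\e_n$ better than what the unsymmetrized kernel suggests, and this is what makes the per-point Bernstein bound on the interior tight. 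On the boundary, each point contributes an $\mathcal{O}(\e_n\lVert\nabla\overline u\rVert_\infty)$ term but the prefactor $|\X_n\cap\partial\cu_{\e_n}|/(n\rho(\cu))\lesssim \e_n/\ell$ (controlled by a Chernoff bound on the random count) dampens the contribution to the desired $\e_n^2B/\ell$. Each of these per-point Bernstein bounds requires a union bound over $\X_n$, which is the source of the $Cn$ prefactor in the statement --- a prefactor that Lemma~\ref{l.boundeddiff} alone would never produce. Your proposal correctly identifies that something beyond the naive estimate is needed at $\ell\sim\e_n$, but attributes the remedy to Liouville symmetrization alone; symmetrization is necessary but not sufficient, and the missing ingredient is the interior/boundary split within the H\'ajek decomposition together with the Chernoff control of the boundary-point count.
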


	\begin{proof}
	
For $g$ of the form $\frac{\mathbf{1}_\cu}{\rho(\cu)}$ it is straightforward to see that
\[ [g]_{1,\e_n} \leq \frac{C}{\ell},  \]
and hence, thanks to Proposition \ref{prop:bias}, we have
\begin{equation*}
\left|  \E\left[ \avsum_{x \in \X_n } g(x) (\Delta_\rho \overline{u}(x) - \mathcal{L}_{\e_n, n} \overline{u}(x))\right]  \right| \leq CB\frac{\e_n^2}{\ell}.
\end{equation*} 
Thanks to the above, it will thus suffice to prove that
\begin{multline}
\P\Biggl[\left|\avsum_{x\in\X_n} g(x)\biggl(\Delta_\rho \overline{u}(x) - \mathcal{L}_{\e_n,n} \overline{u}(x) \biggr) - \E\left[  \avsum_{x\in\X_n} g(x)\biggl(\Delta_\rho \overline{u}(x) - \mathcal{L}_{\e_n,n} \overline{u}(x) \biggr) \right]\right|  >C(1+ B)\frac{\e_n^2}{\ell}  \biggr]\\
\le  Cn \exp \biggl( - \frac{cn\e_n^{d+4}B^2}{B^2 +\frac{\e_n^2}{\e_n}\|\nabla \overline{u}\|_{L^\infty(\M)}+\e_n^2\|D^2\overline{u}\|_{L^\infty(\M)}+\e_n^3\|D^3\overline{u}\|_{L^\infty(\M)}  }\biggr)\,.
\label{eq:claim-1}
\end{multline}	
Now, to prove this bound, it is actually convenient to consider the decomposition 
\begin{align*}
\avsum_{x\in\X_n} g(x)\left(\Delta_\rho \overline{u}(x) - \mathcal{L}_{\e_n,n} \overline{u}(x) \right) & =    \frac{2}{\sigma_\eta \e_n^2}\avsum_{x\in\X_n}  \avsum_{y \in \X_n} g(x) \eta_{\e_n}(d(x,y))  \nabla \overline{u}(x) \cdot \log_x(y) 
\\ & \quad + \avsum_{x \in \X_n} g(x)(\Delta_\rho \overline{u}(x) - \mathcal{A}_{\e_n, n} \overline{u}(x)), 
\end{align*}
where
\[ \mathcal{A}_{n, \e_n} \overline{u}(x) :=   -\frac{2}{\sigma_\eta \e_n^2} \avsum_{y\in\X_n }\eta_{\e_n}(d(x,y))\biggl\{\ \frac12 \langle  D^2 \overline{u}(x) \log_x(y), \log_x(y) \rangle+ \mathcal{R}(x,y)  \biggr\}   -\mathcal{R}_1(x);    \]
we recall that $\mathcal{R}$ and $\mathcal{R}_1$ were defined in \eqref{eq-def:R} and \eqref{eq-def:R1}, respectively.
In this decomposition, we isolate the contribution of the first order Taylor term in \eqref{e.taylorexpansion}, given that this term needs to be handled through a special symmetrization argument. Indeed, we will use the fact that
\begin{align*}
 \frac{2}{\sigma_\eta \e_n^2}\avsum_{x\in\X_n}   \avsum_{y \in \X_n} g(x) \eta_{\e_n}(d(x,y))  \nabla \overline{u}(x) \cdot \log_x(y)   =  \avsum_{x\in\X_n}  \mathcal{I}_{\e_n,n}(x),
\end{align*}
where
\begin{equation}
 \mathcal{I}_{\e_n, n}(x):=   \frac{1}{\sigma_\eta \e_n^2} \avsum_{y\in\X_n }\eta_{\e_n}(d(x,y))(g(x) \nabla \overline{u}(x)\cdot \log_x(y) + g(y) \nabla \overline{u}(y)\cdot \log_y(x))  .
\end{equation}
Inequality \eqref{eq:claim-1} will follow from 
\begin{multline}
\P\Bigl(  \left|  \avsum_{x\in\X_n}  \mathcal{I}_{\e_n,n}(x)   - \E\left[\avsum_{x\in\X_n}  \mathcal{I}_{\e_n,n}(x)\right] \right|  \geq C(1 + B) \frac{\e_n^2}{\ell} \Bigr) \\
\leq Cn\exp\left( - \frac{n \e_n^{d+4}B^2}{B^2 +\frac{\e_n^2}{\e_n}\|\nabla \overline{u}\|_{L^\infty(\M)}+\e_n^2\|D^2\overline{u}\|_{L^\infty(\M)}+\e_n^3\|D^3\overline{u}\|_{L^\infty(\M)}  }\right),
\label{eq:claim-1.1.}
\end{multline}
and
\begin{multline}
\P\Biggl[\left|\avsum_{x\in\X_n} g(x)\biggl(\Delta_\rho \overline{u}(x) - \mathcal{A}_{\e_n,n} \overline{u}(x) \biggr) - \E\left[  \avsum_{x\in\X_n} g(x)\biggl(\Delta_\rho \overline{u}(x) - \mathcal{A}_{\e_n,n} \overline{u}(x) \biggr) \right]\right|  >C(1+ B)\frac{\e_n^2}{\ell}  \biggr]\\
\le  Cn \exp \biggl( - \frac{cn\e_n^{d+4}B^2}{B^2 +\frac{\e_n^2}{\e_n}\|\nabla \overline{u}\|_{L^\infty(\M)}+\e_n^2\|D^2\overline{u}\|_{L^\infty(\M)}+\e_n^3\|D^3\overline{u}\|_{L^\infty(\M)}  }\biggr)\,.
\label{eq:claim-1.2.}
\end{multline}	

We thus focus on establishing \eqref{eq:claim-1.1.} and \eqref{eq:claim-1.2.}. We use the sets
\[ \partial \cu_{\e_n}: = \{  x\in \M \: \text{s.t.} \: \text{dist}(x,\partial \cu) \leq \e_n\}, \quad \cu_{\e_n}^\circ := \{  x\in \cu \: \text{s.t.} \: \text{dist}(x,\partial \cu) < \e_n\}, \]
where $\text{dist}$ denotes the (geodesic) distance from a point to a set. Note that if $x \in \M \setminus (\partial \cu_{\e_n} \cup \cu_{\e_n}^\circ)$, then $g(x)=0$ and for any $y \in \M$ with $d(x,y) \leq \e_n$ we have $ g(y)= g(x)=0$. On the other hand, if $x \in \cu_{\e_n}^\circ$, then $g(x)= \frac{1}{\rho(\cu)}$ and for any $y$ with $d(x,y)\leq \e_n$ we have $g(y)=g(x)$.

\medskip

\noindent \textbf{Proof of \eqref{eq:claim-1.1.}.} 
Observe that 
\begin{align*}
\avsum_{x\in \X_n} \mathcal{I}_{\e_n, n}(x) - \E\left[\avsum_{x\in \x_n} \mathcal{I}_{\e_n, n}(x)\right]  & =  \avsum_{x\in \x_n} (  \mathcal{I}_{\e_n, n}(x) - \E[\mathcal{I}_{\e_n, n}(x) \mid x ])
\\& \quad  + \avsum_{x\in \x_n} (   \E[\mathcal{I}_{\e_n, n}(x) \mid x ]  -  \E[\mathcal{I}_{\e_n, n}(x) ]   ).
\end{align*}
This can be interpreted as the decomposition of the U-statistic 
\[\avsum_{x\in \x_n} \mathcal{I}_{\e_n, n}(x) - \E\left[\avsum_{x\in \x_n} \mathcal{I}_{\e_n, n}(x)\right] \]
as a sum of a canonical U-statistic of order two and a sum of i.i.d. random variables (see, e.g., \cite{GineUStat}). We derive concentration bounds for each of these two terms.

For the sum of i.i.d.s, i.e., the second in the above decomposition, it suffices to apply Bernstein's inequality. We thus need to bound the random variable $ \E[\mathcal{I}_{\e_n, n}(x) \mid x ] $  and its variance. Now, 
\begin{multline*}
    \E[\mathcal{I}_{\e_n, n}(x) \mid x ]  \\
{=}\frac{1}{\sigma_\eta \e_n^2} \int_{B_1(0)\subseteq T_x \M} \eta(|v|) ( g(x) \nabla \overline u(x) \cdot (\e_nv) + g(\exp_x(\e_n v)) \nabla \overline u(\exp(\e_n v)) \cdot \log_{\exp(\e_n v )}(x)  ) J_x(\e_nv) \dd v,
\end{multline*}
from where it follows that
\[  | \E[\mathcal{I}_{\e_n, n}(x) \mid x ] | \leq \frac{C\|\nabla \overline{u}\|_{L^\infty(\M)}}{\e_n \rho(\cu)} \leq \frac{C\|\nabla \overline{u}\|_{L^\infty(\M)}}{\e_n \ell^d}. \]
On the other hand, 
\begin{align*}
&\E [  (\E[ \mathcal{I}_{\e_n, n} (x) |x  ] )^2 ]   \leq C  \int_{\M} \left(  \frac{1}{\e_n} \int_{B_1(0) \subseteq T_x \M}  (g(x) - g(\exp_x(\e_n v) )) \nabla \overline{u}(x) \cdot v   \dd v    \right)^2  \dx    
\\ &  + C \int_{\M} \left( \frac{1}{ \e_n^2} \int_{B_1(0)\subseteq T_x \M} g(\exp_x(\e_n v)) (  \nabla \overline u(x) \cdot (\e_nv) +  \nabla \overline u(\exp(\e_n v)) \cdot \log_{\exp(\e_n v )}(x)  )  \dd v \right)^2  \dx
\\& = C  \int_{\partial \cu_{\e_n} } \left(  \frac{1}{\e_n} \int_{B_1(0) \subseteq T_x \M}  (g(x) - g(\exp_x(\e_n v) )) \nabla \overline{u}(x) \cdot v   \dd v    \right)^2  \dx    
\\ &  + C \int_{\cu_{\e_n}^\circ \cup \partial \cu_{\e_n}} \left( \frac{1}{ \e_n^2} \int_{B_1(0)\subseteq T_x \M} g(\exp_x(\e_n v)) (  \nabla \overline u(x) \cdot (\e_nv) +  \nabla \overline u(\exp(\e_n v)) \cdot \log_{\exp(\e_n v )}(x)  )  \dd v \right)^2 \dx
\\ & \leq \frac{C B^2 \e_n \ell^{d-1} }{\e_n^2 \ell^{2d}} + \frac{C B^2  }{\ell^{d}}
\\& \leq \frac{C B^2 }{\e_n \ell^{d+1}},
\end{align*}
where in the second to last inequality we used \eqref{eq:geodesic gamma computation}. Bernstein's inequality then implies that
\[  \Prob\left[  \left|  \avsum_{x\in \x_n}(  \E[\mathcal{I}_{\e_n, n}(x) \mid x ]  -  \E[\mathcal{I}_{\e_n, n}(x) ] )  \right| \geq Ct \right] \leq 2\exp\left(- \frac{cn \e_n \ell^{d+1}t^2}{ CB^2 + t C\|\nabla \overline{u}\|_{L^\infty(\M)}  } \right), \]
which, taking $t =C \frac{\e_n^2}{\ell}B $, gives
\begin{multline*}
    \Prob\left[  \left|  \avsum_{x\in \x_n}(  \E[\mathcal{I}_{\e_n, n}(x) \mid x ]  -  \E[\mathcal{I}_{\e_n, n}(x) ] )  \right| \geq C \frac{\e_n^2}{\ell} B \right] \\
    \leq 2\exp\left(\frac{- cn \e_n^5 \ell^{d-1} B^2  }{B^2+ \frac{\e_n^2}{l} CB\|\nabla \overline{u}\|_{L^\infty(\M)} } \right) \leq 2\exp\left(\frac{- cn \ell^{d+4} B  }{B+ \frac{\e_n^2}{l} C\|\nabla \overline{u}\|_{L^\infty(\M)} } \right),
\end{multline*}
where the last inequality follows from the fact that $\ell \geq \e_n$.

Next, we analyze the canonical U-statistic of order two, which we rewrite as
\begin{align*}
\avsum_{x\in\X_n} &  (\mathcal{I}_{\e_n, n}(x) - \E[\mathcal{I}_{\e_n, n}(x) \mid x ]) \\
= & \frac{1}{n}\sum_{x\in\X_n\cap \partial \cu_{\e_n}  }  (\mathcal{I}_{\e_n, n}(x) - \E[\mathcal{I}_{\e_n, n}(x) \mid x ]) +\frac{1}{n}\sum_{x\in\X_n\cap \cu_{\e_n}^\circ}  (\mathcal{I}_{\e_n, n}(x) - \E[\mathcal{I}_{\e_n, n}(x) \mid x ])
\\ & = \frac{|\X_n \cap \partial \cu_{\e_n} |}{n \rho(\cu)} \avsum_{x\in\X_n\cap \partial \cu_{\e_n}  } \rho(\cu) (\mathcal{I}_{\e_n, n}(x) - \E[\mathcal{I}_{\e_n, n}(x) \mid x ])
\\ & \quad + \frac{|\X_n \cap  \cu_{\e_n}^\circ |}{n \rho(\cu)} \avsum_{x\in\X_n\cap  \cu_{\e_n}^\circ  }  \rho(\cu) (\mathcal{I}_{\e_n, n}(x) - \E[\mathcal{I}_{\e_n, n}(x) \mid x ]),
\end{align*}
and we seek to bound each of the latter terms. To obtain concentration bounds for the boundary term
\[ \frac{|\X_n \cap \partial \cu_{\e_n} |}{n \rho(\cu)} \avsum_{x\in\X_n\cap \partial \cu_{\e_n}  } \rho(\cu) (\mathcal{I}_{\e_n, n}(x) - \E[\mathcal{I}_{\e_n, n}(x) \mid x ]), \]
we start by noticing that, thanks to Bernstein's inequality, we have
\[ \frac{|\X_n \cap \partial \cu_{\e_n}|}{n \rho(\cu)} \leq C\frac{\e_n}{\ell}  \]
with probability at least $1- 2\exp(- cn \ell^{d-1} \e_n)$. On the other hand, a direct use of Bernstein's inequality and a union bound allow us to deduce that, with probability at least $1- Cn \exp(- cn \e_n^{d+4})$, 
\[\rho(\cu) \left| \mathcal{I}_{\e_n, n}(x) - \E[\mathcal{I}_{\e_n, n}(x) \mid x  ] \right| \leq C \e_n \sup_{y\in B_\M(x,\e_n)} |D\overline{u}(y)| \]
for all $x \in \X_n$. Putting together the above estimates, we deduce that
\[ \left| \frac{|\X_n \cap \partial \cu_{\e_n} |}{n \rho(\cu)} \avsum_{x\in\X_n\cap \partial \cu_{\e_n}  } \rho(\cu) \left(\mathcal{I}_{\e_n, n}(x) - \E[\mathcal{I}_{\e_n, n}(x) \mid x  ]\right)  \right| \leq C \frac{\e_n^2}{\ell}B,\]
with probability at least $1- Cn \exp\left(\frac{- cn \ell^{d+4} B  }{B+ \frac{\e_n^2}{l} \|\nabla \overline{u}\|_{L^\infty(\M)} } \right)$.

It remains to study the interior term 
\[ \frac{|\X_n \cap  \cu_{\e_n}^\circ |}{n \rho(\cu)} \avsum_{x\in\X_n\cap  \cu_{\e_n}^\circ  }  \rho(\cu) (\mathcal{I}_{\e_n, n}(x) - \E[\mathcal{I}_{\e_n, n}(x) \mid x ]).  \]
For this term, the key observation is that, for a given $x \in \cu_{\e_n}^\circ$, we have
\[ \rho(\cu) \mathcal{I}_{\e_n, n } (x) = \frac{1}{\sigma_\eta \e_n^2} \avsum_{y\in\X_n }\eta_{\e_n}(d(x,y))(\nabla \overline{u}(x)\cdot \log_x(y) + \nabla \overline{u}(y)\cdot \log_y(x)).   \]
In turn, \eqref{eq:geodesic gamma computation} implies that $\frac{1}{\e_n^2}(\nabla \overline{u}(x)\cdot \log_x(y) + \nabla \overline{u}(y)\cdot \log_y(x))$ is uniformly bounded by $C\|D^2\overline{u}\|_{L^\infty(\M)}$. A similar computation as above can see the $L^2$ norm of $\frac{1}{\e_n^2}(\nabla \overline{u}(x)\cdot \log_x(y) + \nabla \overline{u}(y)\cdot \log_y(x))$ is bounded by $CB^2.$ Due to this, Bernstein's inequality and a union bound imply that, with probability at least $1- Cn \exp(- \frac{n \e_n^d t^2}{B^2 + C\|D^2\overline{u}\|_{L^\infty(\M)}t } ) $,
\[ \left| \rho(\cu)( \mathcal{I}_{\e_n, n}(x)  -  \E[ \mathcal{I}_{\e_n, n}(x) \mid x ]) \right| \leq t,  \]
for all $\x \in \X_n \cap \cu_{\e_n}^\circ$; note that this is a much better estimate than for points in $\partial \cu_{\e_n}$, where, instead, we had the advantage of having fewer terms. Taking $t= C\frac{\e_n^2}{\ell} B$, the above implies
 \[  \left|  \frac{|\X_n \cap  \cu_{\e_n}^\circ |}{n \rho(\cu)} \avsum_{x\in\X_n\cap  \cu_{\e_n}^\circ  } \rho(\cu) (\mathcal{I}_{\e_n, n}(x) - \E[\mathcal{I}_{\e_n, n}(x) \mid x ]) \right| \leq C\frac{\e_n^2}{\ell} B, \]
with probability at least $1- Cn \exp\left(- \frac{n \e_n^{d+4} \ell^{-2}B^2}{B^2+C\e^2_n B\|D^2 \overline{u}\|_{L^\infty(\M)}} \right)\ge 1- Cn \exp\left(- \frac{n \e_n^{d+4} B}{B+C\e^2_n \|D^2 \overline{u}\|_{L^\infty(\M)}} \right)$, since $\e_n\leq \ell\leq c$.

Inequality \eqref{eq:claim-1.1.} is a consequence of the above estimates. 
\medskip 
	
\noindent \textbf{Proof of \eqref{eq:claim-1.2.}.} In order to prove \eqref{eq:claim-1.2.}, we consider a similar decomposition as before:
		\begin{align*}
		\avsum_{x\in\X_n} g(x) & \biggl(\Delta_\rho \overline{u}(x) - \mathcal{A}_{\e_n,n} \overline{u}(x) \biggr) - \E\left[  \avsum_{x\in\X_n} g(x)\biggl(\Delta_\rho \overline{u}(x) - \mathcal{A}_{\e_n,n} \overline{u}(x) \biggr) \right] 
		\\&   = \avsum_{x \in \X_n}  g(x) (\Delta_\rho \overline{u}(x) - \E[  \mathcal{A}_{\e_n, n} \overline{u}(x)  | x]  )  - \E[g(x)(\Delta_\rho \overline{u}(x) - \mathcal{A}_{\e_n, n} \overline{u}(x) )]  
		\\& \quad + \avsum_{x \in \X_n} g(x) ( \E[  \mathcal{A}_{\e_n, n} \overline{u}(x)  | x]  -  \mathcal{A}_{\e_n, n} \overline{u}(x)  ).
		\end{align*}
	 It will suffice to show that
		\begin{align}\label{eq:bias}
		\begin{split}
		\P\left[ \left| \avsum_{x\in\X_n} g(x)  \Bigl(\Delta_\rho \overline{u}(x) -  \E[\mathcal{A}_{\e_n,n} \overline{u}(x)\mid x]\Bigr)  - \E[g(x)(\Delta_\rho \overline{u}(x) - \mathcal{A}_{\e_n, n} \overline{u}(x) )] \right| >C\frac{\e_n^2}{\ell} B\right]
		\\  \le  2\exp \Biggl( \frac{- cn\ell^{d-2} \e_n^4B}{B+\e_n^2\|D^2\overline{u}\|_{L^\infty(\M)}+\e_n^3\|D^3\overline{u}\|_{L^\infty(\M)}}  \Biggr)\,,
		\end{split}
		\end{align}
		and
		\begin{multline}\label{eq:random}
		\P\left[ \left| \avsum_{x\in\X_n} g(x) \Bigl(\E[\mathcal{A}_{\e_n,n} \overline{u}(x)\mid x] - \mathcal{A}_{\e_n,n} \overline{u}(x) \Bigr) \right| >C\frac{\e_n^2}{\ell}B\right]\\
        \le  2n \exp \Biggl( \frac{- cn\e_n^{d+4}\ell^{-2} B}{B+\e_n^2\|D^2\overline{u}\|_{L^\infty(\M)}+\e_n^3\|D^3\overline{u}\|_{L^\infty(\M)}}  \Biggr)\,.
		\end{multline}

To prove \eqref{eq:bias}, we use Bernstein's inequality. Since in this case $|g(x)|\leq \frac{1}{\rho(\cu)}\leq \frac{C}{l^d}$, and $\Delta_{\rho} \overline{u}(x)$ is uniformly bounded, it suffices to find a uniform bound for $\E[\mathcal{A}_{\e_n,n} \overline{u}(x)\mid x]$. In turn, since
		\begin{equation} \label{e.taylorexpansion.b2}
		\begin{aligned}
		&\E[\mathcal{A}_{\e_n,n} \overline{u}(x)\mid x]   = -\E\Biggl[\frac{2}{\sigma_\eta \e_n^2} \avsum_{y \in \X_n }\eta_{\e_n}(d(x,y))\biggl\{ \frac12 \langle D^2 \overline{u}(x) \log_x(y), \log_x(y) \rangle  + \mathcal{R}(x,y)  \biggr\}\Bigg\lvert \, x\Biggr] 
        \\  & \qquad \qquad \qquad \qquad  + \E[\mathcal{R}_1(x)|x], \\
		\end{aligned}
		\end{equation}
		it is straightforward to show that $\E[\mathcal{A}_{\e_n,n} \overline{u}(x)\mid x]$ is uniformly bounded by $C\|D^2\overline{u}\|_{L^\infty(\M)}+C\e_n\|D^3\overline{u}\|_{L^\infty(\M)}$ and the $L^2$ norm is bounded by $CB^2$. From the above discussion, Bernstein's inequality implies
		\begin{align*}
		\begin{split}
		\P\left[ \left| \avsum_{x\in\X_n} g(x)  \Bigl(\Delta_\rho \overline{u}(x) -  \E[\mathcal{A}_{\e_n,n} \overline{u}(x)\mid x]\Bigr)  - \E[g(x)(\Delta_\rho \overline{u}(x) - \mathcal{A}_{\e_n, n} \overline{u}(x) )] \right| \geq t \right]
		\\  \le  2\exp \biggl( - \frac{cn\ell^d t^2 }{B^2+t \|D^2 \overline{u}\|_{L^\infty(\M)}}\biggr)\,.
		\end{split}
		\end{align*}
		Taking $t = C\frac{\e_n^2}{\ell}B$, we obtain \eqref{eq:bias}.

To conclude, we establish \eqref{eq:random}. Observe that
\[ \avsum_{x \in \X_n} g(x) \left( \E[ \mathcal{A}_{\e_n, n} \overline{u}(x)  \mid x ]  - \mathcal{A}_{\e_n, n} \overline{u}(x) \right) = \frac{|\cu|}{n \rho(\cu)} \avsum_{x \in \cu} \left( \E[ \mathcal{A}_{\e_n, n} \overline{u}(x)  \mid x ]  - \mathcal{A}_{\e_n, n} \overline{u}(x) \right).\]
 A standard concentration bound allows us to show that with probability at least $1- \exp(-cn\ell^d)$ we have 
 \[  \frac{|\cu|}{n \rho(\cu)} \leq C.  \]
On the other hand, using Bernstein's inequality and a union bound, it is straightforward to show that, with probability at least $1 - Cn \exp(- \frac{n \e_n^d t^2}{B^2+Ct\|D^2\overline{u}\|_{L^\infty(\M)}+Ct\e_n\|D^3\overline{u}\|_{L^\infty(\M)}})$, we have
\[| \E[ \mathcal{A}_{\e_n, n} \overline{u}(x)  \mid x ]  - \mathcal{A}_{\e_n, n} \overline{u}(x) | \leq t, \]
for all $x \in \X_n$. Taking $t = C \frac{\e_n^2}{\ell} B$, inequality \eqref{eq:random} now follows.

	\end{proof}

	\nc

	\subsection{Estimation of Solutions to Poisson Equation in the $ {\underline{H}^1(\X_n)}$ Semi-norm} \label{ss.poisson}

	\smallskip 	
	Let~$f \in C^{1,\alpha}(\M)$ (for some $\alpha>0$) be a fixed function such that 
	\begin{equation*}
	\int_{\M} f \rho\dx= 0\,. 
	\end{equation*}
	Our goal in this section is to compare the solution to the graph Poisson equation
	\begin{equation}
	\label{e.discrete}
	\mathcal{L}_{\e_n,n} u_{\e_n,n} = f - \avsum_{\X_n} f\,, \quad \mbox{ on } \X_n\,,
	\end{equation}
	with the solution of its continuum counterpart
	\begin{equation} \label{e.L0def}
	\Delta_\rho \overline{u}  =  	-\frac{1}{ \rho} \mathrm{div}\,(\rho^2 \nabla \overline{u}) = f\, \quad \mbox{ on } \M\,. 
	\end{equation}
	Solutions to these equations are only uniquely defined up to additive constants and so we normalize them to be mean zero.  Precisely, we require 
	\begin{equation*}
	\avsum_{\X_n} u_{\e_n,n} = 0, \quad \int_{\M} \overline{u}\rho \,dx  = 0\,. 
	\end{equation*}
   Elliptic regularity (see the formulation of $\Delta_\rho$ in normal coordinates in \eqref{eqn:LaplacianCoordinates}, Remark \ref{rem:RegCoeffciients}, and \cite[Corollary 2.29]{FernndezReal2022}) entails that~$\overline{u} \in C^{3,\alpha}(\M)$ and thus, in particular, $\overline{u}$ has bounded third order derivatives.

	To compare the solutions of \eqref{e.discrete} and \eqref{e.L0def}, we start by defining 
	\begin{equation} \label{e.wdef}
	w_{\e_n,n}  \defeq  u_{\e_n,n} - \overline{u}\,
	\end{equation}
	and observing that the function~$w_{\e_n,n}$ solves a discrete equation of the form 
	\begin{equation} \label{e.weqdef}
	\mathcal{L}_{\e_n,n} w_{\e_n,n} = h, \quad \mbox{ on } \X_n\,,
	\end{equation}
	for $h = \Delta_\rho \overline{u}(x) - \mathcal{L}_{\e_n,n} \overline{u}(x) - \avsum_{\X_n} f $. Our main result in this section is the following proposition. This result is not only useful because it requires the same main elements as for the proof of our main theorems in the next section, but it is also of interest in its own right. 
	
	\begin{proposition}
		\label{p.graphpoisson}
		Let~$u_{\e_n,n}$ and~$\overline{u}$ denote the unique mean zero solutions to~\eqref{e.discrete} and~\eqref{e.L0def}, respectively.  There exist constants~$C >1, 0<c<1 $ such that 
		\begin{multline}\label{eq:prop graph poisson}
		\P \biggl[ \biggl\{\frac1{\e_n^2}\avsum_{x,y \in \X_n} \eta_{\e_n}(|x-y|)\Bigl( w_{\e_n,n} (x) - w_{\e_n,n}(y) \Bigr)^2\biggr\}^{\frac12} \geqslant  CB \log(1/\e_n) \eps_n^2\biggr] 		\\
		\leqslant  	C n\e_n^{-d}\exp\left(\frac{-cn\e_n^{d+4}B}{B + \e_n \lVert \nabla f \rVert_{L^\infty(\M)} +\e_n^2\lVert D^2 f\rVert_{L^\infty(\M)}+\e_n^3\lVert D^3 f \rVert_{L^\infty(\M)}}\right)\,,
		\end{multline}
		where~$w_{\e_n,n}  \defeq  u_{\e_n,n} - \overline{u},$ and $B$ is as in \eqref{e.Bdef}.	
	\end{proposition}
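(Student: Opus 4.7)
The strategy follows the three-step scheme outlined in Section~\ref{ss.ideas}. First, I will exploit the $\underline{H}^1$--$\underline{H}^{-1}$ duality. Testing the identity $\mathcal{L}_{\e_n,n} w_{\e_n,n} = h$ against $w_{\e_n,n}$ gives
\[ \|w_{\e_n,n}\|_{\underline{H}^1(\X_n)}^2 = \sigma_\eta \langle \mathcal{L}_{\e_n,n} w_{\e_n,n}, w_{\e_n,n}\rangle_{\underline{L}^2(\X_n)} = \sigma_\eta \langle h, w_{\e_n,n}\rangle_{\underline{L}^2(\X_n)}. \]
Since $\avsum h = 0$ (a consequence of the symmetry of the kernel defining $\mathcal{L}_{\e_n,n}$), I may replace $w_{\e_n,n}$ by its mean-zero counterpart $w_{\e_n,n} - \avsum w_{\e_n,n}$ in the inner product without changing its value, and then apply the definition of the $\underline{H}^{-1}(\X_n)$ semi-norm to obtain $\|w_{\e_n,n}\|_{\underline{H}^1(\X_n)} \le \sigma_\eta \|h\|_{\underline{H}^{-1}(\X_n)}$. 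The additive constant $\avsum f$ in $h$ does not affect its $\underline{H}^{-1}$ semi-norm (test functions have mean zero), so the task reduces to bounding $\|\Delta_\rho \overline{u} - \mathcal{L}_{\e_n,n}\overline{u}\|_{\underline{H}^{-1}(\X_n)}$. By the elliptic regularity recalled before the statement, $\overline{u} \in C^{3,\alpha}(\M)$, so the constant $B$ in \eqref{e.Bdef} is bounded by a multiple of $\|f\|_{C^{1,\alpha}(\M)}$.

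Second, I will apply the multi-scale Poincar\'e inequality (Proposition~\ref{l.msp}), which decomposes the bound into a fine-scale $\e_n \|h\|_{\underline{L}^2(\X_n)}$ contribution plus dyadic averages of $h$ over cubes $\cu^m_{p,j,\upsilon}$ at scales $\ell = 3^{p-m}$ ranging from $\e_n$ up to order one. For each coarse-scale average, I recognize that $\avsum_{x\in \cu \cap \X_n} h(x)$ equals (up to the random factor $|\cu \cap \X_n|/(n\rho(\cu))$, which concentrates around $1$ by Bernstein's inequality) the inner product $\avsum_{\X_n} g \cdot h$ with the rescaled indicator $g = \mathbf{1}_\cu/\rho(\cu)$. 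This is exactly the setting of Proposition~\ref{lem:pointwise convergence}, which delivers an $O(\e_n^2 (B+1)/\ell)$ bound per cube with stretched-exponential probability. Squaring, averaging over the $\sim \ell^{-d}$ cubes at scale $p$, taking a square root, and then multiplying by the weight $3^{p-m} = \ell$ cancels the $1/\ell$ factor and yields $O(\e_n^2 B)$ per scale; summing over the $m \sim \log(1/\e_n)$ scales produces the announced $O(\log(1/\e_n) \e_n^2 B)$ contribution.

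For the fine-scale term $\e_n \|h\|_{\underline{L}^2(\X_n)}$, I will use a pointwise consistency estimate. For each fixed $x\in \X_n$, a combination of the bias computation behind Proposition~\ref{prop:bias} with a Bernstein-type concentration inequality applied to the $i.i.d.$ sum defining $\mathcal{L}_{\e_n,n}\overline{u}(x)$ yields $|h(x)| = O(\e_n B)$ with stretched-exponential probability under the standing scaling $n\e_n^{d+4} \gtrsim \log n$; a union bound over $\X_n$ then gives $\|h\|_{\underline{L}^2(\X_n)} \le \|h\|_{L^\infty(\X_n)} = O(\e_n B)$, so that $\e_n \|h\|_{\underline{L}^2(\X_n)} = O(\e_n^2 B)$ — the same order as the coarse-scale terms.

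The main technical obstacle is the aggregation of these high-probability events through a careful union bound. One must take a union over the $n$ fine-scale concentration events and the $\sum_{p=1}^m 3^{(m-p)d} \sim \e_n^{-d}$ coarse-scale cubes across all scales and over $j = 1,\ldots,N_\nu$. The concentration exponents from Propositions~\ref{Prop:ConcenSmoothg} and~\ref{lem:pointwise convergence} are of order $n\e_n^{d+4}$, which, under the standing assumption, dominates $\log(n\e_n^{-d})$ by a polynomial factor; this guarantees that the combined failure probability can be absorbed into the bound in~\eqref{eq:prop graph poisson}, with the prefactor $n\e_n^{-d}$ accounting exactly for the total number of events in the union bound. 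The delicate point is that the exponent in the concentration must be expressed in the form displayed on the right-hand side of~\eqref{eq:prop graph poisson}, mixing the $B^2$ variance proxy with the $\e_n^k \|D^k f\|_{L^\infty(\M)}$ terms arising from the Taylor expansion remainders in the proof of Proposition~\ref{lem:pointwise convergence}.
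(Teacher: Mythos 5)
Your proposal is correct and follows the same essential scheme as the paper's proof: test the graph equation $\mathcal{L}_{\e_n,n}w_{\e_n,n}=h$ against $w_{\e_n,n}$, reduce to an $\underline{H}^{-1}(\X_n)$ bound on $\Delta_\rho\overline{u}-\mathcal{L}_{\e_n,n}\overline{u}$, apply the multiscale Poincar\'e inequality together with the cube-level concentration estimate of Proposition~\ref{lem:pointwise convergence}, and close with a union bound over scales and cubes. Two differences are worth noting. First, your observation that $\avsum_{\X_n}h = 0$ (because $\avsum_{\X_n}\mathcal{L}_{\e_n,n}\overline{u}=0$ and $\Delta_\rho\overline{u}=f$) lets you pass immediately from the energy identity to $\|w_{\e_n,n}\|_{\underline{H}^1(\X_n)}\leq\sigma_\eta\|h\|_{\underline{H}^{-1}(\X_n)}$; the paper instead decomposes $\avsum hw_{\e_n,n}$ into terms $\mathfrak{A}$ and $\mathfrak{B}$, estimates $\mathfrak{B}=-\bigl(\avsum f\bigr)\bigl(\avsum w_{\e_n,n}\bigr)$ separately via Bernstein, and then re-encounters the same quantity while estimating $\mathfrak{A}$ — both routes yield the same bound, but yours is shorter and avoids the superfluous intermediate step. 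Second, you explicitly address the fine-scale contribution $C\e_n\|h\|_{\underline{L}^2(\X_n)}$ that the multiscale Poincar\'e inequality produces; the paper's exposition jumps directly to the sum over cube averages and does not visibly account for this term. Your treatment is therefore more complete, though one small caveat: bounding $\|h\|_{\underline{L}^2(\X_n)}$ by $\|h\|_{L^\infty(\X_n)}$ and then claiming the latter is $O(\e_n B)$ is slightly imprecise, since a pointwise bound on $\mathcal{L}_{\e_n,n}\overline{u}(x)-\Delta_\rho\overline{u}(x)$ naturally produces $L^\infty$ norms of $D^k\overline{u}$ rather than the spatially-averaged quantity $B$ of~\eqref{e.Bdef}. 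A cleaner route is to bound $\|h\|_{\underline{L}^2(\X_n)}^2=\avsum_{x}h(x)^2$ directly (bias $+$ variance, followed by concentration), which gives $O(\e_n^2 B^2)$ and hence $\e_n\|h\|_{\underline{L}^2(\X_n)}=O(\e_n^2 B)$, matching the order of each term in the multiscale sum without over-counting.
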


	\begin{proof}
		In the proof, we derive an energy inequality that quantifies how close the solution to the continuum problem is to solving the discrete equation. 
		Multiplying both sides of \eqref{e.weqdef} by~$w_{\e_n,n}$, and summing over~$x \in \X_n,$ we obtain 
		\begin{multline}\label{eq:decomposition}
		\avsum_{\X_n} w_{\e_n,n} \mathcal{L}_{\e_n,n} w_{\e_n,n} 
		= \frac1{\sigma_\eta \e_n^2} \avsum_{x,y \in \X_n} \eta_{\e_n}(|x-y|) \bigl( w_{\e_n,n}(x) - w_{\e_n,n}(y)\bigr)^2\\
		= \avsum_{x\in \X_n} h(x) w_{\e_n,n}(x) = \avsum_{x \in \X_n} \biggl(\Delta_\rho \overline{u}(x) - \mathcal{L}_{\e_n,n} \overline{u}(x)\biggr)  w_{\e_n,n}(x)  - \Biggl(\avsum_{\X_n} f\Biggr)\Biggl( \avsum_{\X_n} w_{\e_n,n}
		\Biggr)
		=: \mathfrak{A} + \mathfrak{B}\,. 
		\end{multline}
		We estimate each of the terms~$\mathfrak{A}$ and~$\mathfrak{B}$ one after the other. 
		Let us begin with~$\mathfrak{B}.$ Since we chose~$u_{\e_n,n}$ to have mean zero, it follows that $ \avsum_{\X_n} w_{\e_n, n} = - \avsum_{\X_n} \overline{u}$.  Consequently, Bernstein's inequality implies that for any~$t > 0$ 
		\begin{multline} \label{e.averageueps}
		\P \biggl[ \Bigl| \avsum_{x\in\X_n} w_{\e_n,n} (x) \Bigr| > t\biggr] = \P \biggl[ \Bigl| \avsum_{x\in\X_n} \overline{u}(x)  \Bigr| > t\biggr] \leqslant 2 \exp \Bigl( - \frac{nt^2}{\|\overline{u}\|_{L^2(\M)}^2+t\|\overline {u}\|_{L^\infty(\M)}}\Bigr) \\
		\leqslant 2 \exp \Bigl( - \frac{cnt^2}{\|f\|_{L^2(\M)}^2+t\|\overline {u}\|_{L^\infty(\M)}}\Bigr) \,,
		\end{multline}
		where the last inequality follows from the fact that, by Poincaré inequality (i.e., Lemma \ref{lemma:poincare inequality}),
		\begin{equation} \label{e.basicenergy}
		\int_{\M} \overline{u}^2\dx \leqslant C\int_{\M} |\nabla \overline{u}|^2\dx  \leqslant C \int_{\M} f \overline{u}\dx \leqslant C\|f\|_{L^2(\M)}\|\overline{u}\|_{L^2(\M)}\,. 
		\end{equation}
		Using again Bernstein's inequality, we obtain
		\begin{equation} \label{e.discavgoff}
		\P \Bigl[ \Bigl|\avsum_{ x \in \X_n} f(x)\Bigr| > t\Bigr] \leqslant 2 \exp \Bigl(  - \frac{cnt^2}{\|f\|_{L^2(\M)}^2+t\|f\|_{L^\infty(\M)}}\Bigr)\,,\quad \forall t >0. 
		\end{equation}
		Combining \eqref{e.averageueps} and \eqref{e.discavgoff}, 
		\begin{multline}\label{eq:3-3}
		\P\left[\left|\Biggl(\avsum_{\X_n} f(x)\Biggr)\Biggl( \avsum_{\X_n} w_{\e_n,n}\Biggr)\right|>\left(t+C\e_n^2\right)^2\right]\le \P\left[\left|\avsum_{\X_n} f(x)\right|^2 +\left|  \avsum_{\X_n} w_{\e_n,n}\right|^2>2\left(t+C\e_n^2\right)^2\right]\\
		\le \P\left[\left|\avsum_{\X_n} f(x)\right|^2 >\left(t+C\e_n^2\right)^2\right]+\P\left[\left|  \avsum_{\X_n} w_{\e_n,n}\right|^2>\left(t+C\e_n^2\right)^2\right] \\
		\le \P\left[\left|\avsum_{\X_n} f(x)\right| >t+C\e_n^2\right]+\P\left[\left|  \avsum_{\X_n} w_{\e_n,n}\right|>t+C\e_n^2\right]\\
		\le C\exp\left(-\frac{cnt^2}{\|f\|_{L_2(\M)}^2+t\|\overline{u}\|_{L^\infty(\M)}+t\|f\|_{L^\infty(\M)}}\right).
		\end{multline}
		Choosing $t=CB\e_n^2$ concludes the estimate for~$\mathfrak{B}.$ 
		
		\medskip
		
		We turn to the term~$\mathfrak{A}.$ For this term, observe that
		\begin{align*}
			\avsum_{x\in\X_n} \Bigl( \Delta_\rho \overline{u}(x) - \mathcal{L}_{\e_n,n} \overline{u}(x)\Bigr) w_{\e_n,n}(x)
	 &	\leq \|\Delta_\rho \overline{u} - \mathcal{L}_{\e_n,n} \overline{u}\|_{\underline{H}^{-1}(\X_n)}\|w_{\e_n,n}\|_{\underline{H}^1(\X_n)}\,.
	 \\& \quad + 	\left(\avsum_{x\in\X_n} (\Delta_\rho \overline{u}(x) - \mathcal{L}_{\e_n,n} \overline{u}(x)) \right) \left( \avsum_{x \in \X_n}  w_{\e_n,n}(x) \right)
	 \\  &	= \|\Delta_\rho \overline{u} - \mathcal{L}_{\e_n,n} \overline{u}\|_{\underline{H}^{-1}(\X_n)}\|w_{\e_n,n}\|_{\underline{H}^1(\X_n)}\,
	 \\& \quad + 	\left(\avsum_{x\in\X_n}  f(x)  \right) \left( \avsum_{x \in \X_n}  w_{\e_n,n}(x) \right).
		\end{align*}
		Since the second term on the right hand side of the above expression is equal to $\mathfrak{B}$ (which we have already bounded), it suffices to prove that 
		\begin{multline}\label{eq:claim H-1 norm of L0-Ln}
		\P\left[ \|\Delta_\rho \overline{u} - \mathcal{L}_{\e_n,n} \overline{u}\|_{\underline{H}^{-1}(\X_n)}\geq C B \log(1/\e_n) \e_n^2\right]\\
        \leq C n\e_n^{-d}\exp\left(\frac{-cn\e_n^{d+4}B}{B + \e_n \lVert \nabla f \rVert_{L^\infty(\M)} +\e_n^2\lVert D^2 f\rVert_{L^\infty(\M)}+\e_n^3\lVert D^3 f \rVert_{L^\infty(\M)}}\right).
		\end{multline}
	At this stage, we seek to use the multiscale Poincaré inequality (Proposition \ref{l.msp}) and Proposition \ref{lem:pointwise convergence}. First, note that
		\[   \avsum_{ x \in \cu_{p,j,\upsilon}^m \cap \X_n} (\Delta_\rho \overline{u}(x) - \mathcal{L}_{\e_n,n} \overline{u}(x)) =   \frac{n \rho(\cu_{p,j,\upsilon}^m ) }{| \cu_{p,j,\upsilon}^m \cap \X_n|} \avsum_{ x \in \X_n}  \frac{\mathbf{1}_{\cu_{p,j,\upsilon}^m}}{\rho(\cu_{p,j,\upsilon}^m)} ( \Delta_\rho \overline{u}(x) - \mathcal{L}_{\e_n,n} \overline{u}(x) ). \]
		The factor $\frac{n \rho(\cu_{p,j,\upsilon}^m ) }{| \cu_{p,j,\upsilon}^m \cap \X_n|} $ can be controlled by a constant $C$ with probability at least $1 - C\e_n^{-d}\exp(-cn\e_n^d)$ by a standard application of Bernstein's inequality. On the other hand, from Proposition \ref{lem:pointwise convergence},    
		\[ \left| \avsum_{ x \in \X_n}  \frac{\mathbf{1}_{\cu_{p,j,\upsilon}^m}}{\rho(\cu_{p,j,\upsilon}^m)} ( \Delta_\rho \overline{u}(x) - \mathcal{L}_{\e_n,n} \overline{u}(x) ) \right| \leq C\frac{\e_n^2}{3^{p-m}}B ,\]
		with probability at least $1 - Cn\exp \Biggl(- \frac{cn\e_n^{d+4}B}{B +\frac{\e_n^2}{\e_n}\|\nabla \overline{u}\|_{L^\infty(\M)}+\e_n^2\|D^2\overline{u}\|_{L^\infty(\M)}+\e_n^3\|D^3\overline{u}\|_{L^\infty(\M)}  } \Biggr)$. Using a union bound, we conclude that
		\[ \lVert \Delta_\rho \overline{u}  - \mathcal{L}_{\e_n, n} \overline u \rVert_{\underline{H}^{-1}(\X_n)} \leq C \sum_{p=1}^m 3^{p-m} \left( \frac{\e_n^2}{3^{p-m}} B \right) = CB m \e_n^2 \leq  C B \log(1/\e_n) \e_n^2,   \]		
		with probability at least $1-Cn\sum_{p=1}^m3^{m-p} \exp \Biggl(- \frac{cn\e_n^{d+4}B}{B +\frac{\e_n^2}{\e_n}\|\nabla \overline{u}\|_{L^\infty(\M)}+\e_n^2\|D^2\overline{u}\|_{L^\infty(\M)}+\e_n^3\|D^3\overline{u}\|_{L^\infty(\M)}  } \Biggr) \geq 1-Cn\e_n^{-d} \exp \Biggl(- \frac{cn\e_n^{d+4}B}{B +\frac{\e_n^2}{\e_n}\|\nabla \overline{u}\|_{L^\infty(\M)}+\e_n^2\|D^2\overline{u}\|_{L^\infty(\M)}+\e_n^3\|D^3\overline{u}\|_{L^\infty(\M)}  } \Biggr)  $.
		This completes the proof of \eqref{eq:claim H-1 norm of L0-Ln}. Returning to \eqref{eq:decomposition}, we deduce that with very high probability
		\[\lVert w_{\e_n, n}  \rVert_{\underline{H}^1(\X_n)}^2  \leq a_1 \lVert w_{\e_n, n}  \rVert_{\underline{H}^1(\X_n)} + a_2,   \]  
		where $a_1 = C B \log(1/\e_n) \e_n^2$ and $a_2= CB \e_n^2$. From this we obtain the desired probabilistic bound on $\lVert w_{\e_n, n}  \rVert_{\underline{H}^1(\X_n)}$.
	\end{proof}

	\subsection{Proof of Theorem~\ref{t.upperbound}} \label{ss.final}

	We are ready to present the proof of Theorem \ref{t.upperbound}.

	\begin{proof}[Proof of Theorem \ref{t.upperbound}]
	The proof of~\eqref{e.quench} is a rather simple adaptation of Proposition~\ref{p.graphpoisson}, and various proofs of this are possible; see~\cite{trillos2019error,calder2019improved,armstrong2023optimal} for further details. At a high level, all of these proofs proceed by first obtaining a rate of convergence for the eigenvalues, and then using the equation to obtain rates of convergence for eigenfunctions. Here, we present a refinement of the proof strategy in~\cite{calder2019improved} using the estimates that we derived in earlier sections.
		
		\textbf{Step 1. Convergence rates for eigenvalues.} 
		The goal of this step is to show that
		\begin{equation}\label{e.eigenvaluerate}
		\P \Bigl[ \frac{|\lambda_{n,l} - \lambda_l|}{\lambda_l} > C\log(1/\e_n)\e_n^2\Bigr] \leqslant Cn \e_n^{-d} \exp \Biggl( - \frac{cn \e_n^{d+4}}{1+\e_n\sqrt{\lambda_l} \lambda_l^{\frac{d-1}{2}}}\nc\Biggr)\,. 
		\end{equation}
        if we show \eqref{e.eigenvaluerate}, then with the choice of $\e_n$ in \eqref{e.howhighup}, we obtain
        \begin{equation*}
		\P \Bigl[ \frac{|\lambda_{n,l} - \lambda_l|}{\lambda_l} > C\log(1/\e_n)\e_n^2\Bigr] \leqslant Cn \e_n^{-d} \exp \left( -cn \e_n^{d+4}\right)\,. 
		\end{equation*}
        \nc
		Recall that from \eqref{eq:EigenvalueEstimate} we have
		\begin{equation} \label{e.intermediatestep}
		|\lambda_{n,l} - \lambda_l|  \leqslant 
		\frac{1}{1 - \| f_l - \phi_{n,l}\|_{\underline{L}^2(\X_n)}} \biggl| \avsum_{x \in \X_n} \phi_{n,l}\bigl( \mathcal{L}_{\e_n,n} f_l - \Delta_\rho f_l \bigr) \biggr| \,.
		\end{equation}
		Appealing to~Proposition \ref{prop:f_n,f_0 angle}, for example, the first factor is no more than~$\frac12$ with very high probability. For the second term, we first note that
        \[ \biggl| \avsum_{x \in \X_n} \phi_{n,l}\bigl( \mathcal{L}_{\e_n,n} f_l - \Delta_\rho f_l \bigr) \biggr| \leq \lVert \phi_{n,l} \rVert_{\underline{H}^1(\X_n)} \lVert \mathcal{L}_{\e_n,n} f_l - \Delta_\rho f_l \rVert_{\underline{H}^{-1}(\X_n)}.   \]
       Now,
		\begin{equation*}
	\|\phi_{n,l}\|_{\underline{H}^1(\X_n)}\leq C \sqrt{\la_{n,l}} \leq C\sqrt{\lambda_l}\,,
		\end{equation*}
		with probability at least $1-Cn\exp(-cn\e_n^d)$ (see Proposition \ref{prop:la_n<2la_0}). For the other term, we can use \eqref{eq:claim H-1 norm of L0-Ln} (which, recall, uses the multiscale Poincaré inequality and Proposition \ref{lem:pointwise convergence}) applied to $f_l$ (which solves the Poisson equation $\Delta_\rho f_l = \lambda_l f_l$) to get 
        \begin{multline}\label{eq:claim H-1 norm fl}
		\P\left[ \|\Delta_\rho f_l - \mathcal{L}_{\e_n,n} f_l\|_{\underline{H}^{-1}(\X_n)}\geq C B \log(1/\e_n) \e_n^2\right]\\
        \leq Cn \e_n^{-d}\exp\left(-\frac{cn\e_n^{d+4}B}{B+\e_n \|D f_l\|_{L^\infty(\M)} + \e_n^2\| D^2 f_l\|_{L^\infty(\M)}+ \e_n^3\| D^3 f_l\|_{L^\infty(\M)}\nc}\right).
		\end{multline}
        It remains to estimate the constant~$B$ and $\| D f_l\|_{L^\infty(\M)}$ for an eigenfunction. The elliptic regularity for eigenfunctions implies immediately (for details, see~\cite{armstrong2023optimal}, Remark \ref{rem:RegCoeffciients} and \cite[Corollary 2.29]{FernndezReal2022} ) that 
		\begin{equation*}
	B \sim C\sqrt{\lambda_l}, \quad \| D f_l\|_{L^\infty(\M)}\leq \lambda_l^{\frac{d+1}{2}}, \quad  \| D^2 f_l\|_{L^\infty(\M)}\leq \lambda_l^{\frac{d+2}{2}},\quad  \| D^3 f_l\|_{L^\infty(\M)}\leq \lambda_l^{\frac{d+3}{2}}\nc \,.
		\end{equation*}
	Inequality \eqref{e.eigenvaluerate} follows. 
		
We note that the point of departure from the proof in~\cite{calder2019improved} is how we estimate the inner product in~\eqref{e.intermediatestep}. In that paper, one applies Cauchy-Schwarz to estimate each factor in~$\underline{L}^2(\X_n)$. This leads to losses in length scale (mainly from the linear term in the Taylor expansion of $\overline{u}$ in the analysis in section \ref{sec:ConcentrationBounds}) that are precluded when one instead measures the size of the right-hand side in~$\underline{H}^{-1}(\X_n),$ which is a more \emph{global} quantity. 
		
		\smallskip 
		\textbf{Step 2.} Having proven eigenvalue rates of convergence, to prove convergence rates for eigenfunctions we follow the proof of Proposition~\ref{p.graphpoisson} closely: one simply studies the discrete graph equation solved by the function~$w_{\e_n,n}  \defeq  \phi_{n,l} - f_l,$ and proceeds as in that argument. This function satisfies the equation
		\begin{equation*}
		\mathcal{L}_{\e_n,n} w_{\e_n,n} -\la_{n,l} w_{\e_n,n}  = - \bigl( \mathcal{L}_{\e_n,n} f_l- \Delta_\rho f_l \bigr)  - \bigl( \lambda_l - \la_{n,l}\bigr) f_l  =: h_l \,.
		\end{equation*}
		Multiplying this equation by~$w_{\e_n,n}$ and estimating the right-hand side in~$\underline{H}^{-1}(\X_n)$, one simply repeats the arguments in Lemma~\ref{lem:pointwise convergence} and Proposition~\ref{p.graphpoisson} to ultimately obtain
		\begin{multline*}
		\P\Bigl[\biggl(\frac1{\e_n^2} \avsum_{x,y \in \X_n} \eta_{\e_n}(|x-y|) (\phi_{n,l}(x) - \phi_{n,l}(y) - f_l(x) + f_l(y))^2\biggr)^{\sfrac12} \geqslant \frac{C \log(1/\eps_n)\eps_n^2\lambda_l}{\gamma_l}  \Bigr]		\\
		\leq \P\biggl[ \|h_l\|_{\underline{H}^{-1}(\X_n)}\geq C \log(1/\eps_n)\eps_n^2\sqrt{\lambda_l}\nc \biggr]
		+ \P \biggl[ \frac{|\la_{n,l} - \lambda_l|}{\lambda_l} \geqslant C \log(1/\eps_n)\eps_n^2 \biggr] \\
		\leqslant Cn \e_n^{-d} \exp \Biggl( - \frac{cn \e_n^{d+4}}{1+\e_n\sqrt{\lambda_l} \lambda_l^{\frac{d-1}{2}}\nc}\Biggr)\leq Cn \e_n^{-d} \exp \left( - cn \e_n^{d+4}\right), 
		\end{multline*}	
        where we use the parameter choice of $\e_n$ in terms of $\lambda_l$ in the final inequality.
        
		Finally, a similar bound on the~$\underline{L}^2(\X_n)$ norm of the difference~$\phi_{n,l} - f_l$ follows by an application of the (global) discrete Poincar\'e inequality (Proposition \ref{l.poincaregraph}). Combining with Step 1, this completes the proof of~\eqref{e.quench}. 
		
		
		\smallskip
		\textbf{Step 3.} Turning to the proof of~\eqref{e.anneal}, we decompose $\mathcal{E}_l(\X_n)$ into two parts as
		\begin{multline*}
		\E_{\X_n\sim\rho} \Bigl[ \mathcal{E}_l(\X_n) \Bigr] = \E_{\X_n\sim\rho} \Bigl[ \mathcal{E}_l(\X_n)\mathbbm{1}_{\mathcal{E}_l(\X_n)\leq C\log(1/\e_n)\e_n^2} \Bigr]\\
        +\E_{\X_n\sim\rho} \Bigl[ \mathcal{E}_l(\X_n)\mathbbm{1}_{\mathcal{E}_l(\X_n)> C\log(1/\e_n)\e_n^2} \Bigr] .
		\end{multline*}
		First,
		\begin{multline}\label{eq:El2>1 bound}
		\E_{\X_n\sim\rho} \Bigl[ \mathcal{E}_l(\X_n)\mathbbm{1}_{\mathcal{E}_l(\X_n)>C\log(1/\e_n)\e_n^2} \Bigr]  \leqslant \max_{\X_n} |\mathcal{E}_l(\X_n)|  \P(\mathcal{E}^2_l(\X_n)\geqslant C\log(1/\e_n)\e_n^2) \\
		\leqslant \frac{C}{\e_n^{d+2}}  \P\left(\mathcal{E}_l(\X_n)\geqslant C\log(1/\e_n)\e_n^2\right) 
		\leqslant  \frac{Cn\e_n^{-d}}{\e_n^{d+2}}\exp( - c n\e_n^{d+4} ) \,. 
		\end{multline}
		On the other hand,
		\begin{equation}\label{eq:El2 bound}
		\E_{\X_n\sim\rho} \Bigl[ \mathcal{E}_l(\X_n)\mathbbm{1}_{\mathcal{E}^2_l(\X_n)\leq C\log(1/\e_n)\e_n^2} \Bigr]\leq  C\log(1/\e_n)\e_n^2.
		\end{equation}
		By combining the above inequalities, we obtain
		\begin{equation}
		\E_{\X_n\sim\rho} \Bigl[ \mathcal{E}_l(\X_n) \Bigr]\leq C\log(1/\e_n)\e_n^2 + \frac{Cn}{\e_n^{2d+2}}\exp( - c n\e_n^{d+4} ) .
		\end{equation}
		Optimizing over~$\e_n >0$ with $\e_n\sim \left(\frac{n}{\log n}\right)^{-\frac{1}{d+4}}$ we deduce 
		\begin{eqnarray*}
			\E_{\X_n\sim\rho} \Bigl[ \mathcal{E}_l(\X_n) \Bigr]\leq C   \left(\frac{1}{n}\right)^{\frac{2}{d+4}}\frac{\log n}{\log\log n}.
		\end{eqnarray*}
		This completes the proof of the theorem.
	\end{proof}

	\subsection{Extension Results}\label{sec:proof of upperboundcontinuum}
	

	
	\subsubsection{Proof of Theorem \ref{thm:Data dependent construction}}

	We mimic the proof of \cite[Lemma 9]{trillos2019error}, observing that the extension operator $\Lambda_r$ introduced in \eqref{eq:ExtensionLambda} is defined as a convolution of the kernel $k_r$ with an empirical measure and not with the manifold's volume form as done in \cite{trillos2019error}. We decompose the proof into two steps. First, we show that with probability at least $1-C r^{-d}\exp(-cn r^d)$ we have
	\begin{equation}\label{eq:proof of nabla interpolation}
	\int_{\M} | \nabla \Lambda_ru|^2 \dx \leq  \frac{C}{r^{d+2}} \avsum_{x, y \in \X_n} \eta\left(\frac{|x-y|}{2r}\right) ( u(x) - u(y))^2 
	\end{equation}
	for every $u: \X_n \rightarrow \R$. 
	By taking $u=\phi_{n,l}-f_l$ and $r = \e_n/2$, we immediately deduce from the above inequality and the continuum Poincaré inequality (see Lemma \ref{lemma:poincare inequality}) that
	\begin{equation}
	\lVert  \Lambda_{\e_n/2} \phi_{n,l}- \Lambda_{\e_n/2} f_l   \rVert_{H^1(\M)} \leq  C \lVert \phi_{n,l} - f_l \rVert_{\underline{H}^1(\X_n)},
	\label{eq:AuxVarianceExtension}
	\end{equation}
	with probability at least $1-C \e_n^{-d}\exp(-cn \e_n^d)$. 
	
	In the second step, we ensure that, with probability at least $1- Cn\exp(- cr^d) - Cn\exp(-c r^{d} t^2)$,
	\begin{equation}
	\int_\M | \nabla \Lambda_r f_l(x) - \nabla f_l(x)   |^2 \dx\leq C (r^4+t^2).
	\label{eq:ExtensionBias}
	\end{equation}
	Combining \eqref{eq:AuxVarianceExtension}, \eqref{eq:ExtensionBias} with $r= \e_n/2$, and using the triangle inequality, we obtain, with probability at least $1-C\e_n^{-d}\exp(-cn\e_n^d)-Cn\exp(-c\e_n^{d} t^2)$, 
	\begin{equation*}
	\int_\M | \nabla \Lambda_{\e_n/2} \phi_{n,l}(x) - \nabla f_l(x)   |^2 \dx\leq C(\e_n^4+t^2) +C\|\phi_{n,l}-f_l\|_{\underline{H}^1(\X_n)}^2.
	\end{equation*}
	From the above and Poincaré inequality (to get bounds for the $L^2(\M)$-norm) we immediately deduce \eqref{e.mainestimatewithextension}.  It thus suffices to establish \eqref{eq:proof of nabla interpolation} and \eqref{eq:ExtensionBias}.

	\medskip

	Along the proofs of these two bounds, we use the following construction from \cite{trillos2019error}: with probability at least $1-C r^{-d}\exp(- cn r^d)$, we can find a density function ${\rho}_n$ and a map $T: \M \rightarrow \X_n$ such that
	\begin{enumerate}
		\item $c \rho(x) \leq {\rho}_n(x) \leq C \rho(x), \quad \forall x \in \M$.  
		\item The pushforward of $\rho_n $ by $T$ is equal to the empirical measure $ \frac{1}{n}\sum_{i=1}^n \delta_{x_i} $, and $\sup_{x\in\M} |x- T(x)| \leq r$.
	\end{enumerate}
	In the rest of the proof, we will implicitly assume that we are in the event where 1. and 2. hold.

	\textbf{Step 1:} In order to prove \eqref{eq:proof of nabla interpolation}, we start by introducing $\tilde{V}_1, \dots, \tilde{V}_n$, the partition of $\M$ defined by $\tilde{V}_i := T^{-1}(\{ x_i\})$, $i=1, \dots, n$. Relative to a given $u: \X_n \rightarrow \R$, we consider the function $\tilde{u}: \M \rightarrow \R$  defined as
	\[ \tilde{u}(x) := u(x_i) , \quad x \in \tilde{V}_i, \quad i=1, \dots, n.\]
	We note that $\tilde u$ is a piecewise constant function over $\M$. This function is only used in the analysis that follows and it is not an extension of $u$ that is useful in practice.
	
	From the definition of $k_r$ (see \eqref{eq:ExtensionKernel}) it follows that
	\begin{equation}\label{eq:nabla k_r}
	\nabla_x k_r(x, x_i ) = \frac{1}{r^{d+1}}\eta\left( \frac{|x-x_i|}{r} \right) \frac{P_x(x_i - x)}{|x-x_i|},
	\end{equation}
	where $\nabla_x$ is the gradient in $\M$ in the $x$ coordinate, and $P_x(x_i -x )$ represents the projection of the vector $x_i- x$ onto $T_x\M$, the tangent plane at $x$. We follow \cite[Lemma 9]{trillos2019error} and observe that, for any given $x \in \M$, we can write
	\[ \nabla \Lambda_r u(x) = \frac{1}{\theta(x)} A_1(x) + A_2(x),  \]
	where 
	\[ A_1(x) \defeq  \frac{1}{n} \sum_{i=1}^n ( u(x_i) - \tilde u (x))\nabla_x k_r(x,x_i)  \]
	and
	\[ A_2(x) \defeq  \nabla (\theta^{-1}(x)) \frac{1}{n} \sum_{i=1}^n ( u(x_i) - \tilde u (x)) k_r(x,x_i), \]
	as can be directly verified. In the above computation, we simply added and subtracted the terms involving $\tilde u (x)$. The specific details in the definition of $\tilde u (x)$ are, for the moment, irrelevant, and in particular we could have put any real number in place of $\tilde u (x)$ without changing the above identity. 
	
	Using Cauchy-Schwarz inequality we obtain
	\[ |A_1(x)|^2 \leq \left(   \frac{1}{n r^d} \sum_{i=1}^n \eta\left(\frac{|x- x_i|}{r}\right) \right) \left( \frac{1}{nr^{d+2}} \sum_{i=1}^n ( u(x_i) - \tilde u (x) )^2 \eta\left(\frac{|x- x_i|}{r}\right) \right).   \]
	By \eqref{e.comparegeodesic} and a standard concentration inequality (\cite[Corollary 3.7]{calder2022lipschitz}) we see that, with probability at least $1-C (tr)^{-d}\exp(-cnr^d t^2)$, 
	\begin{equation}\label{eq:uniform distribution control}
	\left|\frac{1}{n r^d} \sum_{i=1}^n \eta\left(\frac{|x- x_i|}{r}\right)-\rho(x)\right|\leq t+Cr^2, \text{ for all } x\in\M,
	\end{equation}
	where we recall $\eta$ was assumed to be normalized (i.e., \eqref{e.normalize} holds). Therefore, by choosing $t=C'>0$ for $C'$ small enough, we deduce that with probability at least $1-C r^{-d}\exp(-cn r^d)$ 
	\begin{eqnarray*}
		\frac{1}{n r^d} \sum_{i=1}^n \eta\left(\frac{|x- x_i|}{r}\right)\leq C, \text{ for all } x\in\M.
	\end{eqnarray*}
	Similarly, with probability at least $1-C r^{-d}\exp(-cn r^dt^2)$, for all $x \in \M$
	\begin{equation}\label{eq:theta}
	|\theta(x)- \tau_\psi \rho(x)|\leq C r^2+t,
	\end{equation}
	where the factor $\tau_\psi$ is given by
    \begin{equation}
       \tau_\psi:= \int \psi(|v|)\dd v. 
       \label{eqn:TauFactor}
    \end{equation}
    Indeed, note that $\theta$ is nothing but a kernel density estimator for $\rho$. In particular, with probability at least $1-C r^{-d}\exp(-cn r^d)$, 
	\begin{equation}\label{eq:theta2}
	C >\theta(x)> c, \text{ for all } x\in\M.
	\end{equation}
	We conclude from the above that, with probability at least $1-Cr^{-d}\exp(-cnr^d)$,
	\begin{equation}\label{A1}
	\left|\frac{1}{\theta(x)} A_1(x) \right|\leq \frac{C}{nr^{d+2}} \sum_{i=1}^n ( u(x_i) - \tilde u (x) )^2 \eta(|x- x_i|/r), \quad  \forall  x\in\M.
	\end{equation}

	Regarding $A_2(x)$, we have
	\begin{equation}\label{eq:theta union bound}
	| \nabla \theta (x)| \leq \frac{1}{r} \left( \frac{1}{n r^d} \sum_{i=1}^n\eta \left(\frac{|x_i-x|}{r}\right) \right)  \leq \frac{C}{r}, \text{ for all } x\in\M.
	\end{equation}
	From this, Cauchy-Schwartz inequality, and Assumption \ref{assump.Eta} it follows that 
	\begin{equation}\label{A2}
	|A_2 (x)|^2 \leq C \frac{1}{nr^{d+2}} \sum_{i=1}^n \eta(|x-x_i|/r) ( u(x) - \tilde u(x_i))^2, \text{ for all } x\in\M.
	\end{equation}
	Combining \eqref{A1} and \eqref{A2}, we conclude that
	\[ |\nabla \Lambda_r u(x)|^2 \leq C \frac{1}{n r^{d+2} } \sum_{i=1}^n \eta (|x- x_i|/{r}) ( \tilde u(x) - u(x_i))^2 , \quad \forall  x\in\M. \]
	
	Now, recall that for all $x \in \tilde{V}_j$ we have $\tilde u (x) = u(x_j)$. In addition, by the fact that $\eta$ was assumed to be Lipschitz it follows that for all $x \in \tilde{V_j}$ we have $\eta(| x-  x_i| /r ) \leq C \eta( |x_j - x_i| / 2r)$.  Hence
	\begin{align*}
	\int_{\tilde{V}_j} |\nabla \Lambda_r u(x)|^2  \rho(x ) \dx   &   \leq C  \int_{\tilde{V}_j} |\nabla \Lambda_r u(x)|^2  \rho_n(x ) \dx  
	\\ & \leq C \rho_n(\tilde{V_j} )  \frac{1}{n r^{d+2} } \sum_{i=1}^n ( u(x_i) - u(x_j))^2 \eta(|x_i - x_j|/2r)
	\\ & = C \frac{1}{ n^2 r^{d+2} } \sum_{i=1}^n ( u(x_j) - u(x_i))^2 \eta(|x_i - x_j|/2r).
	\end{align*}
	Summing over $j=1, \dots, n$, we get
	\[ \int_\M   |\nabla \Lambda_r u(x)|^2  \rho(x) \dx  \leq C  \frac{1}{n^2 r^{d+2}} \sum_{j=1}^n\sum_{i=1}^n  ( u(x_i) - u(x_j))^2 \eta(|x_i- x_j|/2r).  \]
	This finishes the proof of \eqref{eq:proof of nabla interpolation}.
	
	\textbf{Step 2:} To prove \eqref{eq:ExtensionBias}, we start by using a similar decomposition for $\nabla \Lambda_r f_l$ as in \textbf{Step 1} to obtain 
	\begin{align*}
	\begin{split}
	\int_\M  \Bigl| \nabla \Lambda_r f_l(x) - \nabla f_l(x)   |^2 &  \dx =\\
	&\int_\M \Bigl|\frac{1}{n\theta(x)}\sum_{j=1}^n (f_l(x_j)-f_l(x))\frac{1}{r^{d+1}}\eta\left(\frac{|x-x_j|}{r}\right)\frac{P_x(x_j-x)}{|x_j-x|}\\
	&\quad  \quad +\nabla (\theta^{-1}(x))\frac{1}{n}\sum_{j=1}^n (f_l(x_j)-f_l(x)) \frac{1}{r^d} \eta\left(\frac{|x-x_j|}{r}\right)-\nabla f_l(x)\Bigr|^2\dx.
	\end{split}
	\end{align*}
	In what follows, we estimate each of the terms
	\begin{equation}\label{term1 in extension}
	\left|\nabla (\theta^{-1}(x))\frac{1}{n}\sum_{j=1}^n (f_l(x_j)-f_l(x)) \frac{1}{r^d} \eta\left(\frac{|x-x_j|}{r}\right)\right|,
	\end{equation} 
	\begin{equation}\label{term2 in extension}
	\left|\frac{1}{n\theta(x)}\sum_{j=1}^n (f_l(x_j)-f_l(x))\frac{1}{r^{d+1}}\eta\left(\frac{|x-x_j|}{r}\right)\frac{P_x(x_j-x)}{|x_j-x|}-\nabla f_l(x)\right|,
	\end{equation}
	and show that they are small in the $L^2(\M)$-sense with very high probability.
	
	\textbf{Controlling \eqref{term1 in extension}.} First, observe that we can write
    \[ \left|\frac{1}{n}\sum_{j=1}^n (f_l(x_j)-f_l(x)) \frac{1}{r^d} \eta\left(\frac{|x-x_j|}{r}\right)\right| = r^2 | \mathcal{L}_{r, n} f_l(x)|.  \]
    From the pointwise consistency of graph Laplacians (see, for example, \cite[Theorem 3.3]{calder2019improved}), one can easily deduce that, with probability at least $1- Cn \exp(-cnr^{d+2})$,  
    \[ | \mathcal{L}_{r, n} f_l(x) | \leq C, \quad \forall x \in \M.\]
It thus remains to find a better bound for $|\nabla \theta^{-1}(x)|$ (e.g., an order one bound) than the one presented in \eqref{eq:theta union bound}. 
For this, recall that, thanks to \eqref{eq:nabla k_r}, we have
\[ \nabla \theta (x) = \frac{1}{n} \sum_{j=1}^n \frac{1}{r^{d+1}} \eta\left( \frac{|x-x_j|}{r} \right)\frac{P_x(x_j-x)}{|x-x_j|}. \]
Now, by \eqref{e.comparegeodesic} and \eqref{eqn:DifferenceGradients}, 
\[ \left| \nabla \theta (x) - \frac{1}{n} \sum_{j=1}^n \frac{1}{r^{d+1}} \eta\left( \frac{d(x,x_j)}{r} \right)\frac{\log_x(x_j)}{d(x,x_j)} \right| \leq C r, \]
in the same event where \eqref{eq:theta2} holds. In addition, for every fixed $x \in \M$ we have
\[ \E \left[ \frac{1}{n} \sum_{j=1}^n \frac{1}{r^{d+1}} \eta\left( \frac{d(x,x_j)}{r} \right)\frac{\log_x(x_j)}{d(x,x_j)}  \mid x \right] = \frac{1}{r} \int_{B_1(0) \subseteq T_x \M} \eta(|v|)J_x(rv)\rho(\exp_x(rv))\frac{v}{|v|} \dd v.   \]
The latter term, however, is of order one (i.e., a vector whose norm is bounded by a uniform constant) since, by symmetry, we have
\[ \int_{B_1(0) \subseteq T_x \M} \eta(|v|) \frac{v}{|v|}\dd v =0.   \]
On the other hand, Bernstein's inequality and a suitable union bound imply that, with probability at least $1- Cn \exp(- c n r^{d+2})$ we have
\[ \left|  \frac{1}{n} \sum_{j=1}^n \frac{1}{r^{d+1}} \eta\left( \frac{d(x,x_j)}{r} \right)\frac{\log_x(x_j)}{d(x,x_j)}         -\E \left[ \frac{1}{n} \sum_{j=1}^n \frac{1}{r^{d+1}} \eta\left( \frac{d(x,x_j)}{r} \right)\frac{\log_x(x_j)}{d(x,x_j)}  \mid x \right]  \right| \leq C  \]
for all $x \in \M$. 

Putting together all the above computations, and using also \eqref{eq:theta2}, we obtain that \eqref{term1 in extension} is uniformly bounded over all $x \in \M$ by $Cr^2$, with probability at least $1- Cn \exp(- cn r^{d+2})$.

	\medskip 
    
	\textbf{Controlling \eqref{term2 in extension}.} We start by noticing that, thanks to \eqref{e.comparegeodesic} and \eqref{eqn:DifferenceGradients}, for every $x \in \M$ we have 
	\begin{multline*}
	\Bigl|\frac{1}{n\theta(x)}\sum_{j=1}^n (f_l(x_j)-f_l(x))\frac{1}{r^{d+1}}\eta\left(\frac{|x-x_j|}{r}\right)\frac{P_x(x_j-x)}{|x_j-x|}\\
	-\frac{1}{n\theta(x)}\sum_{j=1}^n (f_l(x_j)-f_l(x))\frac{1}{r^{d+1}}\eta\left(\frac{d(x,x_j)}{r}\right)\frac{\log_x(x_j)}{d(x,x_j)}\Bigr|
	\leq Cr^2,
	\end{multline*}
	under the event where \eqref{eq:theta2} holds.
	
Now, 
\begin{align*}
   \E \Bigl[  \frac{1}{n}\sum_{j=1}^n & (f_l(x_j)-f_l(x))\frac{1}{r^{d+1}}  \eta\left(\frac{d(x,x_j)}{r}\right)\frac{\log_x(x_j)}{d(x,x_j)} \mid x \Bigr] 
   \\& = \int_{B_1(0) \subseteq T_x \M} \frac{f_l(\exp_x(rv))- f_l(x)}{r} \eta(|v|) \frac{v}{|v|} \rho(\exp_x(rv)) J_x(rv) \dd v.
\end{align*}
After a Taylor expansion of $\rho$ and $f_l$, the latter term is seen to be equal to
\[ \rho(x)\int \nabla f_l (x) \cdot v \eta(|v|)\frac{v}{|v|} \dd v  + O(r^2), \]
where by $O(r^2)$ we mean a vector whose norm is bounded by a constant times $r^2$. In the above, the order $r$ term is indeed zero given that, by symmetry, we have
\[ \int \nabla f_l (x) \cdot v \eta(|v|) \nabla \rho(x) \cdot v \frac{v}{|v|} \dd v =0  \]
and
\[ \int \langle D^2 f_l(x) v, v \rangle \eta(|v|) \frac{v}{|v|} \dd v =0. \]
Note that the $O(r^2)$ term is controlled by the derivatives up to order three of $f_l$ and derivatives up to order two of $\rho$. On the other hand, a direct computation reveals that  
\[  \rho(x)\int \nabla f_l (x) \cdot v \eta(|v|)\frac{v}{|v|} \dd v = \tau_\psi \rho(x)\nabla f_l(x), \]
where $\tau_\psi$ is the same constant as in \eqref{eqn:TauFactor}.

Finally, by Bernstein's inequality and a union bound, it is straightforward to see that
 \begin{align*}
  \Bigl| \frac{1}{n}\sum_{j=1}^n & (f_l(x_j)-f_l(x))\frac{1}{r^{d+1}}  \eta\left(\frac{d(x,x_j)}{r}\right)\frac{\log_x(x_j)}{d(x,x_j)} 
   \\& - \E \Bigl[  \frac{1}{n}\sum_{j=1}^n  (f_l(x_j)-f_l(x))\frac{1}{r^{d+1}}  \eta\left(\frac{d(x,x_j)}{r}\right)\frac{\log_x(x_j)}{d(x,x_j)} \mid x \Bigr]  \Bigr| \leq t
\end{align*}   
    for all $x \in \M$, with probability at least $1-Cn \exp(-cnr^{d} t^2)$. Putting all the above together, and using also \eqref{eq:theta}, we deduce that \eqref{term2 in extension} is uniformly bounded over all $x \in \M$ by $t + Cr^2$, with probability at least $1 - Cr^{-d} \exp(-cnr^d) - Cn \exp(-cnr^d t^2)$.

From our probabilistic bounds on \eqref{term1 in extension} and \eqref{term2 in extension} we deduce \eqref{eq:ExtensionBias} and with it we conclude the proof.

    \bibliographystyle{abbrv}
	\bibliography{references}

	\appendix
	

	\section{Background on Riemannian Geometry}
	\label{App:GeoBack}

	\subsection{Exponential Map and Normal Coordinates}
	\label{app:ExpMap}

	%

	Let $x \in \M$. The \textit{exponential map} $\exp_x$ at the point $x$ is the map $\exp_x : T_x \M \rightarrow \M$ with the property that, for every $v\in \T_x \M $, the curve $t \in \R_+ \mapsto \exp_x(t v) $ is the unique constant speed geodesic that starts at $x$ with initial velocity $v$.
	
	It turns out that for small enough $r>0$, the exponential map 
	\begin{equation} \label{expmap}
	{\rm exp}_{x}:  B_r(0) \subseteq T_{x}\mathcal{M} \to B_\M(x,r) \subseteq\mathcal{M}
	\end{equation}
	is a diffeomorphism between the~$d$-dimensional Euclidean ball $B_r(0)$ in the tangent space $T_{x}\mathcal{M}$ and the geodesic ball of radius $r$ centered at $x$. The \textit{injectivity radius} $r_0$ is the largest $r$ such that all the exponential maps $\{\exp_x\}_{x \in \M}$ are diffeomorphisms, as described above. For $r<r_0$, we can thus introduce the diffeomorphic inverse of $\exp_x$, the \textit{logarithmic map}
	\begin{equation} \label{logmap}
	{\rm log}_{x}:  B_\M(x,r)  \subseteq \M \to B_r(0)\subseteq T_{x}\mathcal{M}\,. 
	\end{equation}
	Given $y \in B_\M(x,r)$ (for $r < r_0$), $v=\log_x(y) \in T_x \M$ can be interpreted as the initial velocity of the minimizing geodesic that at time $t=0$ starts at $x$ and at time $1$ ends at $y$ —i.e., the curve $t\in [0,1] \mapsto \exp_x(tv)$. Moreover, we have the relation 
	\[ d(y, x ) = |v|, \]
	and ${\rm exp}_{x}(0) = x.$ 
	
	By \textit{normal coordinates} around a point $x \in \M$, we simply mean the parameterization of $B_\M(x,r)$ via the exponential map $\exp_x$. In the paper, we repeatedly consider integrals of functions $g$ supported on  $B_\M(x,r)$. In normal coordinates, these integrals can be written as
	\[ \int_{B_\M(x,r)} g(x) \dx =  \int_{ B_r(0) \subseteq T_x \M }  g(v) J_x(v) \dd v, \]
	where $J_x(\cdot)$ is the Jacobian of the exponential map, i.e.,
	\begin{equation*}
	J_x(v): = |\det D_v \bigl( \exp_x(v)\bigr)|\,.
	\end{equation*}
	When $|v|\leq \e_n$ for $\e_n$ satisfying \eqref{eq:assumption:eps small}, since~$D_v(\exp_x(0)) = I,$ it is well-known that the Jacobian admits a Taylor expansion about~$v = 0$ given by  
	\begin{equation} \label{e.Jactaylor}
	|J_x(v)-1|\leq C|v|^2,
	\end{equation}
	where $C$ only depends on \textit{scalar} curvature bounds on $\M$ (see \cite[Chapter 4]{do1992riemannian}), and is, in particular, uniform in~$x\in \M$; the latter fact follows from Rauch comparison theorem; see \cite[Section 2.2]{BIK}. 
	
	For the analysis in our proofs in section \ref{s.upperbound}, however, we need to impose some regularity assumptions on the function $v \mapsto D_v \exp_x(v)$ and also develop the Jacobian $J_x(v)$ to one degree higher. We will thus assume that the function $v \mapsto D_v\exp_x(v)$ satisfies
    \begin{equation}
       \lVert D_v \exp_x(\cdot ) \rVert_{C^{2,\alpha}} \leq L_I, 
       \label{eqn:RegularGeometry1}
    \end{equation}
    for all $x \in \M$. Here, $\lVert \cdot \rVert_{C^{2,\alpha}}$ is the canonical norm in the space of functions that are twice continuously differentiable and whose second derivatives are $\alpha$-H\"older continuous; the bound \eqref{eqn:RegularGeometry1} can be interpreted as an $\alpha$-H\"older continuity assumption on the rate of change of the intrinsic curvature (through sectional curvature) along the manifold. For the Jacobian, the bound \eqref{eqn:RegularGeometry1} implies 
	\begin{equation}\label{e.Jactaylor-third derivative}
	|J_x(v)- 1 + q_x(v)|\leq C(L_I)  |v|^3, 
	\end{equation}
	where $q_x(v)$ is a homogeneous polynomial of degree two that in particular is symmetric (i.e., $q_x(v)=q_x(-v)$). The constant $C(L_I)$ in the above inequality only depends on the constant $L_I$ from \eqref{eqn:RegularGeometry1}.

Finally, in normal coordinates the operator $\Delta_\rho$ can be written as
\begin{equation}
\label{eqn:LaplacianCoordinates}
    (\Delta_\rho f)(\exp_x(v)) = - \frac{1}{\rho \sqrt{\det(g)}} \sum_{i=1}^d \sum_{j=1}^d \frac{\partial }{\partial v_i} \left( \sqrt{\det(g)} \rho^2 g^{ij} \frac{\partial}{\partial v_j} f(\exp_x(v)) \right)  ,
\end{equation}
where $g^{ij}$ are the components of the inverse of the matrix $g(v):= D_v(\exp_x(v)) \cdot  D_v(\exp_x(v))^{{T}}$ (i.e., the metric tensor in normal coordinates.) 

\begin{remark}
For $\M$ and $\rho$ satisfying Assumption \ref{assump:MoreRegularity}, the coefficients $\{ a_{ij}\}_{ij}$ defined by
\[ a_{ij}:= \sqrt{\det(g)} \rho^2 g^{ij}  \]
fulfill the uniform ellipticity condition in \cite[Equation (2.32)]{FernndezReal2022} and are such that their $C^{2,\alpha}$ norm is controlled by a constant that only depends on the constants in Assumption \ref{assump:MoreRegularity}. 
\label{rem:RegCoeffciients}
\end{remark}
\nc


	\subsection{Euclidean Vs Geodesic Distances}

	While the bounds on the Jacobian of the exponential map can be interpreted as bounds on the manifold's ``intrinsic curvature" and its derivatives, for the purposes of our analysis, and given that our proximity graphs are based on the Euclidean distance and not on $\M$'s intrinsic geodesic distance, it will also be important to control the manifold's ``extrinsic curvature". Precisely, for $x,y\in\M$ such that $|x-y|\leq \e_n $ where $\e_n$ satisfies \eqref{eq:assumption:eps small}, it can be shown that 
	\begin{equation} \label{e.comparegeodesic}
	0 \le d(x,y)-|x-y|\leq C|x-y|^3\,,
	\end{equation}
	for a constant~$C > 0$ that depends on a bound on the \textit{second fundamental form} of $\M$ (as an embedded manifold in $\R^D$); see \cite[Chapter 6]{do1992riemannian} for a definition of the second fundamental form of a manifold embedded into another. Indeed, a geometric quantity that bounds the second fundamental form is the \textit{reach} of the manifold (denoted by $R$), and it can be defined as the largest radius $r>0$ such that for every $x \in \R^D$ with $ \inf_{y \in \M}|x -y| \leq r$ there is a unique closest point to $x$ in $\M$; see \cite[Proposition 2]{trillos2019error}). 
	
	However, just as for the intrinsic curvature, for our purposes it is also important to control the \textit{rate of change of the second fundamental form}, which, for our purposes, simply translates into having a bound like
	\begin{equation} \label{e.geodesic third derivative expansion}
	|d(x,y)-|x-y|-s(x,y)d(x,y)^3| \leq L_E \e_n^4,
	\end{equation}
	where $|s(x,y)|\leq C$ (see, e.g. \cite[Equation (18)]{trillos2023continuum}) and $s(x,y)$ is a symmetric function in its second variable, i.e., 
	\begin{equation}
	s(x, \exp_x(v) ) = s(x, \exp_x(-v)),  
	\label{eqn:symmetrySecondFundForm}
	\end{equation}
	for all $v \in \T_x\M$ with norm smaller than $\M$'s injectivity radius. The constant $L_E$ essentially provides a bound on the rate of change of the extrinsic curvature along the manifold. 

    The bounds on the rate of change of the second fundamental form can also be used to obtain the bound
    \begin{equation}
    \label{eqn:DifferenceGradients}
    \left|  \frac{P_x(y-x)}{|x-y|} - \frac{\log_x(y)}{d(x,y)} \right| \leq Cd(x,y)^2, \quad \forall x, y \in \M,
    \end{equation}
    where $P_x(\cdot)$ denotes the projection onto $T_x\M$. Indeed, this can be seen from the fact that, in case $d(x,y) \leq i_0$, we have  
  \[ -\frac{P_x(y-x)}{|x-y|} = \nabla_x |x-y| , \quad  -\frac{\log_x(y)}{d(x,y)} = \nabla_x d(x,y), \] 
  and from the expansion for the difference between $d(x,y)$ and $|x-y|$.
	
	\subsection{Taylor Expansions Along Geodesics}
	\label{app:Taylor}

	Let $\gamma:[0,1] \rightarrow \M$ be a constant speed geodesic with $\gamma(0) = x$ and $\dot{\gamma}(0) =v$. Let $f: \M \rightarrow \R$ be a twice continuously differentiable function and consider the function $t \in [0,1] \mapsto  \tilde{f}(t):= f(\gamma(t))$. We can carry out a standard first order Taylor expansion for $\tilde{f}$ around $0$ and write
	\[ \tilde{f}(t) = \tilde{f}(0) + \tilde{f}'(0) t + \mathcal{O}(t^2),  \]
	or in exact remainder form
	\[ \tilde{f}(t) =   \tilde{f}(0) + \tilde{f}'(0) t + \int_0^t (t-s) \tilde{f}''(s) \dd s.  \]
	By the chain rule, (i.e., the definition of the Riemannian gradient) 
	\[ \tilde{f}'(t) = \langle \nabla f (\gamma(t)) , \dot{\gamma}(t) \rangle. \]
	In particular, 
	\[  \tilde{f}'(0) = \langle \nabla f (x) , v \rangle. \]
	On the other hand,
	\begin{equation}
	{\tilde{f}}''(s)= \frac{d}{ds} \langle \nabla f(\gamma(s)) , \dot{\gamma}(s) \rangle =  \langle D^2 f(\gamma(s))(\dot \gamma(t)) , \dot{\gamma}(s) \rangle  +  \langle f(\gamma(s))  , \ddot{\gamma}(s) \rangle = \langle D^2 f(\gamma(s))(\dot \gamma(s)) , \dot{\gamma}(s) \rangle, 
	\label{eqn:SecondDerivative}
	\end{equation}
	where the second term is zero due to the fact that $\gamma$ is a geodesic.

	If $\gamma$ is the constant speed geodesic between $x$ and a point $y$ within distance $i_0$ from $x$, then we have $v = \log_x(y)$ and we can write
	\[ f(y)= f(x) + \langle \nabla f(x)  , v \rangle + \mathcal{O}(d_\M(x,y)^2),  \]
	as well as
	\[  f(y)= f(x) + \langle \nabla f(x)  , \log_x(y) \rangle +  \int_0^1(1- s)  \langle D^2 f(\exp_x(s v)) d(\exp_x)_{sv}(v), d(\exp_x)_{sv}(v) \rangle  \dd s,\]
	where here $d(\exp_x)_{sv}$ is the differential of $\exp_x$ at the point $sv \in T_x \M$, which maps vectors at $sv$ to vectors in $T_{\exp_x(sv)} \M$.
	
	Another useful identity in this context is
	\begin{equation}\label{eq:geodesic gamma computation}
	\langle \nabla f(y), -\log_y(x) \rangle =  \langle \nabla f(x), \log_x(y) \rangle + \mathcal{O}(\|D^2 f\|_{L^\infty(B_\M(x,\e_n))}d_\M(x,y)^2),
	\end{equation}
	which follows from the fundamental theorem of calculus, i.e.,  
	\[  \tilde{f}'(1) = \tilde{f}'(0) + \int_0^1 \langle D^2 f(\gamma(s))(\dot \gamma(s)) , \dot{\gamma}(s) \rangle  \dd s,   \]
	and the fact that $\dot{\gamma}(1)= - \log_y(x)$. 
	
	Finally, if in addition $f$ is assumed to be three times differentiable, we may consider the second order Taylor expansion with exact remainder given by
	\[  \tilde{f}(t) =   \tilde{f}(0) + \tilde{f}'(0) t +  \frac{\tilde{f}''(0)}{2}t^2 + \int_0^t \frac{(t-s)^2}{2} \tilde{f}'''(s) \dd s,  \]
	where we may compute $\tilde{f}'''(s)$ by differentiating \eqref{eqn:SecondDerivative}, which leads to
	\[ \tilde{f}'''(s) =  \sum_{i,j,k} \partial^3_{i,j,k} f(\exp_x(sv)) ((d\exp_x)_{sv}(v))_i ((d\exp_x)_{sv}(v))_j ((d\exp_x)_{sv}(v))_k.  \]
	
	\subsection{Tangent Bundle and Integration}
	\label{app:TangentBundle}
	
	The \textit{tangent bundle} $T\M$ of $\M$ is the set 
	\[ T\M : = \{  (x, v) \text{ s.t. } x \in \M, \quad v \in T_x\M  \}. \]
	As discussed in \cite[Chapter 3]{do1992riemannian}, $\M$'s Riemannian metric tensor induces a canonical Riemannian structure over $T\M$, and, in turn, a volume form that we denote by $\vol_{T\M}$. To describe integration with respect to $\vol_{T\M}$, consider, for $0< h < i_0$,  the set $\mathcal{B}_h \subseteq T\M$ defined by
	\[ \mathcal{B}_h := \{ (x,v) \in T\M \text{ s.t. } |v| \leq h  \},\]
	and let $g: T\M \rightarrow \R$ be a function supported on $\mathcal{B}_h$. Then
	\[ \int_{T \M } g((x,v)) \dd \vol_{T\M}((x,v)) = \int_\M \int_{B_h(0) \subseteq T_x \M } g((x,v)) \dd v \dx.          \]
	\textit{Liouville's theorem} (see \cite{do1992riemannian}) states that the \textit{geodesic flow} leaves $\vol_{T\M}$ invariant.

	\section{Proofs of Some Technical Lemmas}\label{A.proofs}

	\begin{lemma}[Poincar\'e–Wirtinger inequality; see Chapter 9 in \cite{brezis2011functional}]\label{lemma:poincare inequality}
		For any function $f\in H^1(\M)$ and density function $\rho\in\mathcal{P}_\M$, we have
		\begin{align}
		\int_\M |f(x)-\bar{f}|^2\rho(x)\dx\le\frac{1}{\lambda_\rho} \int_{\M}|\nabla f|^2\rho^2\dx,
		\end{align}
		where $\bar{f}$ is defined as
		\begin{align}
		\bar{f}\defeq \int_{\M} f\rho\dx\,.
		\end{align}
	\end{lemma}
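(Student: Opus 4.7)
The plan is to deduce the inequality from the spectral expansion of $f-\bar{f}$ in the orthonormal basis of $L^2(\M,\rho)$ consisting of eigenfunctions of $\Delta_\rho$. Recall that $\{f_l\}_{l\geq 1}$ forms a complete orthonormal family in $L^2(\M,\rho)$ with corresponding eigenvalues $0=\lambda_1<\lambda_2\leq\dots$, and that $f_1$ is constant on $\M$ (since the zeroth eigenspace consists of constants, $\M$ being connected). The constant $\lambda_\rho$ in the statement should be interpreted as $\lambda_2$, the spectral gap of $\Delta_\rho$.

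The first step is to observe that $g:= f-\bar{f}$ satisfies $\int_\M g\,\rho\,\dx=0$, i.e., $g$ is orthogonal to the constant eigenfunction $f_1$ in $L^2(\M,\rho)$. Consequently, the spectral expansion of $g$ takes the form $g=\sum_{l\geq 2} c_l f_l$ with $c_l = \langle g, f_l\rangle_{L^2(\M,\rho)}$, and by Parseval's identity,
\begin{equation*}
\int_\M |f-\bar{f}|^2\rho\,\dx = \sum_{l\geq 2} c_l^2.
\end{equation*}

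The second step is to express the Dirichlet energy $D_\rho(f) = \int_\M |\nabla f|^2 \rho^2\,\dx$ in terms of the eigenvalue expansion. Since $\nabla \bar{f}=0$, we have $D_\rho(f) = D_\rho(g)$. For sufficiently regular $g$, integration by parts and the eigenvalue equation $\Delta_\rho f_l = \lambda_l f_l$ give
\begin{equation*}
D_\rho(g) = \int_\M g\,(\Delta_\rho g)\,\rho\,\dx = \sum_{l\geq 2}\lambda_l c_l^2,
\end{equation*}
and this identity extends to all $g\in H^1(\M)$ by a standard density argument using the closability of the Dirichlet form.

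The third and final step is immediate: since $\lambda_l\geq \lambda_2 = \lambda_\rho$ for every $l\geq 2$, we obtain
\begin{equation*}
D_\rho(f) = \sum_{l\geq 2}\lambda_l c_l^2 \geq \lambda_\rho \sum_{l\geq 2} c_l^2 = \lambda_\rho \int_\M |f-\bar{f}|^2\rho\,\dx,
\end{equation*}
which rearranges to the desired bound. The main (minor) technical point to verify carefully is the Parseval identity for the Dirichlet form, namely that $D_\rho(g)=\sum_{l\geq 2}\lambda_l c_l^2$ for arbitrary $g\in H^1(\M)$; this follows from approximating $g$ in $H^1(\M)$ by the partial sums $\sum_{l=2}^N c_l f_l$ and using the fact that the Dirichlet form $D_\rho$ is closed and the eigenfunctions span a core in $H^1(\M)$, which is standard given the regularity $\rho\in\mathcal{P}_\M$ and the smoothness of $\M$.
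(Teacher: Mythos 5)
Your proof is correct. The paper does not actually prove this lemma---it simply cites Chapter 9 of Brezis---so there is no "paper proof" to compare against line by line, but the argument you give is the standard spectral one and it closes the gap cleanly. Your chain of reasoning is sound: $g := f-\bar f$ is orthogonal to the constant eigenfunction in $L^2(\M,\rho)$ because $\rho$ is a probability density, so its expansion lives entirely in $\mathrm{span}\{f_l\}_{l\geq 2}$; Parseval in $L^2(\M,\rho)$ gives the left-hand side, the form version of Parseval ($D_\rho(g)=\sum_{l\geq 2}\lambda_l c_l^2$) gives the right-hand side, and $\lambda_l\geq\lambda_2$ finishes. Your justification of the form-Parseval identity is also adequate: since $\rho_{\min}\leq\rho\leq\rho_{\max}$, the form norm $\|\cdot\|_{L^2(\rho)}^2+D_\rho(\cdot)$ is equivalent to the $H^1(\M)$ norm, so $H^1(\M)$ is exactly the form domain and the partial sums converge in $H^1$. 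You also correctly identify the otherwise-undefined constant $\lambda_\rho$ in the lemma statement as the spectral gap $\lambda_2$, which indeed agrees with the paper's use of this lemma (e.g.\ in the proof of Lemma \ref{l.poincaregraph}, where the discrete analogue uses $\lambda_{n,2}$). One small remark: you could avoid the Parseval machinery altogether by invoking the variational characterization $\lambda_2=\min\{D_\rho(h)/\|h\|_{L^2(\rho)}^2 : h\perp_{L^2(\rho)} \mathbf{1}\}$ directly applied to $h=f-\bar f$, which is equivalent but a line shorter; both routes yield the same optimal constant.
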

	
	\begin{lemma}\label{lem:KL divergence upper bound}
		For density functions $f,g$ on $\M$ such that $g(x)>0$ for all $x\in \M$, we have
		\begin{align}\label{eq:appendix-lemma2-claim}
		\int_\M f(x) \log \frac{f(x)}{g(x)}\dx  \le \int_\M \frac{(f(x)-g(x))^2}{g(x)}\dx\,.
		\end{align}
	\end{lemma}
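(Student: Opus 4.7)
The plan is to control the log-ratio pointwise by the elementary inequality $\log t \le t-1$ valid for all $t>0$, and then to rewrite the resulting integral so that the normalization of $f$ and $g$ as probability densities produces the exact $\chi^2$-type bound on the right-hand side of \eqref{eq:appendix-lemma2-claim}. Because both $f$ and $g$ integrate to $1$ over $\M$, there is enough flexibility to add and subtract constants without changing the value of the integrals, and this is what will allow us to convert a linear-in-$f$ bound into a quadratic-in-$(f-g)$ bound.

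First, applying the pointwise inequality $\log t \le t-1$ with $t = f(x)/g(x)$ (well-defined since $g>0$), and multiplying by $f(x)\ge 0$, gives
\[
f(x)\log\frac{f(x)}{g(x)} \le f(x)\Bigl(\frac{f(x)}{g(x)} - 1\Bigr) = \frac{f(x)^2}{g(x)} - f(x)\,.
\]
Integrating over $\M$ and using $\int_\M f\dx = 1$ yields
\[
\int_\M f(x)\log\frac{f(x)}{g(x)}\dx \le \int_\M \frac{f(x)^2}{g(x)}\dx - 1\,.
\]

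Next, I would expand the right-hand side of \eqref{eq:appendix-lemma2-claim} in the same way, using $\int_\M g\dx = 1$:
\[
\int_\M \frac{(f(x)-g(x))^2}{g(x)}\dx = \int_\M \frac{f(x)^2}{g(x)}\dx - 2\int_\M f(x)\dx + \int_\M g(x)\dx = \int_\M \frac{f(x)^2}{g(x)}\dx - 1\,.
\]
Combining the two displays gives the claimed inequality. There is no real obstacle here; the only mild point to note is that the quantity $f^2/g$ may be $+\infty$, but in that case the right-hand side of \eqref{eq:appendix-lemma2-claim} is $+\infty$ and the inequality holds trivially, while in the finite case the computation above is rigorous since all terms are integrable.
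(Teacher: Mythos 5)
Your proof is correct and takes essentially the same approach as the paper: both invoke the pointwise inequality $\log t \le t-1$, multiply by $f$, integrate, and then use the normalization $\int_\M f\dx = \int_\M g\dx = 1$ to identify the resulting expression with the $\chi^2$-type quantity. You merely spell out the algebraic identification of both sides with $\int_\M f^2/g\dx - 1$, which the paper compresses into a single equality.
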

	\begin{proof}
		From the fact that $\log x\le x-1$ for any $x>0$, we have 
		\begin{equation}
		\begin{split}
		\int_\M f(x) \log \frac{f(x)}{g(x)}\dx
		&\le \int_\M f(x) \left(\frac{f(x)}{g(x)}-1\right)\dx\\
		&= \int_\M  \frac{(f(x)-g(x))^2}{g(x)}\dx,
		\end{split}
		\end{equation}
		where the last equality follows from the fact that $f,g$ are density functions on $\M$.
	\end{proof}

	\begin{lemma}[Poincar\'{e}  inequality for a weighted grid graph]
		\label{lem:PoincareGrid}
		Let $\ell \in \N$ and define the set 
		\[V_\ell \defeq  \{  v =(v_1, \dots, v_d) \in \Z^d  \: \mathrm{ s.t. } \: v_i \in \{0, \dots, \ell -1 \} \text{ for all } i = 1, \dots, d   \}.\] 
		For two elements $v, \tilde v  \in V$ we say $v \sim \tilde v$ if $|v- \tilde v | \leq 1$, where $|\cdot|$ denotes the Euclidean norm. Let $w : V_\ell \rightarrow \R$ be a function satisfying 
		\[  0 < c_1 \leq w(v)  \leq c_2, \quad \forall  v \in V_\ell. \]
		Then for any $g : V_\ell \rightarrow \R $ we have 
		\[ \sum_{v  \in V_\ell}   |  g(v)  - \overline{g}_w  |^2 \leq C \ell^2 \sum_{v, \tilde v \in V_\ell, \: v \sim \tilde v } |  g(v) - g(\tilde v)  |^2,   \]
		where 
		\[  \overline{g}_w  \defeq   \frac{1}{\sum_{v \in V_\ell} w(v) }  \sum_{v \in V_\ell } g(v) w(v)    \]
		and where the constant $C$ may depend on $d$ but not on $\ell$.
	\end{lemma}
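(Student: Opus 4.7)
The plan is to reduce the weighted Poincar\'e inequality to the standard (unweighted) Poincar\'e inequality on the grid graph $V_\ell$, which states that for the uniform mean $\overline{g} := |V_\ell|^{-1}\sum_{v\in V_\ell} g(v)$ one has
\begin{equation*}
\sum_{v\in V_\ell} |g(v) - \overline{g}|^2 \leq C\ell^2 \sum_{v\sim \tilde v} |g(v) - g(\tilde v)|^2.
\end{equation*}
This unweighted inequality is classical and can be established either by spectral gap estimates for the grid Laplacian (whose second eigenvalue scales like $\ell^{-2}$), or by a one-dimensional telescoping argument along rows and columns combined with Cauchy--Schwarz. I would invoke it as a black box.

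Next, the main observation is that the weighted mean $\overline{g}_w$ differs from $\overline{g}$ by an amount that is already controlled by the unweighted $L^2$ oscillation. Using the triangle inequality,
\begin{equation*}
\sum_{v\in V_\ell} |g(v) - \overline{g}_w|^2 \leq 2\sum_{v\in V_\ell} |g(v) - \overline{g}|^2 + 2|V_\ell|\,|\overline{g} - \overline{g}_w|^2,
\end{equation*}
so it suffices to bound the last term. Writing
\begin{equation*}
\overline{g} - \overline{g}_w = \frac{1}{\sum_u w(u)} \sum_{v\in V_\ell} (g(v)-\overline{g})\Bigl(\tfrac{1}{|V_\ell|}\sum_{u} w(u) - w(v)\Bigr),
\end{equation*}
where we used that the bracketed weights sum to zero, an application of Cauchy--Schwarz together with $c_1 \leq w \leq c_2$ yields
\begin{equation*}
|V_\ell|\,|\overline{g} - \overline{g}_w|^2 \leq \frac{(c_2-c_1)^2}{c_1^2} \sum_{v\in V_\ell}|g(v) - \overline{g}|^2.
\end{equation*}

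Combining these two inequalities and then applying the unweighted Poincar\'e inequality gives
\begin{equation*}
\sum_{v\in V_\ell}|g(v) - \overline{g}_w|^2 \leq C\sum_{v\in V_\ell}|g(v) - \overline{g}|^2 \leq C\ell^2 \sum_{v\sim \tilde v}|g(v) - g(\tilde v)|^2,
\end{equation*}
which is the desired bound. There is no real obstacle here: the standard grid Poincar\'e inequality does the heavy lifting, and the only genuine content is the short Cauchy--Schwarz computation showing that the weighted and unweighted means are comparable up to a multiplicative constant depending on $c_1,c_2,d$. The final constant absorbs this ratio, consistent with the statement that it depends on $d$ (and implicitly on the bounds $c_1,c_2$) but not on $\ell$.
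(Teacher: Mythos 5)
Your proof is correct, and it takes a genuinely different route from the paper's. The paper works directly with the weighted operator $\mathcal{L}_w g(v) = \frac{1}{w(v)}\sum_{\tilde v \sim v}(g(v) - g(\tilde v))$, which is self-adjoint for the $w$-weighted inner product, writes the Poincar\'e inequality as a spectral gap bound $\sum_v w(v)|g(v)-\overline{g}_w|^2 \le \lambda_{2,w}^{-1}\sum_{v\sim\tilde v}|g(v)-g(\tilde v)|^2$, and then lower-bounds $\lambda_{2,w}$ by Cheeger's inequality (showing $\mathrm{Cut}^* \gtrsim 1/\ell$, hence $\lambda_{2,w}\gtrsim 1/\ell^2$); finally $c_1\le w$ removes the weight on the left-hand side. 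Your proof instead treats the unweighted grid Poincar\'e inequality as a black box and reduces the weighted statement to it by showing $|V_\ell|\,|\overline{g}-\overline{g}_w|^2 \le C(c_1,c_2)\sum_v|g(v)-\overline{g}|^2$ (your Cauchy--Schwarz step, exploiting $\sum_v(\overline{g}-g(v))=0$ to center $w$, is correct, and the rest follows from the triangle inequality). This avoids the weighted spectral/Cheeger machinery entirely. A remark: you can shortcut even further using the minimizing property of the $w$-weighted mean, namely $c_1\sum_v|g(v)-\overline{g}_w|^2 \le \sum_v w(v)|g(v)-\overline{g}_w|^2 \le \sum_v w(v)|g(v)-\overline{g}|^2 \le c_2\sum_v|g(v)-\overline{g}|^2$, which gives the same comparison in one line. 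Both approaches buy essentially the same thing here; yours is more elementary and self-contained once the unweighted box Poincar\'e inequality is granted, while the paper's is closer in spirit to how such inequalities are typically proved in broader graph/percolation settings where a clean uniform reference case may not be available.
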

	
	\begin{proof}
		If $w \equiv 1$ and $d=1$  the result is readily available from \cite{chung2007four} and can be extended to the case $d>1$ by a simple tensorization argument. In what follows we provide an extension of the desired inequality to weight functions $w$ that are not necessarily constant but are lower and upper bounded by positive constants. Our argument is based on Cheeger's inequality.

		We consider the weighted graph Laplacian $\mathcal{L}_w$ defined by
		\[   \mathcal{L}_w g(v) = \frac{1}{w(v)} \sum_{\tilde v \in V_\ell \text{ s.t. } \tilde v  \sim v } ( g(v) - g(\tilde v )).\] 
		Observe that $ \mathcal{L}_w$ is self-adjoint with respect to the inner product $\langle \cdot, \cdot \rangle_w$ defined by 
		\[\langle g , \tilde g \rangle_w  \defeq  \sum_{v \in V_\ell} g(v) \tilde g (v) w(v) .\] 
		It follows that
		\begin{equation}\label{eq:cheeger inequality}
		\sum_{v  \in V_\ell}   |  g(v)  - \overline{g}_w  |^2 w(v)  \leq \frac{1}{\lambda_{2,w}} \sum_{v, \tilde v \in V_\ell, \: v \sim \tilde v } |  g(v) - g(\tilde v)  |^2,
		\end{equation}
		for every $g: V_\ell \rightarrow  \R $. Here $\lambda_{2,w}$ denotes the first non-zero eigenvalue of $\mathcal{L}_w$.  Now, Cheeger's inequality (whose proof in \cite[Theorem 1]{chung2007four} can be immediately adapted to our setting) implies that
		\[ \lambda_{2, w} \geq \frac{\min_{v} d(v)}{8\max_{v} w(v)}(\mathrm{Cut}^*)^2\geq \frac{1}{8c_1}(\mathrm{Cut}^*)^2, \]
		where $d(v)$ is the degree of the vertex $v$,
		\[ \mathrm{Cut}^*  \defeq  \inf_{S \: \text{ s.t. } \:  0 <\mathrm{Vol}(S) \leq \frac{1}{2} \mathrm{Vol}(V_\ell)  }  \frac{\mathrm{Cut} (S)}{\mathrm{Vol}(S)} ,\]
		and 
		\begin{equation}
		\begin{split}
		\mathrm{Cut} (S) & \defeq  \sum_{v\in S,\tilde{v}\in S^C \ \ \mathrm{s.t.} \ v\sim \tilde{v}} 1,\\
		\mathrm{Vol}(S) & \defeq  \sum_{v\in S} 1.
		\end{split}
		\end{equation}

		To get the value of $\Cut^*$, it suffices to notice that, in the uniform setting, the Cheeger cut is minimized by a half-space, parallel to one of the axis, bisecting the cube induced by $V_\ell$. Therefore, we have 
		\begin{equation*}
		\mathrm{Cut}^*\geq \frac{c}{\ell}.
		\end{equation*}
		By plugging this estimate into the lower bound for $\la_{2,w}$ we obtain 
		\begin{equation*}
		c_1\sum_{v  \in V_\ell}   |  g(v)  - \overline{g}_w  |^2   \leq \frac{1}{\lambda_{2,w}} \sum_{v, \tilde v \in V_\ell, \: v \sim \tilde v } |  g(v) - g(\tilde v)  |^2
		\leq C \ell^2 \sum_{v, \tilde v \in V_\ell, \: v \sim \tilde v } |  g(v) - g(\tilde v)  |^2.
		\end{equation*}
		This completes the proof.
	\end{proof}

	\section{Upper bounds Using Kernel Density Estimation}\label{sec:upper bound kde}
	We provide an argument based on perturbation theory to analyze the plug-in estimator $(\la_l(\hat{p}_n), f_l(\hat{\rho}_n))$ that was discussed in the introduction of the paper. For this, we first obtain some deterministic estimates that we later combine with well-known error bounds for density estimation.

	\subsection{Estimates From Perturbation Theory}
	\label{app:EstimatesPerturbationTheory}
	For the discussion in this subsection, we assume less regularity on density functions and in particular we consider $\mathcal{P}^1_{\M}$, the class of density functions over $\M$ defined as $\mathcal{P}_\M$ (Definition \ref{def:DensityClass}) but without the requirement that second derivatives are uniformly bounded.
	
	Let $\rho_0, \rho_1 \in \mathcal{P}^1_{\M}$. By standard elliptic theory, for each $i=0,1$ there is a family of eigenvalues~$\{\lambda_{i,k}: k \in \NN\}$ of the weighted Laplace-Beltrami operator~$\Delta_{\rho_i}$ and associated eigenfunctions~$\{f_{i,k}:i=0,1; k\in\NN\}$. For each~$i=0,1,$ these eigenfunctions form a Hilbert basis of the weighted~$L^2$ space with weight~$\rho_i.$ Furthermore, for each~$i=0,1$, the eigenvalues~$\la_{i,k},$ have finite multiplicity and thus can be arranged in non-decreasing order, repeated according to their multiplicity, and satisfy~$\lambda_{i,k} \to \infty$ as~$k \to \infty.$ 
	
	For a~$t \in \R$ with~$|t| \ll 1, $ we define $\rho_t$ via
	\begin{equation*}
	\rho_t\defeq \rho_0+t (\rho_1 - \rho_0)\,. 
	\end{equation*}
	Since~$\int\rho_1\dx = \int\rho_0\dx, $ it follows that~$\int \rho_t\dx = \int \rho_0\dx$ for every~$t.$ Moreover, for~$|t|$  small, it is clear that~$\rho_t \in \mathcal{P}^1_{\M}$ (by potentially modifying slightly the constants in the definition of $\mathcal{P}^1_{\M}$). Let us now introduce the family of operators
	\begin{equation*}
	\mathcal{L}_t\defeq \Delta_{\rho_t}= -\frac{1}{\rho_t} \div(\rho_t^2\nabla \cdot  )\,, \quad t \in \R.
	\end{equation*}
	Then, clearly, for small enough $t$~$\mathcal{L}_t$ defines a family of elliptic second-order differential operators that depend analytically on~$t$ in a neighborhood of~$t = 0.$ It is well-known~(see \cite{Kato}) that if~$\lambda_0 = \lambda_{0,l}$ is a \emph{simple} eigenvalue of~$\Delta_{\rho_0}$ then there exists~$t_{l} > 0$ and an analytic branch~$\{\lambda_{t,l}, f_{t,l}\}_{|t|< t_{l}}$ of \emph{simple} eigenvalue-eigenfunction pairs for the operator~$\mathcal{L}_t.$ When~$\lambda_{0,l}$ is not simple, one must prove analyticity by studying the entire eigenspace associated with~$\lambda_{0,l};$ since this is a standard computation that can also be carried out here in the standard way~\cite{Kato}, and since eigenvalues are generically simple, we only consider the case of a simple eigenvalue in this appendix. Our goal is to compute the infinitesimal quantities
	\begin{equation*}
	\dot{\lambda}  \defeq  \frac{\,d}{\,dt}\big\vert \lambda_{t,l}, \mbox{ and } \Bigl\|\frac{\,d}{\,dt}\big\vert f_{t,l}\Bigr\|_{H^1(\M)}\, 
	\end{equation*}
	and we focus on the case $t=0$ for simplicity. Integration over time of the above bounds provides bounds on the difference between the eigenvalues and eigenfunctions of the operators in the family.  
	
	For notational ease, we set~$\dot{f}  \defeq  \frac{\,d}{\,dt}\big\vert_{t=0} f_{t,l}.$ Since we have 
	\begin{equation}
	\label{e.LtPDE}
	-\mathrm{div}\bigl( \rho_t^2 \nabla f_{t,l} \bigr) = \lambda_{t,l}\rho_{t,l}f_{t,l}\,,
	\end{equation}
	with the normalization
	\begin{equation} \label{e.Ltnormal}
	\int_\M f_{t,l}^2 \rho_t  \dx = 1\,
	\end{equation}
	for~$|t|$ small, we may differentiate \eqref{e.LtPDE} with respect to~$t$ and set~$t=0$ to obtain  
	\begin{equation} \label{e.phidot}
	-\mathrm{div}\bigl( \rho^2 \nabla \dot{f}\bigr) - \lambda \rho \dot{f} = 2 \mathrm{div}\bigl( \rho \dot{\rho} \nabla f\bigr) + \bigl(\lambda \dot{\rho} + \dot{\lambda} \rho \bigr)f\,,
	\end{equation}
	where we write~$\rho  \defeq  \rho_0, f  \defeq  f_0$ for brevity.
	
	By the Fredholm alternative applied to the operator $-\div(\rho^2 \nabla \cdot) - \lambda \rho \cdot $, and the fact that~$\lambda = \lambda_{0,l}$ is simple, the right hand side of~\eqref{e.phidot} must be orthogonal to~$f$ in the standard~$L^2$ inner product (since $f$ is in the kernel of the aforementioned operator). So, testing this equation with~$f$ yields 
	\begin{equation*}
	0 = - 2\int_{\M} \rho \dot{\rho} |\nabla f|^2 + \lambda \int \dot{\rho} f^2 + \dot{\lambda}.
	\end{equation*}
	Rearranging, we obtain
	\begin{equation*}
	\dot{\lambda} = - \lambda_{0,l} \int_{\M} (\rho_1 - \rho_0)f^2 \dx + 2 \int_{\M} \rho_0(\rho_1 - \rho_0) |\nabla f|^2\,\dx\,.  
	\end{equation*}
	Using the regularity of $f$ (due to the regularity of $\rho$), we immediately get
	\begin{equation} \label{e.lambdadot}
	|\dot{\lambda}| \leqslant C_l \|\rho_1 - \rho_0\|_{L^2(\M)}\,,
	\end{equation}
	for a constant~$C_l$ that depends on, for example,~$\|\nabla f_{0,l}\|_{L^\infty(\M)}$, which can be uniformly controlled over $\rho_0 \in \mathcal{P}^1_{\M}$.

	\medskip 
	
	Turning next to~$\dot{f}$, we note from \eqref{e.phidot} that it solves equation 
	\begin{equation} \label{e.phidot.2}
	-\mathrm{div}\bigl( \rho^2 \nabla \dot{f}\bigr) - \lambda \rho \dot{f} = g,
	\end{equation}
	for a right hand side satisfying
	\[ \lVert g \rVert_{H^{-1}(\M)} \leq C_l \lVert \rho_1- \rho_0 \rVert_{L^2(\M)}, \]
	for a constant $C_l$ that, as before, may depend on $\lVert \nabla f_{0,l} \rVert_{L^\infty(\M)}$. We introduce some notation that will be useful in the rest of the discussion. First, define
	\[ \lVert u \rVert_{H^1(\rho^2)} := \left(\int_\M |\nabla u|^2 \rho^2 \dx \right)^{1/2}\,, \quad u \in H^1(\M), \]
	and consider the subspaces 
	\[ S_{l-1}:= \text{span}\{ f_{0,1}, \dots, f_{0,l-1} \}, \quad  S_{l}:= \text{span}\{ f_{0,l} \}, \quad \text{and} \quad  S_{l+1}:= \text{span}\{ f_{0,l+1}, f_{0,l+2}, \dots \}.\] 
	For a given $u$, we denote by $u_{l-1}, u_l, u_{l+1}$ the projections of $u$ onto $S_{l-1}, S_l, S_{l+1}$, respectively, with respect to the $H^1(\rho^2)$ inner product. Note that 
	\[ \lVert u \rVert_{H^1(\rho^2)}^2 = \lVert u_{l-1} \rVert_{H^1(\rho^2)}^2   + \lVert u_l \rVert_{H^1(\rho^2)}^2  + \lVert u_{l+1} \rVert_{H^1(\rho^2)}^2.   \]
	In what follows we consider $u:= \dot{f}$ and seek to bound each of the terms on the right hand side of the above expression. 
	
	To bound $\lVert u_{l-1} \rVert_{H^1(\rho^2)}$, we take $\xi \in S_{l-1}$ with $\lVert \xi \rVert_{L^2(\rho)} \leq 1$ and test equation \eqref{e.phidot.2} against $\xi$ to get
	\begin{align*}
	- \lambda_{0,l} \int_\M u \xi \rho \dx & = \int_\M g \xi \dx -  \int_\M \nabla u \cdot \nabla \xi \rho^2 \dx 
	\\& \leq  \int_\M g \xi \dx + \lVert u_{l-1} \rVert_{H^1(\rho^2)}  \lVert \xi \rVert_{H^1(\rho^2)}
	\\& \leq \int_\M g \xi \dx + \lambda_{0,l-1}\lVert u_{l-1} \rVert_{L^2(\rho)}  \lVert \xi \rVert_{L^2(\rho)}.
	\end{align*}
	Taking the sup over all such $\xi$, and rearranging the resulting expression, we get
	\[  \lVert  u_{l-1}\rVert_{L^2(\rho)} \leq \frac{1}{\lambda_{0,l} - \lambda_{0, l-1}} \sup_{\xi \in S_{l-1} \text{ s.t. } \lVert \xi \rVert_{L^2(\rho)} \leq 1} \int_\M g \xi \dx. \]
	Now, a straightforward computation reveals that if $\xi \in S_{l-1}$ and $\lVert  \xi \rVert_{L^2(\rho)} \leq 1$, then $\lVert  \xi \rVert_{H^1(\rho^2)} \leq \sqrt{\lambda_{0,l-1}}$. We conclude from this that
	\[ \lVert u_{l-1} \rVert_{H^1(\rho^2)} \leq \sqrt{\lambda_{0,l-1}} \lVert u_{l-1} \rVert_{L^2(\rho)} \leq C\frac{\lambda_{0,l-1}}{\lambda_{0,l} - \lambda_{0,l-1}} \lVert g \rVert_{H^{-1}(\M)} \leq C_l  \frac{\lambda_{0,l-1}}{\lambda_{0,l} - \lambda_{0,l-1}} \lVert \rho_1 - \rho_0 \rVert_{L^2(\M)} . \]
	
	Next, we bound $\lVert u_{l+1} \rVert_{H^1(\rho^2)}$. For this, we take $\xi \in S_{l+1}$ with $\lVert \xi \rVert_{H^1(\rho^2)} \leq 1$ and test equation \eqref{e.phidot.2} against $\xi$ to get
	\begin{align*}
	\int_\M \nabla u \cdot \nabla \xi \rho^2 \dx  & = \int_\M g \xi \dx + \lambda_{0,l}\int_{\M} u \xi \rho \dx
	\\& \leq C \lVert g \rVert_{H^{-1}(\M)} + \lambda_{0,l} \lVert u_{l+1} \rVert_{L^2(\rho)} \lVert \xi \rVert_{L^2(\rho)}
	\\& \leq  C \lVert g \rVert_{H^{-1}(\M)}  +  \frac{\lambda_{0,l}}{\lambda_{0, l+1}} \lVert u_{l+1} \rVert_{H^1(\rho^2)} \lVert \xi \rVert_{H^1(\rho^2)}. 
	\end{align*}
	Taking the sup over all such test functions $\xi$, and rearranging the resulting expression, we obtain 
	\[ \lVert u_{l+1} \rVert_{H^1(\rho^2)} \leq C \frac{\lambda_{0, l+1}}{\lambda_{0,l+1}- \lambda_{0,l}} \lVert g \rVert_{H^{-1}(\M)} \leq C_l \frac{\lambda_{0, l+1}}{\lambda_{0,l+1}- \lambda_{0,l}} \lVert \rho_1 - \rho_0 \rVert_{L^2(\M)}.       \]
	
	Finally, to bound $\lVert u_l  \rVert_{H^1(\rho^2)}$, we note that
	\begin{align*}
	\left| \int_\M \nabla u \cdot \nabla f_{0,l}  \rho^2 \dx \right| & =   \lambda_{0,l} \left| \int_\M u f_{0,l} \rho \dx \right|
	\\& = \frac{\lambda_{0,l}}{2} \left| \int_\M f_{0,l}^2 \dot{\rho} \dx \right|
	\\& \leq C_l \lambda_{0,l} \lVert \rho_1 - \rho_0 \rVert_{L^2(\M)},
	\end{align*}
	where in the second line we used the identity 
	\[ 2\int_\M  \dot{f} f \rho \dx + \int_\M f^2 \dot{\rho} \dx = 0, \]
	which follows by differentiating \eqref{e.Ltnormal} in time. We conclude that 
	\[ \lVert u_l \rVert_{H^1(\rho^2)} \leq C_l \Vert \rho_1 - \rho_0 \rVert_{L^2(\M)}.  \]

	Since $ \lVert \dot{f} \rVert_{H^1(\M)} \leq C \lVert \dot{f} \rVert_{H^1(\rho^2)}$, combining the above estimates we conclude that
	\begin{equation}
	\label{eq:gradient of eigenfunction is upper bounded by the density perturbation}
	\lVert \dot{f} \rVert_{H^1(\M)} \leq C_l \frac{\lambda_{0,l+1}}{\gamma_l} \lVert \rho_1 - \rho_0 \rVert_{L^2(\M)}, 
	\end{equation}
	where $C_l$ is a constant that depends on $\lVert \nabla f_{0,l}  \rVert_{L^\infty(\M)}$ and where we recall $\gamma_l$ is the spectral gap \eqref{eq-def:gamma_l}. 

	\subsection{Eigenpair Estimators via KDE}


	Let $\rho_0$ be a density in the class $\mathcal{P}_{\M}$ introduced in Definition \ref{def:DensityClass} and suppose that~$\X_n= \{ x_1, \dots, x_n\}$ are i.i.d samples from~$\rho_0$. Let $\hat{\rho}_n$ be the \textit{kernel density estimator}
	\[\hat{\rho}_n(x) \defeq \frac{1}{n} \sum_{i=1}^n K_{r_n}( x_i- x), \text{ for } x\in\M,\]
	for a smooth kernel $K$ that for simplicity here we take to be a standard Gaussian kernel. We set $r_n= C n^{-1/(d+4)}$ and use the notation $K_{r_n}(x)= \frac{1}{r_n^d} K\left( \frac{x}{r_n}\right)$.
	
	The following result follows from an application of standard concentration inequalities and a straightforward estimate for the bias of the above density estimator.  
	
	\begin{lemma} \label{l.KDE}
		
		For any~$s>0$ and any~$p \in [1,\infty),$ we have 
		\begin{equation}\label{eq:kernel density estimation rate p}
		\P\bigl[ \|\rho_0-\hat{\rho}_n\|_{L^p(\M)}  > C_p s + Cr_n^2 \bigr]	\le  C\exp(-cnr_n^d s^2),
		\end{equation}
		
		\begin{equation}\label{eq:kernel density estimation rate infty}
		\P\bigl[ \|\rho_0-\hat{\rho}_n\|_{L^\infty(\M)}  > s \bigr]	\le  2n\exp(-Cnr_n^d s^2),
		\end{equation}
		and 
		\begin{equation}
		\P\bigl[ \|\nabla \hat{\rho}_n\|_{L^\infty(\M)}  > C \bigr]	\le  2n\exp(-Cnr_n^{d+2}).
		\end{equation} 
		In the above, $C$ is a constant that only depends on the parameters defining $\mathcal{P}_\M$, and   $C_p$ is a constant that may, in addition, depend on the power $p$.
	\end{lemma}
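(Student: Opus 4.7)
}

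The plan is to perform a bias-variance decomposition $\hat\rho_n - \rho_0 = V_n + B_n$, where $V_n := \hat\rho_n - \E[\hat\rho_n]$ is the (centered) variance term and $B_n := \E[\hat\rho_n] - \rho_0$ is the bias term, and to handle each piece with standard tools. For the bias, since $\rho_0 \in \mathcal{P}_\M$ has bounded second derivatives, a second-order Taylor expansion in normal coordinates at $x$ (using Appendix~\ref{App:GeoBack}) gives
\begin{equation*}
\E[\hat\rho_n(x)] - \rho_0(x) = \int_\M K_{r_n}(x-y)\rho_0(y)\dd y - \rho_0(x) = \mathcal{O}(r_n^2)\,,
\end{equation*}
uniformly in $x \in \M$. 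Here we use that $K$ is a Gaussian (hence radially symmetric, so first-order moments in each local chart vanish), that $\rho_0$ has bounded second derivatives, that the Jacobian of the exponential map differs from one by $\mathcal{O}(r_n^2)$ (see \eqref{e.Jactaylor}), and that the extrinsic/intrinsic distance discrepancy is $\mathcal{O}(r_n^3)$ (see \eqref{e.comparegeodesic}). Hence $\|B_n\|_{L^\infty(\M)} \le Cr_n^2$, which immediately yields the $Cr_n^2$ contribution in~\eqref{eq:kernel density estimation rate p} and which, combined with a second-moment computation, also gives $\E|V_n(x)|^2 \le \mathrm{Var}(\hat\rho_n(x)) \le C/(n r_n^d)$ uniformly in $x$.

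To establish~\eqref{eq:kernel density estimation rate p}, it then suffices to show concentration of $\|V_n\|_{L^p(\M)}$. For this I would invoke McDiarmid's bounded-differences inequality applied to the functional $(x_1,\dots,x_n) \mapsto \|V_n\|_{L^p(\M)}$: replacing a single sample $x_i$ by $x_i'$ changes this quantity by at most $\frac{2}{n}\|K_{r_n}\|_{L^p(\R^D)} \le C_p n^{-1} r_n^{-d(p-1)/p}$. Since $\E\|V_n\|_{L^p(\M)} \le C_p r_n^{-d(p-2)/(2p)} / \sqrt{n r_n^d} \cdot (\text{abs. constant}) \lesssim C_p (nr_n^d)^{-1/2}$ (by H\"older's inequality against the $L^2$ bound and the fact that $\M$ has volume one), a rearrangement of McDiarmid's bound yields $\P[\|V_n\|_{L^p(\M)} > C_p s] \le C\exp(-c n r_n^d s^2)$ for all $s$ exceeding the mean. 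Combined with the bias bound and a triangle inequality, this establishes \eqref{eq:kernel density estimation rate p}.

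For~\eqref{eq:kernel density estimation rate infty}, I would use a covering argument. Since $K$ is Gaussian, $x \mapsto K_{r_n}(x-x_i)$ is Lipschitz with constant $Cr_n^{-d-1}$, so $V_n$ is Lipschitz with the same constant. Choose a net $\mathcal{N} \subseteq \M$ of mesh $\delta = s r_n^{d+1}/(2C)$, so that $|\mathcal{N}| \lesssim \delta^{-d}$. At each net point, Bernstein's inequality (using that $|K_{r_n}(x-x_i)| \le Cr_n^{-d}$ and the variance bound $C/(nr_n^d)$) gives $\P[|V_n(y)| > s/2] \le 2\exp(-cnr_n^d s^2)$ provided $s \lesssim 1$. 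A union bound over $\mathcal{N}$, absorbing the polynomial prefactor into $2n$ by observing that, in the regime of interest the desired estimate would otherwise be vacuous, yields the $2n\exp(-Cnr_n^d s^2)$ bound. The same strategy, applied to $\nabla \hat\rho_n$ (whose summands have gradient bounded by $Cr_n^{-d-1}$ and variance $\mathcal{O}(r_n^{-d-2}/n)$, while $\|\nabla \E\hat\rho_n\|_{L^\infty(\M)} \le C$ by differentiating under the integral sign and using $\|\nabla \rho_0\|_{L^\infty(\M)} \le c_1$), gives the final inequality on $\|\nabla \hat\rho_n\|_{L^\infty(\M)}$.

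The main technical nuisance is bookkeeping the curved-geometry corrections in the bias analysis and making the prefactor in the covering bound for the $L^\infty$ estimate match the stated $2n$ factor. Both are routine given Appendix~\ref{App:GeoBack}, and no new ideas beyond the standard KDE minimax toolkit are needed.
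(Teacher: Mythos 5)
Your bias estimate and the broad strategy (bias--variance split, pointwise concentration plus a covering net for the $L^\infty$ bounds) are consistent with what the authors mean by ``standard,'' and the $L^\infty$ and gradient bounds are essentially fine modulo the usual hand-waving on the covering prefactor. However, your McDiarmid argument for the $L^p$ estimate has a genuine gap once $p>2$. You correctly compute that replacing one sample changes $\|V_n\|_{L^p(\M)}$ by at most
\[
c_i \sim \frac{2}{n}\,\|K_{r_n}\|_{L^p} \sim \frac{1}{n}\, r_n^{-d(p-1)/p}\,,
\]
so that $\sum_i c_i^2 \sim n^{-1} r_n^{-2d(p-1)/p}$. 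The bounded-differences inequality then gives
\[
\P\bigl[\|V_n\|_{L^p(\M)} - \E\|V_n\|_{L^p(\M)} > t\bigr] \le \exp\bigl(-c_p\, t^2\, n\, r_n^{2d(p-1)/p}\bigr)\,.
\]
The exponent $n r_n^{2d(p-1)/p}$ matches the target exponent $n r_n^{d}$ exactly when $p=2$ and improves on it for $p<2$, but for $p>2$ one has $2d(p-1)/p > d$, so $r_n^{2d(p-1)/p} \ll r_n^{d}$ as $r_n \to 0$ and the bound is strictly weaker than the one claimed in \eqref{eq:kernel density estimation rate p}. Adjusting $C_p$ in front of $s$ does not repair this, because the problem is in the exponent, not in a multiplicative constant, and it cannot be absorbed into a $p$-dependent $c$ either, since $r_n^{2d(p-1)/p - d}\to 0$. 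Separately, your intermediate claim $\E\|V_n\|_{L^p(\M)} \le C_p\, r_n^{-d(p-2)/(2p)}\, (n r_n^d)^{-1/2} \lesssim C_p (n r_n^d)^{-1/2}$ is a false inequality for $p>2$ (the left-hand side has a growing factor $r_n^{-d(p-2)/(2p)}$); the conclusion $\E\|V_n\|_{L^p} \lesssim (nr_n^d)^{-1/2}$ happens to be true, but you need Rosenthal-type moment bounds on $\E|V_n(x)|^p$, not the $L^2$/H\"older shortcut you sketch.

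For $p\le 2$ your argument is sound, and the only instance of \eqref{eq:kernel density estimation rate p} that the paper actually invokes is $p=2$, so the application downstream is unaffected. To cover $p>2$ you would have to use a sharper functional concentration tool than bounded differences; one workable route is Talagrand's concentration inequality for empirical processes applied to the dual formulation $\|V_n\|_{L^p(\M)} = \sup_{\|g\|_{L^{p'}}\le 1} \int_\M V_n\, g\, \dd x$, where the variance proxy scales as $n^{-1} r_n^{-d}$ uniformly over $g$ and the uniform envelope scales like $n^{-1} r_n^{-d(p-1)/p}$, and then run the Bousquet--Klein--Rio form of the inequality; alternatively, one can simply invoke your $L^\infty$ bound together with $\|V_n\|_{L^p}\le \|V_n\|_{L^\infty}$ on the volume-one manifold, accepting the $2n$ prefactor (which is all that is needed once $n r_n^d s^2 \gg \log n$). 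Either fix should be stated explicitly rather than folded into the McDiarmid step.
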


	
	A particular consequence of the above bounds is that, with very high probability, $\hat{\rho}_{n} \in \mathcal{P}^1_{\M}$ (adjusting some of the constants in the family if necessary) when we take $r_n \sim n^{-1/(d+4)}$. We may thus take $\rho_1= \hat{\rho}_n$ in the discussion from subsection \ref{app:EstimatesPerturbationTheory} and use in particular~$\eqref{e.lambdadot}$ and \eqref{eq:gradient of eigenfunction is upper bounded by the density perturbation} to bound 
	\begin{align*}
	\E\biggl[  |\dot{\lambda}| + \|\dot{f}\|_{H^1(\M)}\biggr]  & \leqslant \frac{C_l}{\gamma_l}\E[\|\hat{\rho}_n - \rho_0\|_{L^2(\M)}\mathbf{1}_{\{\|\nabla \hat{\rho}_n\|_{L^\infty(\M)}\leq C\}}] + C \exp(-cnr_n^{d+2})
	\\ & \leq  \frac{C_l}{\gamma_l}\biggl( \frac{1}{n}  \biggr)^{\frac2{d+4}}\,. 
	\end{align*}
	
	
	\begin{remark}
		An interesting and nontrivial observation that follows from the computations in this section is that, even if $\Delta_\rho$ involves the gradient of $\rho$, in order to get the optimal estimation rates for $\Delta_{\rho}$'s eigenpairs we do not require estimating $\nabla \rho$, but only $\rho$. In fact, the only thing needed is that the gradient of the density estimator is bounded so as to apply the deterministic bounds from perturbation theory of elliptic operators that we presented in section \ref{app:EstimatesPerturbationTheory}. Thanks to these deterministic bounds, an upper bound for density estimation in $L^2(\M)$ implies the same upper bound for eigenpair estimation in the $H^1(\M)$-sense. However, as discussed in the introduction, the lower bounds for eigenpair estimation do not follow from the lower bounds for density estimation and instead it was important to carry out the detailed analysis that we presented in section \ref{sec:lower bound}.
	\end{remark}
	
	\section{Concentration Inequalities}
	\label{app:Concentration}
	
We recall the following standard concentration bound for sums of i.i.d. random variables.
\begin{lemma}[Bernstein's inequality]
\label{lemm:Bernstein}
Let $\xi_1, \dots, \xi_n$ be i.i.d. real-valued random variables for which
\[ \var(\xi) \leq \sigma^2 , \quad |\xi_i| \leq M.\]
Then
\[ \P\left[ \left| \frac{1}{n}\sum_{i=1}^n \xi_i - \E \left[ \frac{1}{n}\sum_{i=1}^n \xi_i \right] \right| \geq t \right] \leq 2 \exp\left(- \frac{cnt^2}{\sigma^2 + Mt} \right), \quad \forall t >0.  \]
\end{lemma}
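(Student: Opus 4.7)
The plan is to establish Bernstein's inequality via the standard Chernoff/moment generating function approach applied to the centered variables. First I would reduce to the mean-zero case by setting $Y_i := \xi_i - \E[\xi_i]$, so that $|Y_i| \leq 2M$, $\var(Y_i) \leq \sigma^2$, and the quantity to control becomes $\P\bigl[|\frac{1}{n}\sum_{i=1}^n Y_i| \geq t\bigr]$. By a union bound, it suffices to handle the one-sided tail $\P\bigl[\frac{1}{n}\sum Y_i \geq t\bigr]$ and double the resulting bound, since the other tail follows by applying the same argument to $-Y_i$.

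Next, for any $\lambda > 0$, the Chernoff bound gives
\begin{equation*}
\P\Bigl[\tfrac{1}{n}\sum_{i=1}^n Y_i \geq t\Bigr] \leq e^{-\lambda n t} \prod_{i=1}^n \E\bigl[e^{\lambda Y_i}\bigr].
\end{equation*}
The key analytical step is to bound the moment generating function of a single centered, bounded random variable. Using the Taylor expansion of the exponential and the bound $|Y_i|^k \leq (2M)^{k-2} Y_i^2$ for $k \geq 2$, one obtains
\begin{equation*}
\E\bigl[e^{\lambda Y_i}\bigr] \leq 1 + \sum_{k=2}^\infty \frac{\lambda^k (2M)^{k-2}}{k!}\sigma^2 \leq \exp\Bigl(\frac{\lambda^2 \sigma^2/2}{1 - \tfrac{2M\lambda}{3}}\Bigr),
\end{equation*}
valid for $\lambda \in (0, \tfrac{3}{2M})$. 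Taking the product over $i$ yields $\P[\tfrac{1}{n}\sum Y_i \geq t] \leq \exp\bigl(-\lambda n t + \tfrac{n\lambda^2 \sigma^2/2}{1 - 2M\lambda/3}\bigr)$.

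Finally, optimizing over $\lambda$ with the choice $\lambda = \tfrac{t}{\sigma^2 + \tfrac{2M}{3}t}$ (which lies in the admissible range) gives an exponent of the form $-\tfrac{nt^2/2}{\sigma^2 + (2M/3)t}$. Absorbing the numerical constants into a single $c > 0$ and doubling for the two-sided bound yields
\begin{equation*}
\P\Bigl[\bigl|\tfrac{1}{n}\sum_{i=1}^n \xi_i - \E[\tfrac{1}{n}\sum_{i=1}^n \xi_i]\bigr| \geq t\Bigr] \leq 2\exp\Bigl(-\frac{cnt^2}{\sigma^2 + Mt}\Bigr).
\end{equation*}
There is no real obstacle here since the argument is entirely classical; the only delicate point is the MGF estimate, where one must be careful about the range of admissible $\lambda$ and the factor $2M$ (rather than $M$) coming from centering, but these only affect the absolute constant $c$.
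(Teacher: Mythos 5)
The paper does not actually prove this lemma; it states it in Appendix D as a ``standard concentration bound'' that is simply recalled, so there is no in-paper proof to compare against. Your Chernoff/MGF derivation is the classical route (often attributed to Bernstein or found in, e.g., Boucheron--Lugosi--Massart), and it is correct and complete: the reduction to mean-zero variables, the moment bound $\E[|Y_i|^k]\le(2M)^{k-2}\sigma^2$ for $k\ge 2$, the resulting MGF estimate valid for $\lambda\in(0,3/(2M))$, the tensorization over $i$, and the optimization $\lambda=t/(\sigma^2+2Mt/3)$ all check out and give the exponent $-\tfrac{nt^2/2}{\sigma^2+2Mt/3}$, which dominates $-\tfrac{cnt^2}{\sigma^2+Mt}$ with $c=\tfrac12$. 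Since the paper offers no proof, your argument is neither the ``same'' nor a ``different'' approach; it is simply a valid self-contained proof of a fact the paper treats as known.
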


\medskip

Next, we state a Bernstein-type concentration bound for $U$-statistics of the form 
	\begin{equation}
	U_n=\avsum_{x,y \in \X_n} \eta_{\e_n}\left(d(x,y)\right) \mathcal{K}\left(x, y\right)\,,
	\label{def:UStatistic}
	\end{equation}
	for a kernel function $\mathcal{K}: \M \times \M \to \R$.
    
	\begin{lemma} \label{l.boundeddiff}
		For a given function $\mathcal{K}: \M \times \M \rightarrow \R$, consider the $U$-statistic $U_n$ defined as in \eqref{def:UStatistic} by using data sampled from a distribution $\rho$ in the class $\mathcal{P}_\M$ for some manifold $\M$ in the family $\MM$. Suppose that 
		\begin{equation}
		\label{defBK}
		\int_\M\left(\int_{B_1(0) \subseteq \T_x\M} \eta(|v|)\mathcal{K}(x, \exp_x(\e_n v) )\dd v\right)^2  \dx \leq B_\mathcal{K} <  \infty, 
		\end{equation}
		and 
		\begin{equation}
		\label{def:CK}
		\sup_{x\in\M}\sup_{v\in B_{1}(0) \subseteq T_x\M } \left|\mathcal{K}(x,\exp_x(\e_n v))\right|\leq C_\mathcal{K}< \infty.
		\end{equation}
		Then
		$$
		\mathbb{P}\left[\left|U_n - \E(U_n) \right|>t\right] \leq 2 \exp \left(-\frac{c n \e_n^d t^2 }{B_{\mathcal{K}}+t  C_\mathcal{K}}\right) \,,
		$$		
		where
		$c$ is a constant that only depends on $\eta$ and the geometric and smoothness constants on $\mathcal{M}, \rho$ mentioned in Definitions \ref{def:ManifoldClass} and \ref{def:DensityClass1}.
	\end{lemma}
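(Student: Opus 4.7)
The strategy is Hoeffding's decomposition followed by Bernstein-type estimates for the linear and degenerate parts. Set $h(x,y)=\eta_{\e_n}(d(x,y))\mathcal{K}(x,y)$, $\theta = \E h(X,Y)$, $h_1(x)=\E[h(x,Y)] - \theta$, and $h_2(x,y)=h(x,y)-h_1(x)-h_1(y)-\theta$. These quantities satisfy the decomposition
\begin{equation*}
U_n - \E U_n \;=\; \frac{2}{n}\sum_{i=1}^n h_1(x_i) \;+\; \frac{1}{n^2}\sum_{i\neq j} h_2(x_i,x_j) \;+\; \frac{1}{n^2}\sum_i\bigl(h(x_i,x_i)-\E h(X,X)\bigr) \;=:\; L_n + V_n + D_n,
\end{equation*}
and I would bound each of $L_n$, $V_n$, $D_n$ separately and combine via a union bound with $t/3$.

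The key preparatory estimate concerns $h_1$. Changing to normal coordinates around $x$ and substituting $y = \exp_x(\e_n w)$, which is admissible because Assumption~\ref{assump:Connectivity} places $\e_n$ below the injectivity radius, one computes
\begin{equation*}
h_1(x)+\theta \;=\; \int_{B_1(0)\subseteq T_x\M} \eta(|w|)\,\mathcal{K}(x,\exp_x(\e_n w))\,\rho(\exp_x(\e_n w))\,J_x(\e_n w)\,dw.
\end{equation*}
The hypothesis \eqref{def:CK} then gives $|h_1(x)|\leq CC_\mathcal{K}$, and the hypothesis \eqref{defBK} together with the uniform bounds on $\rho$ and on the Jacobian $J_x$ gives $\int h_1(x)^2 \rho(x)\,dx \leq CB_\mathcal{K}$. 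Applying Lemma~\ref{lemm:Bernstein} to the i.i.d.~mean-zero sum $L_n$ with these parameters yields $\P[|L_n|>t/3]\leq 2\exp(-cnt^2/(B_\mathcal{K}+tC_\mathcal{K}))$, which is already sharper than the claim since $\e_n\leq 1$. The diagonal $D_n$ is treated in the same way, using $|h(x,x)|\leq \e_n^{-d}C_\mathcal{K}$ and a variance bound of order $\e_n^{-d}C_\mathcal{K}^2$; this yields $\P[|D_n|>t/3]\leq 2\exp(-cn\e_n^d t^2/(C_\mathcal{K}^2+tC_\mathcal{K}))$, which dominates the target after using $B_\mathcal{K}\leq CC_\mathcal{K}^2$ (Cauchy--Schwarz applied to \eqref{defBK} with $\int\eta=1$).

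The main obstacle is the degenerate part $V_n$, which I would handle via a Bernstein-type inequality for canonical $U$-statistics (for instance Gin\'e--Latała--Zinn or Houdr\'e--Reynaud--Bouret, or equivalently a decoupling argument \`a la de la Pe\~na--Montgomery-Smith combined with conditional Bernstein). The controlling quantities are $\|h_2\|_\infty \leq C\e_n^{-d}C_\mathcal{K}$, which follows from $|h|\leq \e_n^{-d}C_\mathcal{K}$ and $|h_1|\leq CC_\mathcal{K}$, and an $L^2$-type norm that a normal-coordinate computation bounds by $C\e_n^{-d}C_\mathcal{K}^2$ once one uses the Cauchy--Schwarz inequality $\int \eta(|w|)^2\mathcal{K}^2\,dw \leq C_\mathcal{K}^2\int\eta(|w|)\,dw \leq CC_\mathcal{K}^2$. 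Plugging these estimates into the degenerate Bernstein bound produces $\P[|V_n|>t/3]\leq C\exp(-cn\e_n^d t^2/(C_\mathcal{K}^2+tC_\mathcal{K}))$, which together with $B_\mathcal{K}\leq CC_\mathcal{K}^2$ dominates the claimed bound. The delicate point is that the natural sub-Gaussian variance proxy for $V_n$ is $\e_n^{-d}C_\mathcal{K}^2$ rather than $B_\mathcal{K}$, but this is harmless: the role of $V_n$ in the final estimate is only to be no worse than the target, and the cruder $C_\mathcal{K}^2$ parameter already delivers this via the inequality $B_\mathcal{K}\leq CC_\mathcal{K}^2$.
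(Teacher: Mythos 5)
Your approach is essentially the one the paper invokes: the paper simply cites Arcones's Bernstein inequality for $U$-statistics (adapted to the manifold), and that result is proved precisely via the Hoeffding decomposition $L_n+V_n+D_n$ that you carry out explicitly. So the route is the right one, and it is useful to have the details written out. However, two steps in your write-up contain genuine errors in the bookkeeping.

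\emph{First, the estimate $\int_\M h_1(x)^2\rho(x)\,\dx \leq CB_\mathcal{K}$ does not follow merely from ``the uniform bounds on $\rho$ and $J_x$.''} Writing $h_1(x)+\theta = \int_{B_1(0)}\eta(|w|)\mathcal{K}(x,\exp_x(\e_n w))\,\rho(\exp_x(\e_n w))J_x(\e_n w)\,\dd w$, the factor $q(x,w):=\rho(\exp_x(\e_n w))J_x(\e_n w)$ depends on the integration variable $w$, not just on $x$, so you cannot pull it out of the inner integral in front of $G(x):=\int\eta\mathcal{K}\,\dd w$. When $\eta\mathcal{K}$ changes sign (which is exactly the regime where $B_\mathcal{K}\ll C_\mathcal{K}^2$ and where the lemma is useful), $|h_1(x)+\theta|$ can be strictly larger than $C|G(x)|$. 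The honest computation, splitting $q(x,w)=q(x,0)+(q(x,w)-q(x,0))$ and using $|q(x,w)-q(x,0)|\leq C\e_n$, gives $\int h_1^2\rho \leq C\bigl(B_\mathcal{K}+\e_n^2 C_\mathcal{K}^2\bigr)$. The extra term $\e_n^2 C_\mathcal{K}^2$ must be either assumed controlled by $B_\mathcal{K}$ (it is, in every application of the lemma in the paper) or absorbed by the slack $\e_n^d\leq 1$; neither of these is automatic from what you wrote, and the latter absorption in fact fails in general.

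\emph{Second, the closing argument for $V_n$ and $D_n$ is logically backwards.} You want to show that the exponent in each part dominates the target exponent $n\e_n^d t^2/(B_\mathcal{K}+tC_\mathcal{K})$. You compare the exponent $n\e_n^dt^2/(C_\mathcal{K}^2+tC_\mathcal{K})$ to the target and invoke $B_\mathcal{K}\leq CC_\mathcal{K}^2$. But that inequality makes the target denominator \emph{smaller}, hence the target exponent \emph{larger}; so $B_\mathcal{K}\leq CC_\mathcal{K}^2$ shows the $C_\mathcal{K}^2$-exponent is \emph{weaker} than the target, not that it dominates it. The inequality you would actually need is $C_\mathcal{K}^2\leq CB_\mathcal{K}$, which is false in general. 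The correct reason $V_n$ and $D_n$ are harmless is that they concentrate at a rate one power of $n$ faster than $L_n$: e.g., for the diagonal $D_n=\frac1{n^2}\sum_i\bigl(h(x_i,x_i)-\E h(X,X)\bigr)$, Bernstein gives $\P[|D_n|>t]\leq 2\exp\bigl(-cn^3\e_n^{2d}t^2/(C_\mathcal{K}^2+nt\e_n^d C_\mathcal{K})\bigr)$, whose exponent beats the target by a factor $\sim n\e_n^d$ in the meaningful regime $n\e_n^d\gtrsim 1$; and the degenerate $U$-statistic bound for $V_n$ has an analogous extra factor. Your fix should invoke this extra order in $n$ explicitly rather than the (incorrect) Cauchy--Schwarz comparison between $B_\mathcal{K}$ and $C_\mathcal{K}^2$.
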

	
	The above result can be adapted from the proof of \cite[Theorem 2]{arcones1995bernstein} to the manifold case and for this reason details are omitted. Indeed, the curved manifold setting does not change the structure of the proof and the only difference with the analysis in the flat case is the additional geometric constants that appear from writing expectations as integrals in normal coordinates, which can be controlled by the quantities appearing in Definitions \ref{def:ManifoldClass} and \ref{def:DensityClass1}.
	
	\medskip

\end{document}